\def\N{\mathbb{N}}
\def\R{\mathbb{R}}
\def\m{\mathrm{m}}
\def\F{\mathcal{F}}
\def\C{\mathcal{C}}
\def\R{\mathbb{R}}
\newcommand{\przemek}[1]{ #1  }
\def\x{\mathrm{x}}
\def\1{\mathds{1}}
\def\for{\mbox{  for }}
\def\det{\mathrm{det}}
\def\tr{\mathrm{tr}}
\numberwithin{equation}{section}
\journal{Journal of \LaTeX\ Templates}
\begin{document}


\begin{frontmatter}

\title{Gaussian model for closed curves}

\author{Krzysztof Byrski }
\ead{krzysztof.j.byrski@gmail.com}
\address{Faculty of Mathematics and Computer Science, Jagiellonian University, Krak\'ow, Poland}

\author{Przemys\l{}aw Spurek}
\ead{przemyslaw.spurek@uj.edu.pl}
\address{Faculty of Mathematics and Computer Science, Jagiellonian University, Krak\'ow, Poland}

\author{Jacek Tabor}
\ead{jacek.tabor@uj.edu.pl}
\address{Faculty of Mathematics and Computer Science, Jagiellonian University, Krak\'ow, Poland}







\begin{abstract}
Gaussian Mixture Models (GMM) do not adapt well to curved and strongly nonlinear data. However, we can use Gaussians in the curvilinear coordinate systems to solve this problem. Moreover, such a solution allows for the adaptation of clusters to the complicated shapes defined by the family of functions. But still, it is challenging to model clusters as closed curves (e.g., circles, ellipses, etc.).

In this work, we propose a density representation of the closed curve, which can be used to detect the complicated templates in the data. For this purpose, we define a new probability distribution to model closed curves. Then we construct a mixture of such distributions and show that it can be effectively trained in the case of the one-dimensional closed curves in $\R^n$.
\end{abstract}

\begin{keyword}
Curve fitting \sep Clustering \sep  Expectation Maximization \sep Gaussian Mixture Model \sep Cross-Entropy Clustering
\end{keyword}

\end{frontmatter}


\section{Introduction}

Gaussian Mixture Models~\cite{reynolds2009gaussian} (GMM) have many applications in density estimation and data clustering. Unfortunately, the models do not adapt well to curved and strongly nonlinear data due to the intrinsic linearity of the Gaussian model. To solve this problem, we can use general Gaussian distribution in the curvilinear coordinate systems~\cite{spurek2017active,zhang2005active}.
Such methods allow for data clustering on the sub-manifolds of $\R^d$. The afCEC~\cite{spurek2017active} method uses so-called f-adapted Gaussian distributions, i.e. Gaussians which are curved over functions $f \in \C(\R^{d-1}, \R)$,  and gives very good results but still it is not able to fit the closed-curve type clusters. In practical applications data often lies on circles, ellipses, and more complicated closed curves.

The ellipse fitting is a challenging problem that arises in several fields. Examples of applications, among others, include: segmentation of cells \cite{bai2009splitting} study of galaxies \cite{soh2009new}, medical diagnostics \cite{mahdavi20093d}, camera calibration and face detection \cite{ding2010features,maio2000real}. There are as many algorithms proposing solutions to this problem as the very applications of the ellipse fitting \cite{wong2012survey}. 
But ellipses cannot describe various shapes of objects, see Fig.~\ref{teser_new}. 

\begin{figure}[t]
	\centering
	\includegraphics[width=0.95\textwidth]{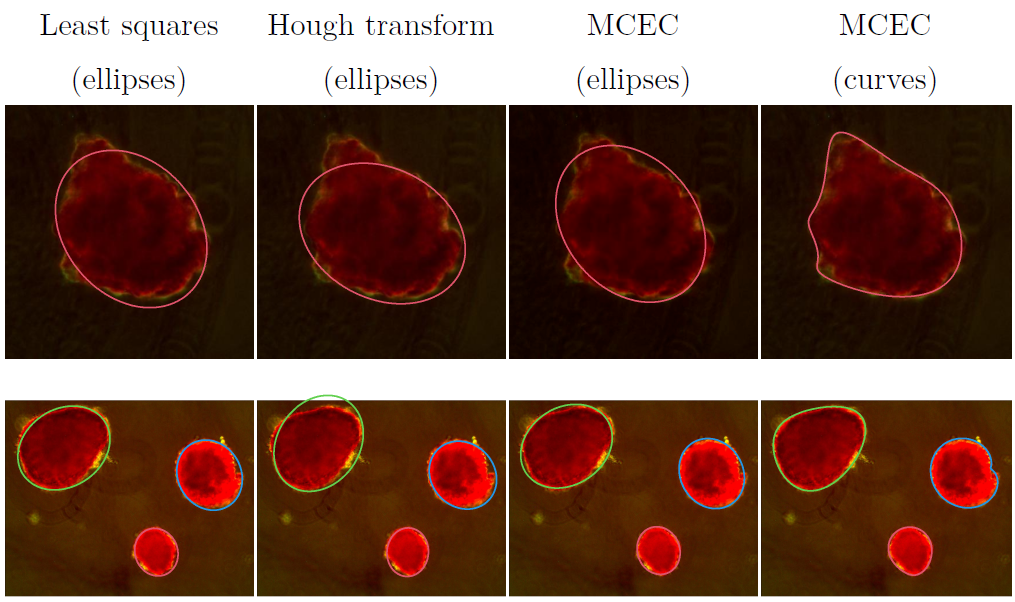}
\caption{
    \przemek{In the figure, we present the results of the Least squares (ellipses) \cite{halir1998numerically}, Hough transform (ellipses) \cite{goneid1997method}, and our algorithm MCEC for fitting the ellipses and the general closed curves on the images: Lh05-04 and Lh10-03 from the paper \cite{borovec2017supervised}. We can localize objects by using ellipses but cannot fit the shape. Using general closed curves (i.e. closed curves given by the Fourier series of order greater than 1), we can model both the object localization and the shape.
    } 
}\label{teser_new}
\end{figure}

In this paper, we propose a density representation of the closed curve, which can be used to detect the complicated templates in the data. In particular, we can approximate ellipses in 2D space, see Fig.~\ref{teser_new}. But in the more general version, we can fit to the general closed curves in $\R^2$ or $\R^n$.
The theoretical section shows the general formula for the bounded $d$-manifold embedded in $\mathbb{R}^n$, where~$d \le n$. 

The above density representation can be used for clustering. More precisely, we can construct a mixture of such distribution and show that it can be effectively trained in the case of the one-dimensional closed curve in $\R^n$. One of the most crucial benefits is the ability to reduce unnecessary clusters. 

The paper is arranged as follows. In the next section, we present the related works. In Section~\ref{Section3}, we present the CEC~\cite{spurek2014cross} algorithm, which constitutes the building block, that we use later to construct the MCEC algorithm. Then the theoretical background of the density model is presented in Section~\ref{Section4}. We start by illustrating the concept on three special cases: 2D ellipse (Subsection~\ref{Subsection4.1}), 2D closed curve (Subsection~\ref{Subsection4.2}), and the one-dimensional closed-curve in $\R^n$ (Subsection~\ref{Subsection4.3}). Finally, we show the general theoretical formula for the bounded $d$-manifold embedded in $\mathbb{R}^n$, where $d \le n$ (Subsection~\ref{Subsection4.4}).  
The last Section~\ref{Section5} is devoted to the numerical experiments showing the ability of MCEC to partition data into closed-curve type clusters.

\przemek{
Our contributions can be summarized as follows:
\begin{itemize}
\item We propose a density representation of the closed curve, which can be used to detect the complicated templates in the data.
\item We use a mixture of the above-mentioned densities in the clustering algorithm and show that it can be effectively trained in the context of one-dimensional closed curves.
\item We present the theoretical results of fitting the above-mentioned models in arbitrary spaces.
\end{itemize}
}

\section{Related works}

The key problem in the task of fitting to a shape model is the need for correspondence between the observation and the model itself. 
\przemek{Most of the papers concentrates on fitting to the ellipses. One of the most popular approach is the elliptical Hough transform \cite{goneid1997method}. The elliptical Hough transform achieves high-precision fitting by using a voting mechanism in a five-dimensional parameter space but is numerically heavy. To solve such a problem, one can use more computationally efficient methods based on the least
squares approach \cite{ahn2001least,he2011comparative,fitzgibbon1999direct,halir1998numerically,gander1994least,rosin1993note,barwick2008very,maini2006enhanced,rong2015ellipse}. These methods typically work well for one ellipse. For many ellipses fitting, one can use  robust ellipse fitting methods  \cite{fischler1981random,liang2015robust,hu2021robust,zhao2021robust,wang2022robust}. For instance, (RANSAC)~\cite{fischler1981random} proposed a random sample consensus method. Ellipses can be fit by estimating a mathematical model from collecting a random subset of the entire data. 
Alternatively, one can use the maximum correntropy criterion (MCC), which uses Gaussian and Laplacian kernels \cite{liang2015robust,hu2021robust,wang2022robust}.} 
Unfortunately, it is not trivial to generalize such solutions to more complicated curves.

There exist a few methods for fitting to the general closed curves. 
In \cite{szekely1996segmentation}, authors use Fourier descriptors. However, those approaches are sensitive to outliers, and in practice, manual labeling is often required to set the initial correspondences.
Alternatively, the Mean Shift algorithm\cite{arellano2012shape2,arellano2012shape1} has been presented as a method for arbitrary fixed shapes detection. The observed data and the shape model are represented as mixtures of Gaussians. Using a Bayesian framework, authors leverage the distance between the Gaussian mixture density model and data represented by kernel density estimation. The method is also helpful in the context of ellipses detection \cite{arellano2016robust,martorell2021multiscale,wang2022robust}. 

\przemek{Our solution also uses a mixture of densities but we describe a shape by a closed curve parameterized by trigonometric functions. Such a solution allows us to approximate the complicated shapes in the effective way. We use our model in the cross-entropy clustering framework. Consequently, we obtain a clustering method that finds closed-curve-type clusters.}

\przemek{The problem of curve fitting in 2D and 3D has many applications in biology. In \cite{arbabi2023statistical} authors propose a statistical shape model of the talus bone morphology. In \cite{shi2023new} authors use the 2D parametric curve to describe birds' egg shape. Individual tree species classification is strategically important for forest monitoring, analysis and management critical for sustainable forestry development. In the paper \cite{qian2023tree}, authors propose an algorithm for classification that uses individual tree segmentation and shape fitting. In \cite{zhang2023fast}, authors use shape fitting to estimate irregular asteroids' gravitational acceleration. 
In the paper \cite{toth2023minimal}, the authors propose a minimal solver for sphere fitting via its 2D central projection, i.e., a particular ellipse. In \cite{abeyrathna2022morphological}, authors used ellipse fitting to high-resolution images of bacterial cells. 
}

\section{Mixture of distributions}
\label{Section3}

In our model, we use a variant of Gaussian mixture model (GMM) called CEC. The main benefit of CEC over standard GMM is that it can automatically reduce the number of clusters. 
In Fig.~\ref{fig:reduce}, we present the process of removing some components.

\begin{figure}[H]
	\centering
	\begin{tabular}{c@{}c@{}c@{}c@{}c@{}}
		\includegraphics[width=0.25\textwidth]{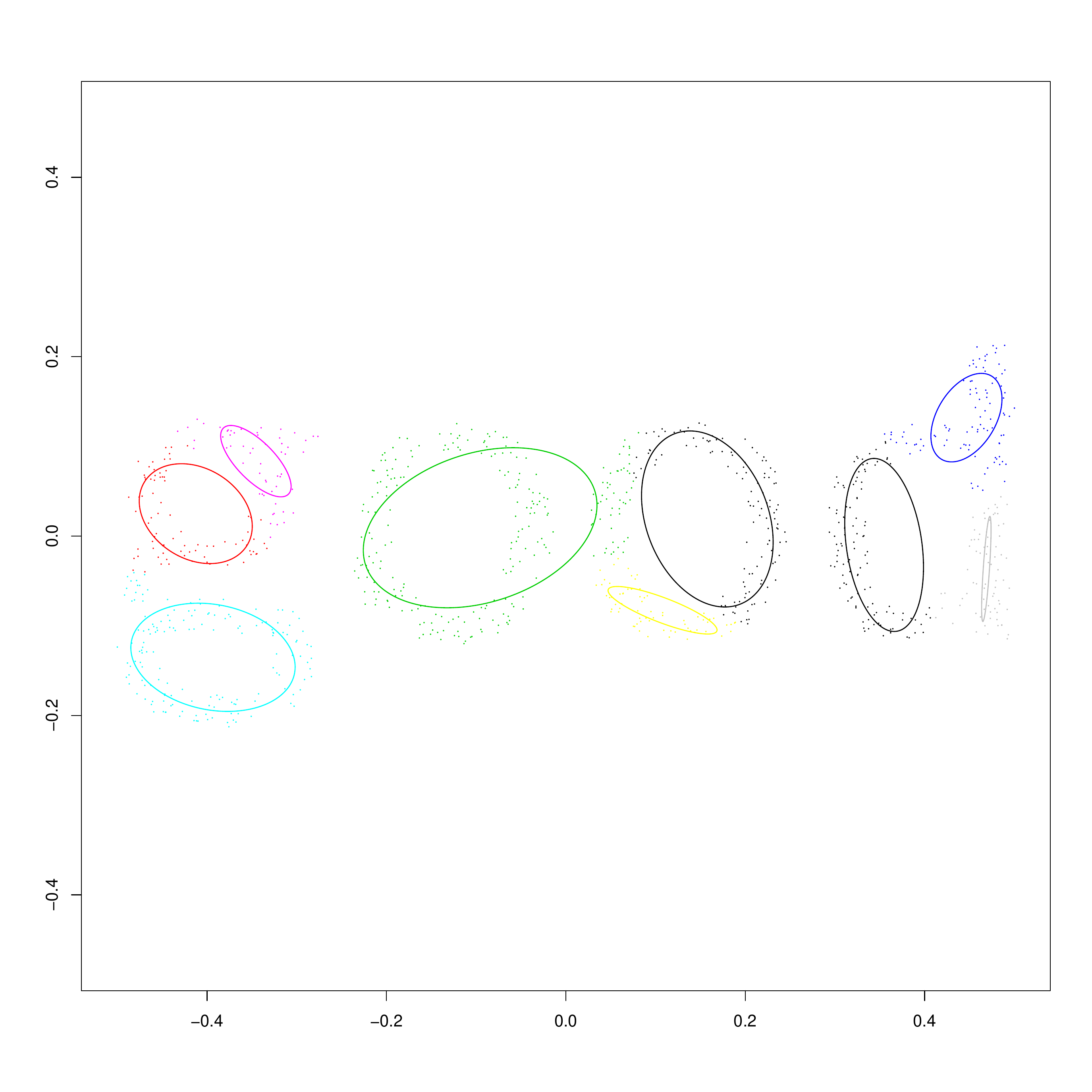} &
		\includegraphics[width=0.25\textwidth]{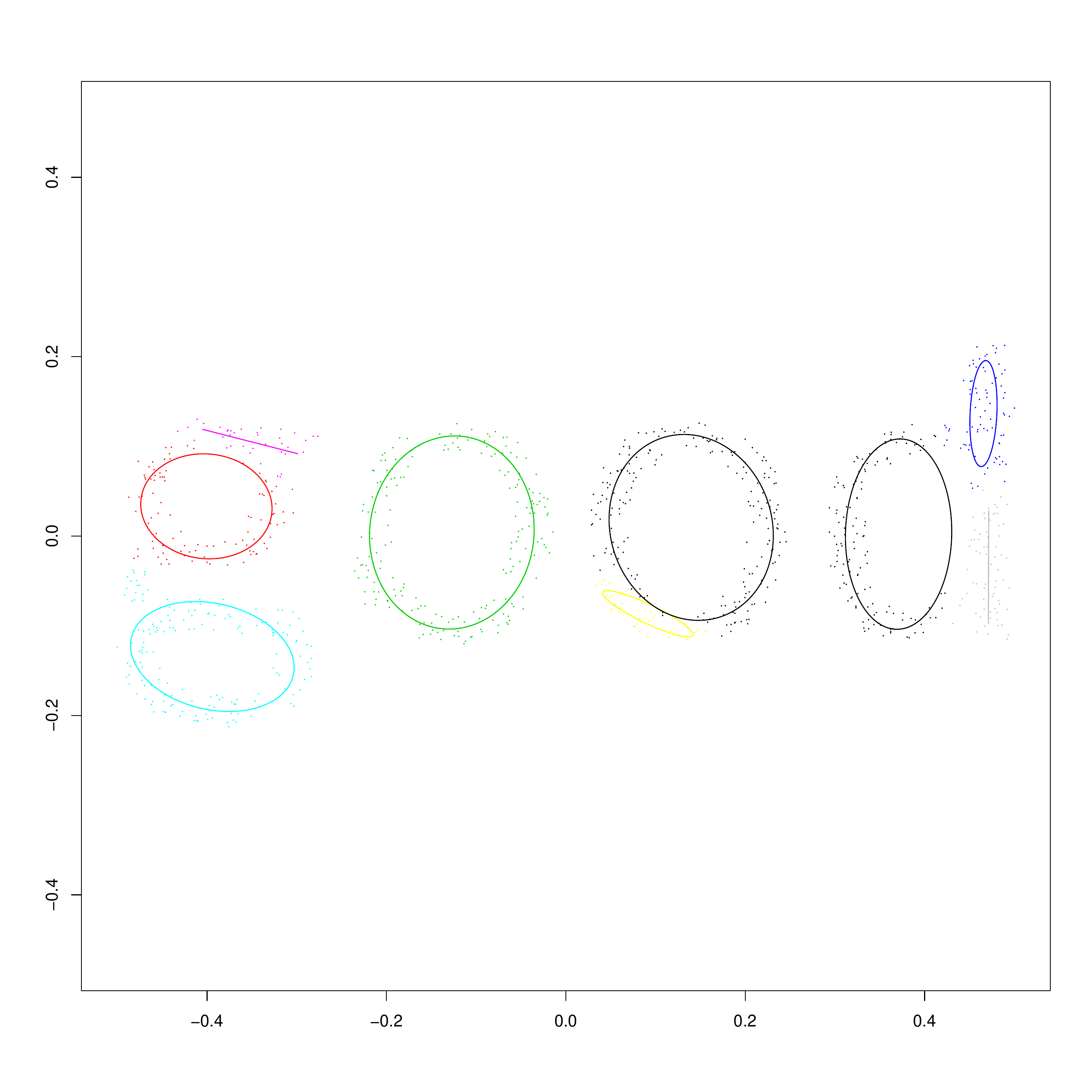} &
		\includegraphics[width=0.25\textwidth]{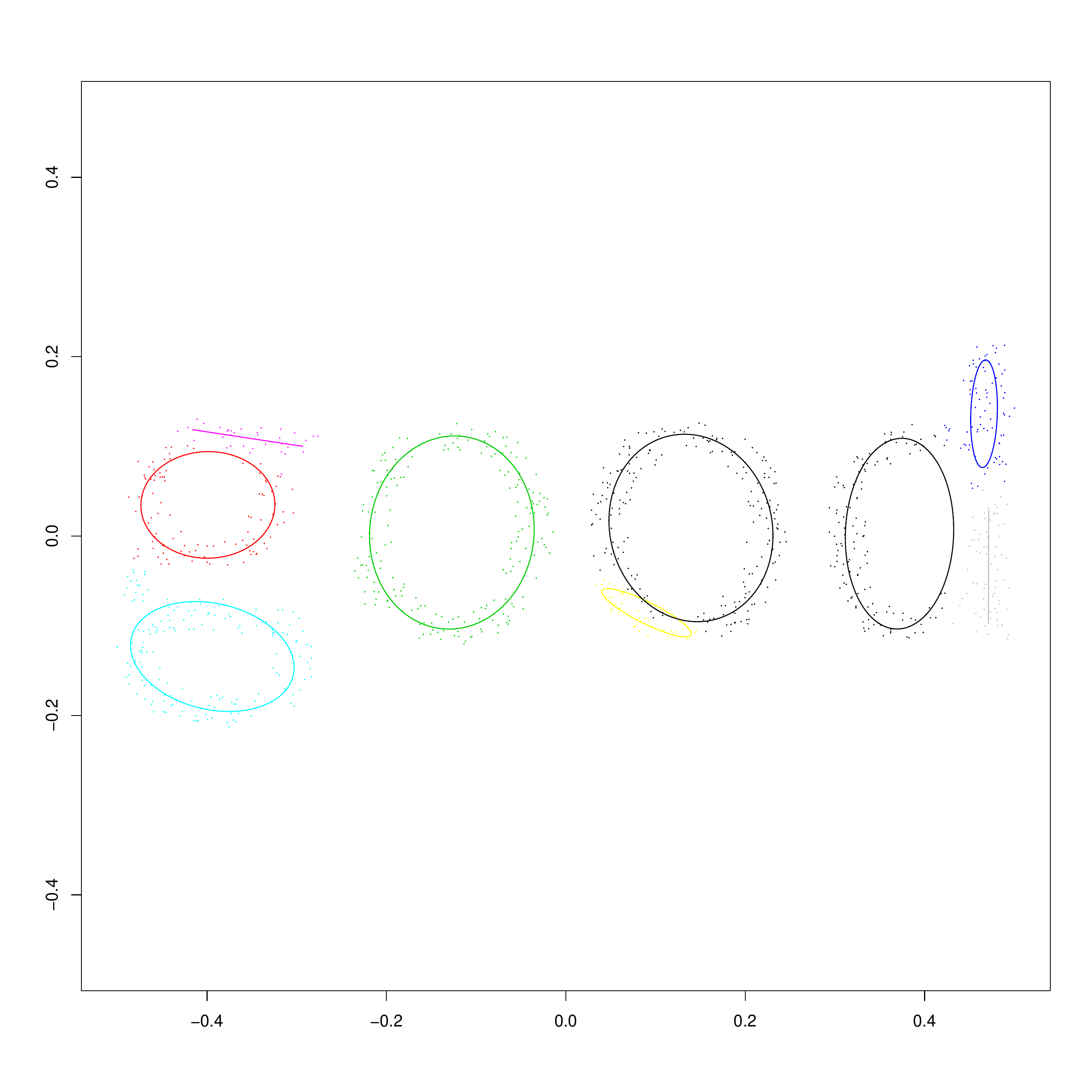} &
		\includegraphics[width=0.25\textwidth]{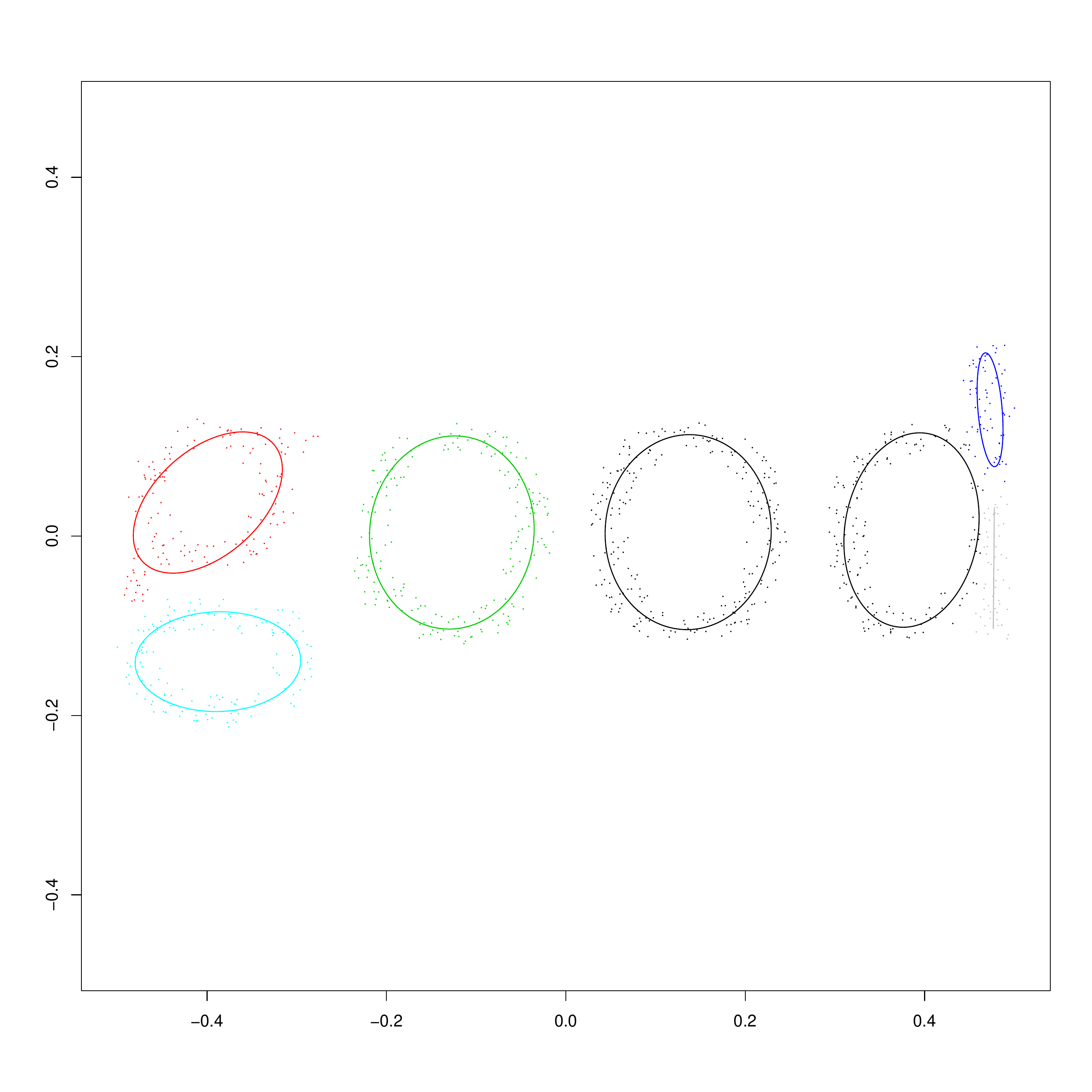} \\
		5'th iteration & 6'th iteration & 7'th iteration & 8'th iteration \\
	\end{tabular}
	\caption{Convergence of MCEC, approximating the text "good" with seven closed-curve Gaussian components gradually reducing from 9 initial clusters. In practice, pink and yellow clusters were reduced.  }
	\label{fig:reduce}
\end{figure}

In general, EM aims at finding
\begin{equation}
p_1, \ldots, p_k \geq 0 \mbox{, such that } \sum_{i=1}^k p_i =1
\end{equation}
and $f_1,\ldots,f_k \in \F$, where $\F$ is a fixed (usually Gaussian) family of densities such that the convex combination
\begin{equation}
    f = p_1f_1 + \ldots + p_kf_k,    
\end{equation}
optimally approximates the data.
The optimization is taken with respect to an MLE-based cost function:
\begin{equation}\label{eq:gmm}
EM(f,X) = - \frac{1}{|X|} \sum_{j=1}^{n} \ln(f(x_j)) = - \frac{1}{|X|} \sum_{j=1}^{n} \ln( p_1{f_1} + \ldots + p_k{f_k} ),
\end{equation}
where $|X|$  denotes the cardinality of a set $X$. The optimization in EM consists of the Expectation and Maximization steps. While the Expectation step is relatively simple, the Maximization usually (except for the simplest case when the family $\F$ denotes all Gaussian densities) needs a complicated numerical optimization.

The goal of CEC (cross-entropy clustering)
\cite{spurek2014cross} is similar, i.e., it aims at minimizing the cost function 
\begin{equation}\label{eq:cec}
CEC(f,X) = - \frac{1}{|X|} \sum_{j=1}^{n} \ln\left(  \max( p_1{f_1}, \ldots, p_k{f_k}) \right),
\end{equation}
where $p_1, \ldots, p_k \geq 0$ and $\sum_{i=1}^k p_i =1$.
Contrary to EM, minimizing the CEC cost function can often lead to reducing the number of clusters (components). In other words, after training some probabilities $p_i$ will equal zero, which implies that the clusters have disappeared. Thus $k$, contrary to the case of EM, does not denote the final number of clusters but is only its upper bound. Instead of focusing on the density estimation, CEC concentrates on the clustering, where, in the fashion similar to EM, the point $x$ is assigned to the cluster maximizing $p_if_i(x)$. The seemingly small difference in the cost function between GMM and CEC has profound further consequences, which follow from the fact that the densities in CEC do not “cooperate” to build the final approximation of $f$.
In the paper, we use CEC instead of GMM.

We provide a detailed description of the MCEC algorithm in Appendix B~\ref{AppendixB}. Since, unlike the CEC and afCEC algorithms (see Fig.~\ref{fig:teser}), which use the mixture of the classical multivariate Gaussian and the f-adapted multivariate Gaussian distributions respectively, in general, there are no closed-form formulas for the estimated values of the parameters of the probability distribution model used by the MCEC, in our algorithm we use the BFGS gradient optimizer to obtain the estimated values of the variance and the Fourier series coefficients of the curve, that defines the Gaussian distribution (See Subsection ~\ref{Subsection4.3}), which minimizes the value of the cross-entropy $H^{x}(X\|f_{\mathcal{N}(M_{\upvarphi},\sigma^2)})$ for some cluster $X$.

\begin{figure}[H]
	\centering
	\begin{tabular}{@{ }c@{ }c@{ }c@{ }c@{ }}
		AfCEC  & MCEC & AfCEC  & MCEC  \\		
\includegraphics[width=0.23\textwidth, trim=0 0 0 0, clip]{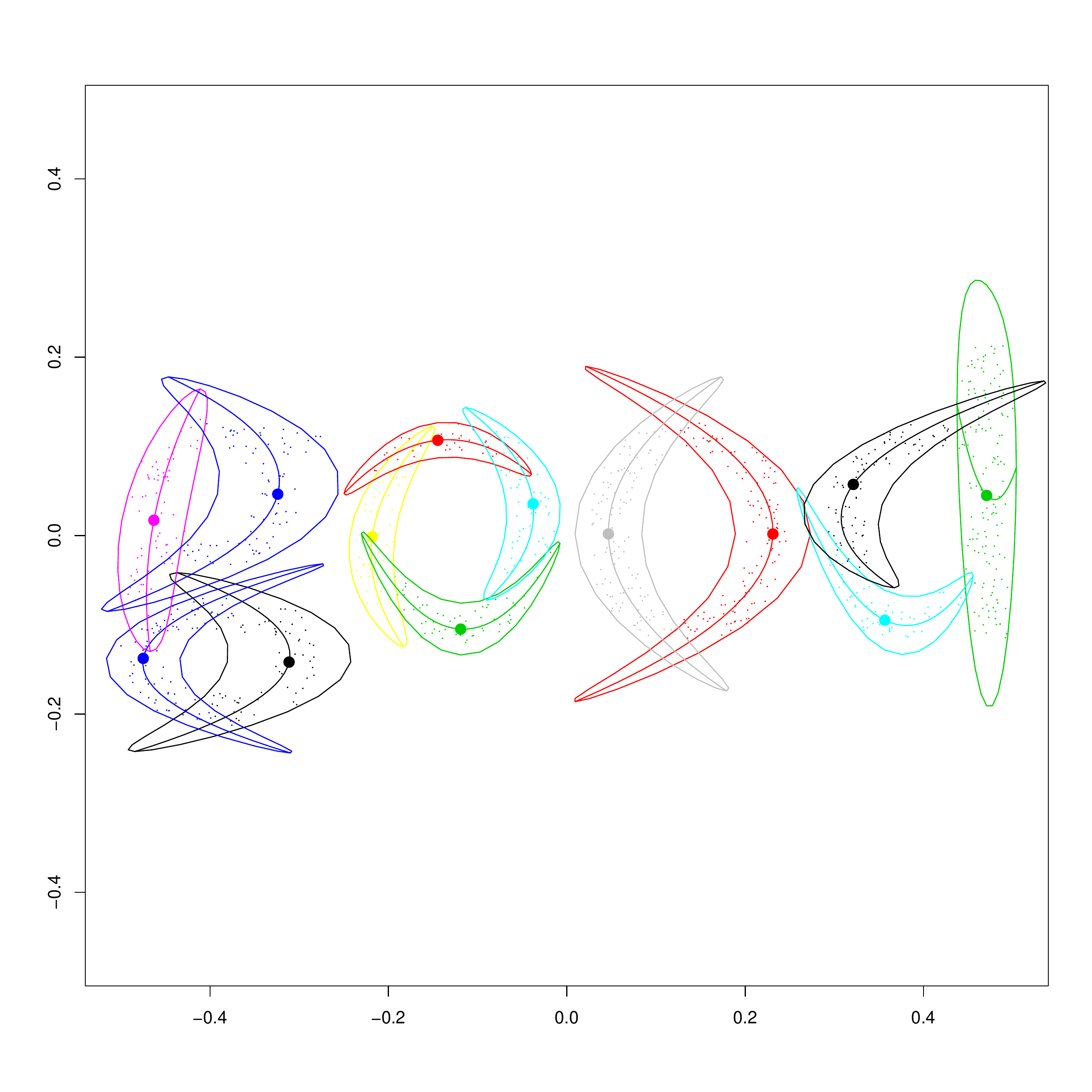} &
\includegraphics[width=0.23\textwidth, trim=0 0 0 0, clip]{16.pdf}  	  &
\includegraphics[width=0.23\textwidth, trim=0 0 0 0, clip]{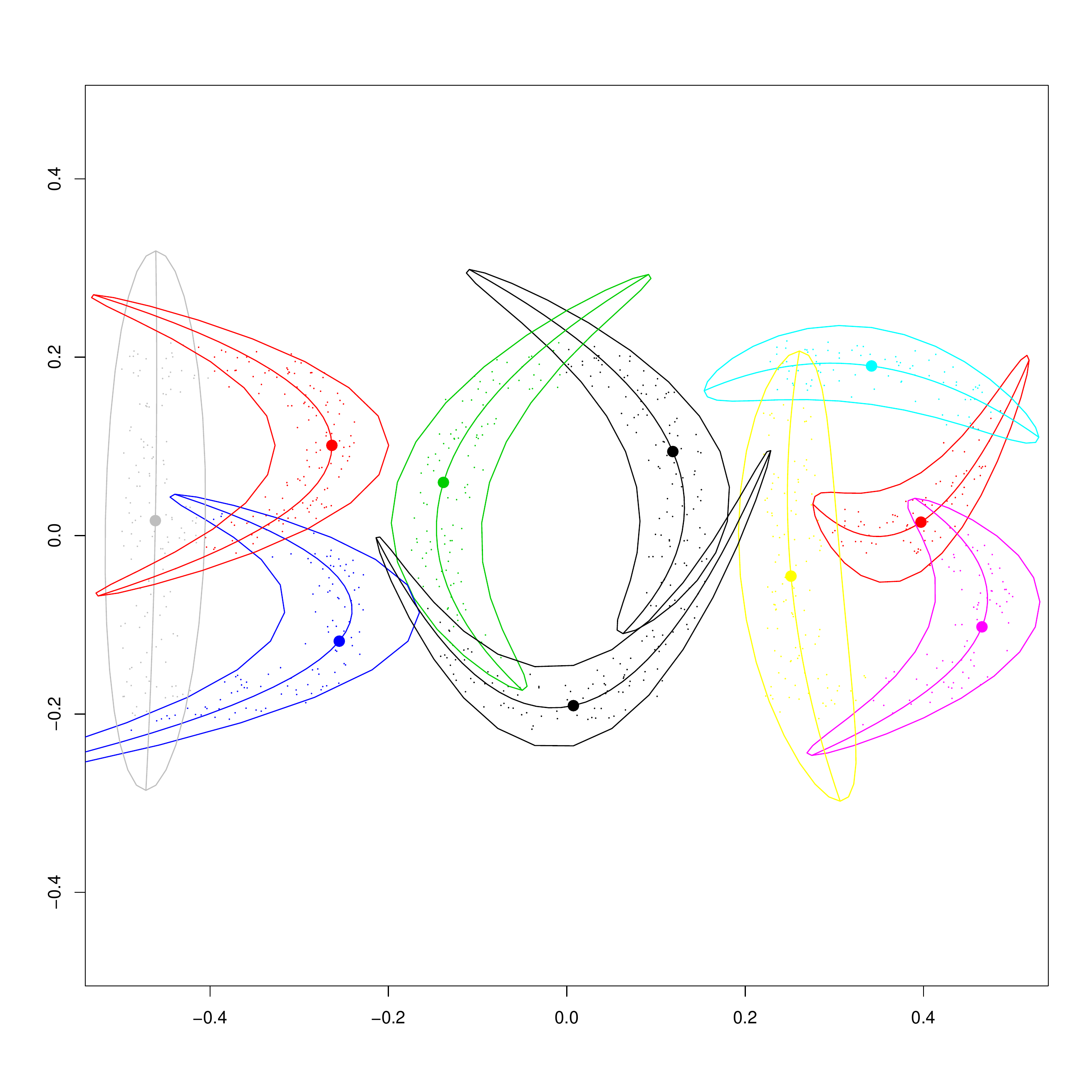}  &
  \includegraphics[width=0.23\textwidth, trim=0 0 0 0, clip]{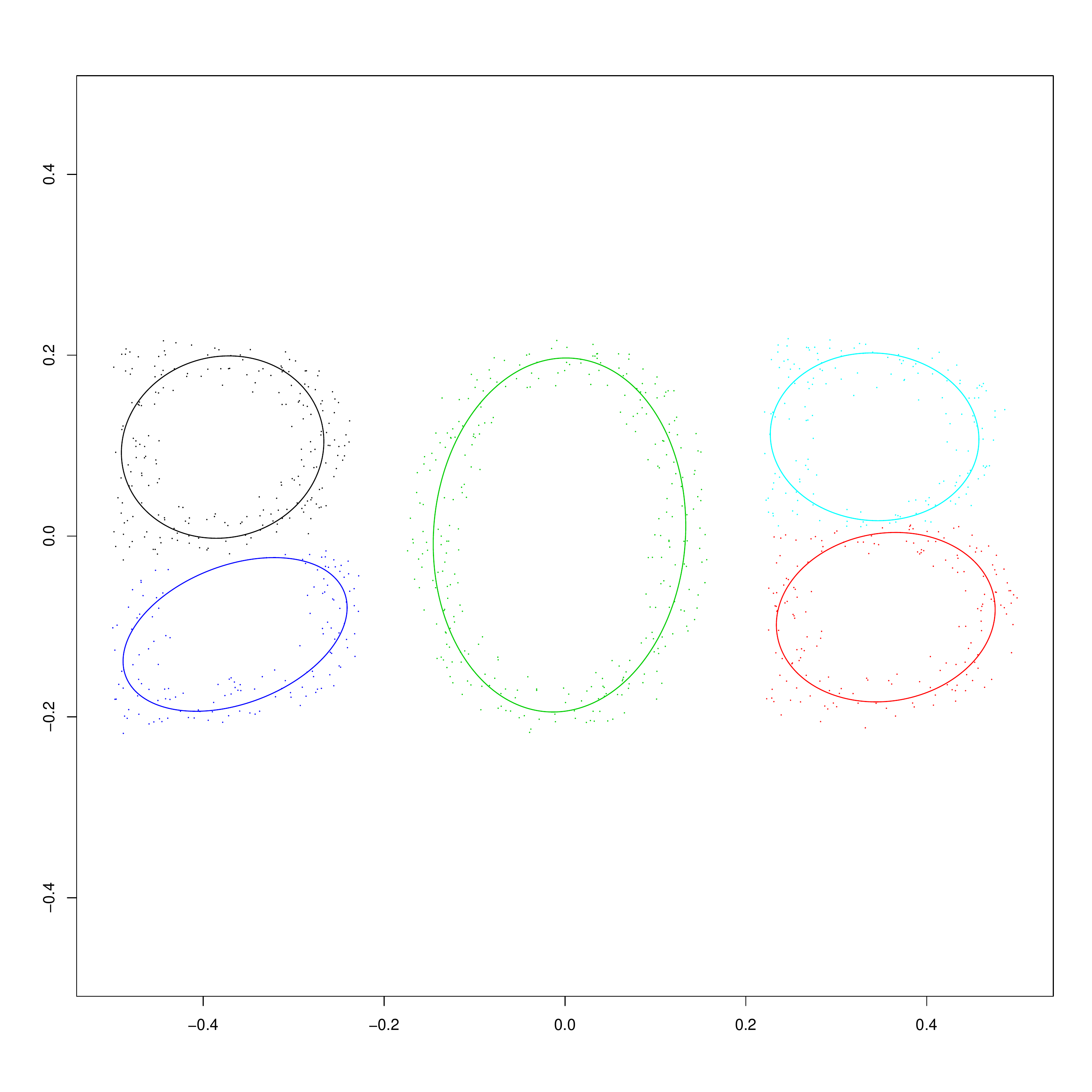}
\\
	\end{tabular}
	\caption{Comparison of the method describing objects using the f-adapted Gaussians (afCEC) with our approach (MCEC), which uses closed curves. Let's note that we can use fewer parameters to approximate the data using the closed-curve type clusters.}
	\label{fig:teser}
\end{figure}

\section{Density model}
\label{Section4}

This section presents a density model that we use in the CEC algorithm.
We focus on the Gaussian distributions on closed curves. This approach aims at transforming a normal distribution (which assumes the intrinsic linearity of the model) into the case of manifolds given by the closed curves. The above model is used in the MCEC method as a representation of each cluster.

\subsection{Toy example: ellipses in $\R^2$}
\label{Subsection4.1}

We begin with an illustration of our idea in the two-dimensional case of the simple ellipse shape.
We need two main components -- Gaussian distribution and parametrization of ellipse shape. 
Let us recall that a two-dimensional Gaussian density with mean $\m$
and covariance matrix $\Sigma = \sigma^2 \mathrm{I}$ is given by the formula~(\ref{norfmal_den}):

\begin{equation}\label{norfmal_den}
f_{\mathcal{N}(\m,\sigma^2 \mathrm{I})}(\x)=\frac{1}{\sqrt{\det(2\pi\sigma^2 \mathrm{I})}} e^{-\frac{1}{2} (\x-\m)^T(\sigma^2 \mathrm{I})^{-1}(\x-\m) } \for \x \in \R^2,
\end{equation}
Ellipse can be parametrized by the trigonometric functions
\begin{equation}\label{elipses_parametri}
\upvarphi:[0,2\pi) \ni s \to ( a_0 +a_1 \cos(s), b_0 +b_1 \cos(s)) \in \R^2.
\end{equation}

Now we can define the probability density function of the Multivariate Normal Distribution with mean $\m=\upvarphi(s)$ and the covariance matrix $\Sigma=\sigma^2 \mathrm{I}$:


\begin{flalign}\label{elipse_dens}
f_{\mathcal{N}(\upvarphi(s),\Sigma)}(\x) &= \frac{1}{\sqrt{ \det( 2 \pi \sigma^2 \mathrm{I} )}} e^{ -\frac{1}{2} (\mathrm{x}-\upvarphi(s))^{\top} (\sigma^2 \mathrm{I})^{-1} (\mathrm{x}-\upvarphi(s)) } =&\\
&= \frac{1}{\sqrt{(2 \pi)^2 \sigma^{4}}} e^{ -\frac{1}{2 \sigma^2} {\Vert \mathrm{x}-\upvarphi(s) \Vert}^2 } \notag
\end{flalign}
\newline
Density given by equation (\ref{elipse_dens}) can be understood as joint distribution.
We aim at constructing the probability density distribution on the ellipse parameterized by $\upvarphi$ with variance $\sigma^2$. We can calculate it by taking the marginal distribution of  $f_{\mathcal{N}(\upvarphi(s),\Sigma)}$ given by equation (\ref{elipse_dens}).

\begin{restatable}{definition}{definition1}
\label{definition11}
By Gaussian distribution on the  ellipse parameterized by $\upvarphi$ with variance $\sigma^2$ we understand the marginal distribution
\begin{equation}\label{eq:margin}
f_{\mathcal{N}(M_{\upvarphi},\sigma^2)}(\x) =  \int \limits_{[0,1]} \frac{1}{\sqrt{(2 \pi)^2 \sigma^{4}}} e^{ -\frac{1}{2 \sigma^2} {\Vert \x-\upvarphi(2 \pi s) \Vert}^2 } \dd{s}
\end{equation}
\end{restatable}

\begin{figure}[h]
    \begin{center}
        \includegraphics[width=5cm, trim=70 350 50 330, clip]{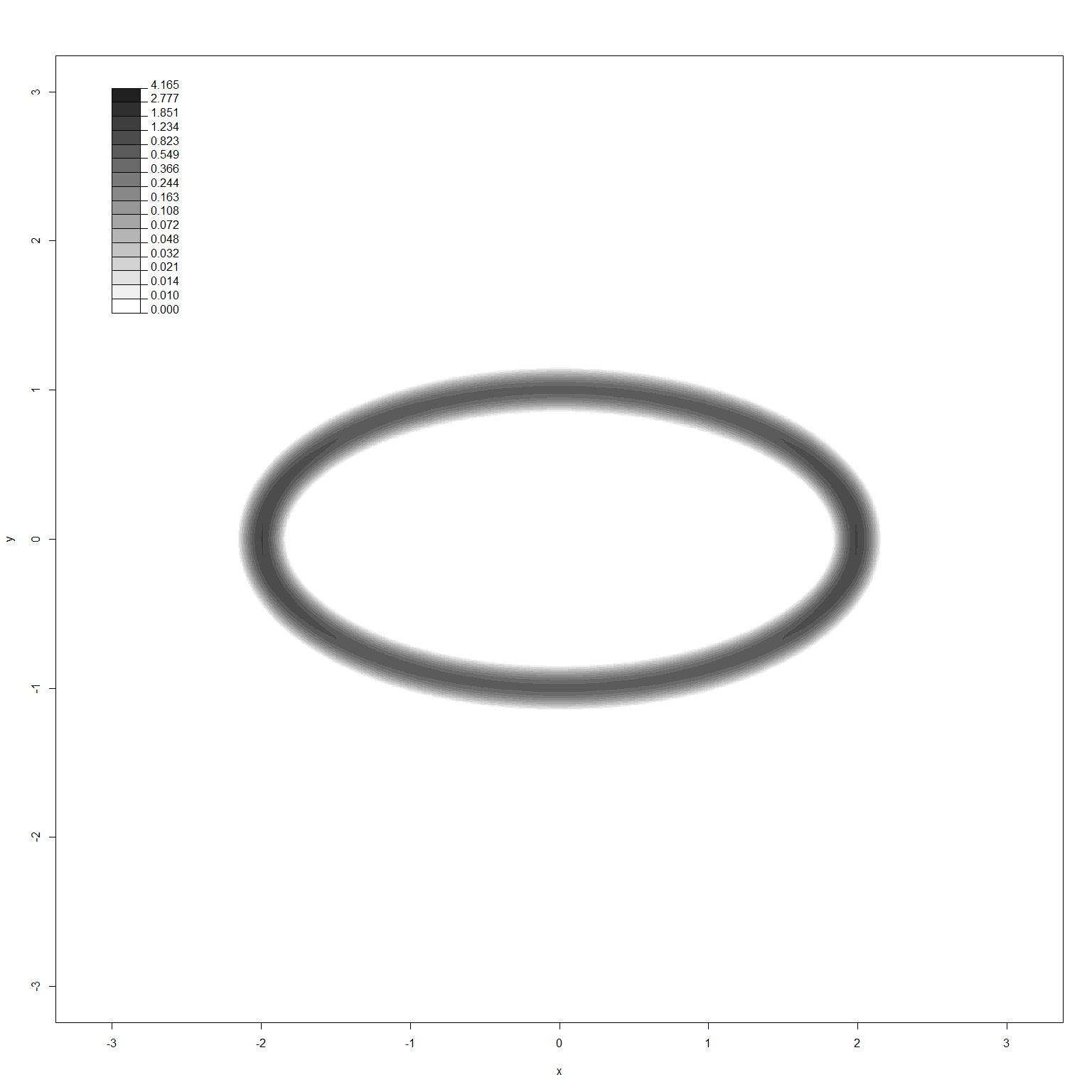}
    	\includegraphics[width=5cm, trim=70 350 50 330, clip]{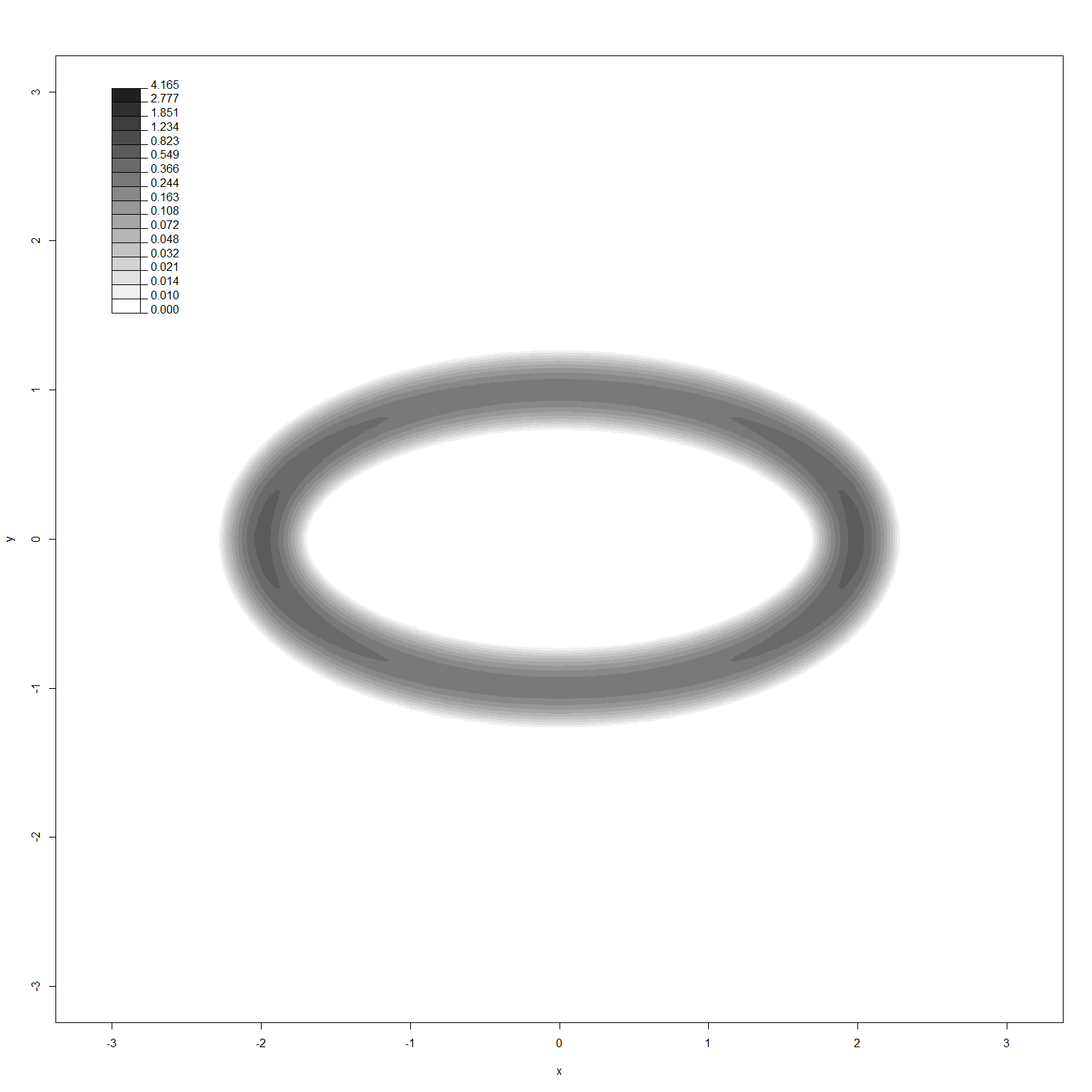}
	\end{center}
	
	\caption{\przemek{In the figure, we present two-dimensional level sets of the probability distribution function given by Definition~\ref{definition11}. We model 2D ellipses with parameters $\sigma=0.05$ (left) and $\sigma=0.1$ (right). The parameter sigma allows modeling ellipses with different levels of noise.
	The integral was solved numerically.}
	}
	\label{fig:den_1}
\end{figure}

The probability distribution given by Definition~\ref{definition11} is in fact the convolution of the ellipse and the Gaussian distribution. In practice, in simple words, in each point of the ellipse, we put the Gaussian noise, see Fig.~\ref{fig:den_1}.     

Unfortunately, the above integral is difficult to calculate. 
More precisely, it cannot be expressed in terms of the elementary functions\footnote{ Only in the case of a circle given by the equation:
$\upvarphi(\mathrm{s})=(r \cos(2 \pi \mathrm{s}_1),r \sin(2 \pi \mathrm{s}_1))$, where $r \in \mathbb{R}_{+}$
it can be expressed by Modified Bessel Function of the First Kind.}.
Instead of calculating the exact formula, we can use the approximation.
We divide an ellipse into $k \in \N_{+}$ equal parts and approximate the integral by sum, see equation (\ref{eq:margin_app1}):

\begin{equation}\label{eq:margin_app1}
f_{\mathcal{N}(M_{\upvarphi},\sigma^2)}(\mathrm{x})  \approx  \frac{1}{k} \sum_{i=0}^{k-1}   \frac{1}{\sqrt{(2 \pi)^2 \sigma^{4}}} e^{ -\frac{1}{2 \sigma^2} {\Vert \x-\upvarphi( 2 \pi i/k ) \Vert}^2 }
\end{equation}

\begin{figure}[h]
    \begin{center}
        \includegraphics[width=5cm, trim=70 350 50 330, clip]{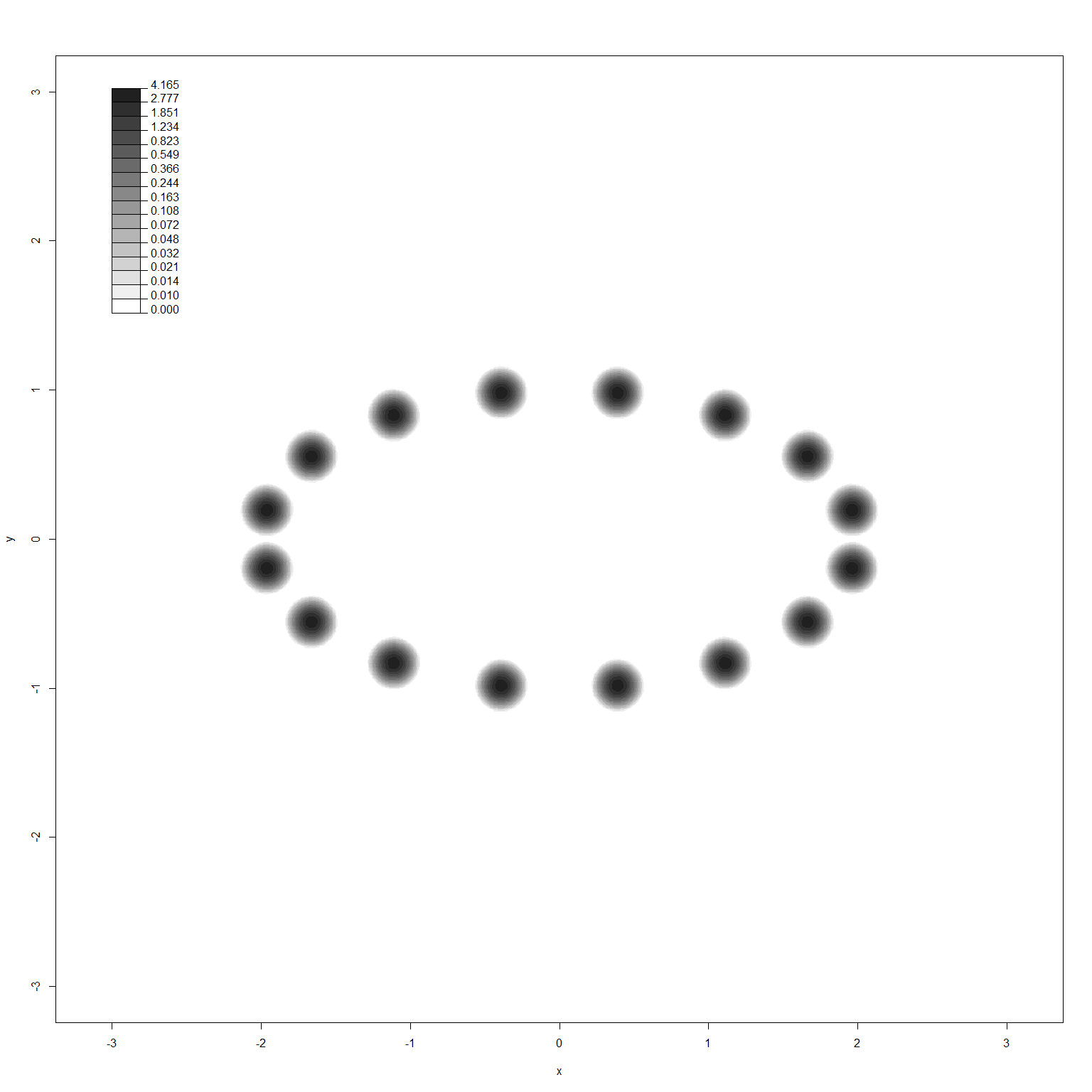}
    	\includegraphics[width=5cm, trim=70 350 50 330, clip]{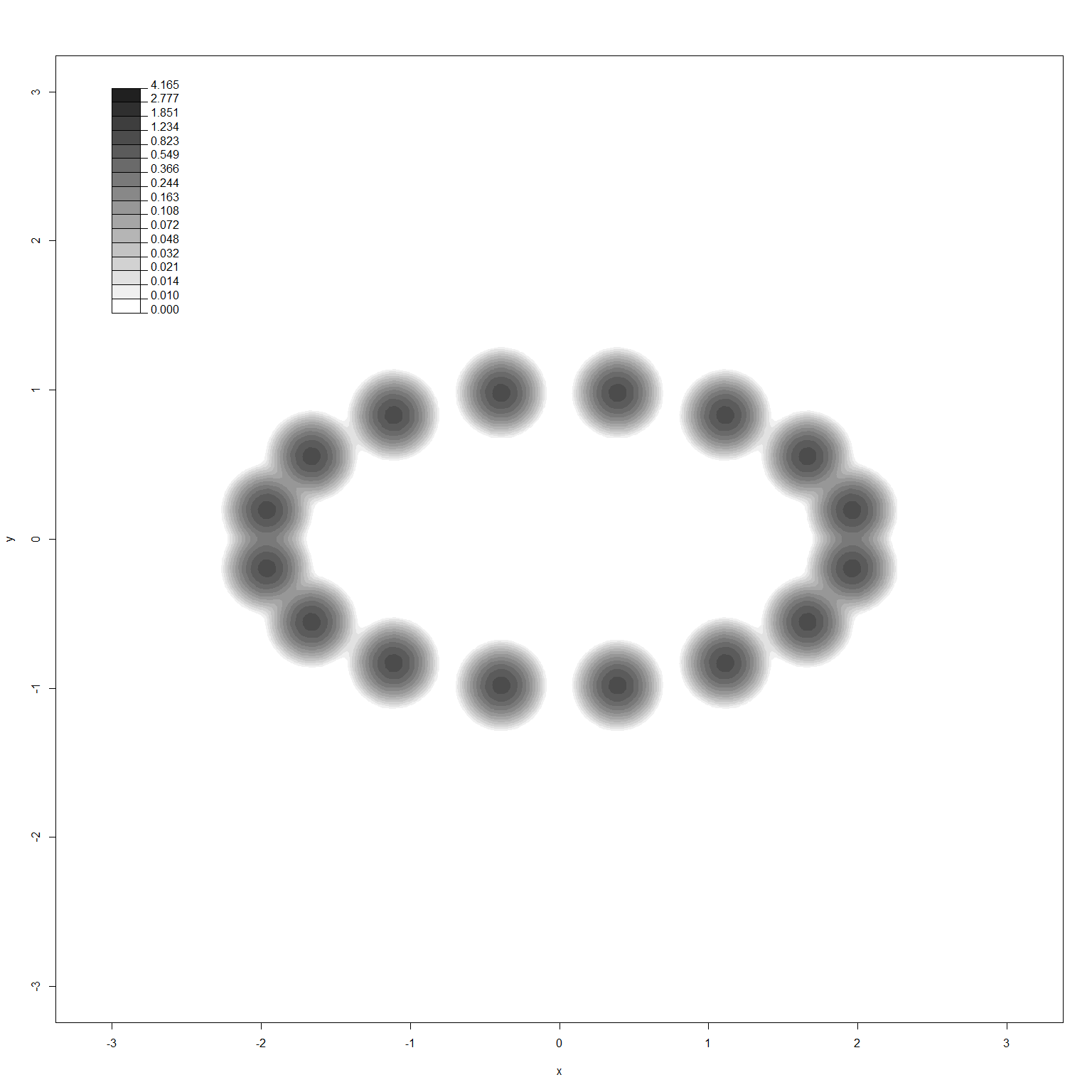}
	\end{center}
	
	\caption{\przemek{In the figure, we present an approximation of ellipses distribution by  formula (\ref{eq:margin_app1}) with $k=16$ and $\sigma=0.05$ (left) and $\sigma=0.01$ (right). Unfortunately, such a solution describes the shape of the ellipses only locally and tends to produce missing parts in a probability distribution.}
	}
	\label{fig:den_2}
\end{figure}

Unfortunately, such a solution describes the shape of the ellipses only locally and tends to over-fit the points. Consequently, 
informally speaking, there are the "holes" in our probability distribution function, see Fig.~\ref{fig:den_2}. 

In order to solve this problem, we can divide our curve into a collection of equal size segments and construct a chain of the Gaussian distributions. It means that we effectively approximate each interval by the two-dimensional Gaussian distribution $N(m_i,\Sigma_i)$ for $i=0,\ldots, {k-1}$

\begin{equation}\label{eq:margin_app2}
f_{\mathcal{N}(M_{\upvarphi},\sigma^2)}(\x)  \approx  
\frac{1}{k} \sum_{i=0}^{k-1} f_{N(\m_i,\Sigma_i)}(\x)  
\end{equation}

where
\newline
$$
\m_i = k\int_{i/k}^{(i+1)/k}\upvarphi(2 \pi s) ds
$$
$$
\Sigma_i = {\sigma^2}\mathrm{I} + k\int_{i/k}^{(i+1)/k} (\upvarphi(2 \pi s)-\m_i)(\upvarphi(2 \pi s)-\m_i) ds
$$
\newline
Parameters $m_i$ and $\sigma_i$ for $i=0,\ldots, {k-1}$  optimally approximate the part of the ellipse.

Since the end of one interval is the beginning of the next, we never get empty spaces between components, see Fig.~\ref{fig:den_3}. Consequently, we do not have an overfitting effect like in the case of the previous model.

\begin{figure}[h]
    \begin{center}
        \includegraphics[width=5cm, trim=70 350 50 330, clip]{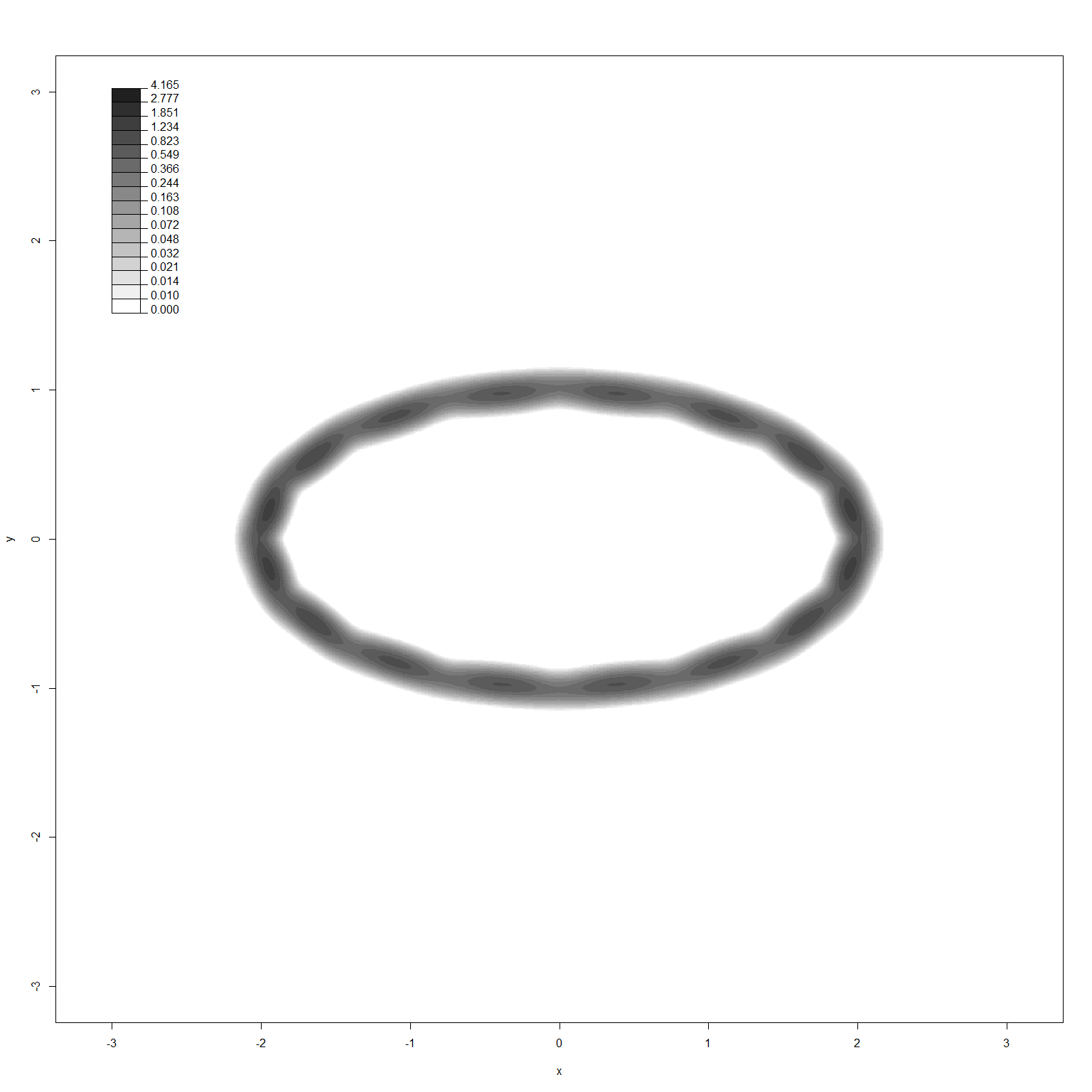}
    	\includegraphics[width=5cm, trim=70 350 50 330, clip]{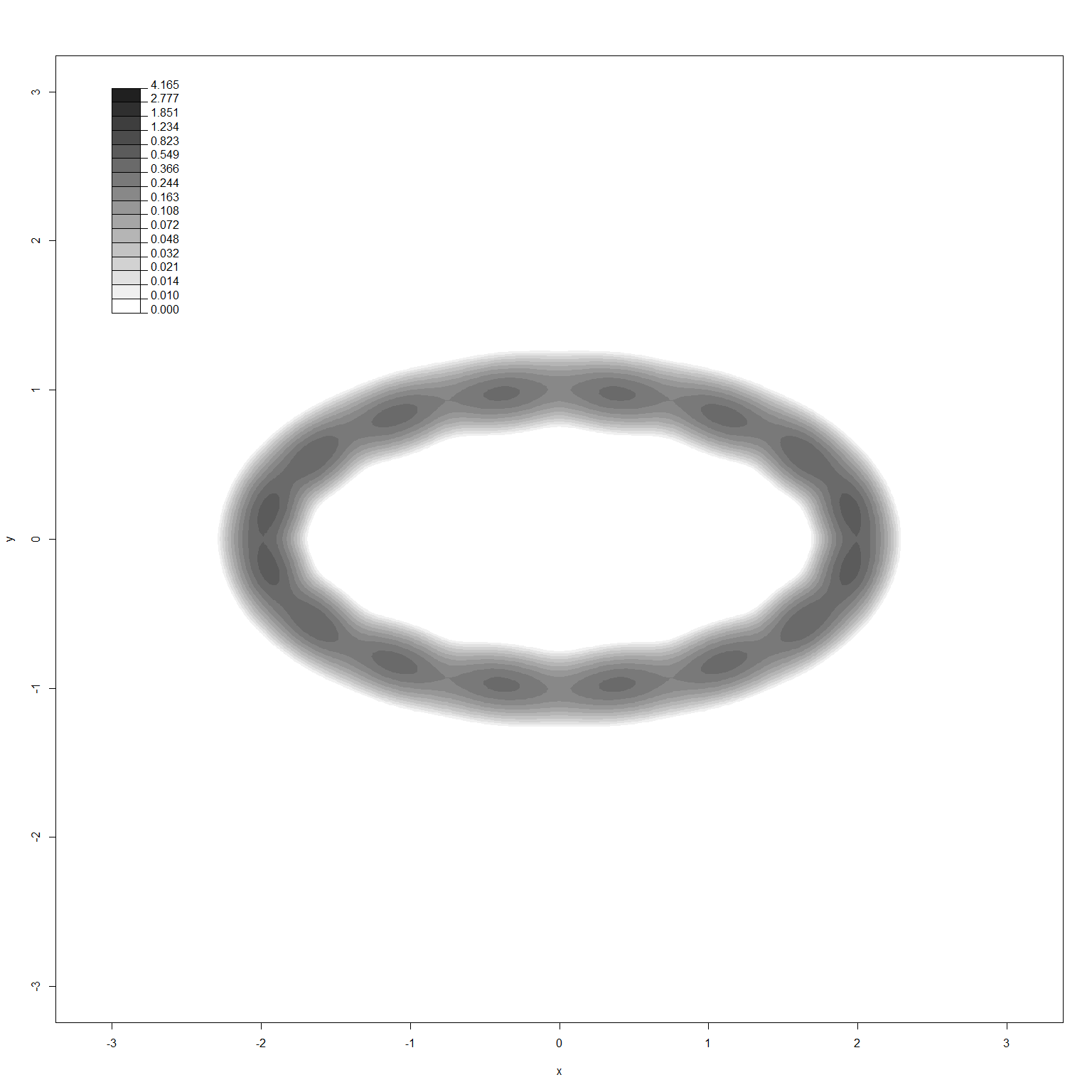}
	\end{center}
	
	\caption{\przemek{In the figure, we present an approximation of ellipses distribution by formula~(\ref{eq:margin_app2}) for $k=16$ with $\sigma=0.05$ (left) and $\sigma=0.01$ (right). As we can see, such a solution adjusts the ellipse with high accuracy.}
	}
	\label{fig:den_3}
\end{figure}

\subsection{Arbitrary  shape in $\R^2$}
\label{Subsection4.2}

We want to model different shapes defined by closed curves. In practice, we consider curves given by the equation (\ref{2d_parametri}):

\begin{equation}\label{2d_parametri}
    \upvarphi(s):[0,1] \mapsto   \left( \upvarphi^{(1)}(s), \upvarphi^{(2)}(s) \right) \in \mathbb{R}^n.
\end{equation}

We aim to construct a probability distribution on the curve parameterized by $\upvarphi$ with variance $\sigma^2$. By analogy with Definition~\ref{definition11} we obtain the Gaussian distribution on the curve, which probability density function is given by the formula~(\ref{eq:margin}).

\begin{restatable}{definition}{definition2}
\label{definition2}
Gaussian distribution on the curve parameterized by $\upvarphi$ with variance $\sigma^2$ is defined by the marginal distribution:
\begin{equation}\label{eq:margin}
f_{\mathcal{N}(M_{\upvarphi},\sigma^2)}(\x) =  \int \limits_{[0,1]} \frac{1}{\sqrt{(2 \pi)^2 \sigma^{4}}} e^{ -\frac{1}{2 \sigma^2} {\Vert \x-\upvarphi(s) \Vert}^2 } \dd{s}
\end{equation}
\end{restatable}

Analogically to the case of the ellipses from the previous section, we can approximate such a formula by the Gaussian distribution $N(m_i,\Sigma_i)$ for $i=0,\ldots, k-1$

\begin{equation}\label{eq:margin_app3}
f_{\mathcal{N}(M_{\upvarphi},\sigma^2)}(\x)  \approx  
\frac{1}{k} \sum_{i=0}^{k-1} f_{N(\m_i,\Sigma_i)}(\x)  
\end{equation}

where 
$$
\m_i = k\int_{i/k}^{(i+1)/k}\upvarphi(s) ds
$$
$$
\Sigma_i = {\sigma^2}\mathrm{I} + k\int_{i/k}^{(i+1)/k} (\upvarphi(s)-\m_i)(\upvarphi(s)-\m_i) ds
$$
\newline
Parameters $m_i$ and $\sigma_i$ for $i=0,\ldots, k-1$ optimally approximate the part of the curve.

To visualize such approximation let us consider 2D rabbit shape, see Fig.~\ref{fig:rabbit1}, given by the curve $\upvarphi(s) = \left( \upvarphi^{(1)}(s), \upvarphi^{(2)}(s) \right)$, where $\upvarphi^{(1)}(s)$ and $\upvarphi^{(2)}(s)$ are given by the following formulas:

\begin{flalign}
x = \phi^{(1)}(\mathrm{s}_1) &= 1 \cdot \cos(2 \pi \mathrm{s}_1) + 0.5 \cdot \sin(2 \pi \mathrm{s}_1) +& \\
&+ 0.5 \cdot \cos(4 \pi \mathrm{s}_1) + 0.25 \cdot \sin(4 \pi \mathrm{s}_1) +& \notag\\
&+ 0 \cdot \cos(6 \pi \mathrm{s}_1) + 0 \cdot \sin(6 \pi \mathrm{s}_1) -& \notag\\
&- 0.125 \cdot \cos(8 \pi \mathrm{s}_1) + 0.25 \cdot \sin(8 \pi \mathrm{s}_1) +& \notag\\
&+ 0.125 \cdot \cos(10 \pi \mathrm{s}_1) - 0.125 \cdot \sin(10 \pi \mathrm{s}_1) \notag
\end{flalign}

\noindent
and

\begin{flalign}
y = \phi^{(2)}(\mathrm{s}_1) &= 0.25 \cdot \cos(2 \pi \mathrm{s}_1) + 1 \cdot \sin(2 \pi \mathrm{s}_1) +&\\
&+ 0 \cdot \cos(4 \pi \mathrm{s}_1) + 0.5 \cdot \sin(4 \pi \mathrm{s}_1) -&\notag\\
&- 0.125 \cdot \cos(6 \pi \mathrm{s}_1) + 0.25 \cdot \sin(6 \pi \mathrm{s}_1) +&\notag\\
&+ 0 \cdot \cos(8 \pi \mathrm{s}_1) + 0 \cdot \sin(8 \pi \mathrm{s}_1) +&\notag\\
&+ 0.125 \cdot \cos(10 \pi \mathrm{s}_1) + 0.125 \cdot \sin(10 \pi \mathrm{s}_1)\notag
\end{flalign}


\begin{figure}[h]
	\centering
	\begin{subfigure}[t]{0.32\textwidth}
		\includegraphics[width=\textwidth, trim=200 300 50 100, clip]{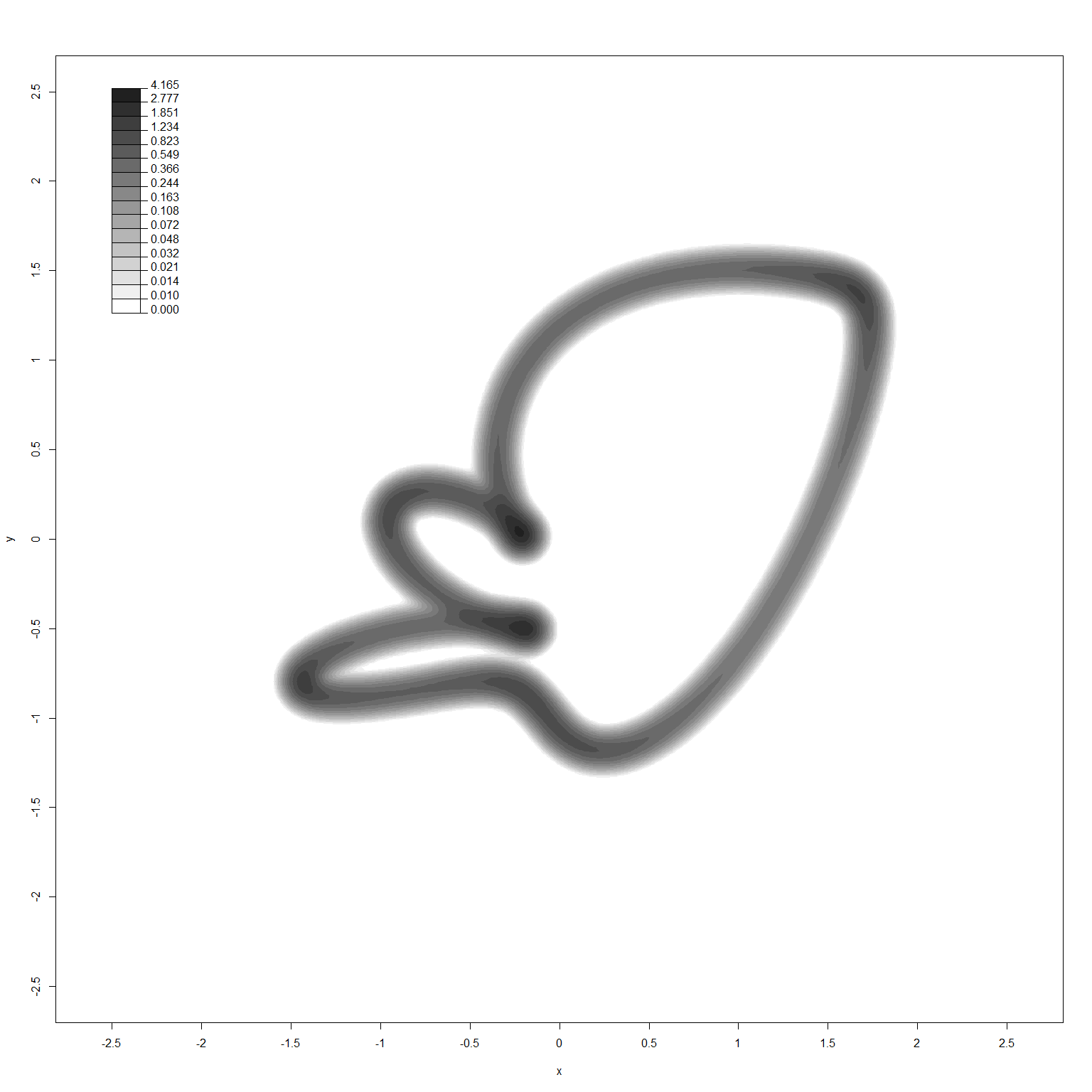}
		\caption{Original distribution $f_{\mathcal{N}(M_{\upvarphi},\sigma^2)}$ calculated numerically for $\sigma=0.05$.}
	\end{subfigure}
	\begin{subfigure}[t]{0.32\textwidth}
		\includegraphics[width=\textwidth, trim=200 300 50 100, clip]{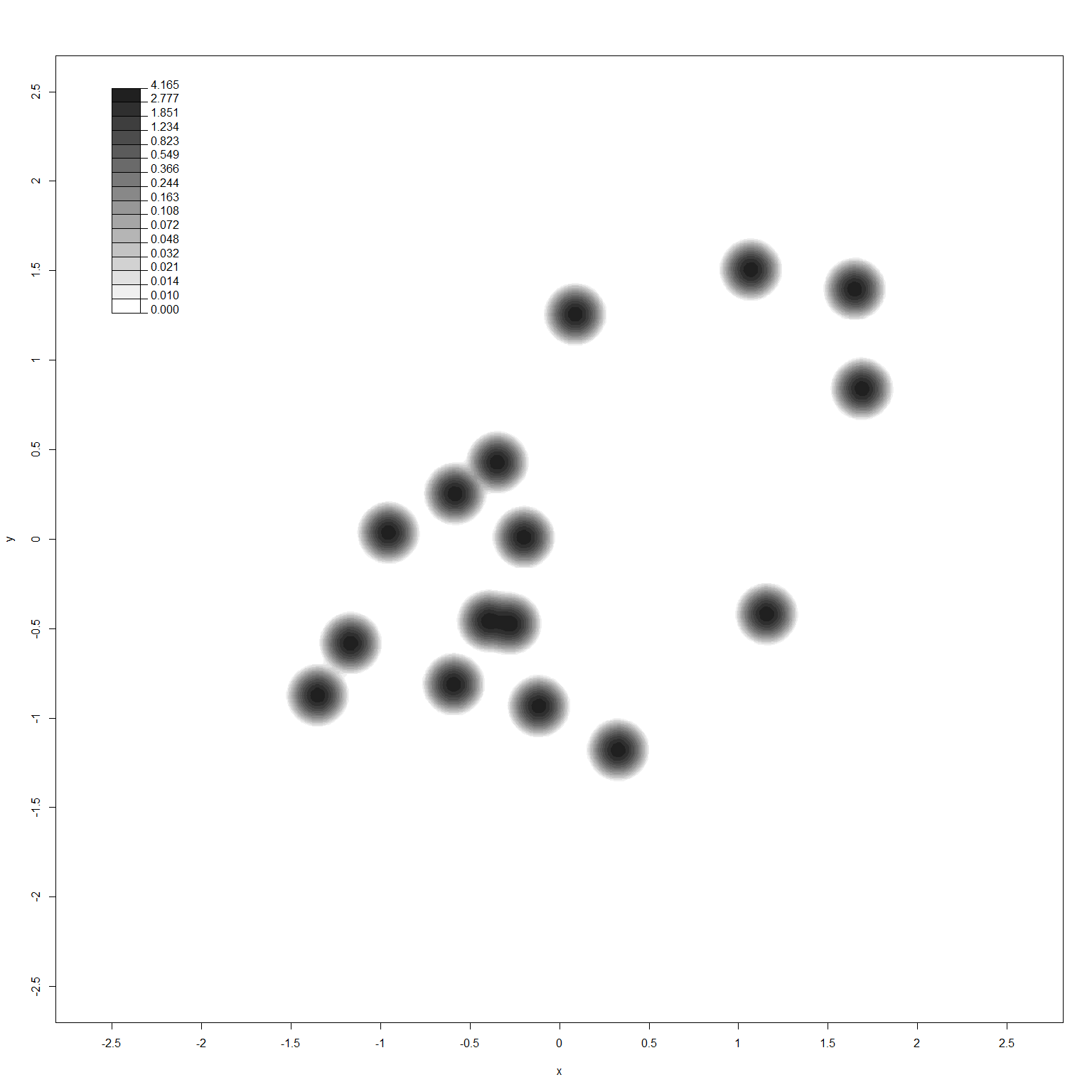}
		\caption{Probability density function approximated by formula~(\ref{eq:margin_app1}) for $\sigma=0.05$ and $k=16$ partitions.}
	\end{subfigure}
	\begin{subfigure}[t]{0.32\textwidth}
		\includegraphics[width=\textwidth, trim=200 300 50 100, clip]{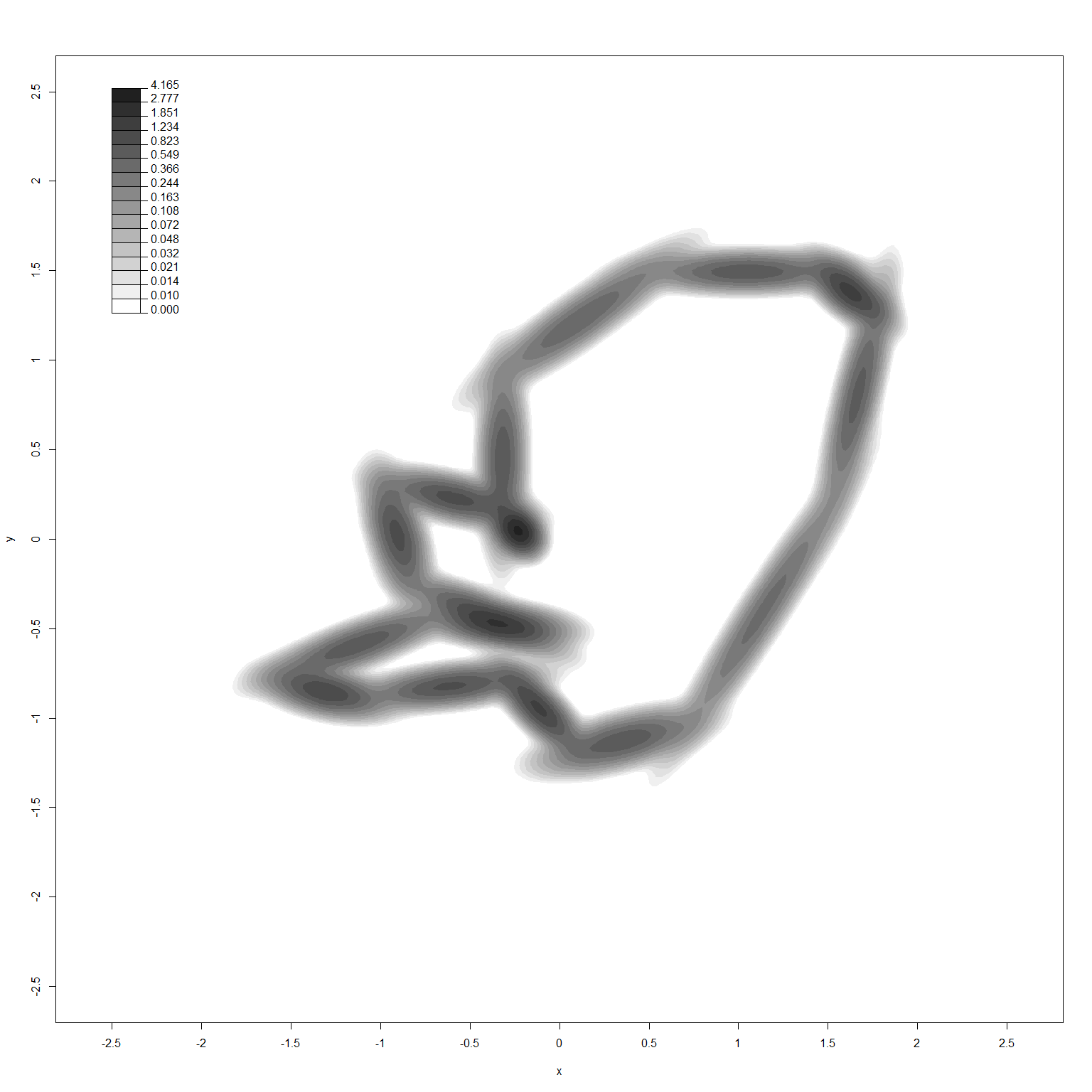}
		\caption{Probability density function approximated by formula~(\ref{eq:margin_app2}) for $\sigma=0.05$ and $k=16$ partitions.}
	\end{subfigure}
	\\
	\begin{subfigure}[t]{0.32\textwidth}
		\includegraphics[width=\textwidth, trim=200 300 50 100, clip]{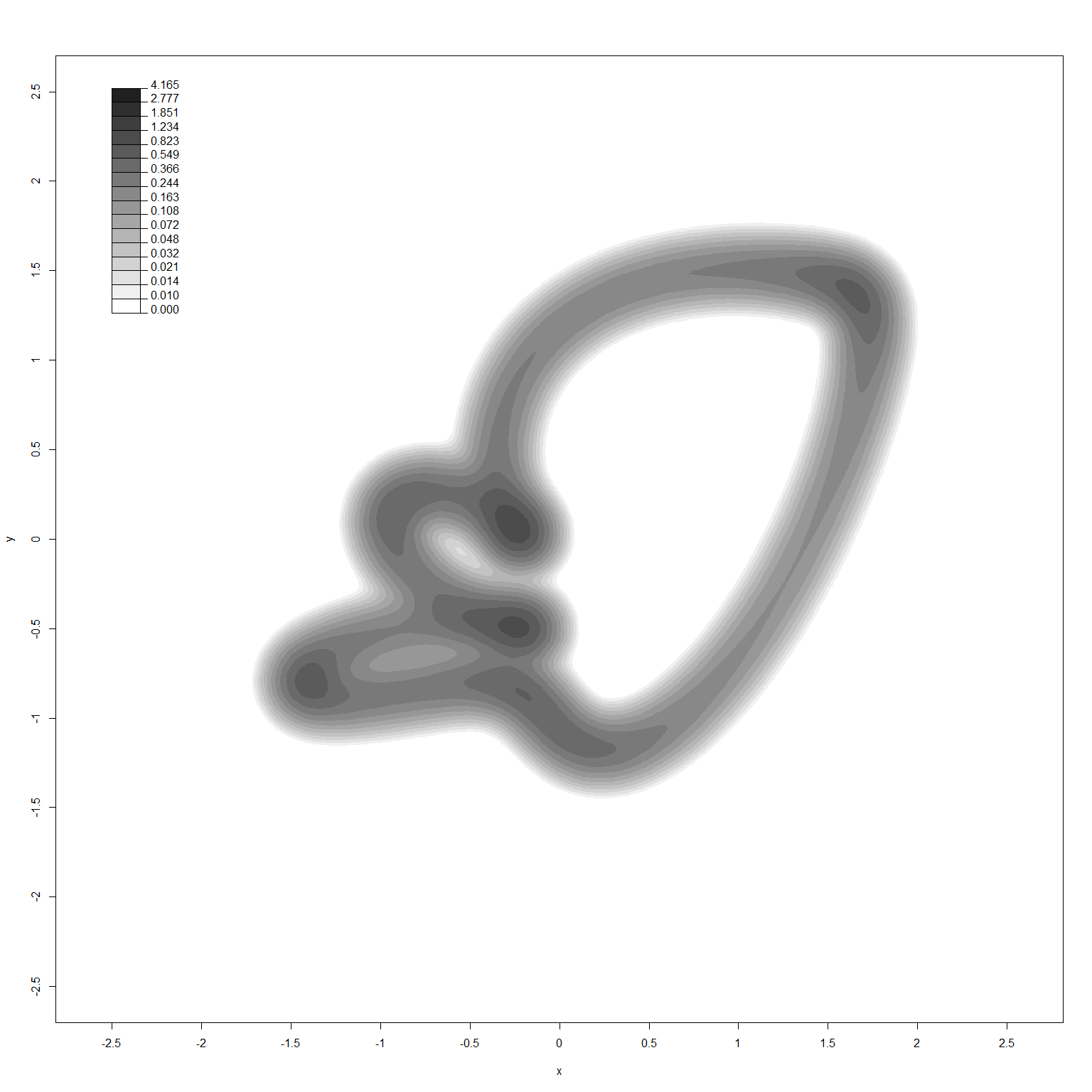}
		\caption{Original distribution $f_{\mathcal{N}(M_{\upvarphi},\sigma^2)}$ calculated numerically for $\sigma=0.1$.}
	\end{subfigure}
	\begin{subfigure}[t]{0.32\textwidth}
		\includegraphics[width=\textwidth, trim=200 300 50 100, clip]{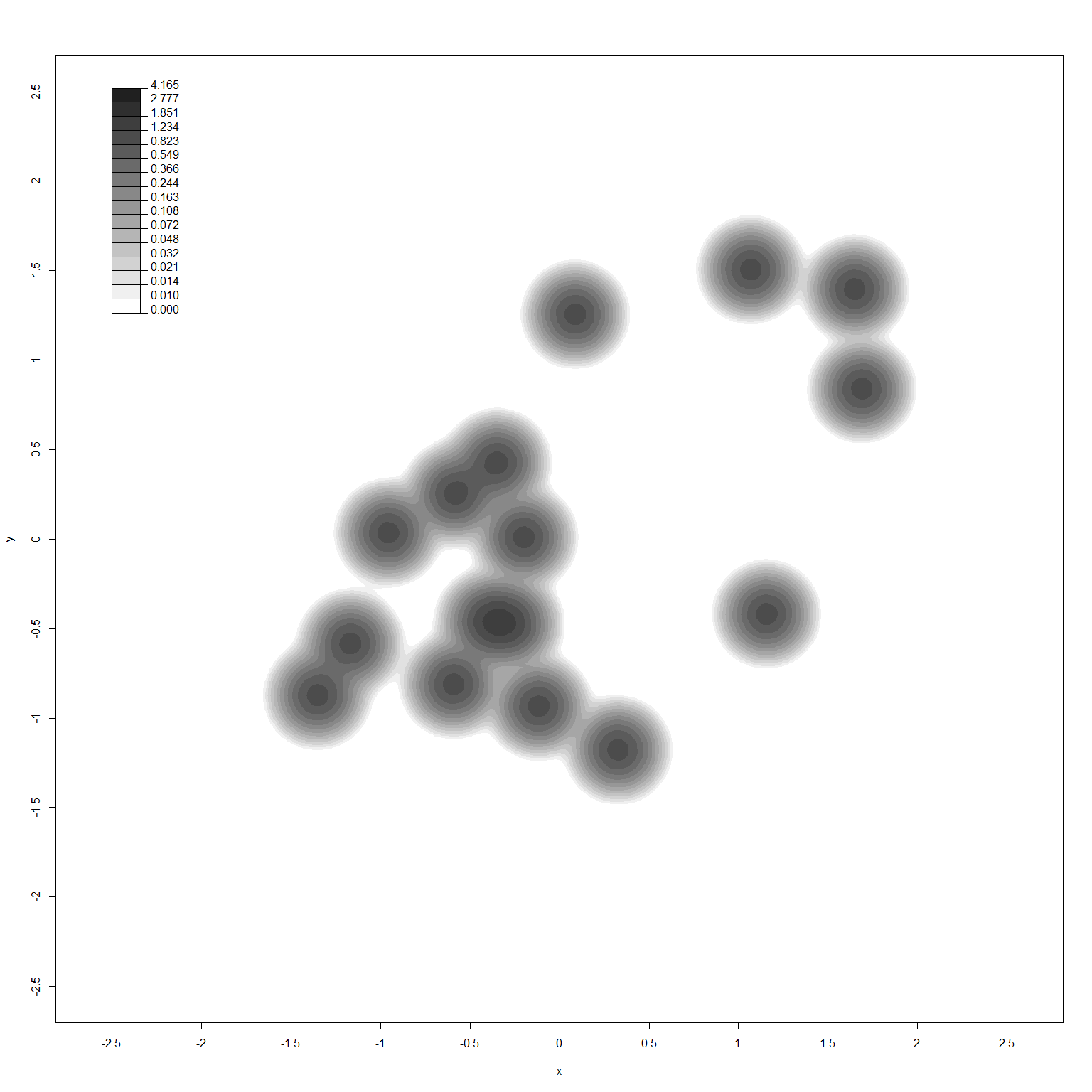}
		\caption{Probability density function approximated by formula~(\ref{eq:margin_app1}) for $\sigma=0.1$ and $k=16$ partitions.}
	\end{subfigure}
	\begin{subfigure}[t]{0.32\textwidth}
		\includegraphics[width=\textwidth, trim=200 300 50 100, clip]{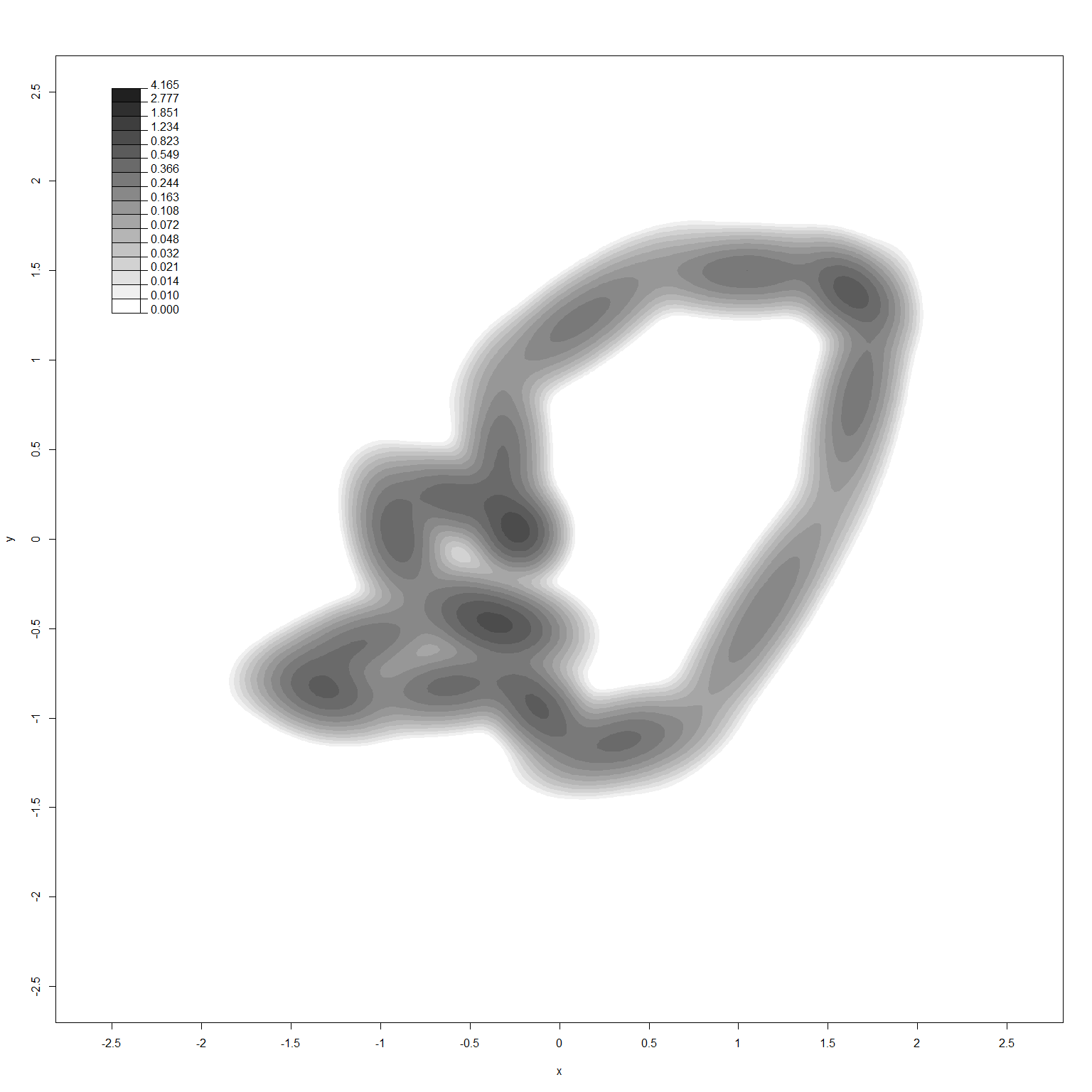}
		\caption{Probability density function approximated by formula~(\ref{eq:margin_app2}) for $\sigma=0.1$ and $k=16$ partitions.}
	\end{subfigure}
	\caption{Level sets of the probability distribution function induced by the rabbit-shaped curve from the Subsection \ref{Subsection4.2} for various values of $\sigma^2$. }
	\label{fig:rabbit1}
\end{figure}

\subsection{One dimensional closed curve embedded in $\R^n$ }
\label{Subsection4.3}

The above definitions can be generalized to the arbitrary dimension. 
Let's define the closed curve $\upvarphi$ given by the equation (\ref{general_parametri}):

\begin{equation}\label{general_parametri}
\upvarphi(s):[0,1] \mapsto   \left( \upvarphi^{(1)}(s), \ldots, \upvarphi^{(n)}(s) \right) \in \mathbb{R}^n,
\end{equation}
that is the curve satisfying: $\upvarphi(0)=\upvarphi(1)$ (note that every closed curve can be reduced to this form after the appropriate linear change of the variables). Now, let's define the random vectors: $\mathrm{X}=(\mathrm{X}_1,\ldots,\mathrm{X}_n)$ and the random variable $\mathrm{S}$ together with their joint probability distribution function given by the formula:

\begin{flalign}
f_{\mathrm{X};\mathrm{S}}(\mathrm{x};s) &= f_{\mathrm{X}_1,\ldots,\mathrm{X}_n;\mathrm{S}}(\mathrm{x}_1,\ldots,\mathrm{x}_n;s) =&\\
&= f_{\mathcal{N}(\upvarphi(s),\sigma^2 \mathrm{I})}(\mathrm{x}_1,\ldots,\mathrm{x}_n) =& \notag\\
&= \frac{1}{\sqrt{\lvert 2 \pi \sigma^2 \mathrm{I} \rvert}} e^{ -\frac{1}{2} (\mathrm{x}-\upvarphi(s))^{\top} (\sigma^2 \mathrm{I})^{-1} (\mathrm{x}-\upvarphi(s)) } =& \notag\\
&= \frac{1}{\sqrt{(2 \pi)^n \sigma^{2 n}}} e^{ -\frac{1}{2 \sigma^2} {\Vert \mathrm{x}-\upvarphi(s) \Vert}^2 } \notag
\end{flalign}
\newline
\noindent
where $\sigma^2 \in \mathbb{R}_{+}$ and where by $f_{\mathcal{N}(\upvarphi(s),\sigma^2 \mathrm{I})}$ we denote the probability distribution function of the Multivariate Normal Distribution with mean $\upmu=\upvarphi(s)$ and the covariance matrix $\Sigma=\sigma^2 \mathrm{I}$.

\begin{restatable}{definition}{definition1}
\label{definition1}
By Gaussian distribution on the closed curve $\upvarphi$ embedded in $\mathbb{R}^n$ with variance $\sigma^2$ we understand the marginal distribution with respect to the random vector $\mathrm{X}$ of the above-mentioned probability distribution with the joint probability distribution function given by $f_{\mathrm{X};\mathrm{S}}(\mathrm{x};s)$. We denote the corresponding marginal probability distribution function by:
\begin{flalign*}
f_{\mathcal{N}(M_{\upvarphi},\sigma^2)}(\mathrm{x}) &= f_{\mathrm{X}_1,\ldots,\mathrm{X}_n}(\mathrm{x}_1,\ldots,\mathrm{x}_n) =&\\
&= {\displaystyle \int \limits_{[0,1]} f_{\mathrm{X}_1,\ldots,\mathrm{X}_n;\mathrm{S}}(\mathrm{x}_1,\ldots,\mathrm{x}_n;s) \dd{s}} =&\\
&= {\displaystyle \int \limits_{[0,1]} \frac{1}{\sqrt{(2 \pi)^n \sigma^{2 n}}} e^{ -\frac{1}{2 \sigma^2} {\Vert \mathrm{x}-\upvarphi(s) \Vert}^2 } \dd{s}}
\end{flalign*}
\end{restatable}

\begin{figure}[H]
	\centering
	    \begin{tabular}{c@{}c@{}c@{}c@{}}
    		\includegraphics[width=0.32\textwidth]{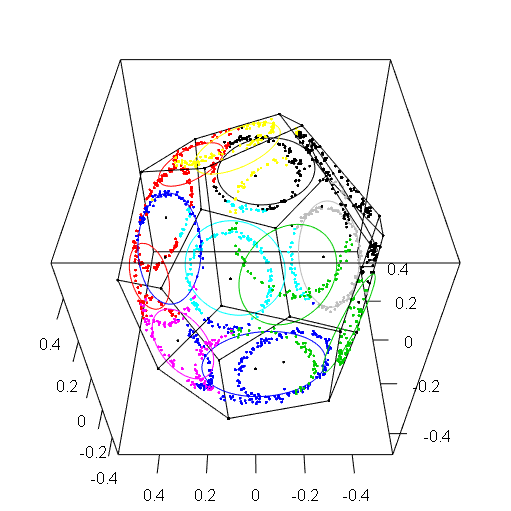} &
    		\includegraphics[width=0.32\textwidth]{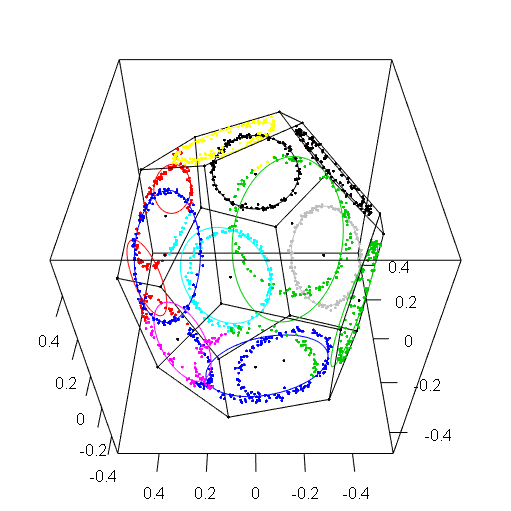} &
    		\includegraphics[width=0.32\textwidth]{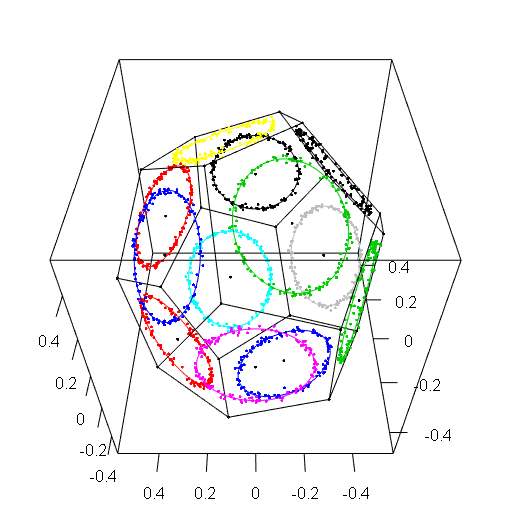} \\
    	\end{tabular}
	\caption{Let us consider data containing points concentrated around the circles lie on the plane of the faces of the dodecahedron.
    In the image, we present the convergence of MCEC in the subsequent 27 iterations (Iteration 1 (left), Iteration 6 (middle), and Iteration 27 (right)).}
	\label{fig:teser3}
\end{figure}

\subsection{Generalization to the manifold $M$ homeomorphic to the $d$-dimensional torus}
\label{Subsection4.4}

This subsection presents the general model for $d$-dimensional torus $M$ in  $\mathbb{R}^n$. Since, by assumption, $M$ is hyper-torus, it can be parametrized by the map:

\begin{flalign}
\upvarphi(\mathrm{s}):[0,1]^d \mapsto  & \upvarphi(\mathrm{s}_1,\ldots,\mathrm{s}_d) =&\\
&= \left( \upvarphi^{(1)}(\mathrm{s}_1,\ldots,\mathrm{s}_d), \ldots, \upvarphi^{(n)}(\mathrm{s}_1,\ldots,\mathrm{s}_d) \right) \in \mathbb{R}^n \notag.
\end{flalign}

\noindent
\newline
Now, let's define the random vectors: $\mathrm{X}=(\mathrm{X}_1,\ldots,\mathrm{X}_n)$ and $\mathrm{S}=(\mathrm{S}_1,\ldots,\mathrm{S}_d)$ together with their joint probability distribution function given by the formula:

\begin{flalign}
f_{\mathrm{X};\mathrm{S}}(\mathrm{x};\mathrm{s}) &= f_{\mathrm{X}_1,\ldots,\mathrm{X}_n;\mathrm{S}_1,\ldots,\mathrm{S}_d}(\mathrm{x}_1,\ldots,\mathrm{x}_n;\mathrm{s}_1,\ldots,\mathrm{s}_d) =&\\
&= f_{\mathcal{N}(\upvarphi(\mathrm{s}),\sigma^2 \mathrm{I})}(\mathrm{x}_1,\ldots,\mathrm{x}_n) =& \notag\\
&= \frac{1}{\sqrt{\lvert 2 \pi \sigma^2 \mathrm{I} \rvert}} e^{ -\frac{1}{2} (\mathrm{x}-\upvarphi(\mathrm{s}))^{\top} (\sigma^2 \mathrm{I})^{-1} (\mathrm{x}-\upvarphi(\mathrm{s})) } =& \notag\\
&= \frac{1}{\sqrt{(2 \pi)^n \sigma^{2 n}}} e^{ -\frac{1}{2 \sigma^2} {\Vert \mathrm{x}-\upvarphi(\mathrm{s}) \Vert}^2 } \notag
\end{flalign}

\noindent
where $\sigma>0$.

\begin{restatable}{definition}{definition2}
\label{definition2}
By Gaussian distribution on the manifold $M$ homeomorphic to the $d$-dimensional torus with variance $\sigma^2$, we understand the marginal distribution with respect to the random vector $\mathrm{X}$ of the above-mentioned probability distribution with the joint probability distribution function given by $f_{\mathrm{X};\mathrm{S}}(\mathrm{x};\mathrm{s})$. We denote the corresponding marginal probability distribution function by:
\begin{flalign}
f_{\mathcal{N}(M_{\upvarphi},\sigma^2)}(\mathrm{x}) &= f_{\mathrm{X}_1,\ldots,\mathrm{X}_n}(\mathrm{x}_1,\ldots,\mathrm{x}_n) =&\\
&= {\displaystyle \idotsint \limits_{[0,1]^d} f_{\mathrm{X}_1,\ldots,\mathrm{X}_n;\mathrm{S}_1,\ldots,\mathrm{S}_d}(\mathrm{x}_1,\ldots,\mathrm{x}_n;\mathrm{s}_1,\ldots,\mathrm{s}_d) \dd{\mathrm{s}_1} \ldots \dd{\mathrm{s}_d}} =&\notag\\
&= {\displaystyle \idotsint \limits_{[0,1]^d} \frac{1}{\sqrt{(2 \pi)^n \sigma^{2 n}}} e^{ -\frac{1}{2 \sigma^2} {\Vert \mathrm{x}-\upvarphi(\mathrm{s}_1,\ldots,\mathrm{s}_d) \Vert}^2 } \dd{\mathrm{s}_1} \ldots \dd{\mathrm{s}_d}}\notag
\end{flalign}
\end{restatable}


Now we are ready to compute the value of the probability distribution function $f_{\mathcal{N}(M_{\upvarphi},\sigma^2)}(\mathrm{x})$. First of all, let's note that under some mild additional assumptions, the map $\upvarphi(\mathrm{s}):[0,1]^d \mapsto \mathbb{R}^n$ can be approximated by the $d$-dimensional Fourier Series if the manifold $M$ is homeomorphic to the $d$-dimensional torus $\mathbb{T}^d=\underbrace{S^1 \times S^1}_{n \textrm{ times}}$ (exactly as in the special case of the closed curve embedded in $\mathbb{R}^n$ from the Subsection \ref{Subsection4.3}). The $d$-dimensional Fourier Series is uniformly converged to the map $\upvarphi(\mathrm{s})$. Otherwise, the convergence is point-wise, and the manifold $M$ is approximated by the degenerated torus $\mathbb{T}^d$. Since it would be infeasible for computational purposes to consider the infinite $d$-dimensional Fourier Series, we will restrain ourselves to the $d$-dimensional Fourier Series of some finite order $k \in \mathbb{N}$. Then:

\begin{flalign}
\upvarphi^{(i)}(\mathrm{s}) & \approx {\displaystyle \sum \limits_{\mathrm{l} \in \{ -k,\ldots,k \}^d}} \mathrm{a}_{\mathrm{l}}^{(i)} {\displaystyle \prod \limits_{m=1}^{d}}
\left \lbrace
\begin{array}{ll}
\cos(-2 \pi \mathrm{l}_m \mathrm{s}_m) & \textrm{, $\mathrm{l}_m < 0$} \\
1 & \textrm{, $\mathrm{l}_m = 0$} \\
\sin(2 \pi \mathrm{l}_m \mathrm{s}_m) & \textrm{, $\mathrm{l}_m > 0$}
\end{array}
\right. &
\end{flalign}

\noindent
\newline
where $\mathrm{a}_{\mathrm{l}}^{(i)} \in \mathbb{R}$ are the coefficients indexed by $i=1,\ldots,n$ and the multiindex $l \in \{ -k,\ldots,k \}^d$.
\newline
\newline
\null \qquad
Since, in general, the integral
\begin{flalign}
\label{equation9}
f_{\mathcal{N}(M_{\upvarphi},\sigma^2)}(\mathrm{x}) &=
{\displaystyle \idotsint \limits_{[0,1]^d} \frac{1}{\sqrt{(2 \pi)^n \sigma^{2 n}}} e^{ -\frac{1}{2 \sigma^2} {\Vert \mathrm{x}-\upvarphi(\mathrm{s}_1,\ldots,\mathrm{s}_d) \Vert}^2 } \dd{\mathrm{s}_1} \ldots \dd{\mathrm{s}_d}} &
\end{flalign}
in the Defintion \ref{definition2} cannot be expressed in terms of the elementary functions\footnote{Even for the simplest case $d=1$ it can be computed only by using Modified Bessel Function of the First Kind.}, to approximate its value we use some properly tailored quadrature. The idea is to partition the $d$-dimensional unit hypercube domain $[0,1]^d$ into $K^d$ equally sized $d$-dimensional subintervals, where $K \in \mathbb{N}_{+}$ and approximate the value of the probability distribution function for associated pieces of manifold $M$ using the probability distribution function of $n$-dimensional Multivariate Normal Distribution.
\newline
\newline
\null \qquad
For the multiindex $\mathrm{j} \in \{0,\ldots,K-1\}^d$ identifying the $d$-dimensional subinterval of the map $\upvarphi(\mathrm{s})$ domain, let's define:

{\scriptsize
\begin{flalign}
\upmu_{\mathrm{j}}^{(i)} &= K^d { \displaystyle \int \limits_{\mathrm{j}_d / K}^{(\mathrm{j}_d+1)/K}} \ldots { \displaystyle \int \limits_{\mathrm{j}_1 / K}^{(\mathrm{j}_1+1)/K}} \upvarphi^{(i)}(\mathrm{s}) \dd{\mathrm{s}_1} \ldots \dd{\mathrm{s}_d} &
\end{flalign}
}%

{\scriptsize
\begin{flalign}
\Sigma_{\mathrm{j}}^{(i_1)(i_2)} &= {\sigma^2}\mathrm{I} + K^d { \displaystyle \int \limits_{\mathrm{j}_d / K}^{(\mathrm{j}_d+1)/K}} \ldots { \displaystyle \int \limits_{\mathrm{j}_1 / K}^{(\mathrm{j}_1+1)/K}}
(\upvarphi^{(i_1)}(\mathrm{s})-\upmu_{\mathrm{j}}^{(i_1)})(\upvarphi^{(i_2)}(\mathrm{s})-\upmu_{\mathrm{j}}^{(i_2)}) 
\dd{\mathrm{s}_1} \ldots \dd{\mathrm{s}_d} &\notag
\end{flalign}
}%

\noindent
where $i, i_1, i_2 \in \{1,\ldots,n\}$. Then, it can be proven that the integral \ref{equation9} can be approximated using the following formula:

\begin{flalign}\label{equation13}
f_{\mathcal{N}(M_{\upvarphi},\sigma^2)}(\mathrm{x}) & \approx \frac{1}{K^d} {\displaystyle \sum \limits_{\mathrm{l} \in \{0,\ldots,K-1\}^d}} f_{\mathcal{N}(\upmu_{\mathrm{l}},\Sigma_{\mathrm{l}})}(\mathrm{x}) =&\\
&= \frac{1}{K^d} {\displaystyle \sum \limits_{\mathrm{l} \in \{0,\ldots,K-1\}^d}} \frac{1}{\sqrt{(2 \pi)^n \lvert \Sigma_{\mathrm{l}} \rvert}} e^{ -\frac{1}{2} (\mathrm{x}-\upmu_{\mathrm{l}})^{\top} \Sigma_{\mathrm{l}}^{-1} (\mathrm{x}-\upmu_{\mathrm{l}}) }\notag
\end{flalign}

To benefit from equation (\ref{equation13}) we need to show, that $\upmu_{\mathrm{j}}$ and $\Sigma_{\mathrm{j}}$ can be easily computed. This is the result given by the following theorem:
\newline

\begin{restatable}{theorem}{theoremone}
\label{theoremone}
Under the above notions:
\newline
\newline
a) The mean vectors $\upmu_{\mathrm{j}}$ are given by the formula:
\newline
\newline
\newline
$
\upmu_{\mathrm{j}}^{(i)} = K^d {\displaystyle \sum \limits_{\mathrm{l} \in \{ -k,\ldots,k \}^d}} \mathrm{a}_{\mathrm{l}}^{(i)} {\displaystyle \prod \limits_{m=1}^{d}}
\left \lbrace
\begin{array}{ll}
\left. -\frac{1}{2 \pi \mathrm{l}_m} \sin(-2 \pi \mathrm{l}_m \mathrm{s}_m) \right \rvert_{\mathrm{s}_m=\mathrm{j}_m / K}^{\mathrm{s}_m=(\mathrm{j}_m+1) / K} & \textrm{, $\mathrm{l}_m < 0$} \\
\\
\left. \mathrm{s}_m \right \rvert_{\mathrm{s}_m=\mathrm{j}_m / K}^{\mathrm{s}_m=(\mathrm{j}_m+1) / K} & \textrm{, $\mathrm{l}_m = 0$} \\
\\
\left. -\frac{1}{2 \pi \mathrm{l}_m} \cos(2 \pi \mathrm{l}_m \mathrm{s}_m) \right \rvert_{\mathrm{s}_m=\mathrm{j}_m / K}^{\mathrm{s}_m=(\mathrm{j}_m+1) / K} & \textrm{, $\mathrm{l}_m > 0$}
\end{array}
\right.
$
\newline
\newline
\newline
b) The covariance matrices $\Sigma_{\mathrm{j}}$ are given by the formula:
\newline
\newline
\newline
$
\Sigma_{\mathrm{j}}^{(i_1)(i_2)} = {\sigma^2}\mathrm{I} +
$
\newline
\newline
\resizebox{\linewidth}{!}{
$
+ K^d {\displaystyle \sum \limits_{\substack{\mathrm{l}^1 \in \{ -k,\ldots,k \}^d \\ \mathrm{l}^2 \in \{ -k,\ldots,k \}^d}}} \mathrm{a}_{\mathrm{l}^1}^{(i_1)} \mathrm{a}_{\mathrm{l}^2}^{(i_2)} {\displaystyle \prod \limits_{m=1}^{d}}
\left \lbrace
\begin{array}{ll}
\left \lbrace
\begin{array}{ll}
\left. -\frac{ \sin(-2 \pi \mathrm{s}_m ( \mathrm{l}_m^1 + \mathrm{l}_m^2 )) }{ 4 \pi ( \mathrm{l}_m^1 + \mathrm{l}_m^2 ) } -
\frac{ \sin(-2 \pi \mathrm{s}_m ( \mathrm{l}_m^1 - \mathrm{l}_m^2 )) }{ 4 \pi ( \mathrm{l}_m^1 - \mathrm{l}_m^2 ) } \right \rvert_{\mathrm{s}_m=\mathrm{j}_m / K}^{\mathrm{s}_m=(\mathrm{j}_m+1) / K}
& \textrm{, $\mathrm{l}_m^1 \ne \mathrm{l}_m^2$} \\
\\
\left. -\frac{ \sin(-2 \pi \mathrm{s}_m ( \mathrm{l}_m^1 + \mathrm{l}_m^2 )) }{ 4 \pi ( \mathrm{l}_m^1 + \mathrm{l}_m^2 ) } + \frac{\mathrm{s}_m}{2} \right \rvert_{\mathrm{s}_m=\mathrm{j}_m / K}^{\mathrm{s}_m=(\mathrm{j}_m+1) / K}
& \textrm{, $\mathrm{l}_m^1 = \mathrm{l}_m^2$} \\
\end{array}
\right.
& \textrm{, $\mathrm{l}_m^1 < 0$, $\mathrm{l}_m^2 < 0$} \\
\\
\left. -\frac{1}{2 \pi \mathrm{l}_m^1} \sin(-2 \pi \mathrm{l}_m^1 \mathrm{s}_m) \right \rvert_{\mathrm{s}_m=\mathrm{j}_m / K}^{\mathrm{s}_m=(\mathrm{j}_m+1) / K} & \textrm{, $\mathrm{l}_m^1 < 0$, $\mathrm{l}_m^2 = 0$} \\
\\
\left \lbrace
\begin{array}{ll}
\left. \frac{ \cos(-2 \pi \mathrm{s}_m ( \mathrm{l}_m^1 - \mathrm{l}_m^2 )) } { 4 \pi ( \mathrm{l}_m^1 - \mathrm{l}_m^2 ) } -
\frac{ \cos(-2 \pi \mathrm{s}_m ( \mathrm{l}_m^1 + \mathrm{l}_m^2 )) }{ 4 \pi ( \mathrm{l}_m^1 + \mathrm{l}_m^2 ) } \right \rvert_{\mathrm{s}_m=\mathrm{j}_m / K}^{\mathrm{s}_m=(\mathrm{j}_m+1) / K}
& \textrm{, $\mathrm{l}_m^1 \ne - \mathrm{l}_m^2$} \\
\\
\left. \frac{ \cos(-2 \pi \mathrm{s}_m ( \mathrm{l}_m^1 - \mathrm{l}_m^2 )) } { 4 \pi ( \mathrm{l}_m^1 - \mathrm{l}_m^2 ) } \right \rvert_{\mathrm{s}_m=\mathrm{j}_m / K}^{\mathrm{s}_m=(\mathrm{j}_m+1) / K}
& \textrm{, $\mathrm{l}_m^1 = - \mathrm{l}_m^2$} \\
\end{array}
\right.
& \textrm{, $\mathrm{l}_m^1 < 0$, $\mathrm{l}_m^2 > 0$} \\
\\
\left. -\frac{1}{2 \pi \mathrm{l}_m^2} \sin(-2 \pi \mathrm{l}_m^2 \mathrm{s}_m) \right \rvert_{\mathrm{s}_m=\mathrm{j}_m / K}^{\mathrm{s}_m=(\mathrm{j}_m+1) / K}  & \textrm{, $\mathrm{l}_m^1 = 0$, $\mathrm{l}_m^2 < 0$} \\
\\
\left. \mathrm{s}_m \right \rvert_{\mathrm{s}_m=\mathrm{j}_m / K}^{\mathrm{s}_m=(\mathrm{j}_m+1) / K} & \textrm{, $\mathrm{l}_m^1 = 0$, $\mathrm{l}_m^2 = 0$} \\
\\
\left. -\frac{1}{2 \pi \mathrm{l}_m^2} \cos(2 \pi \mathrm{l}_m^2 \mathrm{s}_m) \right \rvert_{\mathrm{s}_m=\mathrm{j}_m / K}^{\mathrm{s}_m=(\mathrm{j}_m+1) / K} & \textrm{, $\mathrm{l}_m^1 = 0$, $\mathrm{l}_m^2 > 0$} \\
\\
\left \lbrace
\begin{array}{ll}
\left. -\frac{ \cos(2 \pi \mathrm{s}_m ( \mathrm{l}_m^1 - \mathrm{l}_m^2 )) } { 4 \pi ( \mathrm{l}_m^1 - \mathrm{l}_m^2 ) } -
\frac{ \cos(2 \pi \mathrm{s}_m ( \mathrm{l}_m^1 + \mathrm{l}_m^2 )) }{ 4 \pi ( \mathrm{l}_m^1 + \mathrm{l}_m^2 ) } \right \rvert_{\mathrm{s}_m=\mathrm{j}_m / K}^{\mathrm{s}_m=(\mathrm{j}_m+1) / K}
& \textrm{, $\mathrm{l}_m^1 \ne - \mathrm{l}_m^2$} \\
\\
\left. -\frac{ \cos(2 \pi \mathrm{s}_m ( \mathrm{l}_m^1 - \mathrm{l}_m^2 )) } { 4 \pi ( \mathrm{l}_m^1 - \mathrm{l}_m^2 ) } \right \rvert_{\mathrm{s}_m=\mathrm{j}_m / K}^{\mathrm{s}_m=(\mathrm{j}_m+1) / K}
& \textrm{, $\mathrm{l}_m^1 = - \mathrm{l}_m^2$} \\
\end{array}
\right.
& \textrm{, $\mathrm{l}_m^1 > 0$, $\mathrm{l}_m^2 < 0$} \\
\\
\left. -\frac{1}{2 \pi \mathrm{l}_m^1} \cos(2 \pi \mathrm{l}_m^1 \mathrm{s}_m) \right \rvert_{\mathrm{s}_m=\mathrm{j}_m / K}^{\mathrm{s}_m=(\mathrm{j}_m+1) / K} & \textrm{, $\mathrm{l}_m^1 > 0$, $\mathrm{l}_m^2 = 0$} \\
\\
\left \lbrace
\begin{array}{ll}
\left. \frac{ \sin(2 \pi \mathrm{s}_m ( \mathrm{l}_m^1 - \mathrm{l}_m^2 )) }{ 4 \pi ( \mathrm{l}_m^1 - \mathrm{l}_m^2 ) } -
\frac{ \sin(2 \pi \mathrm{s}_m ( \mathrm{l}_m^1 + \mathrm{l}_m^2 )) }{ 4 \pi ( \mathrm{l}_m^1 + \mathrm{l}_m^2 ) } \right \rvert_{\mathrm{s}_m=\mathrm{j}_m / K}^{\mathrm{s}_m=(\mathrm{j}_m+1) / K}
& \textrm{, $\mathrm{l}_m^1 \ne \mathrm{l}_m^2$} \\
\\
\left. \frac{\mathrm{s}_m}{2} - \frac{ \sin(2 \pi \mathrm{s}_m ( \mathrm{l}_m^1 + \mathrm{l}_m^2 )) }{ 4 \pi ( \mathrm{l}_m^1 + \mathrm{l}_m^2 ) } \right \rvert_{\mathrm{s}_m=\mathrm{j}_m / K}^{\mathrm{s}_m=(\mathrm{j}_m+1) / K}
& \textrm{, $\mathrm{l}_m^1 = \mathrm{l}_m^2$} \\
\end{array}
\right.
& \textrm{, $\mathrm{l}_m^1 > 0$, $\mathrm{l}_m^2 > 0$}
\end{array}
\right. -
$
}
\newline
\newline
$
- \upmu_{\mathrm{j}}^{(i_1)} \upmu_{\mathrm{j}}^{(i_2)}
$
\newline
\end{restatable}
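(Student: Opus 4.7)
The strategy is direct computation: substitute the finite Fourier expansion of $\upvarphi^{(i)}$ into the defining integrals for $\upmu_{\mathrm{j}}^{(i)}$ and $\Sigma_{\mathrm{j}}^{(i_1)(i_2)}$, apply Fubini to separate the $d$ variables, and then evaluate the resulting one-dimensional integrals using elementary antiderivatives for part (a) and product-to-sum trigonometric identities for part (b).

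\textbf{Part (a).} I would start from the definition
\[
\upmu_{\mathrm{j}}^{(i)} \;=\; K^d \int_{\mathrm{j}_d/K}^{(\mathrm{j}_d+1)/K}\!\!\!\cdots \int_{\mathrm{j}_1/K}^{(\mathrm{j}_1+1)/K} \upvarphi^{(i)}(\mathrm{s})\,d\mathrm{s}_1\cdots d\mathrm{s}_d
\]
and insert the finite Fourier expansion of $\upvarphi^{(i)}$. Because that expansion is a finite sum of products $\prod_m g_{\mathrm{l}_m}(\mathrm{s}_m)$ (where $g_{\mathrm{l}_m}$ is $\cos(-2\pi \mathrm{l}_m\mathrm{s}_m)$, $1$, or $\sin(2\pi \mathrm{l}_m\mathrm{s}_m)$ according to the sign of $\mathrm{l}_m$), linearity of the integral and Fubini turn the $d$-fold integral into a sum indexed by $\mathrm{l}$ of products of one-dimensional integrals of $g_{\mathrm{l}_m}(\mathrm{s}_m)$ over $[\mathrm{j}_m/K,(\mathrm{j}_m+1)/K]$. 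Each such one-dimensional integral is immediate: the antiderivative of $\cos(-2\pi \mathrm{l}_m\mathrm{s}_m)$ is $-\sin(-2\pi \mathrm{l}_m\mathrm{s}_m)/(2\pi \mathrm{l}_m)$, the antiderivative of $1$ is $\mathrm{s}_m$, and the antiderivative of $\sin(2\pi \mathrm{l}_m\mathrm{s}_m)$ is $-\cos(2\pi \mathrm{l}_m\mathrm{s}_m)/(2\pi \mathrm{l}_m)$. Collecting the evaluations between the limits $\mathrm{j}_m/K$ and $(\mathrm{j}_m+1)/K$ reproduces the claimed formula verbatim.

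\textbf{Part (b).} I would first open up the square,
\[
(\upvarphi^{(i_1)}(\mathrm{s}) - \upmu_{\mathrm{j}}^{(i_1)})(\upvarphi^{(i_2)}(\mathrm{s}) - \upmu_{\mathrm{j}}^{(i_2)})
\]
and use the fact that the domain of integration has $d$-volume $K^{-d}$, so that the two cross terms each reduce (using part (a)) to $-\upmu_{\mathrm{j}}^{(i_1)}\upmu_{\mathrm{j}}^{(i_2)}$, while the constant term contributes $+\upmu_{\mathrm{j}}^{(i_1)}\upmu_{\mathrm{j}}^{(i_2)}$. What remains is
\[
\Sigma_{\mathrm{j}}^{(i_1)(i_2)} \;=\; \sigma^2 \mathrm{I} \;+\; K^d \int_{\cdot}\upvarphi^{(i_1)}(\mathrm{s})\,\upvarphi^{(i_2)}(\mathrm{s})\,d\mathrm{s} \;-\; \upmu_{\mathrm{j}}^{(i_1)}\upmu_{\mathrm{j}}^{(i_2)}.
\]
Substituting the Fourier expansions of $\upvarphi^{(i_1)}$ and $\upvarphi^{(i_2)}$ produces a double sum over $\mathrm{l}^1,\mathrm{l}^2 \in \{-k,\ldots,k\}^d$, and Fubini again splits the integrand into a product over $m=1,\ldots,d$ of one-dimensional integrals of $g_{\mathrm{l}_m^1}(\mathrm{s}_m)\,g_{\mathrm{l}_m^2}(\mathrm{s}_m)$.

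\textbf{The main bookkeeping obstacle} is therefore the case analysis for those one-dimensional integrals, which splits into the nine sign combinations of $(\mathrm{l}_m^1,\mathrm{l}_m^2)\in\{-,0,+\}^2$ displayed in the statement. In each case I would apply the appropriate product-to-sum identity, namely
\[
\cos A\cos B = \tfrac{1}{2}\bigl[\cos(A-B)+\cos(A+B)\bigr],\qquad \sin A\sin B = \tfrac{1}{2}\bigl[\cos(A-B)-\cos(A+B)\bigr],
\]
\[
\sin A\cos B = \tfrac{1}{2}\bigl[\sin(A+B)+\sin(A-B)\bigr],
\]
and then integrate. The two extra sub-cases $\mathrm{l}_m^1 = \pm \mathrm{l}_m^2$ arise exactly when one of the two resulting frequencies vanishes, so that $\cos(0)=1$ or $\sin(0)=0$ has to be integrated separately, producing the linear-in-$\mathrm{s}_m$ contribution $\mathrm{s}_m/2$. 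Matching the resulting antiderivatives, evaluated between $\mathrm{j}_m/K$ and $(\mathrm{j}_m+1)/K$, to the branches of the formula in (b) is then a mechanical but lengthy verification; no case uses any tool beyond elementary trigonometry, and the final subtraction of $\upmu_{\mathrm{j}}^{(i_1)}\upmu_{\mathrm{j}}^{(i_2)}$ is already incorporated from the expansion at the start. \qed
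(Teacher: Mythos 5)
Your proposal is correct and follows essentially the same route as the paper's own proof: for (a), Fourier expansion plus Fubini reduces the $d$-fold integral to products of elementary one-dimensional antiderivatives, and for (b), expanding $(\upvarphi^{(i_1)}-\upmu_{\mathrm{j}}^{(i_1)})(\upvarphi^{(i_2)}-\upmu_{\mathrm{j}}^{(i_2)})$ and using that the subinterval has volume $K^{-d}$ reduces everything to $\sigma^2\mathrm{I}+K^d\int\upvarphi^{(i_1)}\upvarphi^{(i_2)}-\upmu_{\mathrm{j}}^{(i_1)}\upmu_{\mathrm{j}}^{(i_2)}$, after which the product-to-sum identities and the sign/degenerate-frequency case analysis match the paper's computation exactly.
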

\begin{proof}
\renewcommand{\qedsymbol}{}
See ~\ref{theoremone-proof}.
\end{proof}


\przemek{As we explicitly stated at the end of Section ~\ref{Section3}, in general, there are no closed-form formulas for the estimated values of the parameters of the probability distribution model used by the MCEC. Therefore in our algorithm, we leverage the BFGS gradient optimizer in order to obtain the appropriate estimated values. Since the BFGS optimizer we use requires us to provide the computed exact value of the gradient of the cross-entropy with respect to the values of the estimated parameters, in the following Theorem~\ref{theoremtwo}, we provide the neat closed-form formula that yields the value of the above-mentioned gradient.
}
\begin{restatable}{theorem}{theoremtwo}
\label{theoremtwo}
 Let's define
\newline
\newline
$
g \left( \mathrm{j}, \mathrm{l} \right) \coloneqq {\displaystyle \prod \limits_{m=1}^{d}}
\left \lbrace
\begin{array}{ll}
\left. -\frac{1}{2 \pi \mathrm{l}_m} \sin(-2 \pi \mathrm{l}_m \mathrm{s}_m) \right \rvert_{\mathrm{s}_m=\mathrm{j}_m / K}^{\mathrm{s}_m=(\mathrm{j}_m+1) / K} & \textrm{, $\mathrm{l}_m < 0$} \\
\\
\left. \mathrm{s}_m \right \rvert_{\mathrm{s}_m=\mathrm{j}_m / K}^{\mathrm{s}_m=(\mathrm{j}_m+1) / K} & \textrm{, $\mathrm{l}_m = 0$} \\
\\
\left. -\frac{1}{2 \pi \mathrm{l}_m} \cos(2 \pi \mathrm{l}_m \mathrm{s}_m) \right \rvert_{\mathrm{s}_m=\mathrm{j}_m / K}^{\mathrm{s}_m=(\mathrm{j}_m+1) / K} & \textrm{, $\mathrm{l}_m > 0$}
\end{array}
\right.
$
\newline
\newline
\newline
\resizebox{\linewidth}{!}{
$
g \left( \mathrm{j}, \mathrm{l}^1, \mathrm{l}^2 \right) \coloneqq {\displaystyle \prod \limits_{m=1}^{d}}
\left \lbrace
\begin{array}{ll}
\left \lbrace
\begin{array}{ll}
\left. -\frac{ \sin(-2 \pi \mathrm{s}_m ( \mathrm{l}_m^1 + \mathrm{l}_m^2 )) }{ 4 \pi ( \mathrm{l}_m^1 + \mathrm{l}_m^2 ) } -
\frac{ \sin(-2 \pi \mathrm{s}_m ( \mathrm{l}_m^1 - \mathrm{l}_m^2 )) }{ 4 \pi ( \mathrm{l}_m^1 - \mathrm{l}_m^2 ) } \right \rvert_{\mathrm{s}_m=\mathrm{j}_m / K}^{\mathrm{s}_m=(\mathrm{j}_m+1) / K}
& \textrm{, $\mathrm{l}_m^1 \ne \mathrm{l}_m^2$} \\
\\
\left. -\frac{ \sin(-2 \pi \mathrm{s}_m ( \mathrm{l}_m^1 + \mathrm{l}_m^2 )) }{ 4 \pi ( \mathrm{l}_m^1 + \mathrm{l}_m^2 ) } + \frac{\mathrm{s}_m}{2} \right \rvert_{\mathrm{s}_m=\mathrm{j}_m / K}^{\mathrm{s}_m=(\mathrm{j}_m+1) / K}
& \textrm{, $\mathrm{l}_m^1 = \mathrm{l}_m^2$} \\
\end{array}
\right.
& \textrm{, $\mathrm{l}_m^1 < 0$, $\mathrm{l}_m^2 < 0$} \\
\\
\left. -\frac{1}{2 \pi \mathrm{l}_m^1} \sin(-2 \pi \mathrm{l}_m^1 \mathrm{s}_m) \right \rvert_{\mathrm{s}_m=\mathrm{j}_m / K}^{\mathrm{s}_m=(\mathrm{j}_m+1) / K} & \textrm{, $\mathrm{l}_m^1 < 0$, $\mathrm{l}_m^2 = 0$} \\
\\
\left \lbrace
\begin{array}{ll}
\left. \frac{ \cos(-2 \pi \mathrm{s}_m ( \mathrm{l}_m^1 - \mathrm{l}_m^2 )) } { 4 \pi ( \mathrm{l}_m^1 - \mathrm{l}_m^2 ) } -
\frac{ \cos(-2 \pi \mathrm{s}_m ( \mathrm{l}_m^1 + \mathrm{l}_m^2 )) }{ 4 \pi ( \mathrm{l}_m^1 + \mathrm{l}_m^2 ) } \right \rvert_{\mathrm{s}_m=\mathrm{j}_m / K}^{\mathrm{s}_m=(\mathrm{j}_m+1) / K}
& \textrm{, $\mathrm{l}_m^1 \ne - \mathrm{l}_m^2$} \\
\\
\left. \frac{ \cos(-2 \pi \mathrm{s}_m ( \mathrm{l}_m^1 - \mathrm{l}_m^2 )) } { 4 \pi ( \mathrm{l}_m^1 - \mathrm{l}_m^2 ) } \right \rvert_{\mathrm{s}_m=\mathrm{j}_m / K}^{\mathrm{s}_m=(\mathrm{j}_m+1) / K}
& \textrm{, $\mathrm{l}_m^1 = - \mathrm{l}_m^2$} \\
\end{array}
\right.
& \textrm{, $\mathrm{l}_m^1 < 0$, $\mathrm{l}_m^2 > 0$} \\
\\
\left. -\frac{1}{2 \pi \mathrm{l}_m^2} \sin(-2 \pi \mathrm{l}_m^2 \mathrm{s}_m) \right \rvert_{\mathrm{s}_m=\mathrm{j}_m / K}^{\mathrm{s}_m=(\mathrm{j}_m+1) / K}  & \textrm{, $\mathrm{l}_m^1 = 0$, $\mathrm{l}_m^2 < 0$} \\
\\
\left. \mathrm{s}_m \right \rvert_{\mathrm{s}_m=\mathrm{j}_m / K}^{\mathrm{s}_m=(\mathrm{j}_m+1) / K} & \textrm{, $\mathrm{l}_m^1 = 0$, $\mathrm{l}_m^2 = 0$} \\
\\
\left. -\frac{1}{2 \pi \mathrm{l}_m^2} \cos(2 \pi \mathrm{l}_m^2 \mathrm{s}_m) \right \rvert_{\mathrm{s}_m=\mathrm{j}_m / K}^{\mathrm{s}_m=(\mathrm{j}_m+1) / K} & \textrm{, $\mathrm{l}_m^1 = 0$, $\mathrm{l}_m^2 > 0$} \\
\\
\left \lbrace
\begin{array}{ll}
\left. -\frac{ \cos(2 \pi \mathrm{s}_m ( \mathrm{l}_m^1 - \mathrm{l}_m^2 )) } { 4 \pi ( \mathrm{l}_m^1 - \mathrm{l}_m^2 ) } -
\frac{ \cos(2 \pi \mathrm{s}_m ( \mathrm{l}_m^1 + \mathrm{l}_m^2 )) }{ 4 \pi ( \mathrm{l}_m^1 + \mathrm{l}_m^2 ) } \right \rvert_{\mathrm{s}_m=\mathrm{j}_m / K}^{\mathrm{s}_m=(\mathrm{j}_m+1) / K}
& \textrm{, $\mathrm{l}_m^1 \ne - \mathrm{l}_m^2$} \\
\\
\left. -\frac{ \cos(2 \pi \mathrm{s}_m ( \mathrm{l}_m^1 - \mathrm{l}_m^2 )) } { 4 \pi ( \mathrm{l}_m^1 - \mathrm{l}_m^2 ) } \right \rvert_{\mathrm{s}_m=\mathrm{j}_m / K}^{\mathrm{s}_m=(\mathrm{j}_m+1) / K}
& \textrm{, $\mathrm{l}_m^1 = - \mathrm{l}_m^2$} \\
\end{array}
\right.
& \textrm{, $\mathrm{l}_m^1 > 0$, $\mathrm{l}_m^2 < 0$} \\
\\
\left. -\frac{1}{2 \pi \mathrm{l}_m^1} \cos(2 \pi \mathrm{l}_m^1 \mathrm{s}_m) \right \rvert_{\mathrm{s}_m=\mathrm{j}_m / K}^{\mathrm{s}_m=(\mathrm{j}_m+1) / K} & \textrm{, $\mathrm{l}_m^1 > 0$, $\mathrm{l}_m^2 = 0$} \\
\\
\left \lbrace
\begin{array}{ll}
\left. \frac{ \sin(2 \pi \mathrm{s}_m ( \mathrm{l}_m^1 - \mathrm{l}_m^2 )) }{ 4 \pi ( \mathrm{l}_m^1 - \mathrm{l}_m^2 ) } -
\frac{ \sin(2 \pi \mathrm{s}_m ( \mathrm{l}_m^1 + \mathrm{l}_m^2 )) }{ 4 \pi ( \mathrm{l}_m^1 + \mathrm{l}_m^2 ) } \right \rvert_{\mathrm{s}_m=\mathrm{j}_m / K}^{\mathrm{s}_m=(\mathrm{j}_m+1) / K}
& \textrm{, $\mathrm{l}_m^1 \ne \mathrm{l}_m^2$} \\
\\
\left. \frac{\mathrm{s}_m}{2} - \frac{ \sin(2 \pi \mathrm{s}_m ( \mathrm{l}_m^1 + \mathrm{l}_m^2 )) }{ 4 \pi ( \mathrm{l}_m^1 + \mathrm{l}_m^2 ) } \right \rvert_{\mathrm{s}_m=\mathrm{j}_m / K}^{\mathrm{s}_m=(\mathrm{j}_m+1) / K}
& \textrm{, $\mathrm{l}_m^1 = \mathrm{l}_m^2$} \\
\end{array}
\right.
& \textrm{, $\mathrm{l}_m^1 > 0$, $\mathrm{l}_m^2 > 0$}
\end{array}
\right.
$
}
\newline
\newline
\newline
Then, under the above notions:
\newline
\newline
$
{\pdv{}{ [ \mathrm{a}_{\mathrm{j}}^{(i)} | \sigma ] }} \left( {\displaystyle \sum \limits_{\mathrm{x} \in \mathrm{x}}} \ln(f_{\mathcal{N}(\upvarphi(\mathrm{s}),\sigma^2)}(\mathrm{x})) \right) =
$
\newline
\newline
\newline
\resizebox{\linewidth}{!}{
$
= \frac{1}{K^d} {\displaystyle \sum \limits_{\mathrm{x} \in \mathrm{x}}} P {\displaystyle \sum \limits_{ \mathrm{l} \in \{ 0,\ldots,K-1 \}^d }}
- \frac{1}{2} \frac{1}{ \sqrt{ (2 \pi)^n \lvert \Sigma_{\mathrm{l}} \rvert }} \tr( \Sigma_{\mathrm{l}}^{-1} {\pdv{}{ [ \mathrm{a}_{\mathrm{j}}^{(i)} | \sigma ] }}
\left( \Sigma_{\mathrm{l}}( \mathrm{a}_{\mathrm{j}}^{(i)}, \sigma ) \right) ) e^{ -\frac{1}{2} ( \mathrm{x} - \upmu_{\mathrm{l}} )^{\top} \Sigma_{\mathrm{l}}^{-1} ( \mathrm{x} - \upmu_{\mathrm{l}} ) } -\\
$
}
\resizebox{\linewidth}{!}{
$
\phantom{=} - \frac{1}{2} \frac{1}{ \sqrt{ (2 \pi)^n \lvert \Sigma_{\mathrm{l}} \rvert }} e^{ -\frac{1}{2} ( \mathrm{x} - \upmu_{\mathrm{l}} )^{\top} \Sigma_{\mathrm{l}}^{-1} ( \mathrm{x} - \upmu_{\mathrm{l}} ) }
\left( \left( \mathrm{x} - {\pdv{}{ [ \mathrm{a}_{\mathrm{j}}^{(i)} | \sigma ] }} \left( \upmu_{\mathrm{l}}( \mathrm{a}_{\mathrm{j}}^{(i)}, \sigma ) \right) \right)^{\top}
\Sigma_{\mathrm{l}}^{-1} ( \mathrm{x} - \upmu_{\mathrm{l}} ) \right. +
$
}
\resizebox{\linewidth}{!}{
$
\phantom{=} + ( \mathrm{x} - \upmu_{\mathrm{l}} )^{\top}
\left( - \Sigma_{\mathrm{l}}^{-1} \left( {\pdv{}{ [ \mathrm{a}_{\mathrm{j}}^{(i)} | \sigma ] }} \left( \Sigma_{\mathrm{l}}( \mathrm{a}_{\mathrm{j}}^{(i)}, \sigma ) \right) \right) \Sigma_{\mathrm{l}}^{-1} \right) ( \mathrm{x} - \upmu_{\mathrm{l}} ) +
( \mathrm{x} - \upmu_{\mathrm{l}} )^{\top} \Sigma_{\mathrm{l}}^{-1} \left( \mathrm{x} - {\pdv{}{ [ \mathrm{a}_{\mathrm{j}}^{(i)} | \sigma ] }} \left( \upmu_{\mathrm{l}}( \mathrm{a}_{\mathrm{j}}^{(i)} ) \right) \right)
\left. \vphantom{ \left( \mathrm{x} - {\pdv{}{\mathrm{a}_{\mathrm{j}}^{(i)}}} \left( \upmu_{\mathrm{l}}( \mathrm{a}_{\mathrm{j}}^{(i)} ) \right) \right)^{\top} } \right),
$
}
\newline
\newline
where:
\newline
\newline
$P = \frac{1}{ \frac{1}{K^d} {\displaystyle \sum \limits_{ \mathrm{l} \in \{ 0,\ldots,K-1 \}^d }}
\frac{1}{ \sqrt{ (2 \pi)^n \lvert \Sigma_{\mathrm{l}} \rvert }}
e^{ -\frac{1}{2} ( \mathrm{x} - \upmu_{\mathrm{l}} )^{\top} \Sigma_{\mathrm{l}}^{-1} ( \mathrm{x} - \upmu_{\mathrm{l}} ) } }
$
\newline
\newline
\begin{flalign*}
& {\pdv{}{ \mathrm{a}_{\mathrm{j}}^{(i)} }} \left( \upmu_{\mathrm{l}}( \mathrm{a}_{\mathrm{j}}^{(i)} ) \right) = \left( K^d \delta_{{i}{\bar{i}}} g \left( \mathrm{l},\mathrm{j} \right) \right)_{\bar{i}=1}^{n} &
\end{flalign*}
\begin{flalign*}
& {\pdv{}{ \sigma }} \upmu_{\mathrm{l}} = \vec{0} &
\end{flalign*}
\newline
\begin{flalign*}
& {\pdv{}{ \mathrm{a}_{\mathrm{j}}^{(i)} }} \left( \Sigma_{\mathrm{l}}( \mathrm{a}_{\mathrm{j}}^{(i)} ) \right) =&\\
&= \left(
K^d \left( 
\delta_{{i_2}{i}} \left( {\displaystyle \sum \limits_{\mathrm{l}^1 \in \{ -k,\ldots,k \}^d}} \mathrm{a}_{\mathrm{l}^1}^{(i_1)} g \left( \mathrm{l}, \mathrm{l}^1, \mathrm{j} \right) \right) +
\delta_{{i_1}{i}} \left( {\displaystyle \sum \limits_{\mathrm{l}^2 \in \{ -k,\ldots,k \}^d}} \mathrm{a}_{\mathrm{l}^2}^{(i_1)} g \left( \mathrm{l}, \mathrm{j}, \mathrm{l}^2 \right) \right)
\right) \right. -&\\
& \phantom{=} - \left.
\vphantom{
K^d \left( {\displaystyle \sum \limits_{\substack{\mathrm{l}^1 \in \{ -k,\ldots,k \}^d \\ \mathrm{l}^2 \in \{ -k,\ldots,k \}^d}}} {\pdv{}{ \mathrm{a}_{\mathrm{j}}^{(i)} }} \left(
\mathrm{a}_{\mathrm{l}^1}^{(i_1)} \mathrm{a}_{\mathrm{l}^2}^{(i_2)} g \left( \mathrm{l}, \mathrm{l}^1, \mathrm{l}^2 \right)
\right) \right)
}
{\pdv{}{ \mathrm{a}_{\mathrm{j}}^{(i)} }} \left( \upmu_{\mathrm{j}}^{(i_1)} \left( \mathrm{a}_{\mathrm{j}}^{(i)} \right) \right) \upmu_{\mathrm{j}}^{(i_2)} -
\upmu_{\mathrm{j}}^{(i_1)} {\pdv{}{ \mathrm{a}_{\mathrm{j}}^{(i)} }} \left( \upmu_{\mathrm{j}}^{(i_2)} \left( \mathrm{a}_{\mathrm{j}}^{(i)} \right) \right)
\right)_{i_1, i_2 = 1}^{n}
\end{flalign*}
\newline
\begin{flalign*}
& {\pdv{}{ \sigma }} \left( \Sigma_{\mathrm{l}}( \sigma ) \right) = 2 \sigma \mathrm{I} &
\end{flalign*}
\end{restatable}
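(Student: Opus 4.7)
The plan is to differentiate the approximation formula~(\ref{equation13}) term by term using logarithmic differentiation and standard matrix calculus, and then to substitute the explicit derivatives of $\upmu_{\mathrm{l}}$ and $\Sigma_{\mathrm{l}}$ that follow from Theorem~\ref{theoremone}. Writing $\theta$ for either of the parameters $\mathrm{a}_{\mathrm{j}}^{(i)}$ or $\sigma$, the chain rule applied to $\sum_{\mathrm{x}} \ln f_{\mathcal{N}(M_{\upvarphi},\sigma^2)}(\mathrm{x})$ produces the outer factor $P = 1/f_{\mathcal{N}(M_{\upvarphi},\sigma^2)}(\mathrm{x})$ multiplying $\frac{1}{K^d}\sum_{\mathrm{l}} \partial_\theta f_{\mathcal{N}(\upmu_{\mathrm{l}}, \Sigma_{\mathrm{l}})}(\mathrm{x})$. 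This accounts for the outermost structure of the stated formula.

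Next, I differentiate a single Gaussian summand. The normalization factor $\lvert\Sigma_{\mathrm{l}}\rvert^{-1/2}$ contributes $-\tfrac{1}{2}\,f_{\mathcal{N}(\upmu_{\mathrm{l}},\Sigma_{\mathrm{l}})}\cdot \tr(\Sigma_{\mathrm{l}}^{-1}\,\partial_\theta \Sigma_{\mathrm{l}})$ via Jacobi's identity $\partial \ln \lvert \Sigma \rvert = \tr(\Sigma^{-1} \partial \Sigma)$; this gives the first line of the claimed formula. The exponential factor requires differentiating the quadratic form $-\tfrac{1}{2}(\mathrm{x}-\upmu_{\mathrm{l}})^{\top}\Sigma_{\mathrm{l}}^{-1}(\mathrm{x}-\upmu_{\mathrm{l}})$; the product rule on the three factors $(\mathrm{x}-\upmu_{\mathrm{l}})$, $\Sigma_{\mathrm{l}}^{-1}$, $(\mathrm{x}-\upmu_{\mathrm{l}})$, together with the identity $\partial \Sigma^{-1} = -\Sigma^{-1}(\partial \Sigma)\Sigma^{-1}$ for the middle factor, produces exactly the remaining three lines of the theorem.

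It then remains to supply $\partial_\theta \upmu_{\mathrm{l}}$ and $\partial_\theta \Sigma_{\mathrm{l}}$ explicitly, which I read off Theorem~\ref{theoremone}. The mean $\upmu_{\mathrm{l}}^{(\bar i)}$ is linear in the Fourier coefficients of the $\bar i$-th coordinate with coefficients $K^d\, g(\mathrm{l},\cdot)$, so differentiation with respect to $\mathrm{a}_{\mathrm{j}}^{(i)}$ selects the single Fourier index $\mathrm{j}$ and inserts a Kronecker delta $\delta_{i \bar i}$ (since $\mathrm{a}_{\mathrm{j}}^{(i)}$ only enters the $i$-th coordinate of $\upvarphi$), while $\partial_\sigma \upmu_{\mathrm{l}} = \vec{0}$. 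For the covariance $\Sigma_{\mathrm{l}}^{(i_1)(i_2)}$, the Fourier coefficients appear quadratically via the double sum $\sum_{\mathrm{l}^1,\mathrm{l}^2}\mathrm{a}_{\mathrm{l}^1}^{(i_1)}\mathrm{a}_{\mathrm{l}^2}^{(i_2)} g(\mathrm{l},\mathrm{l}^1,\mathrm{l}^2)$ and again in the subtracted term $\upmu_{\mathrm{l}}^{(i_1)}\upmu_{\mathrm{l}}^{(i_2)}$; the product rule yields two Kronecker-delta-selected single sums (one from $\mathrm{l}^1 = \mathrm{j}$ forced by $\delta_{i_1 i}$ and one from $\mathrm{l}^2 = \mathrm{j}$ forced by $\delta_{i_2 i}$) minus the correction obtained by differentiating the subtracted product of means, while $\partial_\sigma \Sigma_{\mathrm{l}} = 2\sigma \mathrm{I}$ follows at once from the explicit $\sigma^2 \mathrm{I}$ summand.

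The main obstacle is bookkeeping: one must carefully collapse the double Fourier sums to single sums after applying the product rule, track the Kronecker-delta selections on coordinate indices, and propagate the subtracted $\upmu_{\mathrm{l}}^{(i_1)}\upmu_{\mathrm{l}}^{(i_2)}$ correction consistently so that the final expression matches the abbreviations $g(\mathrm{l},\mathrm{j})$ and $g(\mathrm{l},\mathrm{l}^1,\mathrm{l}^2)$ introduced at the top of the theorem. Once these substitutions are made inside the chain-rule expansion from the second step and terms are collected, the stated closed form follows.
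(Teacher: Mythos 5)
Your proposal is correct and follows essentially the same route as the paper's own proof: logarithmic differentiation of the mixture approximation giving the factor $P$, Jacobi's formula for $\partial\lvert\Sigma_{\mathrm{l}}\rvert$, the identity $\partial\Sigma_{\mathrm{l}}^{-1}=-\Sigma_{\mathrm{l}}^{-1}(\partial\Sigma_{\mathrm{l}})\Sigma_{\mathrm{l}}^{-1}$ inside the quadratic form, and then reading off $\partial\upmu_{\mathrm{l}}$ and $\partial\Sigma_{\mathrm{l}}$ from the linear and quadratic dependence on the Fourier coefficients established in Theorem~\ref{theoremone}, with $\partial_{\sigma}\upmu_{\mathrm{l}}=\vec{0}$ and $\partial_{\sigma}\Sigma_{\mathrm{l}}=2\sigma\mathrm{I}$. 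No gaps to report; the Kronecker-delta bookkeeping you describe is exactly how the paper collapses the double sums.
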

\begin{proof}
\renewcommand{\qedsymbol}{}
See ~\ref{theoremtwo-proof}.
\end{proof}

\setcounter{section}{4}

\newcolumntype{C}[1]{>{\centering\arraybackslash}m{#1}}

\section{Experiments}
\label{Section5}


In this section, we conduct a case study of our algorithm on four classes of synthetic data sets (data clustering) and 2D images (ellipse fitting). Each synthetic data consists of points sampled on the smoothly varying number of curves given by the Fourier series of orders ranging from $1$~to~$4$ (each order for one class of the tests). For simplicity, we restrained ourselves only to the two-dimensional data, i.e., the one-dimensional curves embedded in the $\mathbb{R}^2$ space. 


As for the clustering task, we compare our algorithm MCEC with three other state-of-the-art methods: afCEC (using the Hartigan heuristics), CEC, and the GMM by computing three performance metrics: MLE, BIC, and AIC. To make our comparison fair, since MCEC operates on the "whole" curves, we assign the bigger number of clusters to the three remaining methods: $2$x for the classes: $1$  and $2$ and $4$x for the classes: $3$ and $4$. Additionally, for the MCEC, we compute the Rand Index and the Jaccard Index since we have access to the original clustering of the data sets.

In the last subsection devoted to the ellipse fitting task, we present the comparison with two state-of-art methods: the Least squares and the Hough transform.



\subsection{Closed curves given by the Fourier series of order $1$}

The performance metrics provided in Table~\ref{experiment1-1} indicate that MCEC outperforms the other three methods in terms of every performance metric in all 8 test cases. Expectedly, the advantage of our algorithm over the reference methods tends to be the smallest in the case of the afCEC algorithm and the biggest in the case of the GMM. It's worth mentioning that in most of the test cases, the advantage of our algorithm becomes visible just from the first start.

\begin{table}[]
\caption{Experiment No. 1: Values of the performance metrics computed for the final number of starts being the power of two.}
\centering
\resizebox*{!}{0.65\textheight}{%
\begin{tabular}{clcccc}
\toprule
& & MCEC & afCEC Hartigan & CEC & GMM \\
\midrule
\multicolumn{1}{c}{\multirow{5}{*}{\includegraphics[scale=0.07]{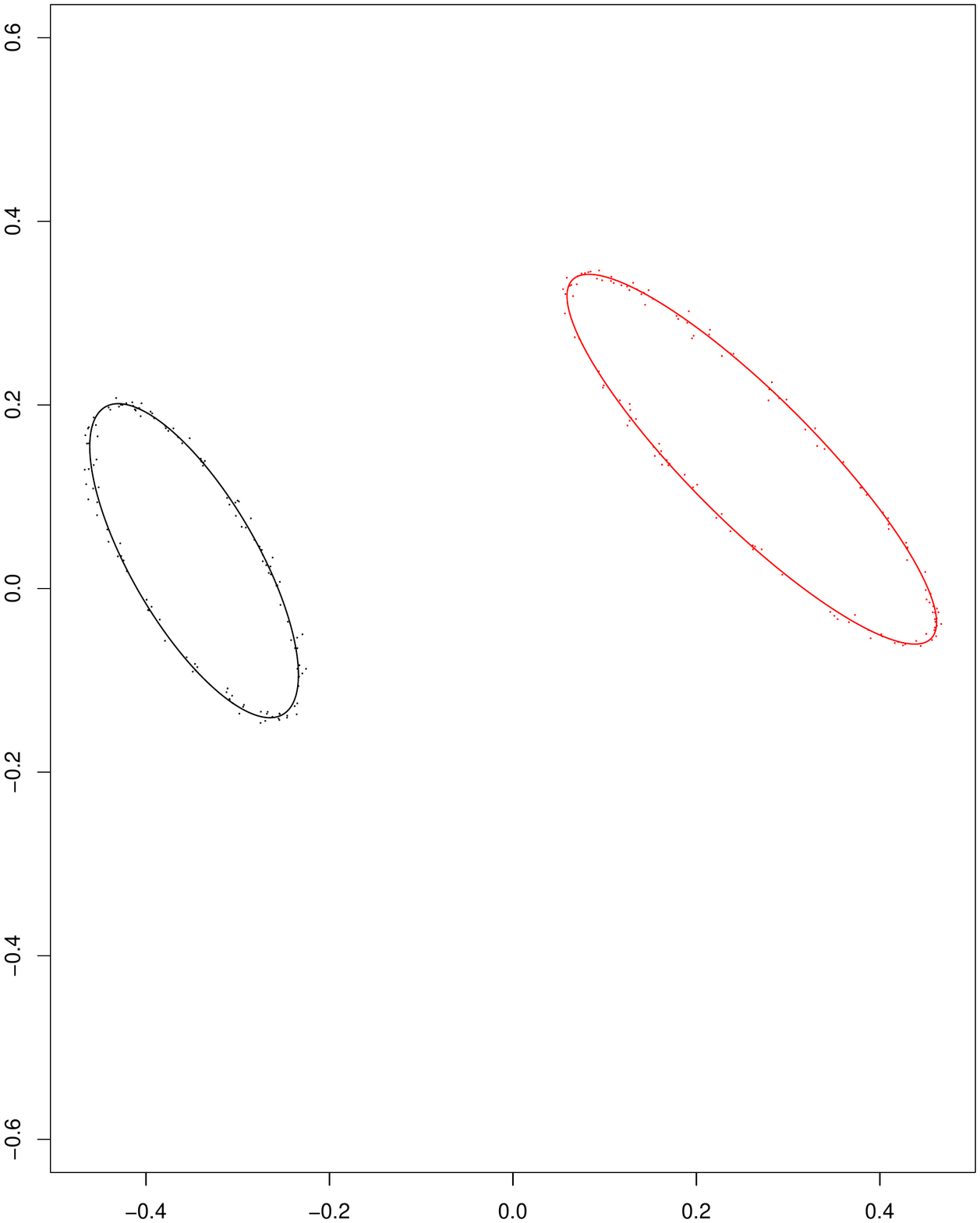}}} & MLE & \textbf{880.46} & 579.97 & 453.68 & 396.61 \\
\multicolumn{1}{c}{} & BIC & \textbf{-1677.74} & -988.04 & -779.81 & -665.68 \\
\multicolumn{1}{c}{} & AIC & \textbf{-1730.92} & -1097.94 & -861.35 & -747.22 \\
\multicolumn{1}{c}{} & Rand index & 1.00 & - & - & - \\
\multicolumn{1}{c}{} & Jaccard index & 1.00 & - & - & - \\
\midrule
\midrule
\multicolumn{1}{c}{\multirow{5}{*}{\includegraphics[scale=0.07]{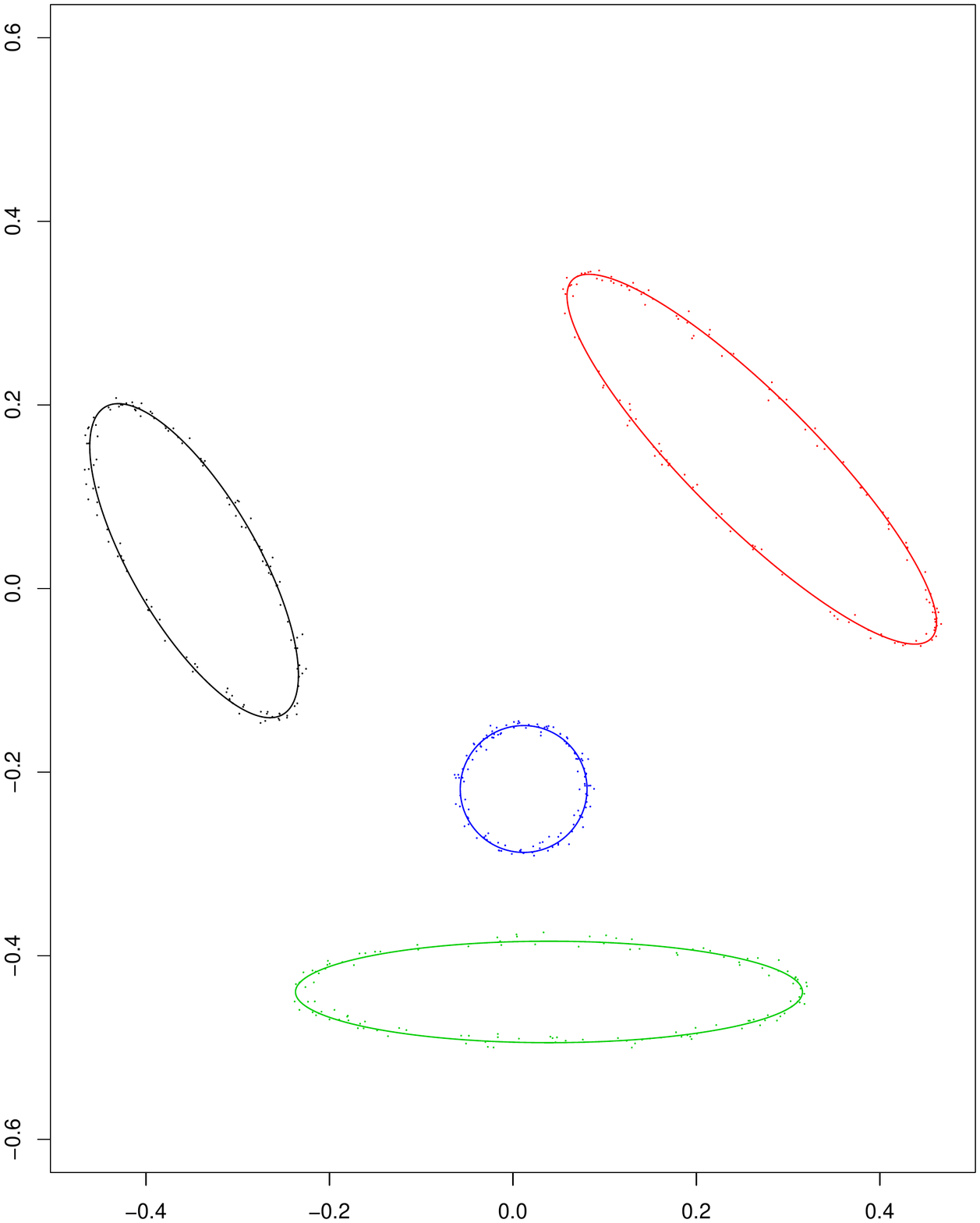}}} & MLE & \textbf{1481.77} & 1062.49 & 631.59 & 568.79 \\
\multicolumn{1}{c}{} & BIC & \textbf{-2770.15} & -1731.98 & -969.98 & -844.38 \\
\multicolumn{1}{c}{} & AIC & \textbf{-2901.53} & -1998.99 & -1169.18 & -1043.58 \\
\multicolumn{1}{c}{} & Rand index & 1.00 & - & - & - \\
\multicolumn{1}{c}{} & Jaccard index & 1.00 & - & - & - \\
\midrule
\midrule
\multicolumn{1}{c}{\multirow{5}{*}{\includegraphics[scale=0.07]{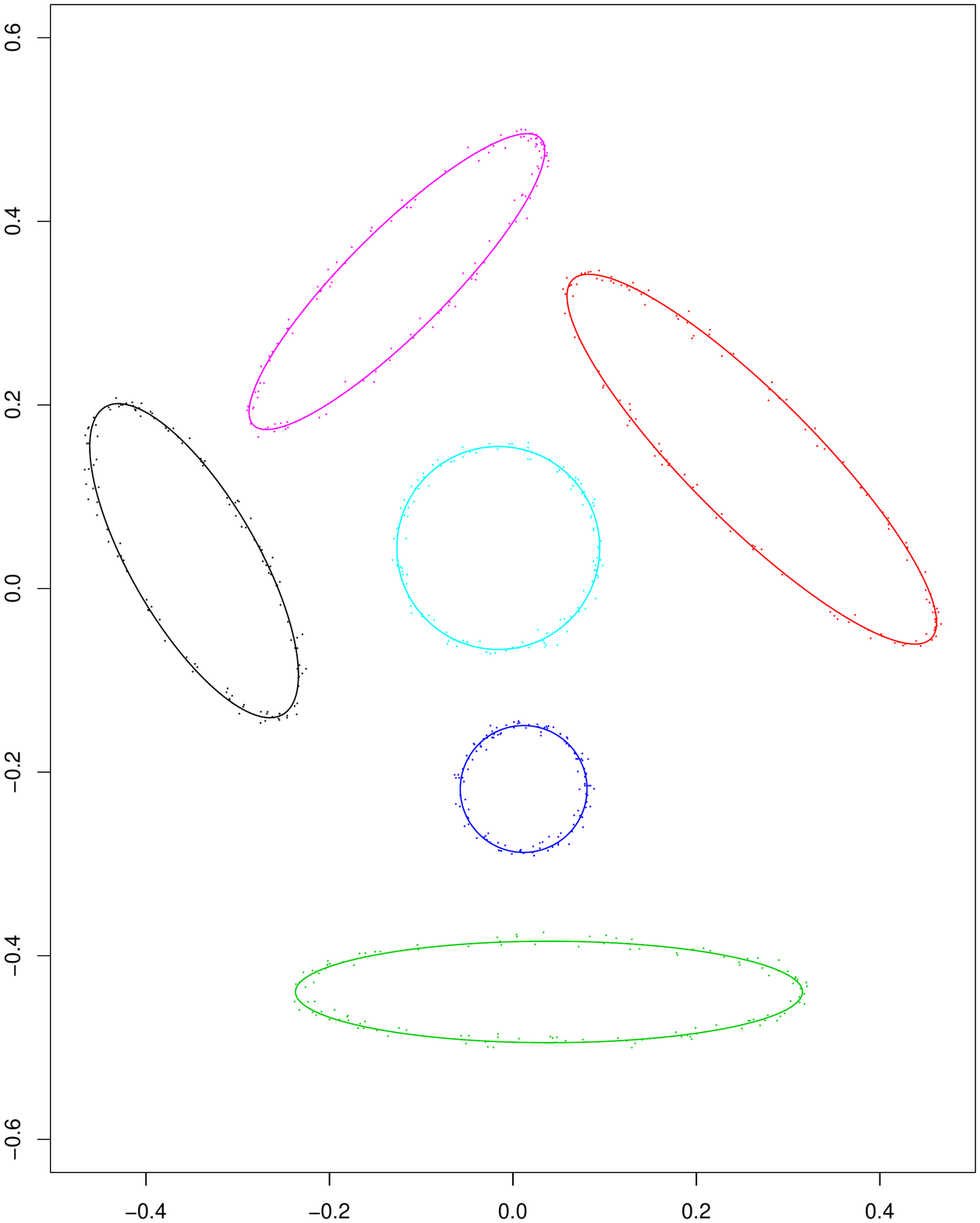}}} & MLE & \textbf{1898.91} & 1312.19 & 715.22 & 602.87 \\
\multicolumn{1}{c}{} & BIC & \textbf{-3485.57} & -1993.21 & -958.73 & -734.03 \\
\multicolumn{1}{c}{} & AIC & \textbf{-3703.83} & -2434.37 & -1288.44 & -1063.74 \\
\multicolumn{1}{c}{} & Rand index & 1.00 & - & - & - \\
\multicolumn{1}{c}{} & Jaccard index & 1.00 & - & - & - \\
\midrule
\multicolumn{1}{c}{\multirow{5}{*}{\includegraphics[scale=0.07]{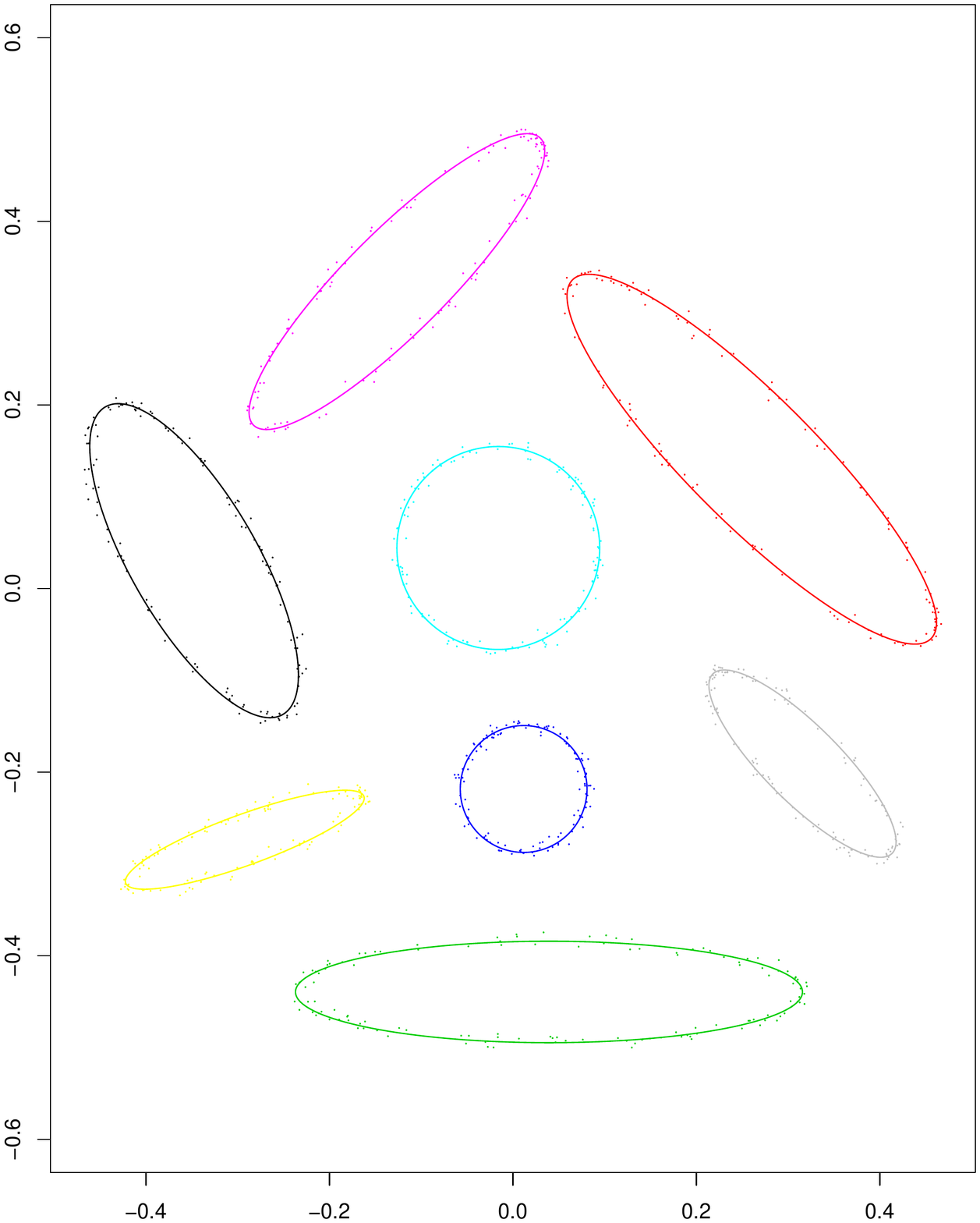}}} & MLE & \textbf{2340.88} & 1569.73 & 817.11 & 683.47 \\
\multicolumn{1}{c}{} & BIC & \textbf{-4245.08} & -2259.17 & -975.74 & -708.45 \\
\multicolumn{1}{c}{} & AIC & \textbf{-4555.76} & -2885.47 & -1444.23 & -1176.94 \\
\multicolumn{1}{c}{} & Rand index & 1.00 & - & - & - \\
\multicolumn{1}{c}{} & Jaccard index & 1.00 & - & - & - \\
\bottomrule
\end{tabular}
}
\label{experiment1-1}
\end{table}


\subsection{Closed curves given by the Fourier series of order $2$}

The performance metrics provided in Table~\ref{experiment2-1} indicate that MCEC significantly outperforms the other three methods in terms of every performance metric in all 4 test cases. Expectedly, similarly as in the case of Experiment No. 1, the advantage of our algorithm over the reference methods tends to be the smallest in the case of the afCEC algorithm and the biggest in the case of the GMM. It's worth mentioning that in most of the test cases, the advantage of our algorithm becomes visible just from the first start.

\begin{table}[]
\caption{Experiment No. 2: Values of the performance metrics computed for the final number of starts being the power of two.}
\centering
\footnotesize
\begin{tabular}{clcccc}
\toprule
& & MCEC & afCEC Hartigan & CEC & GMM \\
\midrule
\multicolumn{1}{c}{\multirow{5}{*}{\includegraphics[scale=0.07]{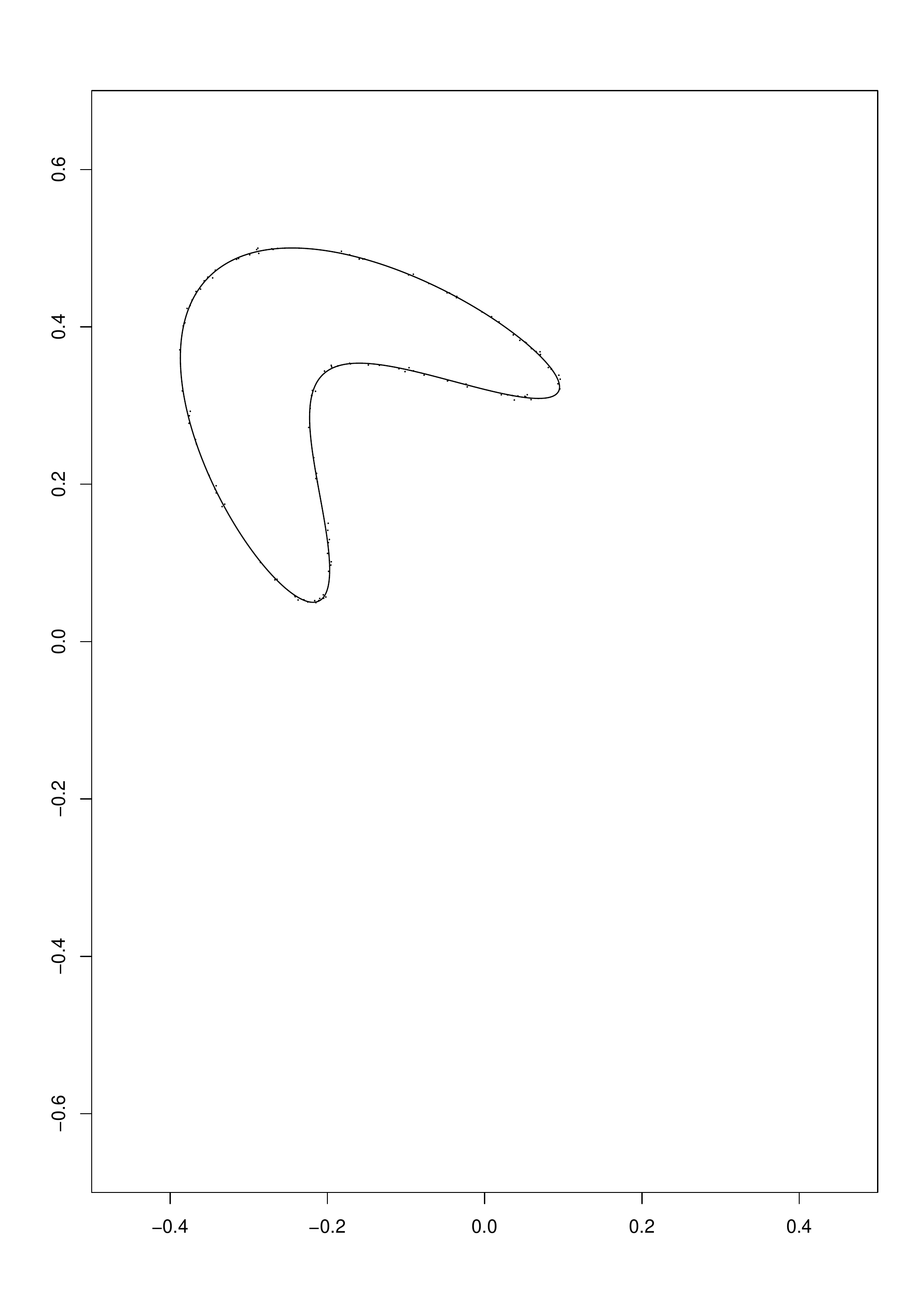}}} & MLE & \textbf{542.49} & 229.02 & 195.71 & 185.35 \\
\multicolumn{1}{c}{} & BIC & \textbf{-1031.62} & -385.26 & -338.05 & -317.33 \\
\multicolumn{1}{c}{} & AIC & \textbf{-1062.99} & -428.04 & -369.42 & -348.70 \\
\multicolumn{1}{c}{} & Rand index & 1.00 & - & - & - \\
\multicolumn{1}{c}{} & Jaccard index & 1.00 & - & - & - \\
\midrule
\multicolumn{1}{c}{\multirow{5}{*}{\includegraphics[scale=0.07]{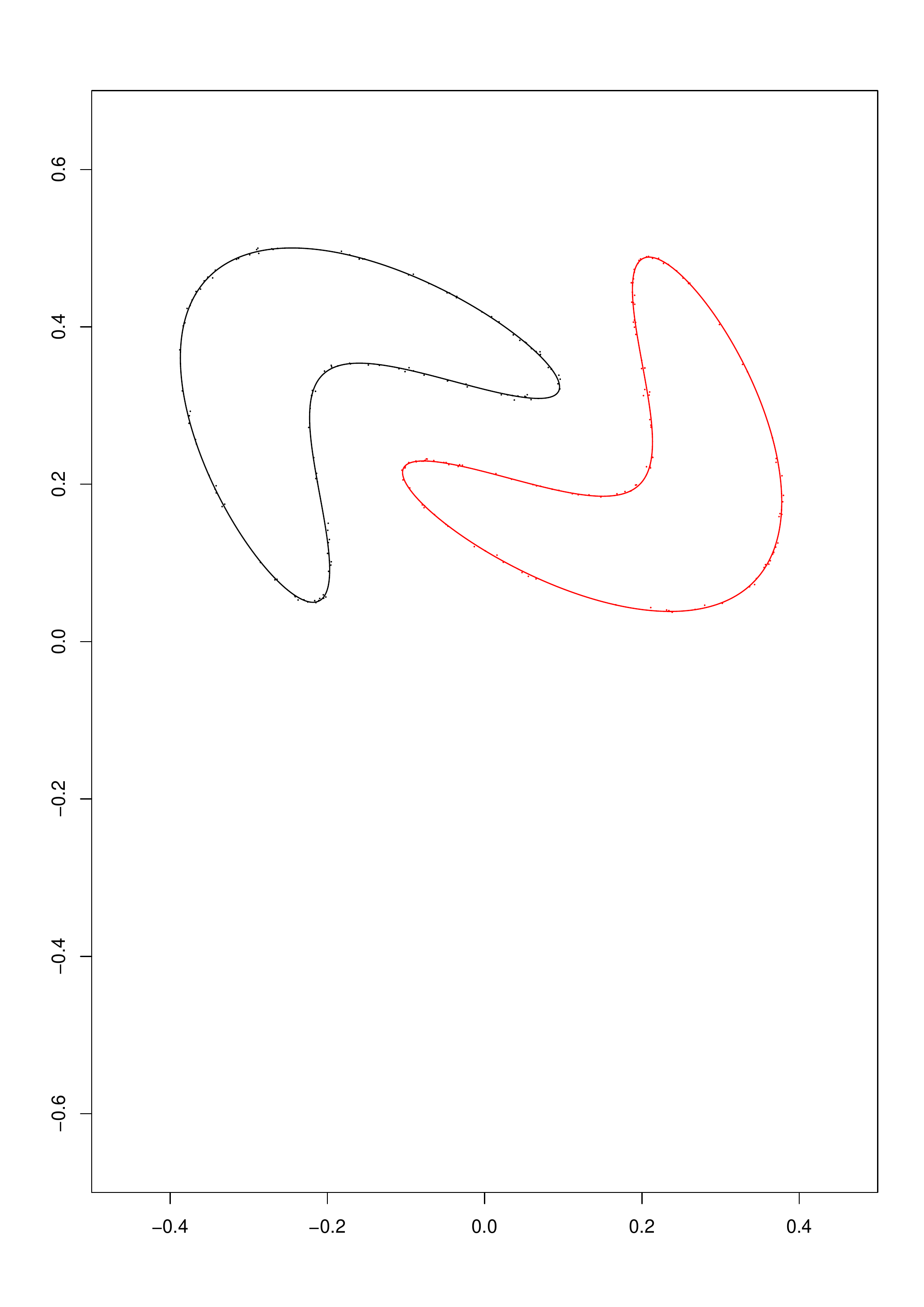}}} & MLE & \textbf{955.79} & 381.74 & 307.84 & 268.87 \\
\multicolumn{1}{c}{} & BIC & \textbf{-1784.03} & -591.58 & -488.14 & -410.20 \\
\multicolumn{1}{c}{} & AIC & \textbf{-1865.57} & -701.48 & -569.68 & -491.74 \\
\multicolumn{1}{c}{} & Rand index & 1.00 & - & - & - \\
\multicolumn{1}{c}{} & Jaccard index & 1.00 & - & - & - \\
\midrule
\multicolumn{1}{c}{\multirow{5}{*}{\includegraphics[scale=0.07]{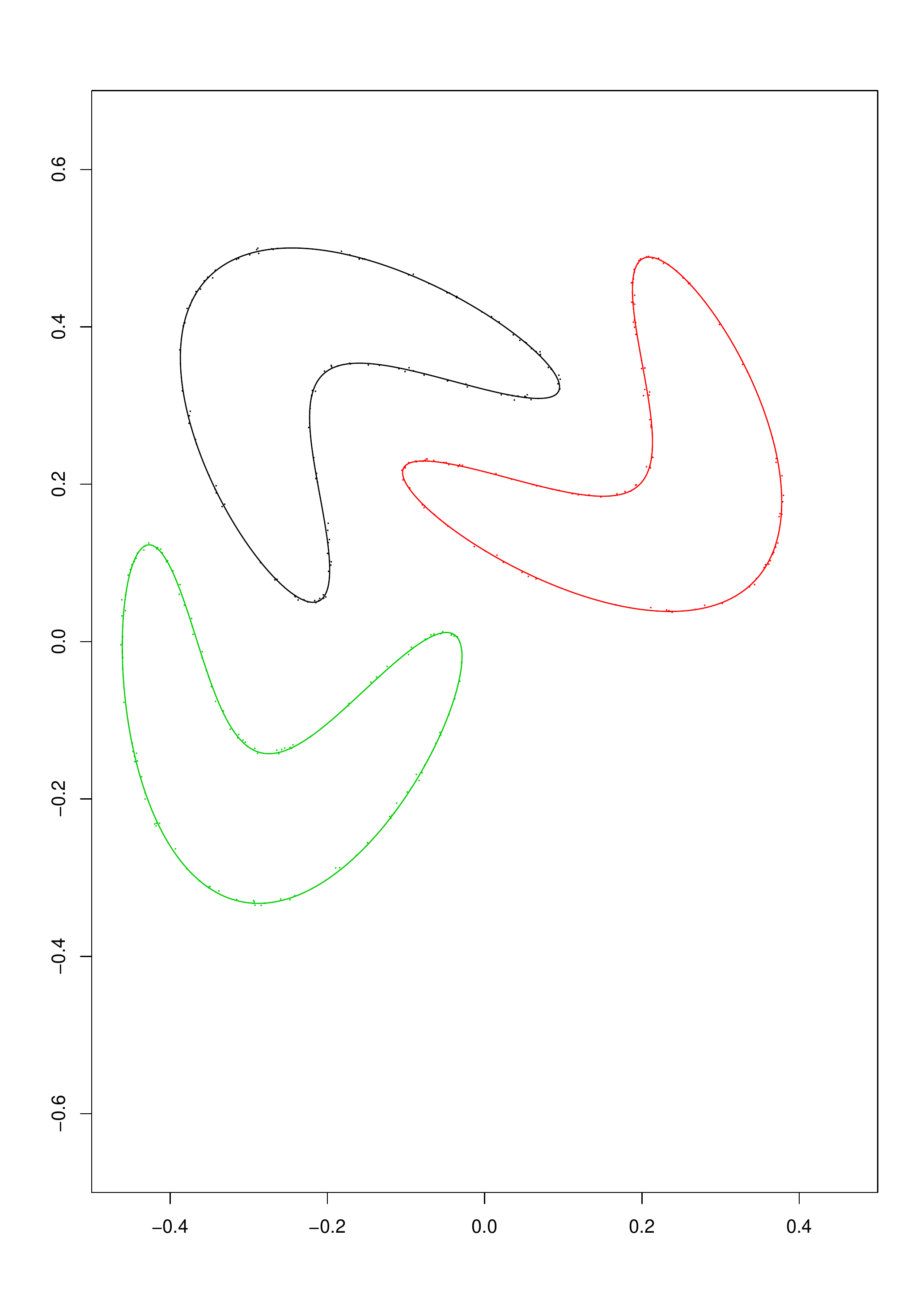}}} & MLE & \textbf{1261.75} & 451.79 & 261.29 & 233.12 \\
\multicolumn{1}{c}{} & BIC & \textbf{-2315.23} & -623.90 & -314.32 & -257.96 \\
\multicolumn{1}{c}{} & AIC & \textbf{-2453.50} & -809.59 & -452.59 & -396.23 \\
\multicolumn{1}{c}{} & Rand index & 1.00 & - & - & - \\
\multicolumn{1}{c}{} & Jaccard index & 1.00 & - & - & - \\
\midrule
\multicolumn{1}{c}{\multirow{5}{*}{\includegraphics[scale=0.07]{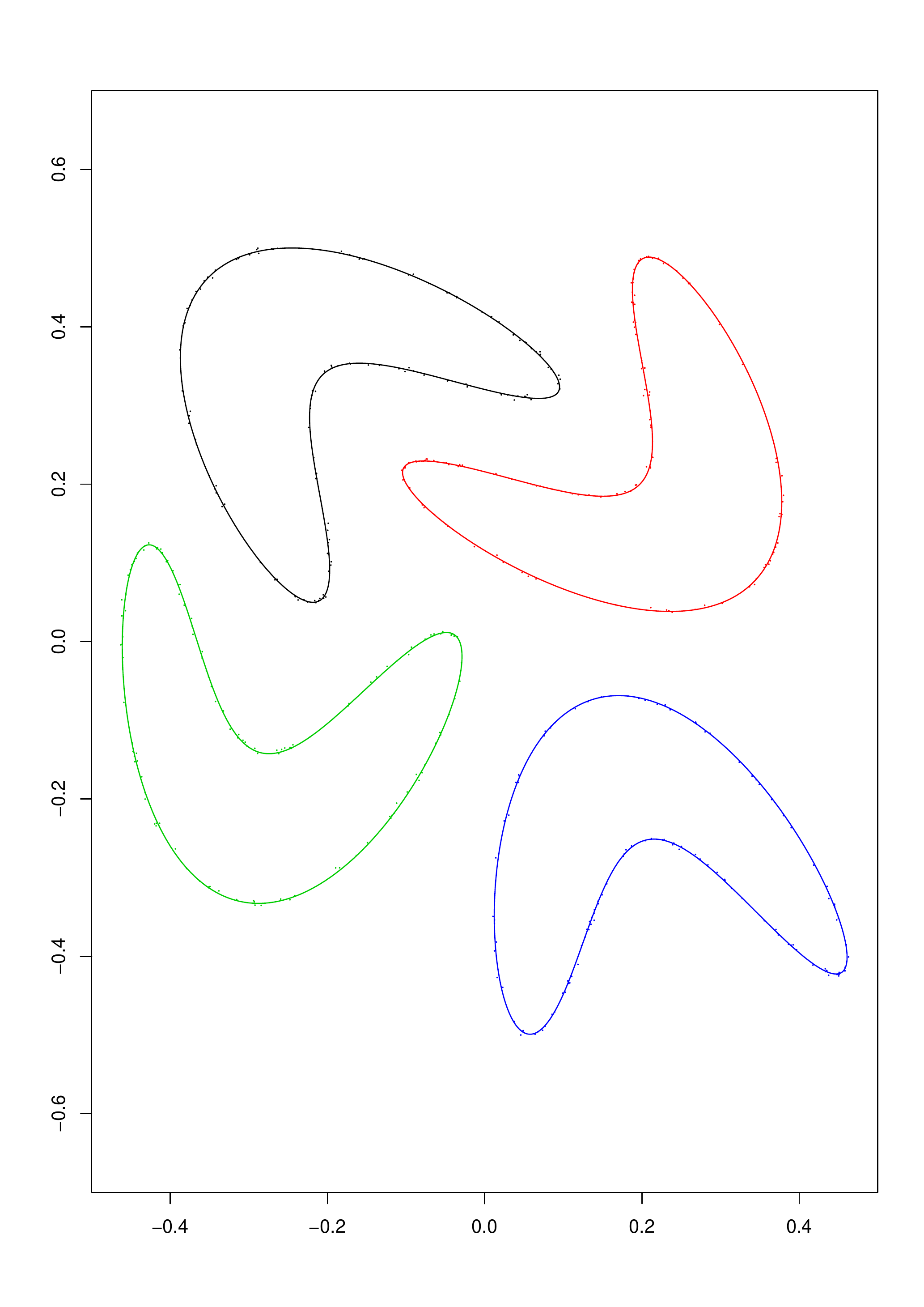}}} & MLE & \textbf{1551.54} & 432.14 & 245.16 & 164.01 \\
\multicolumn{1}{c}{} & BIC & \textbf{-2809.89} & -471.27 & -197.12 & -34.82 \\
\multicolumn{1}{c}{} & AIC & \textbf{-3009.09} & -738.28 & -396.32 & -234.02 \\
\multicolumn{1}{c}{} & Rand index & 1.00 & - & - & - \\
\multicolumn{1}{c}{} & Jaccard index & 1.00 & - & - & - \\
\bottomrule
\end{tabular}
\label{experiment2-1}
\end{table}


\begin{figure}[H]
	\centering
	\includegraphics[width=0.95\textwidth]{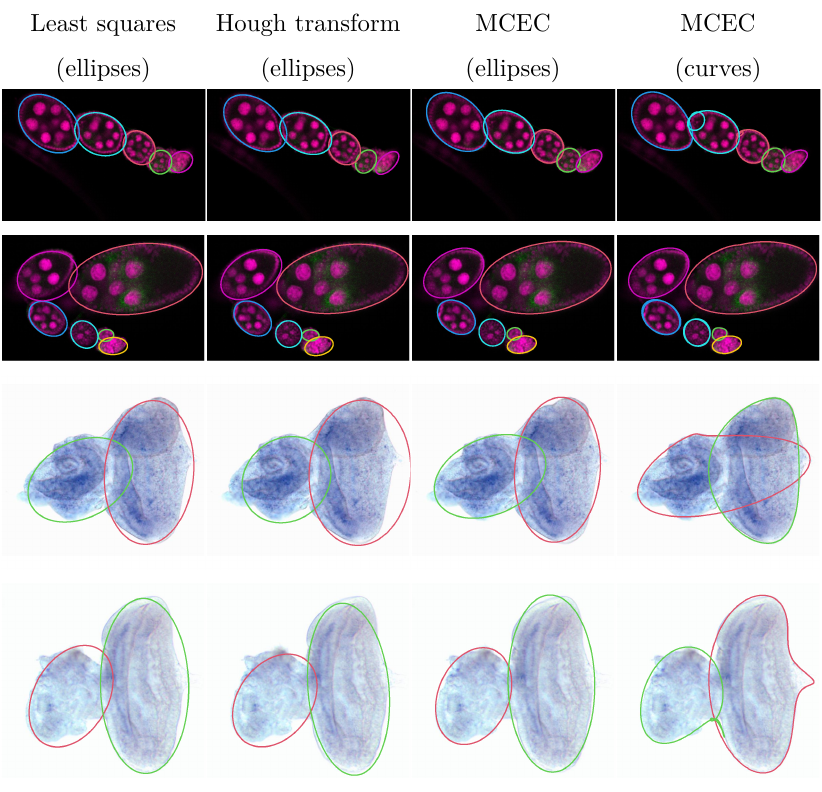}
	\caption{
 In the figure, we present the comparison between our MCEC and classical methods of ellipses fitting: Least squares and Hough transform on the images from the paper \cite{borovec2017supervised}: insitu7544, insitu7545, img\_5 and img\_14. As we can see, algorithms for the ellipses fitting give similar results. Thanks to the curved model, we can describe the positions and shapes of the objects.
 }
 \label{fig_img}
\end{figure}

\subsection{Closed curves given by the Fourier series of order $3$}

The performance metrics provided in Table~\ref{experiment3-1} indicate that MCEC significantly outperforms the other three methods in terms of every performance metric in all 2 test cases. Expectedly, similarly as in the case of the previous experiments, the advantage of our algorithm over the reference methods tends to be smallest in the case of the afCEC algorithm, where both of the algorithms perform comparably to each other, and biggest in the case of the GMM. Unlike the previous two experiments, our algorithm's advantage becomes visible from the first start only for the CEC and GMM algorithms.

\begin{table}[]
\caption{Experiment No. 3: Values of the performance metrics computed for the final number of starts being the power of two}
\centering
\footnotesize
\begin{tabular}{clcccc}
\toprule
& & MCEC & afCEC Hartigan & CEC & GMM \\
\midrule
\multicolumn{1}{c}{\multirow{5}{*}{\includegraphics[scale=0.07]{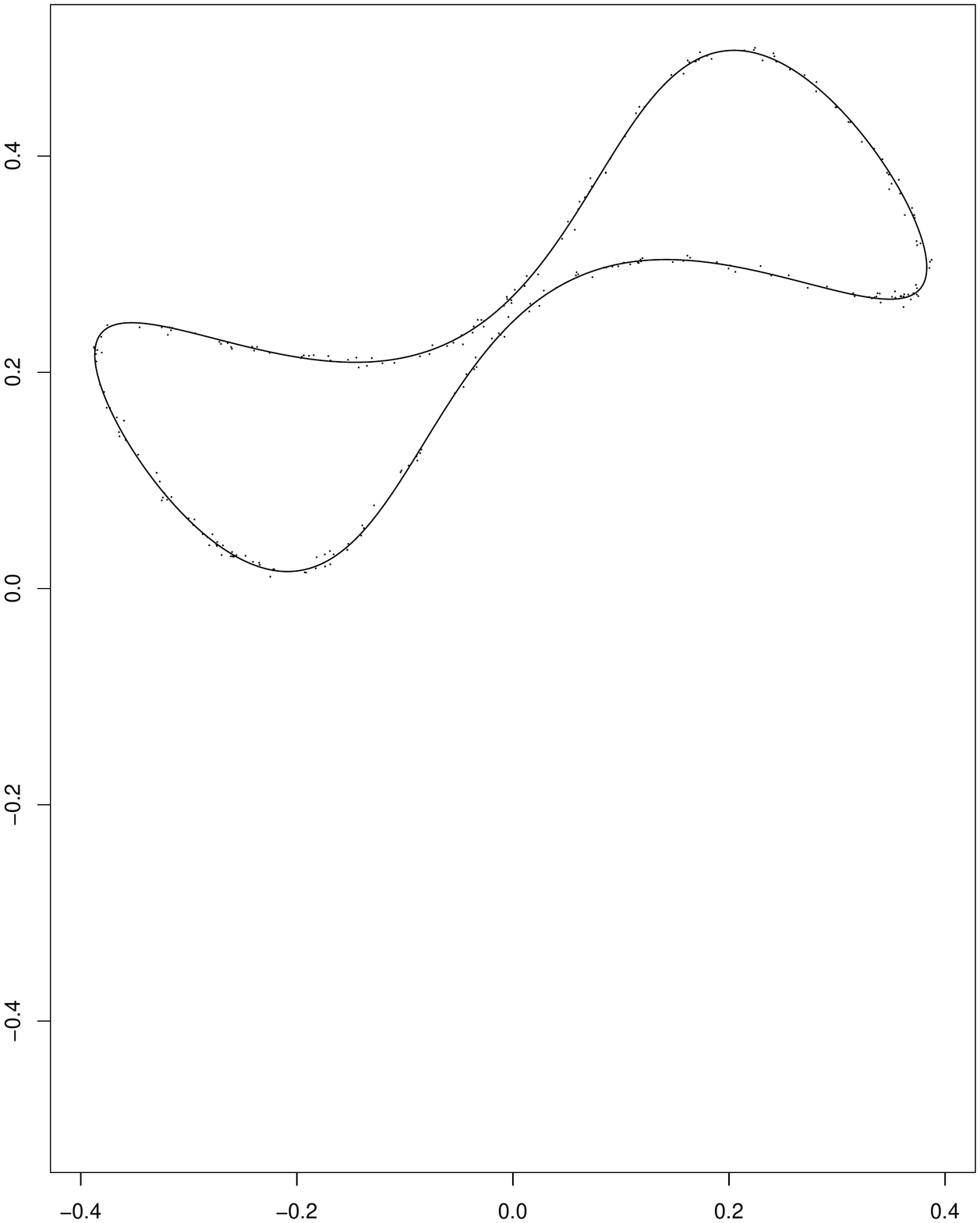}}} & MLE & \textbf{821.85} & 742.68 & 507.09 & 390.28 \\
\multicolumn{1}{c}{} & BIC & \textbf{-1560.51} & -1313.47 & -886.63 & -653.01 \\
\multicolumn{1}{c}{} & AIC & \textbf{-1613.69} & -1423.37 & -968.17 & -734.55 \\
\multicolumn{1}{c}{} & Rand index & 1.00 & - & - & - \\
\multicolumn{1}{c}{} & Jaccard index & 1.00 & - & - & - \\
\midrule
\multicolumn{1}{c}{\multirow{5}{*}{\includegraphics[scale=0.07]{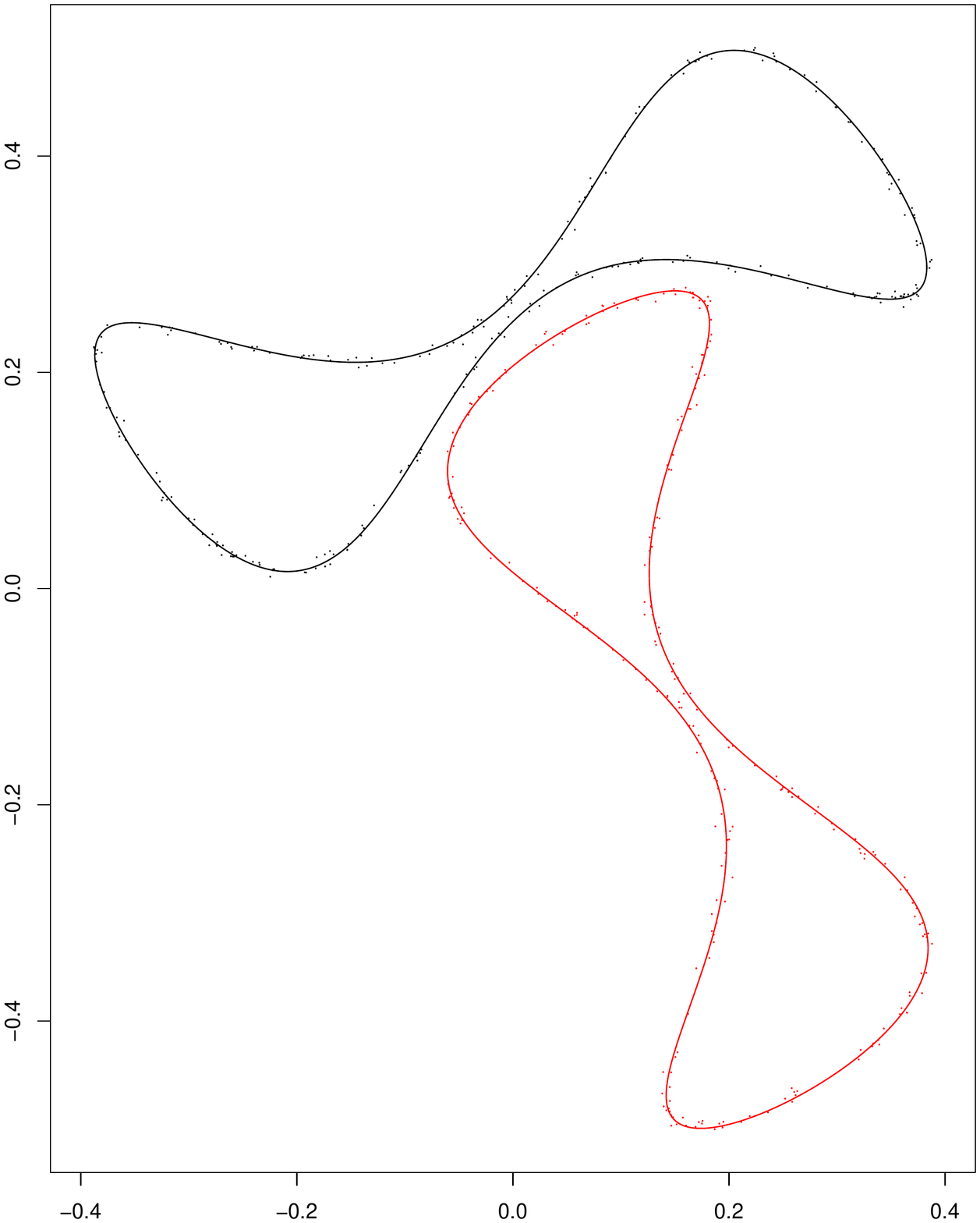}}} & MLE & \textbf{1175.35} & 1112.05 & 803.08 & 503.88 \\
\multicolumn{1}{c}{} & BIC & \textbf{-2157.31} & -1831.09 & -1312.95 & -714.56 \\
\multicolumn{1}{c}{} & AIC & \textbf{-2288.69} & -2098.11 & -1512.15 & -913.76 \\
\multicolumn{1}{c}{} & Rand index & 0.97 & - & - & - \\
\multicolumn{1}{c}{} & Jaccard index & 0.93 & - & - & - \\
\bottomrule
\end{tabular}

\label{experiment3-1}
\end{table}


\subsection{Closed curves given by the Fourier series of order $4$}

The performance metrics provided in Table~\ref{experiment4-1} indicate that MCEC significantly outperforms the other three methods in terms of every performance metric in all 4 test cases. Expectedly, similarly as in the case of the previous experiments, the advantage of our algorithm over the reference methods tends to be the smallest in the case of the afCEC algorithm and the biggest in the case of the GMM.

\begin{table}[]
\caption{Experiment No. 4: Values of the performance metrics computed for the final number of starts being the power of two.}
\centering
\footnotesize
\begin{tabular}{clcccc}
\toprule
& & MCEC & afCEC Hartigan & CEC & GMM \\
\midrule
\multicolumn{1}{c}{\multirow{5}{*}{\includegraphics[scale=0.07]{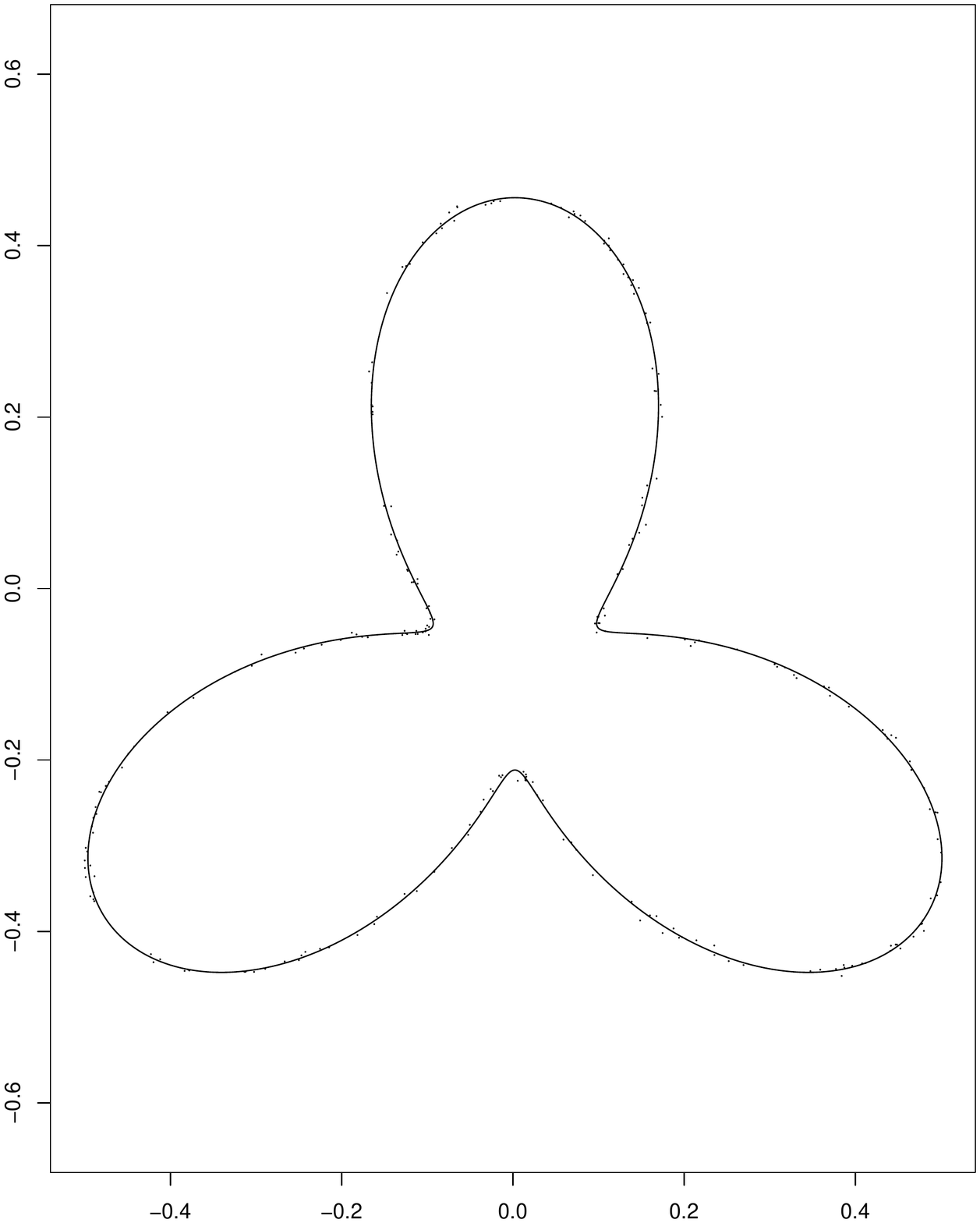}}} & MLE & \textbf{758.31} & 387.79 & 123.44 & 73.48 \\
\multicolumn{1}{c}{} & BIC & \textbf{-1411.26} & -603.69 & -119.34 & -19.43 \\
\multicolumn{1}{c}{} & AIC & \textbf{-1478.62} & -713.59 & -200.88 & -100.97 \\
\multicolumn{1}{c}{} & Rand index & 1.00 & - & - & - \\
\multicolumn{1}{c}{} & Jaccard index & 1.00 & - & - & - \\
\bottomrule
\end{tabular}
\label{experiment4-1}
\end{table}

\subsection{2D images}

\przemek{This subsection compares our algorithm with the classical approaches to ellipses fitting on 2D images: the least squares method and the Hough transform. As we can see, the methods of ellipses fitting work similarly to our model. Thanks to the curve fitting capability, we can get not only positions but also the shapes of objects, see Fig.~\ref{fig_img}}



\section{Conclusions}
\label{Section6}

In this work, we propose a new approach to the density representation of the closed curve, which was used for data clustering. Our approach automatically detects the number of components and is able to approximate the complicated shapes. We show in experiments that such an approach gives noticeably better results than the classical mixture models.

\newpage

\newpage

\newpage

\newpage

\newpage

\section{Appendix A}
\label{AppendixA}

\theoremone*
\begin{proof}
\label{theoremone-proof}
a) By definition of the mean vectors $\upmu_{\mathrm{j}}$, we have:
\newline
\newline
\newline
$\upmu_{\mathrm{j}}^{(i)} = K^d { \displaystyle \int \limits_{\mathrm{j}_d / K}^{(\mathrm{j}_d+1)/K}} \ldots { \displaystyle \int \limits_{\mathrm{j}_1 / K}^{(\mathrm{j}_1+1)/K}} \upvarphi^{(i)}(\mathrm{s}) \dd{\mathrm{s}_1} \ldots \dd{\mathrm{s}_d}$
\newline
\newline
\newline
where $i=1, \ldots, n$ and $j \in \{ 0,\ldots,K-1 \}^d$. Since, by assumption, we can approximate our closed $d$-dimensional surface embedded in $\mathbb{R}^n$ by the multidimensional Fourier series of order $k$, we obtain:

{
\footnotesize
\begin{flalign*}
\upmu_{\mathrm{j}}^{(i)} &=
K^d { \displaystyle \int \limits_{\mathrm{j}_d / K}^{(\mathrm{j}_d+1)/K}} \ldots { \displaystyle \int \limits_{\mathrm{j}_1 / K}^{(\mathrm{j}_1+1)/K}}
{\displaystyle \sum \limits_{\mathrm{l} \in \{ -k,\ldots,k \}^d}} \mathrm{a}_{\mathrm{l}}^{(i)} {\displaystyle \prod \limits_{m=1}^{d}}
\left \lbrace
\begin{array}{ll}
\cos(-2 \pi \mathrm{l}_m \mathrm{s}_m) & \textrm{, $\mathrm{l}_m < 0$} \\
1 & \textrm{, $\mathrm{l}_m = 0$} \\
\sin(2 \pi \mathrm{l}_m \mathrm{s}_m) & \textrm{, $\mathrm{l}_m > 0$}
\end{array}
\right.
\dd{\mathrm{s}_1} \ldots \dd{\mathrm{s}_d} =&\\
& &\\
&= K^d {\displaystyle \sum \limits_{\mathrm{l} \in \{ -k,\ldots,k \}^d}} \mathrm{a}_{\mathrm{l}}^{(i)} { \displaystyle \int \limits_{\mathrm{j}_d / K}^{(\mathrm{j}_d+1)/K}} \ldots { \displaystyle \int \limits_{\mathrm{j}_1 / K}^{(\mathrm{j}_1+1)/K}}
{\displaystyle \prod \limits_{m=1}^{d}}
\left \lbrace
\begin{array}{ll}
\cos(-2 \pi \mathrm{l}_m \mathrm{s}_m) & \textrm{, $\mathrm{l}_m < 0$} \\
1 & \textrm{, $\mathrm{l}_m = 0$} \\
\sin(2 \pi \mathrm{l}_m \mathrm{s}_m) & \textrm{, $\mathrm{l}_m > 0$}
\end{array}
\right.
\dd{\mathrm{s}_1} \ldots \dd{\mathrm{s}_d} =&\\
& &\\
&= K^d {\displaystyle \sum \limits_{\mathrm{l} \in \{ -k,\ldots,k \}^d}} \mathrm{a}_{\mathrm{l}}^{(i)} {\displaystyle \prod \limits_{m=1}^{d}}
\left(
{ \displaystyle \int \limits_{\mathrm{j}_m / K}^{(\mathrm{j}_m+1)/K}}
\left \lbrace
\begin{array}{ll}
\cos(-2 \pi \mathrm{l}_m \mathrm{s}_m) & \textrm{, $\mathrm{l}_m < 0$} \\
1 & \textrm{, $\mathrm{l}_m = 0$} \\
\sin(2 \pi \mathrm{l}_m \mathrm{s}_m) & \textrm{, $\mathrm{l}_m > 0$}
\end{array}
\right.
\dd{\mathrm{s}_m} \right) =&\\
& &\\
&= K^d {\displaystyle \sum \limits_{\mathrm{l} \in \{ -k,\ldots,k \}^d}} \mathrm{a}_{\mathrm{l}}^{(i)} {\displaystyle \prod \limits_{m=1}^{d}}
\left \lbrace
\begin{array}{ll}
\left. -\frac{1}{2 \pi \mathrm{l}_m} \sin(-2 \pi \mathrm{l}_m \mathrm{s}_m) \right \rvert_{\mathrm{s}_m=\mathrm{j}_m / K}^{\mathrm{s}_m=(\mathrm{j}_m+1) / K} & \textrm{, $\mathrm{l}_m < 0$} \\
\\
\left. \mathrm{s}_m \right \rvert_{\mathrm{s}_m=\mathrm{j}_m / K}^{\mathrm{s}_m=(\mathrm{j}_m+1) / K} & \textrm{, $\mathrm{l}_m = 0$} \\
\\
\left. -\frac{1}{2 \pi \mathrm{l}_m} \cos(2 \pi \mathrm{l}_m \mathrm{s}_m) \right \rvert_{\mathrm{s}_m=\mathrm{j}_m / K}^{\mathrm{s}_m=(\mathrm{j}_m+1) / K} & \textrm{, $\mathrm{l}_m > 0$}
\end{array}
\right.
\end{flalign*}
}

\noindent
\newline
\newline
b) By definition of the covariance matrices $\Sigma_{\mathrm{j}}$, we have:
\newline
\newline
\newline
{
\footnotesize
$
\Sigma_{\mathrm{j}}^{(i_1)(i_2)} = {\sigma^2}\mathrm{I}_{{i_1}{i_2}} + K^d { \displaystyle \int \limits_{\mathrm{j}_d / K}^{(\mathrm{j}_d+1)/K}} \ldots { \displaystyle \int \limits_{\mathrm{j}_1 / K}^{(\mathrm{j}_1+1)/K}}
(\upvarphi^{(i_1)}(\mathrm{s})-\upmu_{\mathrm{j}}^{(i_1)})(\upvarphi^{(i_2)}(\mathrm{s})-\upmu_{\mathrm{j}}^{(i_2)}) 
\dd{\mathrm{s}_1} \ldots \dd{\mathrm{s}_d}
$
}
\newline
\newline
\newline
First, let's note that:
{
\footnotesize
\begin{flalign*}
\Sigma_{\mathrm{j}}^{(i_1)(i_2)} &=
{\sigma^2}\mathrm{I}_{{i_1}{i_2}} + K^d { \displaystyle \int \limits_{\mathrm{j}_d / K}^{(\mathrm{j}_d+1)/K}} \ldots { \displaystyle \int \limits_{\mathrm{j}_1 / K}^{(\mathrm{j}_1+1)/K}}
(\upvarphi^{(i_1)}(\mathrm{s})-\upmu_{\mathrm{j}}^{(i_1)})(\upvarphi^{(i_2)}(\mathrm{s})-\upmu_{\mathrm{j}}^{(i_2)}) 
\dd{\mathrm{s}_1} \ldots \dd{\mathrm{s}_d} =&\\
& &\\
&= {\sigma^2}\mathrm{I}_{{i_1}{i_2}} +&\\
& \phantom{=} + K^d { \displaystyle \int \limits_{\mathrm{j}_d / K}^{(\mathrm{j}_d+1)/K}} \ldots { \displaystyle \int \limits_{\mathrm{j}_1 / K}^{(\mathrm{j}_1+1)/K}}
\upvarphi^{(i_1)}(\mathrm{s}) \upvarphi^{(i_2)}(\mathrm{s})
\dd{\mathrm{s}_1} \ldots \dd{\mathrm{s}_d} -&\\
& \phantom{=} - \upmu_{\mathrm{j}}^{(i_2)} \left(
K^d { \displaystyle \int \limits_{\mathrm{j}_d / K}^{(\mathrm{j}_d+1)/K}} \ldots { \displaystyle \int \limits_{\mathrm{j}_1 / K}^{(\mathrm{j}_1+1)/K}} \upvarphi^{(i_1)}(\mathrm{s}) \dd{\mathrm{s}_1} \ldots \dd{\mathrm{s}_d}
\right) -&\\
& \phantom{=} - \upmu_{\mathrm{j}}^{(i_1)} \left(
K^d { \displaystyle \int \limits_{\mathrm{j}_d / K}^{(\mathrm{j}_d+1)/K}} \ldots { \displaystyle \int \limits_{\mathrm{j}_1 / K}^{(\mathrm{j}_1+1)/K}} \upvarphi^{(i_2)}(\mathrm{s}) \dd{\mathrm{s}_1} \ldots \dd{\mathrm{s}_d}
\right) +&\\
& \phantom{=} + \upmu_{\mathrm{j}}^{(i_1)} \upmu_{\mathrm{j}}^{(i_2)} \left(
K^d { \displaystyle \int \limits_{\mathrm{j}_d / K}^{(\mathrm{j}_d+1)/K}} \ldots { \displaystyle \int \limits_{\mathrm{j}_1 / K}^{(\mathrm{j}_1+1)/K}} 1 \dd{\mathrm{s}_1} \ldots \dd{\mathrm{s}_d}
\right) =&\\
& &\\
&= {\sigma^2}\mathrm{I}_{{i_1}{i_2}} +
\left( K^d { \displaystyle \int \limits_{\mathrm{j}_d / K}^{(\mathrm{j}_d+1)/K}} \ldots { \displaystyle \int \limits_{\mathrm{j}_1 / K}^{(\mathrm{j}_1+1)/K}}
\upvarphi^{(i_1)}(\mathrm{s}) \upvarphi^{(i_2)}(\mathrm{s})
\dd{\mathrm{s}_1} \ldots \dd{\mathrm{s}_d}\right)
- \upmu_{\mathrm{j}}^{(i_1)} \upmu_{\mathrm{j}}^{(i_2)}
\end{flalign*}
}

\noindent
\newline
Now, it remains to show how to compute the value of the integral
\newline
\newline
\newline
$
K^d { \displaystyle \int \limits_{\mathrm{j}_d / K}^{(\mathrm{j}_d+1)/K}} \ldots { \displaystyle \int \limits_{\mathrm{j}_1 / K}^{(\mathrm{j}_1+1)/K}}
\upvarphi^{(i_1)}(\mathrm{s}) \upvarphi^{(i_2)}(\mathrm{s})
\dd{\mathrm{s}_1} \ldots \dd{\mathrm{s}_d}
$
\newline
\newline
\newline
Using the same argumentation as in point a), we obtain:
\newline
{
\tiny
\begin{flalign*}
& K^d { \displaystyle \int \limits_{\mathrm{j}_d / K}^{(\mathrm{j}_d+1)/K}} \ldots { \displaystyle \int \limits_{\mathrm{j}_1 / K}^{(\mathrm{j}_1+1)/K}}
\upvarphi^{(i_1)}(\mathrm{s}) \upvarphi^{(i_2)}(\mathrm{s})
\dd{\mathrm{s}_1} \ldots \dd{\mathrm{s}_d} =&\\
& &\\
&= K^d {\displaystyle \sum \limits_{\substack{\mathrm{l}^1 \in \{ -k,\ldots,k \}^d \\ \mathrm{l}^2 \in \{ -k,\ldots,k \}^d}}} \mathrm{a}_{\mathrm{l}^1}^{(i_1)} \mathrm{a}_{\mathrm{l}^2}^{(i_2)} {\displaystyle \prod \limits_{m=1}^{d}}
\left(
{ \displaystyle \int \limits_{\mathrm{j}_m / K}^{(\mathrm{j}_m+1)/K}}
\left \lbrace
\begin{array}{ll}
\cos(-2 \pi \mathrm{l}_m^1 \mathrm{s}_m) \cos(-2 \pi \mathrm{l}_m^2 \mathrm{s}_m) & \textrm{, $\mathrm{l}_m^1 < 0$, $\mathrm{l}_m^2 < 0$} \\
\cos(-2 \pi \mathrm{l}_m^1 \mathrm{s}_m) & \textrm{, $\mathrm{l}_m^1 < 0$, $\mathrm{l}_m^2 = 0$} \\
\cos(-2 \pi \mathrm{l}_m^1 \mathrm{s}_m) \sin(2 \pi \mathrm{l}_m^2 \mathrm{s}_m) & \textrm{, $\mathrm{l}_m^1 < 0$, $\mathrm{l}_m^2 > 0$} \\
\cos(-2 \pi \mathrm{l}_m^2 \mathrm{s}_m)  & \textrm{, $\mathrm{l}_m^1 = 0$, $\mathrm{l}_m^2 < 0$} \\
1 & \textrm{, $\mathrm{l}_m^1 = 0$, $\mathrm{l}_m^2 = 0$} \\
\sin(2 \pi \mathrm{l}_m^2 \mathrm{s}_m) & \textrm{, $\mathrm{l}_m^1 = 0$, $\mathrm{l}_m^2 > 0$} \\
\sin(2 \pi \mathrm{l}_m^1 \mathrm{s}_m) \cos(-2 \pi \mathrm{l}_m^2 \mathrm{s}_m) & \textrm{, $\mathrm{l}_m^1 > 0$, $\mathrm{l}_m^2 < 0$} \\
\sin(2 \pi \mathrm{l}_m^1 \mathrm{s}_m) & \textrm{, $\mathrm{l}_m^1 > 0$, $\mathrm{l}_m^2 = 0$} \\
\sin(2 \pi \mathrm{l}_m^1 \mathrm{s}_m) \sin(2 \pi \mathrm{l}_m^2 \mathrm{s}_m) & \textrm{, $\mathrm{l}_m^1 > 0$, $\mathrm{l}_m^2 > 0$}
\end{array}
\right.
\dd{\mathrm{s}_m} \right) &\\
\end{flalign*}
}

\noindent
Applying some trigonometric identities involving products of the sine and cosine function, we get:
\newline

$
K^d { \displaystyle \int \limits_{\mathrm{j}_d / K}^{(\mathrm{j}_d+1)/K}} \ldots { \displaystyle \int \limits_{\mathrm{j}_1 / K}^{(\mathrm{j}_1+1)/K}}
\upvarphi^{(i_1)}(\mathrm{s}) \upvarphi^{(i_2)}(\mathrm{s})
\dd{\mathrm{s}_1} \ldots \dd{\mathrm{s}_d}=
$
\newline

\resizebox{\linewidth}{!}{
$
=K^d {\displaystyle \sum \limits_{\substack{\mathrm{l}^1 \in \{ -k,\ldots,k \}^d \\ \mathrm{l}^2 \in \{ -k,\ldots,k \}^d}}} \mathrm{a}_{\mathrm{l}^1}^{(i_1)} \mathrm{a}_{\mathrm{l}^2}^{(i_2)} {\displaystyle \prod \limits_{m=1}^{d}}
\left(
{ \displaystyle \int \limits_{\mathrm{j}_m / K}^{(\mathrm{j}_m+1)/K}}
\left \lbrace
\begin{array}{ll}
\frac{ \cos(-2 \pi \mathrm{s}_m ( \mathrm{l}_m^1 + \mathrm{l}_m^2 )) + \cos(-2 \pi \mathrm{s}_m ( \mathrm{l}_m^1 - \mathrm{l}_m^2 )) } {2} & \textrm{, $\mathrm{l}_m^1 < 0$, $\mathrm{l}_m^2 < 0$} \\
\\
\cos(-2 \pi \mathrm{l}_m^1 \mathrm{s}_m) & \textrm{, $\mathrm{l}_m^1 < 0$, $\mathrm{l}_m^2 = 0$} \\
\\
\frac{ \sin(-2 \pi \mathrm{s}_m ( \mathrm{l}_m^1 - \mathrm{l}_m^2 )) - \sin(-2 \pi \mathrm{s}_m ( \mathrm{l}_m^1 + \mathrm{l}_m^2 )) } {2} & \textrm{, $\mathrm{l}_m^1 < 0$, $\mathrm{l}_m^2 > 0$} \\
\\
\cos(-2 \pi \mathrm{l}_m^2 \mathrm{s}_m)  & \textrm{, $\mathrm{l}_m^1 = 0$, $\mathrm{l}_m^2 < 0$} \\
\\
1 & \textrm{, $\mathrm{l}_m^1 = 0$, $\mathrm{l}_m^2 = 0$} \\
\\
\sin(2 \pi \mathrm{l}_m^2 \mathrm{s}_m) & \textrm{, $\mathrm{l}_m^1 = 0$, $\mathrm{l}_m^2 > 0$} \\
\\
\frac{ \sin(2 \pi \mathrm{s}_m ( \mathrm{l}_m^1 - \mathrm{l}_m^2 )) + \sin(2 \pi \mathrm{s}_m ( \mathrm{l}_m^1 + \mathrm{l}_m^2 )) } {2} & \textrm{, $\mathrm{l}_m^1 > 0$, $\mathrm{l}_m^2 < 0$} \\
\\
\sin(2 \pi \mathrm{l}_m^1 \mathrm{s}_m) & \textrm{, $\mathrm{l}_m^1 > 0$, $\mathrm{l}_m^2 = 0$} \\
\\
\frac{ \cos(2 \pi \mathrm{s}_m ( \mathrm{l}_m^1 - \mathrm{l}_m^2 )) - \cos(2 \pi \mathrm{s}_m ( \mathrm{l}_m^1 + \mathrm{l}_m^2 )) } {2} & \textrm{, $\mathrm{l}_m^1 > 0$, $\mathrm{l}_m^2 > 0$}
\end{array}
\right.
\dd{\mathrm{s}_m} \right)=
$
}


\resizebox{\linewidth}{!}{
$
= K^d {\displaystyle \sum \limits_{\substack{\mathrm{l}^1 \in \{ -k,\ldots,k \}^d \\ \mathrm{l}^2 \in \{ -k,\ldots,k \}^d}}} \mathrm{a}_{\mathrm{l}^1}^{(i_1)} \mathrm{a}_{\mathrm{l}^2}^{(i_2)} {\displaystyle \prod \limits_{m=1}^{d}}
\left \lbrace
\begin{array}{ll}
\left \lbrace
\begin{array}{ll}
\left. -\frac{ \sin(-2 \pi \mathrm{s}_m ( \mathrm{l}_m^1 + \mathrm{l}_m^2 )) }{ 4 \pi ( \mathrm{l}_m^1 + \mathrm{l}_m^2 ) } -
\frac{ \sin(-2 \pi \mathrm{s}_m ( \mathrm{l}_m^1 - \mathrm{l}_m^2 )) }{ 4 \pi ( \mathrm{l}_m^1 - \mathrm{l}_m^2 ) } \right \rvert_{\mathrm{s}_m=\mathrm{j}_m / K}^{\mathrm{s}_m=(\mathrm{j}_m+1) / K}
& \textrm{, $\mathrm{l}_m^1 \ne \mathrm{l}_m^2$} \\
\\
\left. -\frac{ \sin(-2 \pi \mathrm{s}_m ( \mathrm{l}_m^1 + \mathrm{l}_m^2 )) }{ 4 \pi ( \mathrm{l}_m^1 + \mathrm{l}_m^2 ) } + \frac{\mathrm{s}_m}{2} \right \rvert_{\mathrm{s}_m=\mathrm{j}_m / K}^{\mathrm{s}_m=(\mathrm{j}_m+1) / K}
& \textrm{, $\mathrm{l}_m^1 = \mathrm{l}_m^2$} \\
\end{array}
\right.
& \textrm{, $\mathrm{l}_m^1 < 0$, $\mathrm{l}_m^2 < 0$} \\
\\
\left. -\frac{1}{2 \pi \mathrm{l}_m^1} \sin(-2 \pi \mathrm{l}_m^1 \mathrm{s}_m) \right \rvert_{\mathrm{s}_m=\mathrm{j}_m / K}^{\mathrm{s}_m=(\mathrm{j}_m+1) / K} & \textrm{, $\mathrm{l}_m^1 < 0$, $\mathrm{l}_m^2 = 0$} \\
\\
\left \lbrace
\begin{array}{ll}
\left. \frac{ \cos(-2 \pi \mathrm{s}_m ( \mathrm{l}_m^1 - \mathrm{l}_m^2 )) } { 4 \pi ( \mathrm{l}_m^1 - \mathrm{l}_m^2 ) } -
\frac{ \cos(-2 \pi \mathrm{s}_m ( \mathrm{l}_m^1 + \mathrm{l}_m^2 )) }{ 4 \pi ( \mathrm{l}_m^1 + \mathrm{l}_m^2 ) } \right \rvert_{\mathrm{s}_m=\mathrm{j}_m / K}^{\mathrm{s}_m=(\mathrm{j}_m+1) / K}
& \textrm{, $\mathrm{l}_m^1 \ne - \mathrm{l}_m^2$} \\
\\
\left. \frac{ \cos(-2 \pi \mathrm{s}_m ( \mathrm{l}_m^1 - \mathrm{l}_m^2 )) } { 4 \pi ( \mathrm{l}_m^1 - \mathrm{l}_m^2 ) } \right \rvert_{\mathrm{s}_m=\mathrm{j}_m / K}^{\mathrm{s}_m=(\mathrm{j}_m+1) / K}
& \textrm{, $\mathrm{l}_m^1 = - \mathrm{l}_m^2$} \\
\end{array}
\right.
& \textrm{, $\mathrm{l}_m^1 < 0$, $\mathrm{l}_m^2 > 0$} \\
\\
\left. -\frac{1}{2 \pi \mathrm{l}_m^2} \sin(-2 \pi \mathrm{l}_m^2 \mathrm{s}_m) \right \rvert_{\mathrm{s}_m=\mathrm{j}_m / K}^{\mathrm{s}_m=(\mathrm{j}_m+1) / K}  & \textrm{, $\mathrm{l}_m^1 = 0$, $\mathrm{l}_m^2 < 0$} \\
\\
\left. \mathrm{s}_m \right \rvert_{\mathrm{s}_m=\mathrm{j}_m / K}^{\mathrm{s}_m=(\mathrm{j}_m+1) / K} & \textrm{, $\mathrm{l}_m^1 = 0$, $\mathrm{l}_m^2 = 0$} \\
\\
\left. -\frac{1}{2 \pi \mathrm{l}_m^2} \cos(2 \pi \mathrm{l}_m^2 \mathrm{s}_m) \right \rvert_{\mathrm{s}_m=\mathrm{j}_m / K}^{\mathrm{s}_m=(\mathrm{j}_m+1) / K} & \textrm{, $\mathrm{l}_m^1 = 0$, $\mathrm{l}_m^2 > 0$} \\
\\
\left \lbrace
\begin{array}{ll}
\left. -\frac{ \cos(2 \pi \mathrm{s}_m ( \mathrm{l}_m^1 - \mathrm{l}_m^2 )) } { 4 \pi ( \mathrm{l}_m^1 - \mathrm{l}_m^2 ) } -
\frac{ \cos(2 \pi \mathrm{s}_m ( \mathrm{l}_m^1 + \mathrm{l}_m^2 )) }{ 4 \pi ( \mathrm{l}_m^1 + \mathrm{l}_m^2 ) } \right \rvert_{\mathrm{s}_m=\mathrm{j}_m / K}^{\mathrm{s}_m=(\mathrm{j}_m+1) / K}
& \textrm{, $\mathrm{l}_m^1 \ne - \mathrm{l}_m^2$} \\
\\
\left. -\frac{ \cos(2 \pi \mathrm{s}_m ( \mathrm{l}_m^1 - \mathrm{l}_m^2 )) } { 4 \pi ( \mathrm{l}_m^1 - \mathrm{l}_m^2 ) } \right \rvert_{\mathrm{s}_m=\mathrm{j}_m / K}^{\mathrm{s}_m=(\mathrm{j}_m+1) / K}
& \textrm{, $\mathrm{l}_m^1 = - \mathrm{l}_m^2$} \\
\end{array}
\right.
& \textrm{, $\mathrm{l}_m^1 > 0$, $\mathrm{l}_m^2 < 0$} \\
\\
\left. -\frac{1}{2 \pi \mathrm{l}_m^1} \cos(2 \pi \mathrm{l}_m^1 \mathrm{s}_m) \right \rvert_{\mathrm{s}_m=\mathrm{j}_m / K}^{\mathrm{s}_m=(\mathrm{j}_m+1) / K} & \textrm{, $\mathrm{l}_m^1 > 0$, $\mathrm{l}_m^2 = 0$} \\
\\
\left \lbrace
\begin{array}{ll}
\left. \frac{ \sin(2 \pi \mathrm{s}_m ( \mathrm{l}_m^1 - \mathrm{l}_m^2 )) }{ 4 \pi ( \mathrm{l}_m^1 - \mathrm{l}_m^2 ) } -
\frac{ \sin(2 \pi \mathrm{s}_m ( \mathrm{l}_m^1 + \mathrm{l}_m^2 )) }{ 4 \pi ( \mathrm{l}_m^1 + \mathrm{l}_m^2 ) } \right \rvert_{\mathrm{s}_m=\mathrm{j}_m / K}^{\mathrm{s}_m=(\mathrm{j}_m+1) / K}
& \textrm{, $\mathrm{l}_m^1 \ne \mathrm{l}_m^2$} \\
\\
\left. \frac{\mathrm{s}_m}{2} - \frac{ \sin(2 \pi \mathrm{s}_m ( \mathrm{l}_m^1 + \mathrm{l}_m^2 )) }{ 4 \pi ( \mathrm{l}_m^1 + \mathrm{l}_m^2 ) } \right \rvert_{\mathrm{s}_m=\mathrm{j}_m / K}^{\mathrm{s}_m=(\mathrm{j}_m+1) / K}
& \textrm{, $\mathrm{l}_m^1 = \mathrm{l}_m^2$} \\
\end{array}
\right.
& \textrm{, $\mathrm{l}_m^1 > 0$, $\mathrm{l}_m^2 > 0$}
\end{array}
\right. \\
$
}
\newline
\newline
\newline
Substituting the above expression into the formula for $\Sigma_{\mathrm{j}}^{(i_1)(i_2)}$ yields thesis.

\end{proof}


\theoremtwo*
\begin{proof}
\label{theoremtwo-proof}
We will write: $\upmu_{\mathrm{l}}( \mathrm{a}_{\mathrm{j}}^{(i)} )$ and $\Sigma_{\mathrm{l}}( \mathrm{a}_{\mathrm{j}}^{(i)}, \sigma )$ under the differentiation sign to emphasise that $\upmu_{\mathrm{l}}$ depends on $\mathrm{a}_{\mathrm{j}}^{(i)}$ and $\Sigma_{\mathrm{l}}$ depends on $\mathrm{a}_{\mathrm{j}}^{(i)}$ and $ \sigma $. By $ \pdv{}{ [ \mathrm{a}_{\mathrm{j}}^{(i)} | \sigma] } $ using the Backus-Naur-like notation we will denote the differentiation with respect \textbf{either} to the variable $ \mathrm{a}_{\mathrm{j}}^{(i)} $ \textbf{or} to the variable $\sigma$. Then:
\newline
\newline
$
{\pdv{}{ [ \mathrm{a}_{\mathrm{j}}^{(i)} | \sigma ] }} \left( {\displaystyle \sum \limits_{\mathrm{x} \in X}} \ln(f_{\mathcal{N}(\upvarphi(\mathrm{s}),\sigma^2)}(\mathrm{x})) \right) =
$
\newline
\newline
\newline
$
= {\pdv{}{ [ \mathrm{a}_{\mathrm{j}}^{(i)} | \sigma ] }} \left( {\displaystyle \sum \limits_{\mathrm{x} \in X}} \ln( \frac{1}{K^d} {\displaystyle \sum \limits_{ \mathrm{l} \in \{ 0,\ldots,K-1 \}^d }}
f_{ \mathcal{N} \left( \upmu_{\mathrm{l}}( \mathrm{a}_{\mathrm{j}}^{(i)} ), \Sigma_{\mathrm{l}}( \mathrm{a}_{\mathrm{j}}^{(i)}, \sigma ) \right) }(\mathrm{x}) ) \right) =
$
\newline
\newline
\newline
\resizebox{\linewidth}{!}{
$
= {\displaystyle \sum \limits_{\mathrm{x} \in X}} {\pdv{}{ [ \mathrm{a}_{\mathrm{j}}^{(i)} | \sigma ] }} \left( \ln( \frac{1}{K^d} {\displaystyle \sum \limits_{ \mathrm{l} \in \{ 0,\ldots,K-1 \}^d }}
\frac{1}{ \sqrt{ \left \lvert 2 \pi \Sigma_{\mathrm{l}}( \mathrm{a}_{\mathrm{j}}^{(i)}, \sigma ) \right \rvert }}
e^{ -\frac{1}{2} \left( \mathrm{x} - \upmu_{\mathrm{l}}( \mathrm{a}_{\mathrm{j}}^{(i)} ) \right)^{\top} \Sigma_{\mathrm{l}}^{-1}( \mathrm{a}_{\mathrm{j}}^{(i)}, \sigma ) \left( \mathrm{x} - \upmu_{\mathrm{l}}( \mathrm{a}_{\mathrm{j}}^{(i)} ) \right ) } ) \right)
$
}
\newline
\newline
\newline
Let's define:
$P \coloneqq \frac{1}{ \frac{1}{K^d} {\displaystyle \sum \limits_{ \mathrm{l} \in \{ 0,\ldots,K-1 \}^d }}
\frac{1}{ \sqrt{ (2 \pi)^n \lvert \Sigma_{\mathrm{l}} \rvert }}
e^{ -\frac{1}{2} ( \mathrm{x} - \upmu_{\mathrm{l}} )^{\top} \Sigma_{\mathrm{l}}^{-1} ( \mathrm{x} - \upmu_{\mathrm{l}} ) } }
$.
Then:
\newline
\newline
\newline
$
{\pdv{}{ [ \mathrm{a}_{\mathrm{j}}^{(i)} | \sigma ] }} \left( {\displaystyle \sum \limits_{\mathrm{x} \in X}} \ln(f_{\mathcal{N}(\upvarphi(\mathrm{s}),\sigma^2)}(\mathrm{x})) \right) =
$
\newline
\resizebox{\linewidth}{!}{
$
= {\displaystyle \sum \limits_{\mathrm{x} \in X}} P {\pdv{}{ [ \mathrm{a}_{\mathrm{j}}^{(i)} | \sigma ] }} \left( \frac{1}{K^d} {\displaystyle \sum \limits_{ \mathrm{l} \in \{ 0,\ldots,K-1 \}^d }}
\frac{1}{ \sqrt{ (2 \pi)^n \left \lvert \Sigma_{\mathrm{l}}( \mathrm{a}_{\mathrm{j}}^{(i)}, \sigma ) \right \rvert }}
e^{ -\frac{1}{2} \left( \mathrm{x} - \upmu_{\mathrm{l}}( \mathrm{a}_{\mathrm{j}}^{(i)} ) \right)^{\top} \Sigma_{\mathrm{l}}^{-1}( \mathrm{a}_{\mathrm{j}}^{(i)}, \sigma ) \left( \mathrm{x} - \upmu_{\mathrm{l}}( \mathrm{a}_{\mathrm{j}}^{(i)} ) \right) } \right) =
$
}
\newline
\resizebox{\linewidth}{!}{
$
= \frac{1}{K^d} {\displaystyle \sum \limits_{\mathrm{x} \in X}} P {\displaystyle \sum \limits_{ \mathrm{l} \in \{ 0,\ldots,K-1 \}^d }}
{\pdv{}{ [ \mathrm{a}_{\mathrm{j}}^{(i)} | \sigma ] }} \left( \frac{1}{ \sqrt{ (2 \pi)^n \left \lvert \Sigma_{\mathrm{l}}( \mathrm{a}_{\mathrm{j}}^{(i)}, \sigma ) \right \rvert }}
e^{ -\frac{1}{2} \left( \mathrm{x} - \upmu_{\mathrm{l}}( \mathrm{a}_{\mathrm{j}}^{(i)} ) \right)^{\top} \Sigma_{\mathrm{l}}^{-1}( \mathrm{a}_{\mathrm{j}}^{(i)}, \sigma ) \left( \mathrm{x} - \upmu_{\mathrm{l}}( \mathrm{a}_{\mathrm{j}}^{(i)} ) \right) } \right) =
$
}
\newline
\newline
\newline
$
=\frac{1}{K^d} {\displaystyle \sum \limits_{\mathrm{x} \in \mathrm{x}}} P {\displaystyle \sum \limits_{ \mathrm{l} \in \{ 0,\ldots,K-1 \}^d }}
 \left( {\pdv{}{ [ \mathrm{a}_{\mathrm{j}}^{(i)} | \sigma ] }} \left( \frac{1}{ \sqrt{ (2 \pi)^n \left \lvert \Sigma_{\mathrm{l}}( \mathrm{a}_{\mathrm{j}}^{(i)}, \sigma ) \right \rvert }} \right) \right)
e^{ -\frac{1}{2} ( \mathrm{x} - \upmu_{\mathrm{l}} )^{\top} \Sigma_{\mathrm{l}}^{-1} ( \mathrm{x} - \upmu_{\mathrm{l}} ) } +
$
\newline
\newline
$
\phantom{=} + \frac{1}{ \sqrt{ (2 \pi)^n \lvert \Sigma_{\mathrm{l}} \rvert }}
\left( {\pdv{}{ [ \mathrm{a}_{\mathrm{j}}^{(i)} | \sigma ] }} \left(
e^{ -\frac{1}{2} \left( \mathrm{x} - \upmu_{\mathrm{l}}( \mathrm{a}_{\mathrm{j}}^{(i)} ) \right)^{\top} \Sigma_{\mathrm{l}}^{-1}( \mathrm{a}_{\mathrm{j}}^{(i)}, \sigma ) \left( \mathrm{x} - \upmu_{\mathrm{l}}( \mathrm{a}_{\mathrm{j}}^{(i)} ) \right) }
\right) \right) =
$
\newline
\newline
\newline
\resizebox{\linewidth}{!}{
$
= \frac{1}{K^d} {\displaystyle \sum \limits_{\mathrm{x} \in \mathrm{x}}} P {\displaystyle \sum \limits_{ \mathrm{l} \in \{ 0,\ldots,K-1 \}^d }}
-\frac{1}{2} ( (2 \pi)^n \lvert \Sigma_{\mathrm{l}} \rvert )^{-\frac{3}{2}} ( 2 \pi )^n \left( {\pdv{}{ [ \mathrm{a}_{\mathrm{j}}^{(i)} | \sigma ] }} \left( \left \lvert \Sigma_{\mathrm{l}}( \mathrm{a}_{\mathrm{j}}^{(i)}, \sigma ) \right \rvert \right) \right)
e^{ -\frac{1}{2} ( \mathrm{x} - \upmu_{\mathrm{l}} )^{\top} \Sigma_{\mathrm{l}}^{-1} ( \mathrm{x} - \upmu_{\mathrm{l}} ) } +
$
}
\newline
\resizebox{\linewidth}{!}{
$
\phantom{=} + \frac{1}{ \sqrt{ (2 \pi)^n \lvert \Sigma_{\mathrm{l}} \rvert }} e^{ -\frac{1}{2} ( \mathrm{x} - \upmu_{\mathrm{l}} )^{\top} \Sigma_{\mathrm{l}}^{-1} ( \mathrm{x} - \upmu_{\mathrm{l}} ) }
\left( {\pdv{}{ [ \mathrm{a}_{\mathrm{j}}^{(i)} | \sigma ] }} \left(
-\frac{1}{2} \left( \mathrm{x} - \upmu_{\mathrm{l}}( \mathrm{a}_{\mathrm{j}}^{(i)} ) \right)^{\top} \Sigma_{\mathrm{l}}^{-1}( \mathrm{a}_{\mathrm{j}}^{(i)}, \sigma ) \left( \mathrm{x} - \upmu_{\mathrm{l}}( \mathrm{a}_{\mathrm{j}}^{(i)} ) \right)
\right) \right)
$
}
\newline
\newline
\newline
By Jacobi's formula, we have:
\newline
\begin{flalign*}
{\pdv{}{ [ \mathrm{a}_{\mathrm{j}}^{(i)} | \sigma ] }} \left( \left \lvert \Sigma_{\mathrm{l}}( \mathrm{a}_{\mathrm{j}}^{(i)}, \sigma ) \right \rvert \right) &=
\tr( \Sigma_{\mathrm{l}}^{\mathrm{D}} {\pdv{}{ [ \mathrm{a}_{\mathrm{j}}^{(i)} | \sigma ] }} \left( \Sigma_{\mathrm{l}}( \mathrm{a}_{\mathrm{j}}^{(i)}, \sigma ) \right) ) =&\\
&= \tr( \lvert \Sigma_{\mathrm{l}} \rvert \frac{1}{ \lvert \Sigma_{\mathrm{l}} \rvert } \Sigma_{\mathrm{l}}^{\mathrm{D}} {\pdv{}{ [ \mathrm{a}_{\mathrm{j}}^{(i)} | \sigma ] }} \left( \Sigma_{\mathrm{l}}( \mathrm{a}_{\mathrm{j}}^{(i)}, \sigma ) \right) ) =&\\
&= \lvert \Sigma_{\mathrm{l}} \rvert \tr( \Sigma_{\mathrm{l}}^{-1} {\pdv{}{ [ \mathrm{a}_{\mathrm{j}}^{(i)} | \sigma ] }} \left( \Sigma_{\mathrm{l}}( \mathrm{a}_{\mathrm{j}}^{(i)}, \sigma ) \right) )
\end{flalign*}
\newline
On the other hand, the chain rule for the matrix derivatives yields:
\newline
\newline
\newline
$
{\pdv{}{ [ \mathrm{a}_{\mathrm{j}}^{(i)} | \sigma ] }} \left( \left( \mathrm{x} - \upmu_{\mathrm{l}}( \mathrm{a}_{\mathrm{j}}^{(i)} ) \right)^{\top} \Sigma_{\mathrm{l}}^{-1}( \mathrm{a}_{\mathrm{j}}^{(i)}, \sigma )
\left( \mathrm{x} - \upmu_{\mathrm{l}}( \mathrm{a}_{\mathrm{j}}^{(i)} ) \right) \right) =
$
\newline
\newline
\newline
\resizebox{\linewidth}{!}{
$
= \left( {\pdv{}{ [ \mathrm{a}_{\mathrm{j}}^{(i)} | \sigma ] }} \left( \left( \mathrm{x} - \upmu_{\mathrm{l}}( \mathrm{a}_{\mathrm{j}}^{(i)} ) \right)^{\top} \right) \right) \Sigma_{\mathrm{l}}^{-1} ( \mathrm{x} - \upmu_{\mathrm{l}} ) +
( \mathrm{x} - \upmu_{\mathrm{l}} )^{\top} \left( {\pdv{}{ [ \mathrm{a}_{\mathrm{j}}^{(i)} | \sigma ] }} \left( \Sigma_{\mathrm{l}}^{-1}( \mathrm{a}_{\mathrm{j}}^{(i)}, \sigma ) \right) \right) ( \mathrm{x} - \upmu_{\mathrm{l}} ) +
$
}
\newline
$\phantom{=} + ( \mathrm{x} - \upmu_{\mathrm{l}} )^{\top} \Sigma_{\mathrm{l}}^{-1} \left( {\pdv{}{ [ \mathrm{a}_{\mathrm{j}}^{(i)} | \sigma ] }} \left( \mathrm{x} - \upmu_{\mathrm{l}}( \mathrm{a}_{\mathrm{j}}^{(i)} ) \right) \right) =
$
\newline
\newline
\newline
\resizebox{\linewidth}{!}{
$
= \left( \mathrm{x} - {\pdv{}{ [ \mathrm{a}_{\mathrm{j}}^{(i)} | \sigma ] }} \left( \upmu_{\mathrm{l}}( \mathrm{a}_{\mathrm{j}}^{(i)} ) \right) \right)^{\top} \Sigma_{\mathrm{l}}^{-1} ( \mathrm{x} - \upmu_{\mathrm{l}} ) +
( \mathrm{x} - \upmu_{\mathrm{l}} )^{\top} \left( - \Sigma_{\mathrm{l}}^{-1} \left( {\pdv{}{ [ \mathrm{a}_{\mathrm{j}}^{(i)} | \sigma ] }} \left( \Sigma_{\mathrm{l}}( \mathrm{a}_{\mathrm{j}}^{(i)}, \sigma ) \right) \right)
\Sigma_{\mathrm{l}}^{-1} \right) ( \mathrm{x} - \upmu_{\mathrm{l}} ) +
$
}
\newline
$
\phantom{=} + ( \mathrm{x} - \upmu_{\mathrm{l}} )^{\top} \Sigma_{\mathrm{l}}^{-1} \left( \mathrm{x} - {\pdv{}{ [ \mathrm{a}_{\mathrm{j}}^{(i)} | \sigma ] }} \left( \upmu_{\mathrm{l}}( \mathrm{a}_{\mathrm{j}}^{(i)} ) \right) \right)
$
\newline
\newline
\newline
where in the last equality, we used the remarkable identity:
\newline
\newline
\newline
$
{\pdv{}{ [ \mathrm{a}_{\mathrm{j}}^{(i)} | \sigma ] }} \left( \Sigma_{\mathrm{l}}^{-1}( \mathrm{a}_{\mathrm{j}}^{(i)}, \sigma ) \right) =
- \Sigma_{\mathrm{l}}^{-1} \left( {\pdv{}{ [ \mathrm{a}_{\mathrm{j}}^{(i)} | \sigma ] }} \left( \Sigma_{\mathrm{l}}( \mathrm{a}_{\mathrm{j}}^{(i)}, \sigma ) \right) \right) \Sigma_{\mathrm{l}}^{-1}
$
\newline
\newline
\newline
Combining these results, we obtain:
\newline
\newline
\newline
$
{\pdv{}{ [ \mathrm{a}_{\mathrm{j}}^{(i)} | \sigma ] }} \left( {\displaystyle \sum \limits_{\mathrm{x} \in \mathrm{x}}} \ln(f_{\mathcal{N}(\upvarphi(\mathrm{s}),\sigma^2)}(\mathrm{x})) \right) =
$
\newline
\newline
\newline
\resizebox{\linewidth}{!}{
$
= \frac{1}{K^d} {\displaystyle \sum \limits_{\mathrm{x} \in \mathrm{x}}} P {\displaystyle \sum \limits_{ \mathrm{l} \in \{ 0,\ldots,K-1 \}^d }}
-\frac{1}{2} ( (2 \pi)^n \lvert \Sigma_{\mathrm{l}} \rvert )^{-\frac{3}{2}} ( ( 2 \pi )^n \lvert \Sigma_{\mathrm{l}} \rvert ) \tr( \Sigma_{\mathrm{l}}^{-1} {\pdv{}{ [ \mathrm{a}_{\mathrm{j}}^{(i)} | \sigma ] }}
\left( \Sigma_{\mathrm{l}}( \mathrm{a}_{\mathrm{j}}^{(i)}, \sigma ) \right) ) e^{ -\frac{1}{2} ( \mathrm{x} - \upmu_{\mathrm{l}} )^{\top} \Sigma_{\mathrm{l}}^{-1} ( \mathrm{x} - \upmu_{\mathrm{l}} ) } -
$
}
\newline
\resizebox{\linewidth}{!}{
$
\phantom{=} - \frac{1}{2} \frac{1}{ \sqrt{ (2 \pi)^n \lvert \Sigma_{\mathrm{l}} \rvert }} e^{ -\frac{1}{2} ( \mathrm{x} - \upmu_{\mathrm{l}} )^{\top} \Sigma_{\mathrm{l}}^{-1} ( \mathrm{x} - \upmu_{\mathrm{l}} ) }
\left( \left( \mathrm{x} - {\pdv{}{ [ \mathrm{a}_{\mathrm{j}}^{(i)} | \sigma ] }} \left( \upmu_{\mathrm{l}}( \mathrm{a}_{\mathrm{j}}^{(i)}, \sigma ) \right) \right)^{\top}
\Sigma_{\mathrm{l}}^{-1} ( \mathrm{x} - \upmu_{\mathrm{l}} ) \right. +
$
}
\newline
\resizebox{\linewidth}{!}{
$
\phantom{=} + ( \mathrm{x} - \upmu_{\mathrm{l}} )^{\top}
\left( - \Sigma_{\mathrm{l}}^{-1} \left( {\pdv{}{ [ \mathrm{a}_{\mathrm{j}}^{(i)} | \sigma ] }} \left( \Sigma_{\mathrm{l}}( \mathrm{a}_{\mathrm{j}}^{(i)}, \sigma ) \right) \right) \Sigma_{\mathrm{l}}^{-1} \right) ( \mathrm{x} - \upmu_{\mathrm{l}} ) +
( \mathrm{x} - \upmu_{\mathrm{l}} )^{\top} \Sigma_{\mathrm{l}}^{-1} \left( \mathrm{x} - {\pdv{}{ [ \mathrm{a}_{\mathrm{j}}^{(i)} | \sigma ] }} \left( \upmu_{\mathrm{l}}( \mathrm{a}_{\mathrm{j}}^{(i)} ) \right) \right)
\left. \vphantom{ \left( \mathrm{x} - {\pdv{}{\mathrm{a}_{\mathrm{j}}^{(i)}}} \left( \upmu_{\mathrm{l}}( \mathrm{a}_{\mathrm{j}}^{(i)} ) \right) \right)^{\top} } \right) =
$
}
\newline
\newline
\newline
\resizebox{\linewidth}{!}{
$
= \frac{1}{K^d} {\displaystyle \sum \limits_{\mathrm{x} \in \mathrm{x}}} P {\displaystyle \sum \limits_{ \mathrm{l} \in \{ 0,\ldots,K-1 \}^d }}
- \frac{1}{2} \frac{1}{ \sqrt{ (2 \pi)^n \lvert \Sigma_{\mathrm{l}} \rvert }} \tr( \Sigma_{\mathrm{l}}^{-1} {\pdv{}{ [ \mathrm{a}_{\mathrm{j}}^{(i)} | \sigma ] }}
\left( \Sigma_{\mathrm{l}}( \mathrm{a}_{\mathrm{j}}^{(i)}, \sigma ) \right) ) e^{ -\frac{1}{2} ( \mathrm{x} - \upmu_{\mathrm{l}} )^{\top} \Sigma_{\mathrm{l}}^{-1} ( \mathrm{x} - \upmu_{\mathrm{l}} ) } -
$
}
\newline
\resizebox{\linewidth}{!}{
$
\phantom{=} - \frac{1}{2} \frac{1}{ \sqrt{ (2 \pi)^n \lvert \Sigma_{\mathrm{l}} \rvert }} e^{ -\frac{1}{2} ( \mathrm{x} - \upmu_{\mathrm{l}} )^{\top} \Sigma_{\mathrm{l}}^{-1} ( \mathrm{x} - \upmu_{\mathrm{l}} ) }
\left( \left( \mathrm{x} - {\pdv{}{ [ \mathrm{a}_{\mathrm{j}}^{(i)} | \sigma ] }} \left( \upmu_{\mathrm{l}}( \mathrm{a}_{\mathrm{j}}^{(i)}, \sigma ) \right) \right)^{\top}
\Sigma_{\mathrm{l}}^{-1} ( \mathrm{x} - \upmu_{\mathrm{l}} ) \right. +
$
}
\newline
\resizebox{\linewidth}{!}{
$
\phantom{=} + ( \mathrm{x} - \upmu_{\mathrm{l}} )^{\top}
\left( - \Sigma_{\mathrm{l}}^{-1} \left( {\pdv{}{ [ \mathrm{a}_{\mathrm{j}}^{(i)} | \sigma ] }} \left( \Sigma_{\mathrm{l}}( \mathrm{a}_{\mathrm{j}}^{(i)}, \sigma ) \right) \right) \Sigma_{\mathrm{l}}^{-1} \right) ( \mathrm{x} - \upmu_{\mathrm{l}} ) +
( \mathrm{x} - \upmu_{\mathrm{l}} )^{\top} \Sigma_{\mathrm{l}}^{-1} \left( \mathrm{x} - {\pdv{}{ [ \mathrm{a}_{\mathrm{j}}^{(i)} | \sigma ] }} \left( \upmu_{\mathrm{l}}( \mathrm{a}_{\mathrm{j}}^{(i)} ) \right) \right)
\left. \vphantom{ \left( \mathrm{x} - {\pdv{}{\mathrm{a}_{\mathrm{j}}^{(i)}}} \left( \upmu_{\mathrm{l}}( \mathrm{a}_{\mathrm{j}}^{(i)} ) \right) \right)^{\top} } \right)
$
}
\newline
\newline
\newline
Now it remains to show how to compute the partial derivatives:
${\pdv{}{ [ \mathrm{a}_{\mathrm{j}}^{(i)} | \sigma ] }} \left( \upmu_{\mathrm{l}}( \mathrm{a}_{\mathrm{j}}^{(i)} ) \right)$
and
${\pdv{}{ [ \mathrm{a}_{\mathrm{j}}^{(i)} | \sigma ] }} \left( \Sigma_{\mathrm{l}}( \mathrm{a}_{\mathrm{j}}^{(i)}, \sigma ) \right)$
. As for
${\pdv{}{ [ \mathrm{a}_{\mathrm{j}}^{(i)} | \sigma ] }} \left( \upmu_{\mathrm{l}}( \mathrm{a}_{\mathrm{j}}^{(i)} ) \right)$
, since, by Theorem ~\ref{theoremone},
\newline
\newline
$$
\upmu_{\mathrm{j}}^{(i)} = K^d {\displaystyle \sum \limits_{\mathrm{l} \in \{ -k,\ldots,k \}^d}} \mathrm{a}_{\mathrm{l}}^{(i)} g \left( \mathrm{j}, \mathrm{l} \right)
$$
\newline
\newline
we immediately obtain:
\newline
\begin{flalign*}
{\pdv{}{ \mathrm{a}_{\mathrm{j}}^{(i)} }} \left( \upmu_{\mathrm{l}}( \mathrm{a}_{\mathrm{j}}^{(i)} ) \right) &=
\left( {\pdv{}{ \mathrm{a}_{\mathrm{j}}^{(i)} }} \left( \upmu_{\mathrm{l}}^{(\bar{i})}( \mathrm{a}_{\mathrm{j}}^{(i)} ) \right) \right)_{\bar{i}=1}^{n} =&\\
&= \left( {\pdv{}{ \mathrm{a}_{\mathrm{j}}^{(i)} }} \left( K^d {\displaystyle \sum \limits_{\bar{\mathrm{l}} \in \{ -k,\ldots,k \}^d}} \mathrm{a}_{\bar{\mathrm{l}}}^{(\bar{i})} g \left( \mathrm{l}, \bar{\mathrm{l}} \right) \right) \right)_{\bar{i}=1}^{n} =&\\
&= \left( K^d {\displaystyle \sum \limits_{\bar{\mathrm{l}} \in \{ -k,\ldots,k \}^d}} {\pdv{}{ \mathrm{a}_{\mathrm{j}}^{(i)} }} \left( \mathrm{a}_{\bar{\mathrm{l}}}^{(\bar{i})} g \left( \mathrm{l}, \bar{\mathrm{l}} \right) \right) \right)_{\bar{i}=1}^{n} =&\\
&= \left( K^d {\displaystyle \sum \limits_{\bar{\mathrm{l}} \in \{ -k,\ldots,k \}^d}} \delta_{{i}{\bar{i}}} \delta_{{\mathrm{j}}{\bar{\mathrm{l}}}} g \left( \mathrm{l}, \bar{\mathrm{l}} \right) \right)_{\bar{i}=1}^{n} =&\\
&= \left( K^d \delta_{{i}{\bar{i}}} g \left( \mathrm{l},\mathrm{j} \right) \right)_{\bar{i}=1}^{n}
\end{flalign*}
\newline
and
\newline
$$
{\pdv{}{ \sigma }} \upmu_{\mathrm{l}} = \left( {\pdv{}{ \sigma }} \upmu_{\mathrm{l}}^{(\bar{i})} \right)_{\bar{i}=1}^{n} = \vec{0}
$$
\newline
(since $\upmu_{\mathrm{l}}$ does not depend on $\sigma$). On the other hand, from Theorem~\ref{theoremone} applied to $\Sigma_{\mathrm{j}}^{(i_1)(i_2)}$ we get:
\newline
$$
\Sigma_{\mathrm{j}}^{(i_1)(i_2)} = {\sigma^2}\mathrm{I}_{{i_1}{i_2}} +
K^d \left( {\displaystyle \sum \limits_{\substack{\mathrm{l}^1 \in \{ -k,\ldots,k \}^d \\ \mathrm{l}^2 \in \{ -k,\ldots,k \}^d}}} \mathrm{a}_{\mathrm{l}^1}^{(i_1)} \mathrm{a}_{\mathrm{l}^2}^{(i_2)} g \left( \mathrm{j}, \mathrm{l}^1, \mathrm{l}^2 \right) \right) -
\upmu_{\mathrm{j}}^{(i_1)} \upmu_{\mathrm{j}}^{(i_2)}
$$
\newline
, which yields:
\newline
\newline
\newline
$
{\pdv{}{ \mathrm{a}_{\mathrm{j}}^{(i)} }} \left( \Sigma_{\mathrm{l}}( \mathrm{a}_{\mathrm{j}}^{(i)} ) \right) =
\left( {\pdv{}{ \mathrm{a}_{\mathrm{j}}^{(i)} }} \left( \Sigma_{\mathrm{l}}^{(i_1)(i_2)}( \mathrm{a}_{\mathrm{j}}^{(i)} ) \right) \right)_{i_1, i_2 = 1}^{n} =
$
\newline
\newline
\newline
\resizebox{\linewidth}{!}{
$
= \left( {\pdv{}{ \mathrm{a}_{\mathrm{j}}^{(i)} }} \left(
{\sigma^2}\mathrm{I}_{{i_1}{i_2}} +
K^d \left( {\displaystyle \sum \limits_{\substack{\mathrm{l}^1 \in \{ -k,\ldots,k \}^d \\ \mathrm{l}^2 \in \{ -k,\ldots,k \}^d}}} \mathrm{a}_{\mathrm{l}^1}^{(i_1)} \mathrm{a}_{\mathrm{l}^2}^{(i_2)} g \left( \mathrm{l}, \mathrm{l}^1, \mathrm{l}^2 \right) \right) -
\upmu_{\mathrm{j}}^{(i_1)} \left( \mathrm{a}_{\mathrm{j}}^{(i)} \right) \upmu_{\mathrm{j}}^{(i_2)} \left( \mathrm{a}_{\mathrm{j}}^{(i)} \right)
\right) \right)_{i_1, i_2 = 1}^{n} =
$
}
\newline
\newline
\newline
\resizebox{\linewidth}{!}{
$
= \left(
K^d \left( {\displaystyle \sum \limits_{\substack{\mathrm{l}^1 \in \{ -k,\ldots,k \}^d \\ \mathrm{l}^2 \in \{ -k,\ldots,k \}^d}}} {\pdv{}{ \mathrm{a}_{\mathrm{j}}^{(i)} }} \left(
\mathrm{a}_{\mathrm{l}^1}^{(i_1)} \mathrm{a}_{\mathrm{l}^2}^{(i_2)} g \left( \mathrm{l}, \mathrm{l}^1, \mathrm{l}^2 \right)
\right) \right) -
{\pdv{}{ \mathrm{a}_{\mathrm{j}}^{(i)} }} \left( \upmu_{\mathrm{j}}^{(i_1)} \left( \mathrm{a}_{\mathrm{j}}^{(i)} \right) \right) \upmu_{\mathrm{j}}^{(i_2)} -
\upmu_{\mathrm{j}}^{(i_1)} {\pdv{}{ \mathrm{a}_{\mathrm{j}}^{(i)} }} \left( \upmu_{\mathrm{j}}^{(i_2)} \left( \mathrm{a}_{\mathrm{j}}^{(i)} \right) \right)
\right)_{i_1, i_2 = 1}^{n} =
$
}
\newline
\newline
\newline
\newline
$
= \left(
K^d \left( {\displaystyle \sum \limits_{\substack{\mathrm{l}^1 \in \{ -k,\ldots,k \}^d \\ \mathrm{l}^2 \in \{ -k,\ldots,k \}^d}}}
\delta_{{i_2}{i}} \delta_{{\mathrm{l}^2}{\mathrm{j}}} \mathrm{a}_{\mathrm{l}^1}^{(i_1)} g \left( \mathrm{l}, \mathrm{l}^1, \mathrm{l}^2 \right) +
\delta_{{i_1}{i}} \delta_{{\mathrm{l}^1}{\mathrm{j}}} \mathrm{a}_{\mathrm{l}^2}^{(i_2)} g \left( \mathrm{l}, \mathrm{l}^1, \mathrm{l}^2 \right)
\right) \right. -
$
\newline
$
\phantom{=} - \left.
\vphantom{
K^d \left( {\displaystyle \sum \limits_{\substack{\mathrm{l}^1 \in \{ -k,\ldots,k \}^d \\ \mathrm{l}^2 \in \{ -k,\ldots,k \}^d}}} {\pdv{}{ \mathrm{a}_{\mathrm{j}}^{(i)} }} \left(
\mathrm{a}_{\mathrm{l}^1}^{(i_1)} \mathrm{a}_{\mathrm{l}^2}^{(i_2)} g \left( \mathrm{l}, \mathrm{l}^1, \mathrm{l}^2 \right)
\right) \right)
}
{\pdv{}{ \mathrm{a}_{\mathrm{j}}^{(i)} }} \left( \upmu_{\mathrm{j}}^{(i_1)} \left( \mathrm{a}_{\mathrm{j}}^{(i)} \right) \right) \upmu_{\mathrm{j}}^{(i_2)} -
\upmu_{\mathrm{j}}^{(i_1)} {\pdv{}{ \mathrm{a}_{\mathrm{j}}^{(i)} }} \left( \upmu_{\mathrm{j}}^{(i_2)} \left( \mathrm{a}_{\mathrm{j}}^{(i)} \right) \right)
\right)_{i_1, i_2 = 1}^{n} =
$
\newline
\newline
\newline
\newline
$
= \left(
K^d \left( 
\delta_{{i_2}{i}} \left( {\displaystyle \sum \limits_{\mathrm{l}^1 \in \{ -k,\ldots,k \}^d}} \mathrm{a}_{\mathrm{l}^1}^{(i_1)} g \left( \mathrm{l}, \mathrm{l}^1, \mathrm{j} \right) \right) +
\delta_{{i_1}{i}} \left( {\displaystyle \sum \limits_{\mathrm{l}^2 \in \{ -k,\ldots,k \}^d}} \mathrm{a}_{\mathrm{l}^2}^{(i_1)} g \left( \mathrm{l}, \mathrm{j}, \mathrm{l}^2 \right) \right)
\right) \right. -
$
\newline
$
\phantom{=} - \left.
\vphantom{
K^d \left( {\displaystyle \sum \limits_{\substack{\mathrm{l}^1 \in \{ -k,\ldots,k \}^d \\ \mathrm{l}^2 \in \{ -k,\ldots,k \}^d}}} {\pdv{}{ \mathrm{a}_{\mathrm{j}}^{(i)} }} \left(
\mathrm{a}_{\mathrm{l}^1}^{(i_1)} \mathrm{a}_{\mathrm{l}^2}^{(i_2)} g \left( \mathrm{l}, \mathrm{l}^1, \mathrm{l}^2 \right)
\right) \right)
}
{\pdv{}{ \mathrm{a}_{\mathrm{j}}^{(i)} }} \left( \upmu_{\mathrm{j}}^{(i_1)} \left( \mathrm{a}_{\mathrm{j}}^{(i)} \right) \right) \upmu_{\mathrm{j}}^{(i_2)} -
\upmu_{\mathrm{j}}^{(i_1)} {\pdv{}{ \mathrm{a}_{\mathrm{j}}^{(i)} }} \left( \upmu_{\mathrm{j}}^{(i_2)} \left( \mathrm{a}_{\mathrm{j}}^{(i)} \right) \right)
\right)_{i_1, i_2 = 1}^{n}
$
\newline
\newline
\newline
and
\newline
\newline
\begin{flalign*}
& {\pdv{}{ \sigma }} \left( \Sigma_{\mathrm{l}}( \sigma ) \right) = \left( {\pdv{}{ \sigma }} \left( \Sigma_{\mathrm{l}}^{(i_1)(i_2)}( \sigma ) \right) \right)_{i_1, i_2 = 1}^{n} =&\\
& &\\
&= \left( {\pdv{}{ \sigma }} \left(
{\sigma^2}\mathrm{I}_{{i_1}{i_2}} +
K^d \left( {\displaystyle \sum \limits_{\substack{\mathrm{l}^1 \in \{ -k,\ldots,k \}^d \\ \mathrm{l}^2 \in \{ -k,\ldots,k \}^d}}} \mathrm{a}_{\mathrm{l}^1}^{(i_1)} \mathrm{a}_{\mathrm{l}^2}^{(i_2)} g \left( \mathrm{l}, \mathrm{l}^1, \mathrm{l}^2 \right) \right) -
\upmu_{\mathrm{j}}^{(i_1)} \upmu_{\mathrm{j}}^{(i_2)} \right) \right)_{i_1, i_2 = 1}^{n} =&\\
& &\\
&= \left( {\pdv{}{ \sigma }} \left( {\sigma^2}\mathrm{I}_{{i_1}{i_2}} \right) \right)_{i_1, i_2 = 1}^{n} = \left( 2 \sigma \mathrm{I}_{{i_1}{i_2}} \right)_{i_1, i_2 = 1}^{n} = 2 \sigma \mathrm{I}
\end{flalign*}
\newline
\newline
, which in conjunction with the fact that ${\pdv{}{ \mathrm{a}_{\mathrm{j}}^{(i)} }} \upmu_{\mathrm{j}} \left( \mathrm{a}_{\mathrm{j}}^{(i)} \right)$ is known completes the proof.
\end{proof}


\setcounter{section}{7}
\section{Appendix B -- Algorithm}
\label{AppendixB}

\scalebox{0.375}{
\begin{minipage}{1.74\linewidth}
\begin{algorithm}[H]
\SetProcNameSty{textsc}
\SetProcArgSty{textsc}
\caption{ MCEC (Lloyd) }
\SetKw{End}{end \\}
\SetKwRepeat{Do}{do}{while}
\SetKwInOut{Input}{Input}
\SetKwInOut{Output}{Output}
\Input{
$k > 0$ - initial number of clusters;
\newline
$l > 0$ - Fourier series order;
\newline
$\varepsilon > 0$ - stop condition;
\newline
$\tilde{\varepsilon} > 0$ - cluster removal threshold;
\newline
$\mathrm{X}$ - data set ($n$ - dimension);
}
\Output{
$\mathrm{X}=\mathrm{X}_1 \cup \ldots \cup \mathrm{X}_k, \mathrm{X}_i \cap \mathrm{X}_j = \emptyset, \,i \ne j$ - clustering of data set $\mathrm{X}$;
\newline
\newline
$\mathrm{a}_{\mathrm{m}}^{(1)}[i], \ldots, \mathrm{a}_{\mathrm{m}}^{(n)}[i] \in \mathbb{R}$, where $m \in \{-l,\ldots,l\}$ - coefficients of the Fourier series defining the curve
$\upvarphi[i]$ fitted to the cluster $\mathrm{X}_i$, $i=1,\ldots,k$;
\newline
\newline
$\sigma^2[i] \in \mathbb{R}_{+}$ - variance of the cluster $\mathrm{X}_i$, $i=1,\ldots,k$;
\newline
$p_1,\ldots,p_k$ - probabilities (weights);
}
\Begin{
	Obtain initial clustering $\mathrm{X}_1,\ldots,\mathrm{X}_k$, initial coefficients: $\mathrm{a}_{\mathrm{m}}^{(1)}[i], \ldots, \mathrm{a}_{\mathrm{m}}^{(n)}[i] \in \mathbb{R}$, where $m \in \{-l,\ldots,l\}$, of the Fourier series defining the curve $\upvarphi[i]$ and $\sigma^2[i] \in \mathbb{R}_{+}$\;
	\For { $ i \in \{ 1, \ldots, k \} $ } {
		\uIf { $ \left \rvert \mathrm{X}_i \right \rvert \ge \tilde{\varepsilon}\% \cdot \left \rvert \mathrm{X} \right \rvert $ } {
		    Find coefficients: $\mathrm{a}_{\mathrm{m}}^{(1)}[i], \ldots, \mathrm{a}_{\mathrm{m}}^{(n)}[i] \in \mathbb{R}$, where $m \in \{-l,\ldots,l\}$, of the Fourier series defining the curve $\upvarphi[i]$
					and $\sigma^2[i] \in \mathbb{R}_{+}$, that minimize $H^{\cross}(\mathrm{X}_i \left \rVert \mathcal{N}(\upvarphi[i],\sigma^2[i]) \right.)$ using their previous values as the initial guess for the optimization algorithm\;
			Obtain 
			$p_i \leftarrow \frac{\left \rvert \mathrm{X}_i \right \rvert}{\left \rvert \mathrm{X} \right \rvert}$\;
			${\mathrm{X}_i}.\textsc{ACTIVE} \leftarrow \mathbf{true}$\;
		} \uElse {
			${\mathrm{X}_i}.\textsc{ACTIVE} \leftarrow \mathbf{false}$\;
		} \End
	}
	$\textsc{ITER} \leftarrow 0$\;
	$h_{\textsc{ITER}} \leftarrow \infty$\;
	\Do { $ h_{\textsc{ITER}} < h_{\textsc{ITER}-1}-\varepsilon $ }{
		\For { $ \mathrm{x} \in \mathrm{X} $ } {
			$h_{min} \leftarrow \infty$\;
			\For { $ i \in \{ 1, \ldots, k \} $ } {
				\If { $ {\mathrm{X}_i}.\textsc{ACTIVE} $ } {
					$h \leftarrow -\ln(p_i)-\ln(f_{\mathcal{N}(\upvarphi[i],\sigma^2[i])}(\mathrm{x}))$\;
					\If { $ h < h_{min} $ } {
						$h_{min} \leftarrow h$\;
						$\mathrm{X}_{\mathrm{cl}(\mathrm{x})} \leftarrow \mathrm{X}_{\mathrm{cl}(\mathrm{x})} \setminus \{\mathrm{x}\}$\;
						$\mathrm{X}_i \leftarrow \mathrm{X}_i \cup \{\mathrm{x}\}$\;
					}
				}
			}
		}
		Update the probabilities $p$\;
		\For { $ i \in \{ 1, \ldots, k \} $ } {
			\If { $ \left \rvert \mathrm{X}_i \right \rvert < \tilde{\varepsilon}\% \cdot \left \rvert \mathrm{X} \right \rvert $ } {
				Remove points from cluster $\mathrm{X}_i$ by assigning them to the "closest" active one with respect to the function $-\ln(p_i)-\ln(f_{\mathcal{N}(\upvarphi[i],\sigma^2[i])}(\mathrm{x}))$ and update the probabilities $p$\;
				$ {\mathrm{X}_i}.\textsc{ACTIVE} \leftarrow \textbf{false} $\;
			}
		}
		\For { $ i \in \{ 1, \ldots, k \} $ } {
			\If { $ {\mathrm{X}_i}.\textsc{ACTIVE} $ } {
					Find coefficients: $\mathrm{a}_{\mathrm{m}}^{(1)}[i], \ldots, \mathrm{a}_{\mathrm{m}}^{(n)}[i] \in \mathbb{R}$, where $m \in \{-l,\ldots,l\}$, of the Fourier series defining the curve $\upvarphi[i]$
					and $\sigma^2[i] \in \mathbb{R}_{+}$, that minimize $H^{\cross}(\mathrm{X}_i \left \rVert \mathcal{N}(\upvarphi[i],\sigma^2[i]) \right.)$ using their previous values as the initial guess for the optimization algorithm\;
			}
		}
		$\textsc{ITER} \leftarrow \textsc{ITER}+1$\;
		$h_{\textsc{ITER}} \leftarrow { \displaystyle \sum \limits_{i=0}^k p_i(-\ln(p_i)+H^{\cross}(\mathrm{X}_i \left \rVert \mathcal{N}(\upvarphi[i],\sigma^2[i]) \right.))}$\;
	}
}
\end{algorithm}
\end{minipage}
}


\section{Appendix C -- Experiments}

In the section below, we provide a more detailed insight into the performance metrics computed for each of the experiments described in Section~\ref{Section5} in terms of the dependence of the results acquired from the algorithms used in our comparison on the initial guess chosen as the starting point in the initial steps of the algorithms based on the value of the random seed. The subsequent tables show how the performance metrics vary throughout the exponentially growing number of starts of the particular algorithms for the "best" results obtained through the consecutive number of starts with respect to the optimization criterion used by each of the four algorithms: the value of the cross-entropy for MCEC, afCEC and CEC and the Bayesian Information Criterion for the GMM. Together with the tables providing the exact values we put for each performance metric, the charts show the tendency in graphic form. The maximum number of starts (apart from the test case with two curves from Experiment No. 3) was chosen in a way that ensures the correspondence of the cluster labeling acquired from the MCEC algorithm with the model labeling in terms of the Rand index and the Jaccard index matrics (I) as well as the "visual" likeness between the model curves in the test data and the curves estimated by the MCEC (II). On the other hand, as our algorithm tends to outperform the remaining three methods just from the first start in the case of the most simple test cases from the Experiment No. 1, we set the fixed number of 16 starts as the least maximum number of starts in order to illustrate the behavior of the afCEC, CEC, and GMM on that data.

\begin{table}[H]
\caption{Experiment No. 1: Values of the performance metrics computed for the consecutive numbers of starts being the powers of two - Part 1}
\centering

\setlength\tabcolsep{1.5pt}
\resizebox{\columnwidth}{!}{%
\begin{tabular}{c|>{\tiny}c|>{\tiny}c>{\tiny}c>{\tiny}c>{\tiny}c>{\tiny}c|>{\tiny}c>{\tiny}c>{\tiny}c|>{\tiny}c>{\tiny}c>{\tiny}c|>{\tiny}c>{\tiny}c>{\tiny}c|}
\cline{2-16}
& \rule{0pt}{1.5cm} & \multicolumn{5}{c|}{\makecell{MCEC}} & \multicolumn{3}{c|}{\makecell{afCEC Hartigan}} & \multicolumn{3}{c|}{CEC} & \multicolumn{3}{c|}{GMM} \\
\cline{3-16}

& \rotatebox[origin=c]{90}{\rlap{\parbox{1.5cm}{\makecell{Number of \\ starts}}}} &
\multicolumn{1}{c|}{MLE} & \multicolumn{1}{c|}{BIC} & \multicolumn{1}{c|}{AIC} & \multicolumn{1}{c|}{\makecell{Rand \\ index}} & \multicolumn{1}{c|}{\makecell{Jaccard \\ index}} &
\multicolumn{1}{c|}{MLE} & \multicolumn{1}{c|}{BIC} & \multicolumn{1}{c|}{AIC} &
\multicolumn{1}{c|}{MLE} & \multicolumn{1}{c|}{BIC} & \multicolumn{1}{c|}{AIC} &
\multicolumn{1}{c|}{MLE} & \multicolumn{1}{c|}{BIC} & \multicolumn{1}{c|}{AIC} \\

\hline
\hline
\multicolumn{1}{|c|}{\multirow{5}{*}{\includegraphics[scale=0.07]{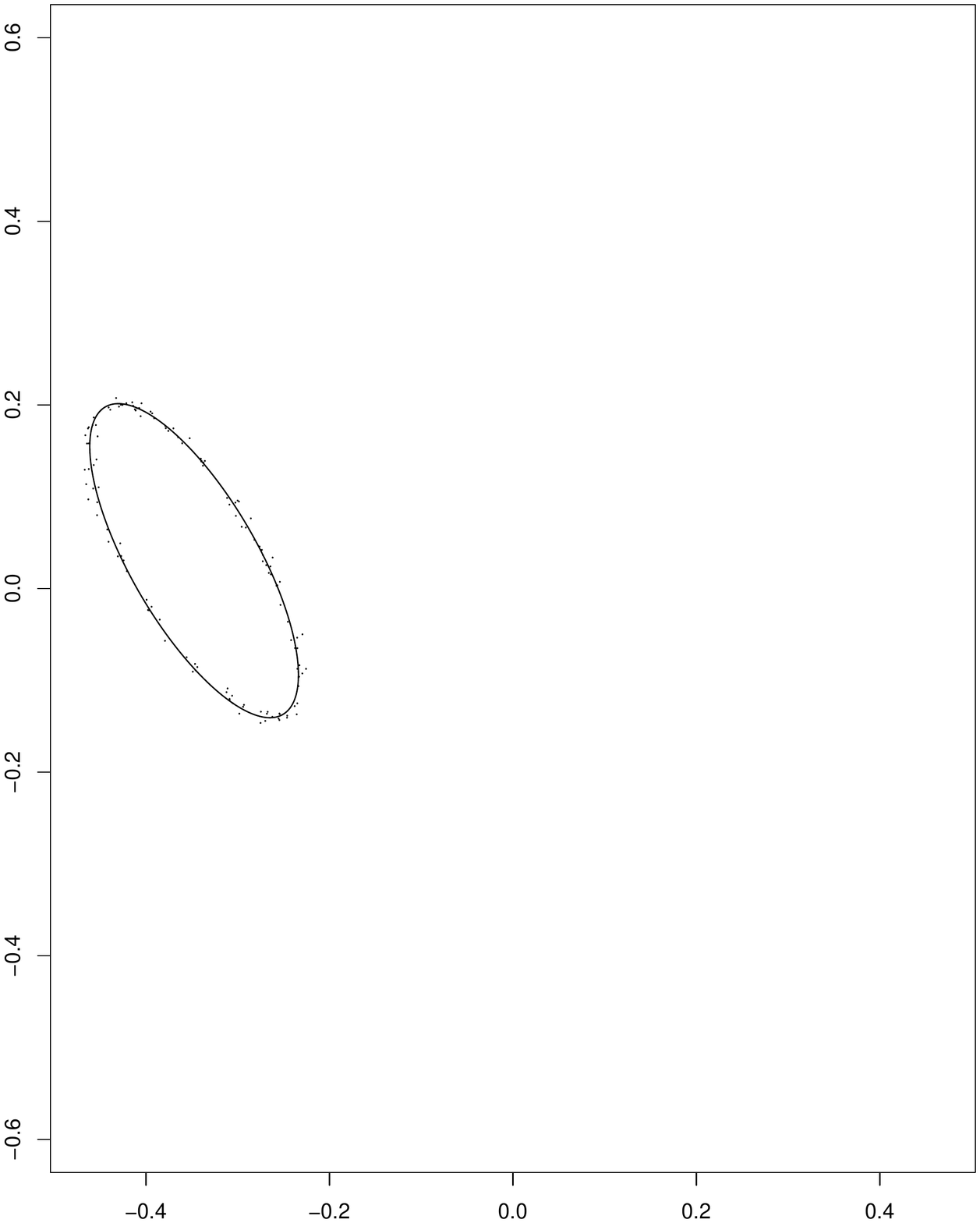}}} & 1 & \textbf{548.54} & \textbf{-1063.12} & \textbf{-1083.08} & 1.00 & 1.00 & 461.48 & -850.17 & -892.95 & 305.61 & -557.84 & -589.21 & 304.31 & -555.25 & -586.62 \\ 
\multicolumn{1}{|c|}{} & 2 & \textbf{548.54} & \textbf{-1063.12} & \textbf{-1083.08} & 1.00 & 1.00 & 461.48 & -850.17 & -892.95 & 305.61 & -557.84 & -589.21 & 304.31 & -555.25 & -586.62 \\ 
\multicolumn{1}{|c|}{} & 4 & \textbf{548.54} & \textbf{-1063.12} & \textbf{-1083.08} & 1.00 & 1.00 & 461.48 & -850.17 & -892.95 & 305.54 & -557.71 & -589.08 & 304.31 & -555.25 & -586.62 \\ 
\multicolumn{1}{|c|}{} & 8 & \textbf{548.54} & \textbf{-1063.12} & \textbf{-1083.08} & 1.00 & 1.00 & 461.48 & -850.17 & -892.95 & 317.22 & -581.07 & -612.44 & 304.31 & -555.25 & -586.62 \\ 
\multicolumn{1}{|c|}{} & 16 & \textbf{548.54} & \textbf{-1063.12} & \textbf{-1083.08} & 1.00 & 1.00 & 461.48 & -850.17 & -892.95 & 317.22 & -581.07 & -612.44 & 304.31 & -555.25 & -586.62 \\ 
\hline
\hline
\multicolumn{1}{|c|}{\multirow{5}{*}{\includegraphics[scale=0.07]{5-1-2.pdf}}} & 1 & \textbf{880.46} & \textbf{-1677.74} & \textbf{-1730.92} & 1.00 & 1.00 & 558.20 & -944.51 & -1054.41 & 398.49 & -669.44 & -750.98 & 386.15 & -644.75 & -726.29 \\ 
\multicolumn{1}{|c|}{} & 2 & \textbf{880.46} & \textbf{-1677.74} & \textbf{-1730.92} & 1.00 & 1.00 & 565.72 & -959.55 & -1069.45 & 398.49 & -669.44 & -750.98 & 386.15 & -644.75 & -726.29 \\ 
\multicolumn{1}{|c|}{} & 4 & \textbf{880.46} & \textbf{-1677.74} & \textbf{-1730.92} & 1.00 & 1.00 & 550.18 & -928.45 & -1038.35 & 398.45 & -669.36 & -750.90 & 396.61 & -665.68 & -747.22 \\ 
\multicolumn{1}{|c|}{} & 8 & \textbf{880.46} & \textbf{-1677.74} & \textbf{-1730.92} & 1.00 & 1.00 & 680.34 & -1188.78 & -1298.68 & 398.45 & -669.36 & -750.90 & 396.61 & -665.68 & -747.22 \\ 
\multicolumn{1}{|c|}{} & 16 & \textbf{880.46} & \textbf{-1677.74} & \textbf{-1730.92} & 1.00 & 1.00 & 579.97 & -988.04 & -1097.94 & 453.68 & -779.81 & -861.35 & 396.61 & -665.68 & -747.22 \\ 
\hline
\hline
\multicolumn{1}{|c|}{\multirow{5}{*}{\includegraphics[scale=0.07]{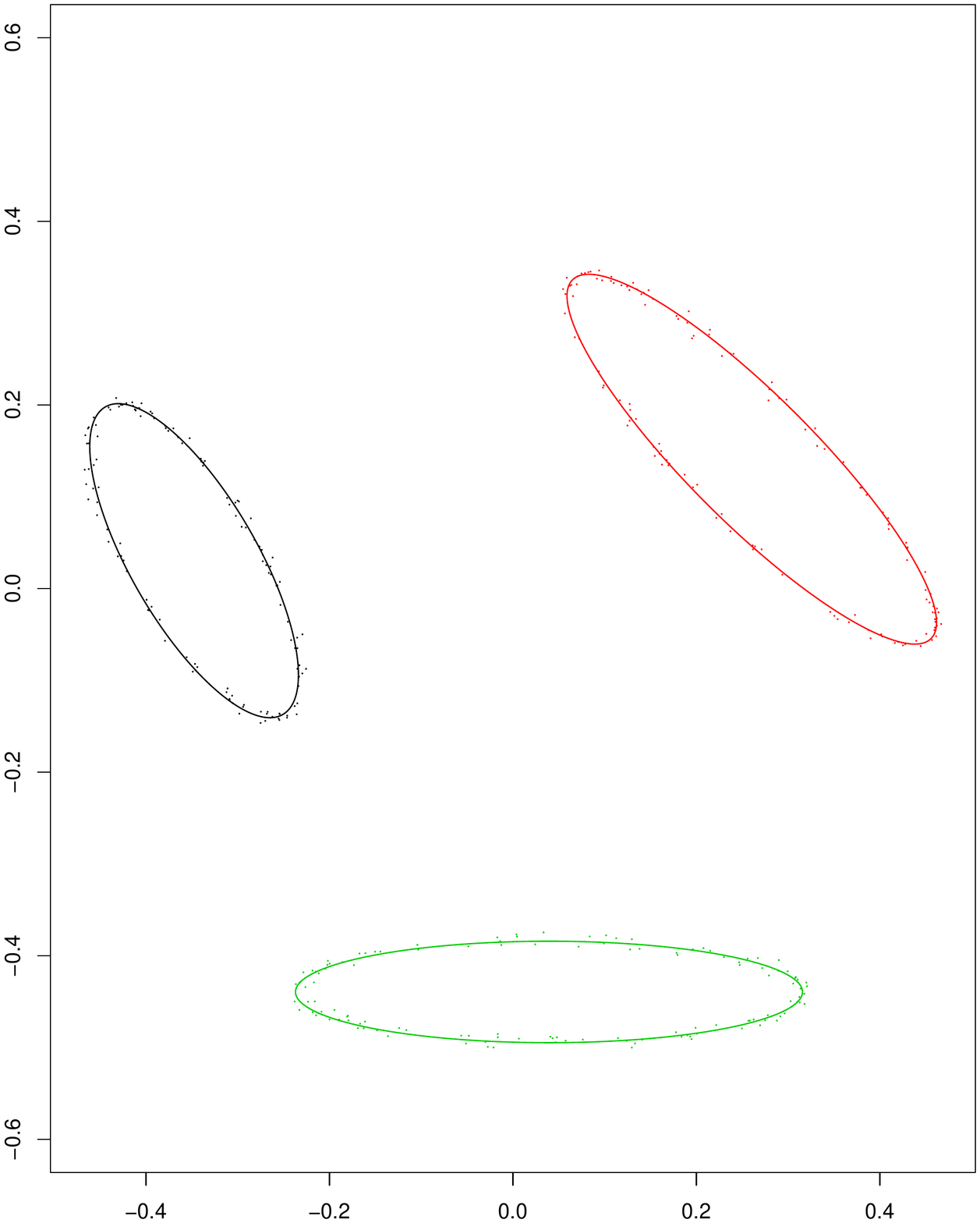}}} & 1 & \textbf{1137.77} & \textbf{-2138.68} & \textbf{-2229.54} & 1.00 & 1.00 & 619.51 & -959.35 & -1145.03 & 464.37 & -720.48 & -858.75 & 404.50 & -600.73 & -739.00 \\ 
\multicolumn{1}{|c|}{} & 2 & \textbf{1137.77} & \textbf{-2138.68} & \textbf{-2229.54} & 1.00 & 1.00 & 730.54 & -1181.40 & -1367.08 & 464.37 & -720.48 & -858.75 & 404.50 & -600.73 & -739.00 \\ 
\multicolumn{1}{|c|}{} & 4 & \textbf{1137.77} & \textbf{-2138.68} & \textbf{-2229.54} & 1.00 & 1.00 & 715.61 & -1151.53 & -1337.21 & 464.37 & -720.48 & -858.75 & 404.50 & -600.73 & -739.00 \\ 
\multicolumn{1}{|c|}{} & 8 & \textbf{1137.77} & \textbf{-2138.68} & \textbf{-2229.54} & 1.00 & 1.00 & 734.57 & -1189.46 & -1375.14 & 464.37 & -720.48 & -858.75 & 404.50 & -600.73 & -739.00 \\ 
\multicolumn{1}{|c|}{} & 16 & \textbf{1137.77} & \textbf{-2138.68} & \textbf{-2229.54} & 1.00 & 1.00 & 734.12 & -1188.56 & -1374.24 & 470.11 & -731.94 & -870.21 & 404.50 & -600.73 & -739.00 \\ 
\hline
\hline
\multicolumn{1}{|c|}{\multirow{5}{*}{\includegraphics[scale=0.07]{5-1-4.pdf}}} & 1 & 710.75 & -1228.12 & -1359.50 & 0.85 & 0.54 & \textbf{1040.37} & \textbf{-1687.73} & \textbf{-1954.75} & 598.74 & -904.28 & -1103.48 & 568.79 & -844.38 & -1043.58 \\ 
\multicolumn{1}{|c|}{} & 2 & \textbf{1481.77} & \textbf{-2770.15} & \textbf{-2901.53} & 1.00 & 1.00 & 853.55 & -1314.09 & -1581.10 & 598.74 & -904.28 & -1103.48 & 568.79 & -844.38 & -1043.58 \\ 
\multicolumn{1}{|c|}{} & 4 & \textbf{1481.77} & \textbf{-2770.15} & \textbf{-2901.53} & 1.00 & 1.00 & 940.49 & -1487.97 & -1754.98 & 598.74 & -904.28 & -1103.48 & 568.79 & -844.38 & -1043.58 \\ 
\multicolumn{1}{|c|}{} & 8 & \textbf{1481.77} & \textbf{-2770.15} & \textbf{-2901.53} & 1.00 & 1.00 & 1038.28 & -1683.55 & -1950.56 & 612.23 & -931.26 & -1130.46 & 568.79 & -844.38 & -1043.58 \\ 
\multicolumn{1}{|c|}{} & 16 & \textbf{1481.77} & \textbf{-2770.15} & \textbf{-2901.53} & 1.00 & 1.00 & 1062.49 & -1731.98 & -1998.99 & 631.59 & -969.98 & -1169.18 & 568.79 & -844.38 & -1043.58 \\ 
\hline
\hline
\multicolumn{1}{|c|}{\multirow{8}{*}{\includegraphics[scale=0.07]{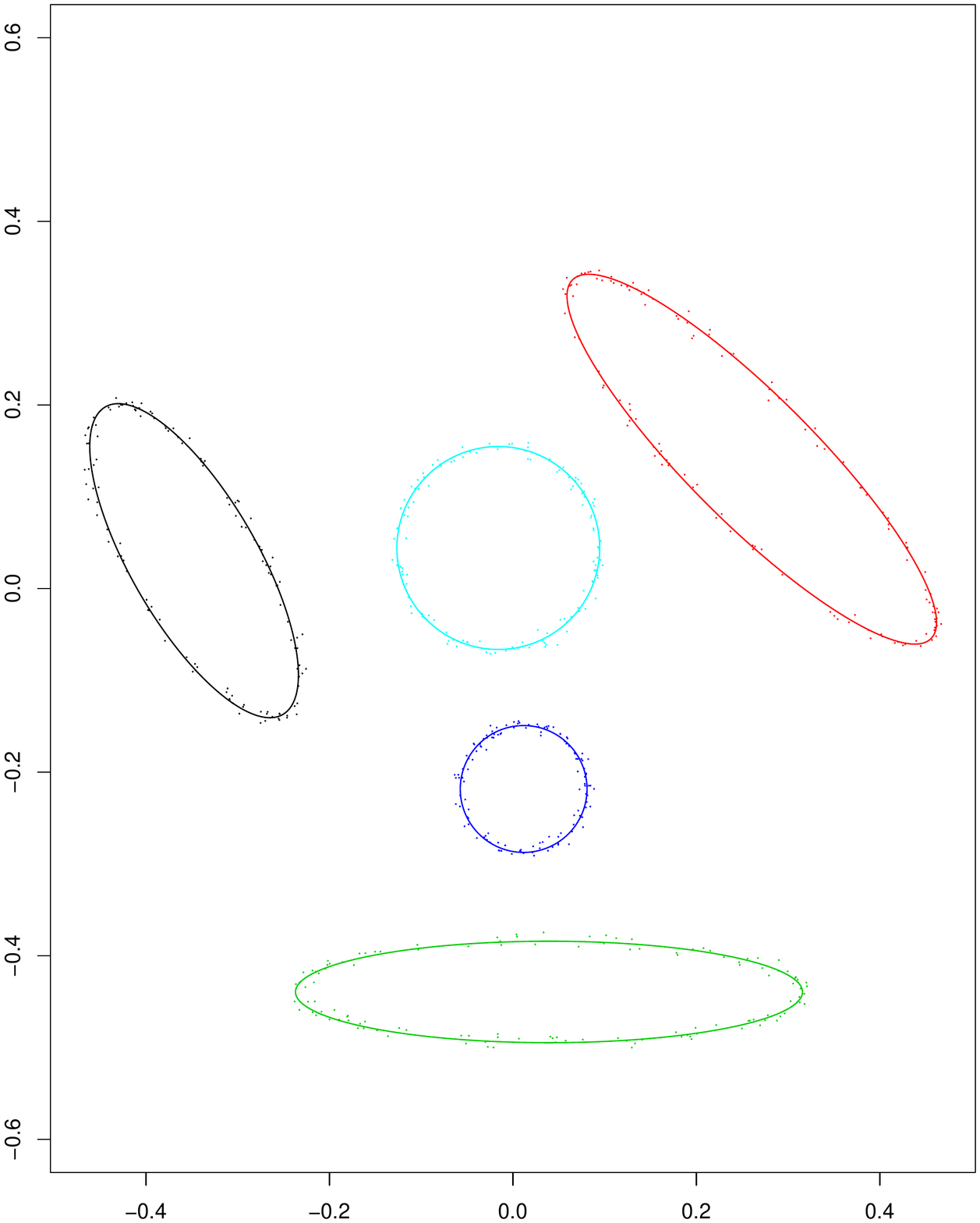}}} & 1 & \textbf{1175.28} & \textbf{-2098.57} & \textbf{-2272.57} & 0.90 & 0.64 & 1163.25 & -1816.05 & -2168.51 & 606.56 & -831.89 & -1095.11 & 563.60 & -745.97 & -1009.19 \\ 
\multicolumn{1}{|c|}{} & 2 & \textbf{1175.28} & \textbf{-2098.57} & \textbf{-2272.57} & 0.90 & 0.64 & 1070.42 & -1630.38 & -1982.84 & 666.59 & -951.95 & -1215.18 & 580.84 & -780.45 & -1043.68 \\ 
\multicolumn{1}{|c|}{} & 4 & \textbf{1237.69} & \textbf{-2275.08} & \textbf{-2413.39} & 0.92 & 0.71 & 1146.97 & -1783.48 & -2135.94 & 666.59 & -951.95 & -1215.18 & 580.84 & -780.45 & -1043.68 \\ 
\multicolumn{1}{|c|}{} & 8 & \textbf{1237.69} & \textbf{-2275.08} & \textbf{-2413.39} & 0.92 & 0.71 & 1081.50 & -1652.55 & -2005.00 & 688.28 & -995.33 & -1258.55 & 580.84 & -780.45 & -1043.68 \\ 
\multicolumn{1}{|c|}{} & 16 & \textbf{1409.54} & \textbf{-2567.08} & \textbf{-2741.07} & 0.98 & 0.89 & 1188.08 & -1865.70 & -2218.16 & 688.28 & -995.33 & -1258.55 & 580.84 & -780.45 & -1043.68 \\ 
\multicolumn{1}{|c|}{} & 32 & \textbf{1432.35} & \textbf{-2612.71} & \textbf{-2786.71} & 0.99 & 0.96 & 1144.56 & -1778.66 & -2131.12 & 688.28 & -995.33 & -1258.55 & 588.26 & -795.30 & -1058.53 \\ 
\multicolumn{1}{|c|}{} & 64 & \textbf{1432.35} & \textbf{-2612.71} & \textbf{-2786.71} & 0.99 & 0.96 & 1169.19 & -1827.92 & -2180.38 & 688.28 & -995.33 & -1258.55 & 588.26 & -795.30 & -1058.53 \\ 
\multicolumn{1}{|c|}{} & 128 & \textbf{1719.17} & \textbf{-3186.35} & \textbf{-3360.35} & 1.00 & 1.00 & 1280.34 & -2050.23 & -2402.68 & 733.81 & -1086.39 & -1349.61 & 612.66 & -844.09 & -1107.31 \\ 
\hline
\hline

\end{tabular}
}

\end{table}

\begin{table}[H]
\caption{Experiment No. 1: Values of the performance metrics computed for the consecutive numbers of starts being the powers of two - Part 2}
\centering
\setlength\tabcolsep{1.5pt}
\resizebox{\columnwidth}{!}{%
\begin{tabular}{c|>{\tiny}c|>{\tiny}c>{\tiny}c>{\tiny}c>{\tiny}c>{\tiny}c|>{\tiny}c>{\tiny}c>{\tiny}c|>{\tiny}c>{\tiny}c>{\tiny}c|>{\tiny}c>{\tiny}c>{\tiny}c|}
\cline{2-16}
& \rule{0pt}{1.5cm} & \multicolumn{5}{c|}{\makecell{MCEC}} & \multicolumn{3}{c|}{\makecell{afCEC Hartigan}} & \multicolumn{3}{c|}{CEC} & \multicolumn{3}{c|}{GMM} \\
\cline{3-16}

& \rotatebox[origin=c]{90}{\rlap{\parbox{1.5cm}{\makecell{Number of \\ starts}}}} &
\multicolumn{1}{c|}{MLE} & \multicolumn{1}{c|}{BIC} & \multicolumn{1}{c|}{AIC} & \multicolumn{1}{c|}{\makecell{Rand \\ index}} & \multicolumn{1}{c|}{\makecell{Jaccard \\ index}} &
\multicolumn{1}{c|}{MLE} & \multicolumn{1}{c|}{BIC} & \multicolumn{1}{c|}{AIC} &
\multicolumn{1}{c|}{MLE} & \multicolumn{1}{c|}{BIC} & \multicolumn{1}{c|}{AIC} &
\multicolumn{1}{c|}{MLE} & \multicolumn{1}{c|}{BIC} & \multicolumn{1}{c|}{AIC} \\

\hline
\hline
\multicolumn{1}{|c|}{\multirow{8}{*}{\includegraphics[scale=0.07]{5-1-6.pdf}}} & 1 & 292.77 & -326.44 & -507.54 & 0.51 & 0.19 & \textbf{827.79} & \textbf{-1024.42} & \textbf{-1465.58} & 617.94 & -764.17 & -1093.88 & 562.10 & -652.49 & -982.20 \\ 
\multicolumn{1}{|c|}{} & 2 & \textbf{1417.44} & \textbf{-2575.76} & \textbf{-2756.87} & 0.94 & 0.75 & 1031.08 & -1431.00 & -1872.16 & 634.18 & -796.65 & -1126.36 & 574.76 & -677.82 & -1007.53 \\ 
\multicolumn{1}{|c|}{} & 4 & \textbf{1417.44} & \textbf{-2575.76} & \textbf{-2756.87} & 0.94 & 0.75 & 1078.51 & -1525.87 & -1967.03 & 634.18 & -796.65 & -1126.36 & 574.76 & -677.82 & -1007.53 \\ 
\multicolumn{1}{|c|}{} & 8 & \textbf{1417.44} & \textbf{-2575.76} & \textbf{-2756.87} & 0.94 & 0.75 & 1028.08 & -1425.00 & -1866.16 & 634.18 & -796.65 & -1126.36 & 601.30 & -730.89 & -1060.60 \\ 
\multicolumn{1}{|c|}{} & 16 & \textbf{1417.44} & \textbf{-2575.76} & \textbf{-2756.87} & 0.94 & 0.75 & 1270.27 & -1909.39 & -2350.55 & 715.22 & -958.73 & -1288.44 & 601.30 & -730.89 & -1060.60 \\ 
\multicolumn{1}{|c|}{} & 32 & \textbf{1417.44} & \textbf{-2575.76} & \textbf{-2756.87} & 0.94 & 0.75 & 1180.76 & -1730.35 & -2171.51 & 715.22 & -958.73 & -1288.44 & 602.87 & -734.03 & -1063.74 \\ 
\multicolumn{1}{|c|}{} & 64 & \textbf{1431.28} & \textbf{-2550.30} & \textbf{-2768.56} & 0.95 & 0.75 & 1300.24 & -1969.32 & -2410.48 & 715.22 & -958.73 & -1288.44 & 602.87 & -734.03 & -1063.74 \\ 
\multicolumn{1}{|c|}{} & 128 & \textbf{1898.91} & \textbf{-3485.57} & \textbf{-3703.83} & 1.00 & 1.00 & 1312.19 & -1993.21 & -2434.37 & 715.22 & -958.73 & -1288.44 & 602.87 & -734.03 & -1063.74 \\ 
\hline
\hline
\multicolumn{1}{|c|}{\multirow{9}{*}{\includegraphics[scale=0.07]{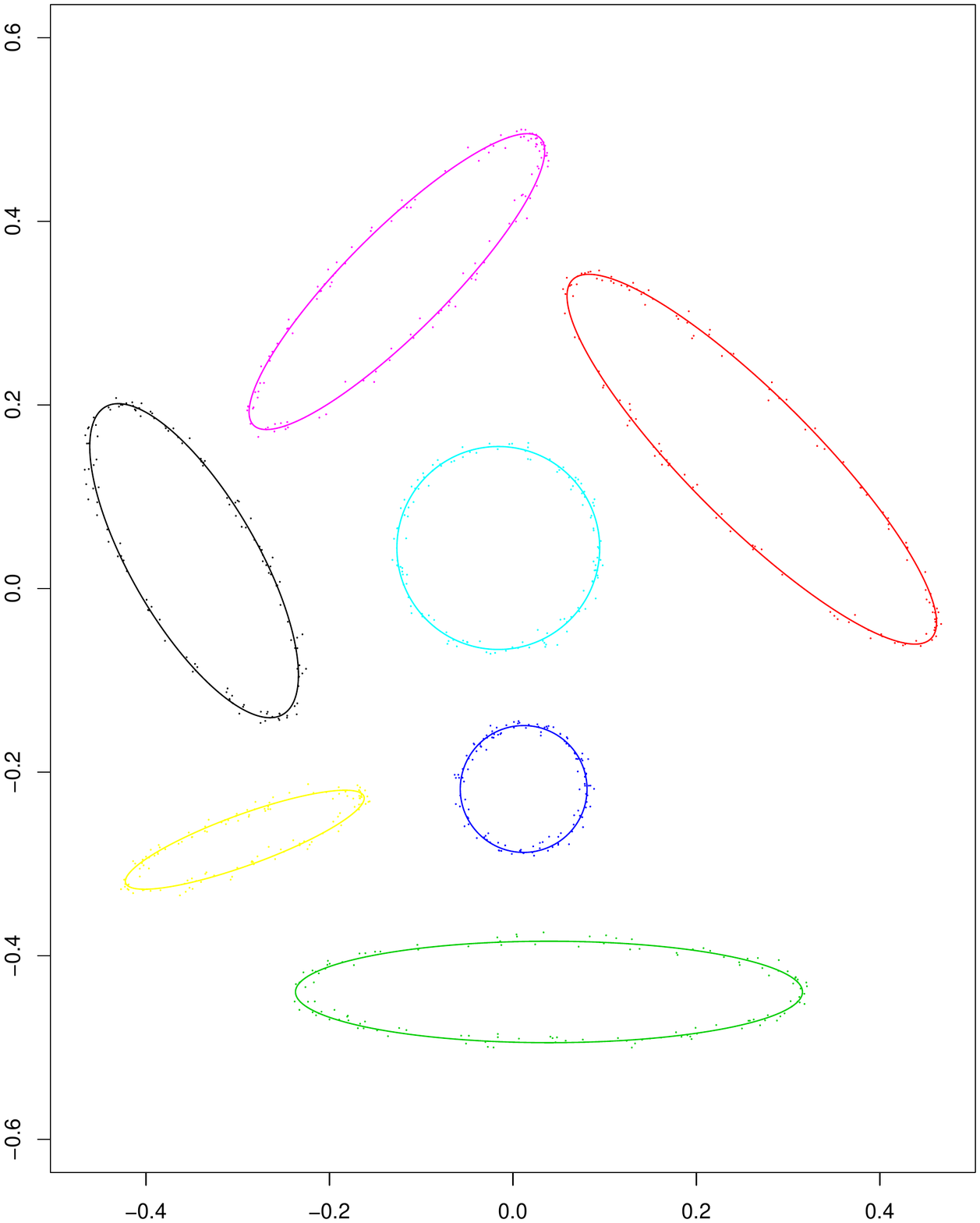}}} & 1 & 969.97 & \textbf{-1566.05} & \textbf{-1829.93} & 0.86 & 0.44 & \textbf{1009.66} & -1264.75 & -1797.32 & 706.99 & -849.76 & -1247.99 & 649.54 & -734.86 & -1133.09 \\ 
\multicolumn{1}{|c|}{} & 2 & 969.97 & -1566.05 & -1829.93 & 0.86 & 0.44 & \textbf{1173.23} & \textbf{-1591.89} & \textbf{-2124.46} & 706.99 & -849.76 & -1247.99 & 649.54 & -734.86 & -1133.09 \\ 
\multicolumn{1}{|c|}{} & 4 & 1112.25 & \textbf{-1959.38} & -2146.50 & 0.92 & 0.63 & \textbf{1272.98} & -1791.38 & \textbf{-2323.96} & 721.88 & -879.52 & -1277.75 & 650.27 & -736.31 & -1134.54 \\ 
\multicolumn{1}{|c|}{} & 8 & 1195.10 & \textbf{-2125.07} & \textbf{-2312.19} & 0.90 & 0.56 & \textbf{1212.79} & -1671.02 & -2203.59 & 779.91 & -995.60 & -1393.83 & 650.27 & -736.31 & -1134.54 \\ 
\multicolumn{1}{|c|}{} & 16 & 1195.10 & \textbf{-2125.07} & \textbf{-2312.19} & 0.90 & 0.56 & \textbf{1255.69} & -1756.82 & -2289.39 & 779.91 & -995.60 & -1393.83 & 662.02 & -759.81 & -1158.04 \\ 
\multicolumn{1}{|c|}{} & 32 & \textbf{1694.77} & \textbf{-3015.64} & \textbf{-3279.53} & 0.97 & 0.79 & 1285.07 & -1815.57 & -2348.14 & 807.84 & -1051.45 & -1449.67 & 673.93 & -783.64 & -1181.87 \\ 
\multicolumn{1}{|c|}{} & 64 & \textbf{1801.77} & \textbf{-3229.66} & \textbf{-3493.55} & 0.98 & 0.89 & 1352.85 & -1951.13 & -2483.70 & 807.84 & -1051.45 & -1449.67 & 678.26 & -792.30 & -1190.52 \\ 
\multicolumn{1}{|c|}{} & 128 & \textbf{1834.64} & \textbf{-3295.39} & \textbf{-3559.28} & 1.00 & 0.97 & 1323.68 & -1892.78 & -2425.35 & 807.84 & -1051.45 & -1449.67 & 687.53 & -810.83 & -1209.06 \\ 
\multicolumn{1}{|c|}{} & 256 & \textbf{2121.67} & \textbf{-3869.45} & \textbf{-4133.33} & 1.00 & 1.00 & 1402.26 & -2049.94 & -2582.51 & 807.84 & -1051.45 & -1449.67 & 687.53 & -810.83 & -1209.06 \\ 
\hline
\hline
\multicolumn{1}{|c|}{\multirow{8}{*}{\includegraphics[scale=0.07]{5-1-8.pdf}}} & 1 & \textbf{1346.01} & \textbf{-2255.34} & \textbf{-2566.02} & 0.92 & 0.55 & 1169.40 & -1458.50 & -2084.79 & 734.98 & -811.47 & -1279.96 & 634.84 & -611.20 & -1079.69 \\ 
\multicolumn{1}{|c|}{} & 2 & \textbf{1346.01} & \textbf{-2255.34} & \textbf{-2566.02} & 0.92 & 0.55 & 1185.49 & -1490.67 & -2116.97 & 734.98 & -811.47 & -1279.96 & 634.84 & -611.20 & -1079.69 \\ 
\multicolumn{1}{|c|}{} & 4 & \textbf{1346.01} & \textbf{-2255.34} & \textbf{-2566.02} & 0.92 & 0.55 & 1257.36 & -1634.42 & -2260.71 & 740.07 & -821.65 & -1290.14 & 634.84 & -611.20 & -1079.69 \\ 
\multicolumn{1}{|c|}{} & 8 & \textbf{1346.01} & \textbf{-2255.34} & \textbf{-2566.02} & 0.92 & 0.55 & 1300.08 & -1719.86 & -2346.16 & 794.29 & -930.09 & -1398.58 & 683.47 & -708.45 & -1176.94 \\ 
\multicolumn{1}{|c|}{} & 16 & \textbf{1346.01} & \textbf{-2255.34} & \textbf{-2566.02} & 0.92 & 0.55 & 1289.30 & -1698.30 & -2324.60 & 794.29 & -930.09 & -1398.58 & 683.47 & -708.45 & -1176.94 \\ 
\multicolumn{1}{|c|}{} & 32 & 1346.01 & \textbf{-2255.34} & -2566.02 & 0.92 & 0.55 & \textbf{1462.78} & -2045.25 & \textbf{-2671.55} & 800.12 & -941.75 & -1410.24 & 683.47 & -708.45 & -1176.94 \\ 
\multicolumn{1}{|c|}{} & 64 & \textbf{2022.93} & \textbf{-3609.17} & \textbf{-3919.85} & 0.98 & 0.85 & 1428.28 & -1976.26 & -2602.55 & 808.52 & -958.56 & -1427.05 & 683.47 & -708.45 & -1176.94 \\ 
\multicolumn{1}{|c|}{} & 128 & \textbf{2340.88} & \textbf{-4245.08} & \textbf{-4555.76} & 1.00 & 1.00 & 1569.73 & -2259.17 & -2885.47 & 817.11 & -975.74 & -1444.23 & 683.47 & -708.45 & -1176.94 \\ 
\hline
\hline

\end{tabular}
}

\end{table}


\begin{figure}[H]
	\centering
	    \begin{tabular}{c@{}c@{}c@{}c@{}}
    		\includegraphics[width=0.24\textwidth]{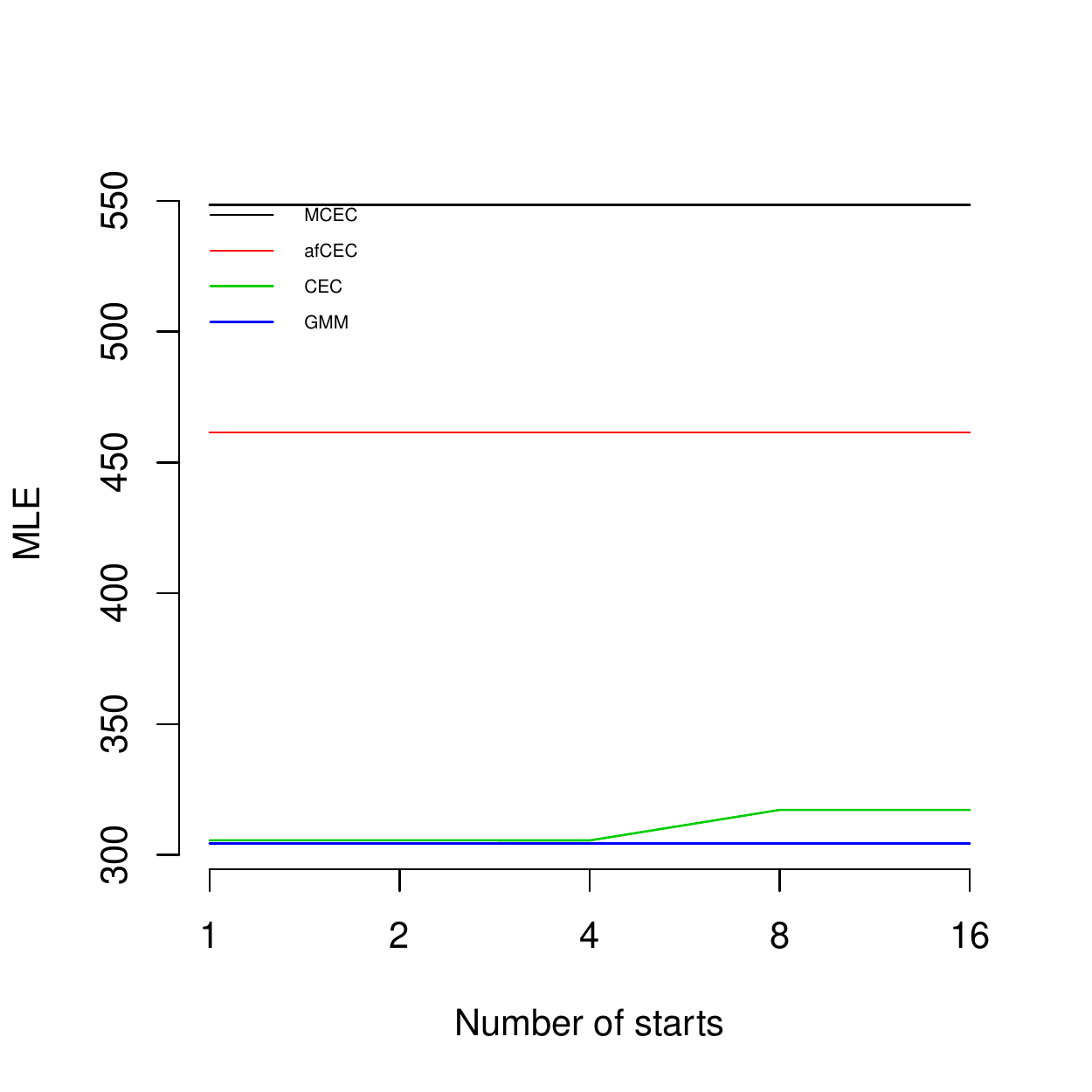} &
    		\includegraphics[width=0.24\textwidth]{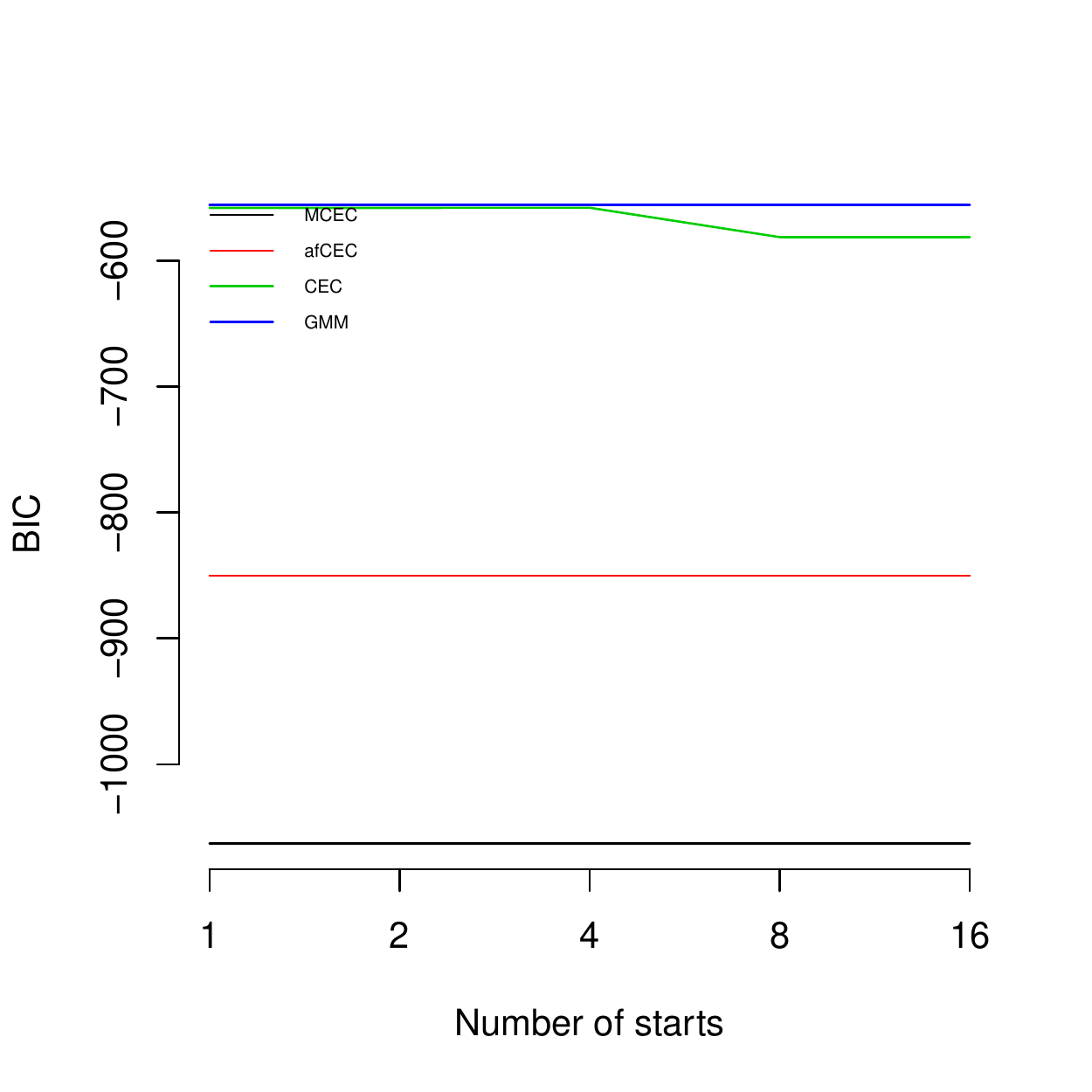} &
    		\includegraphics[width=0.24\textwidth]{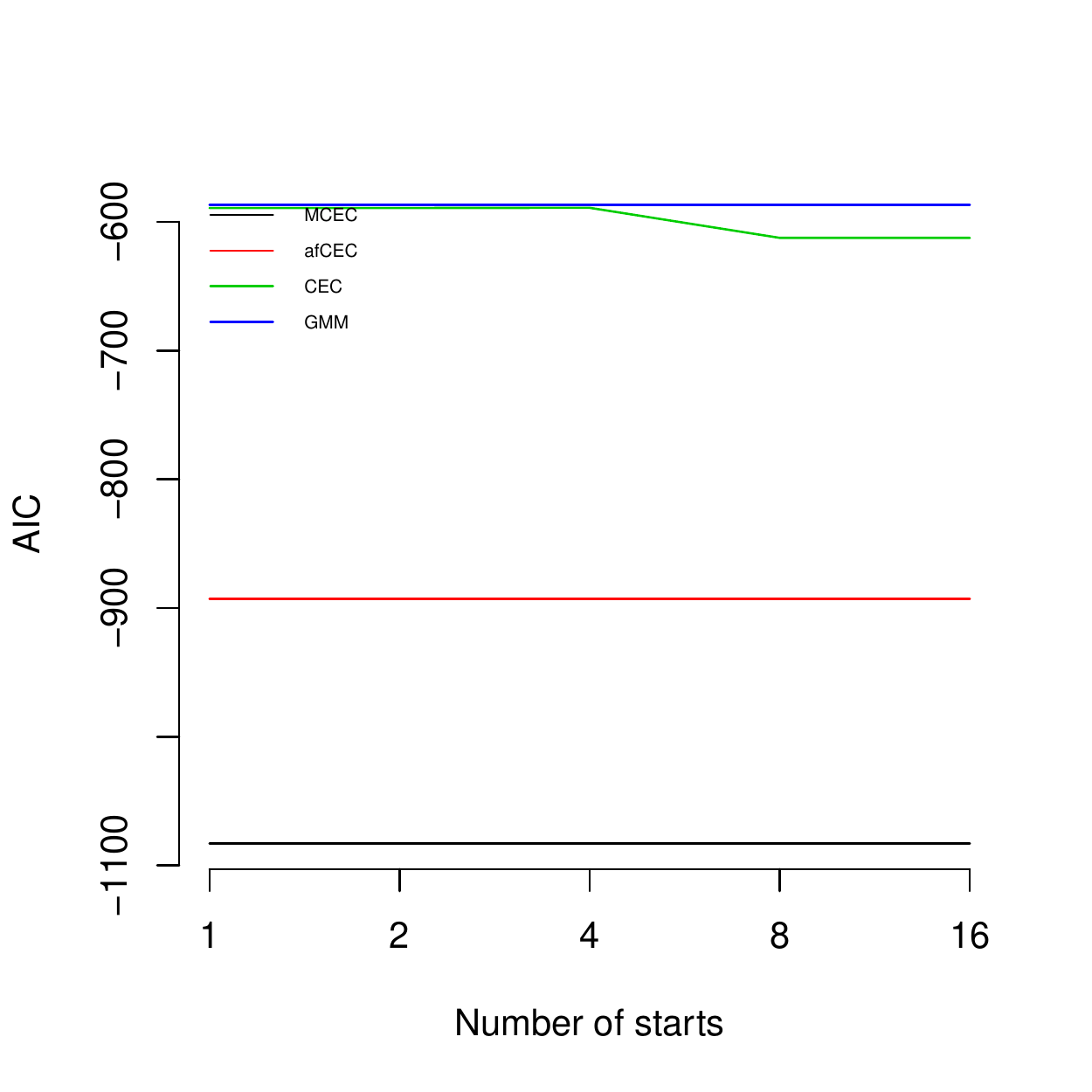} &
    		\includegraphics[width=0.24\textwidth]{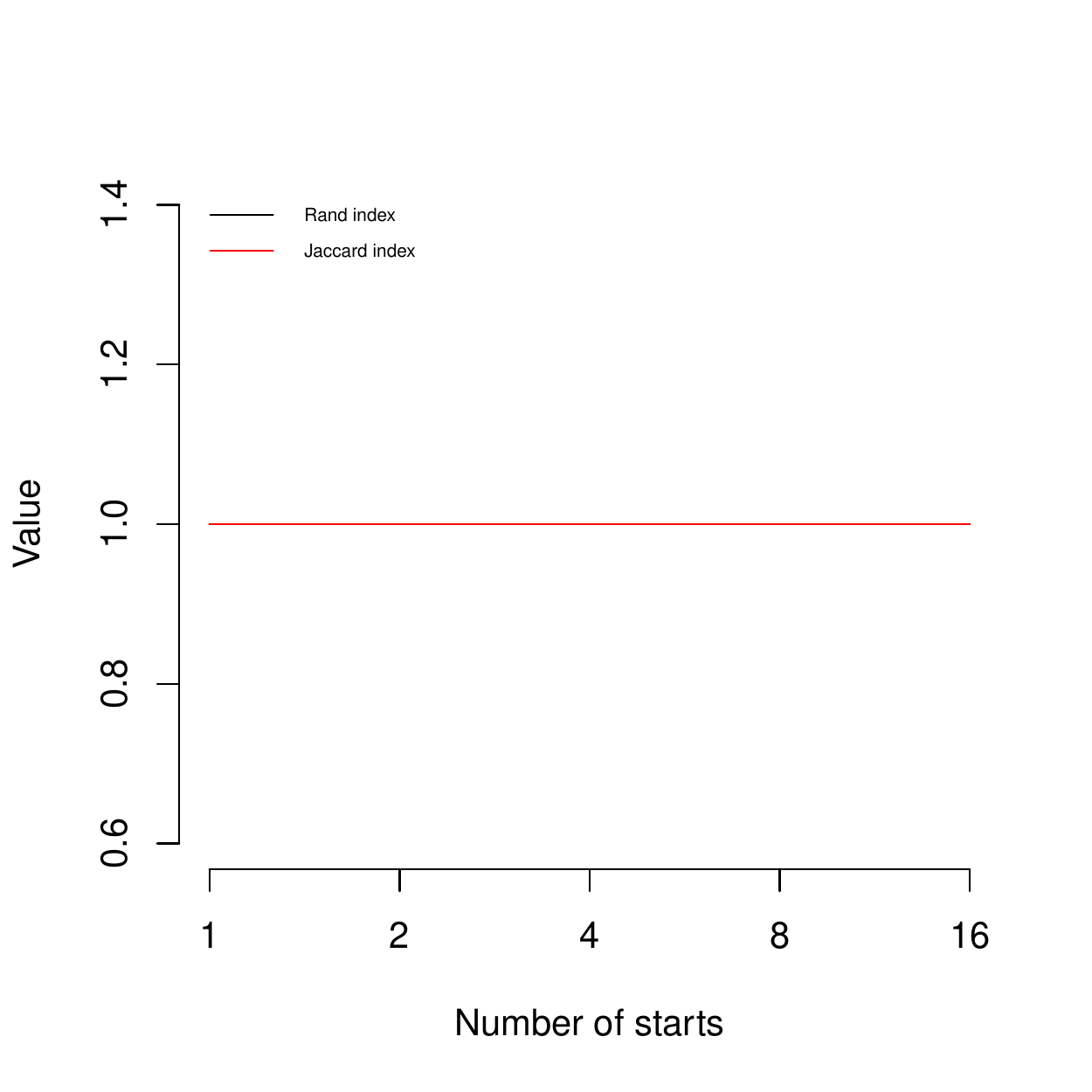} \\
    		\includegraphics[width=0.24\textwidth]{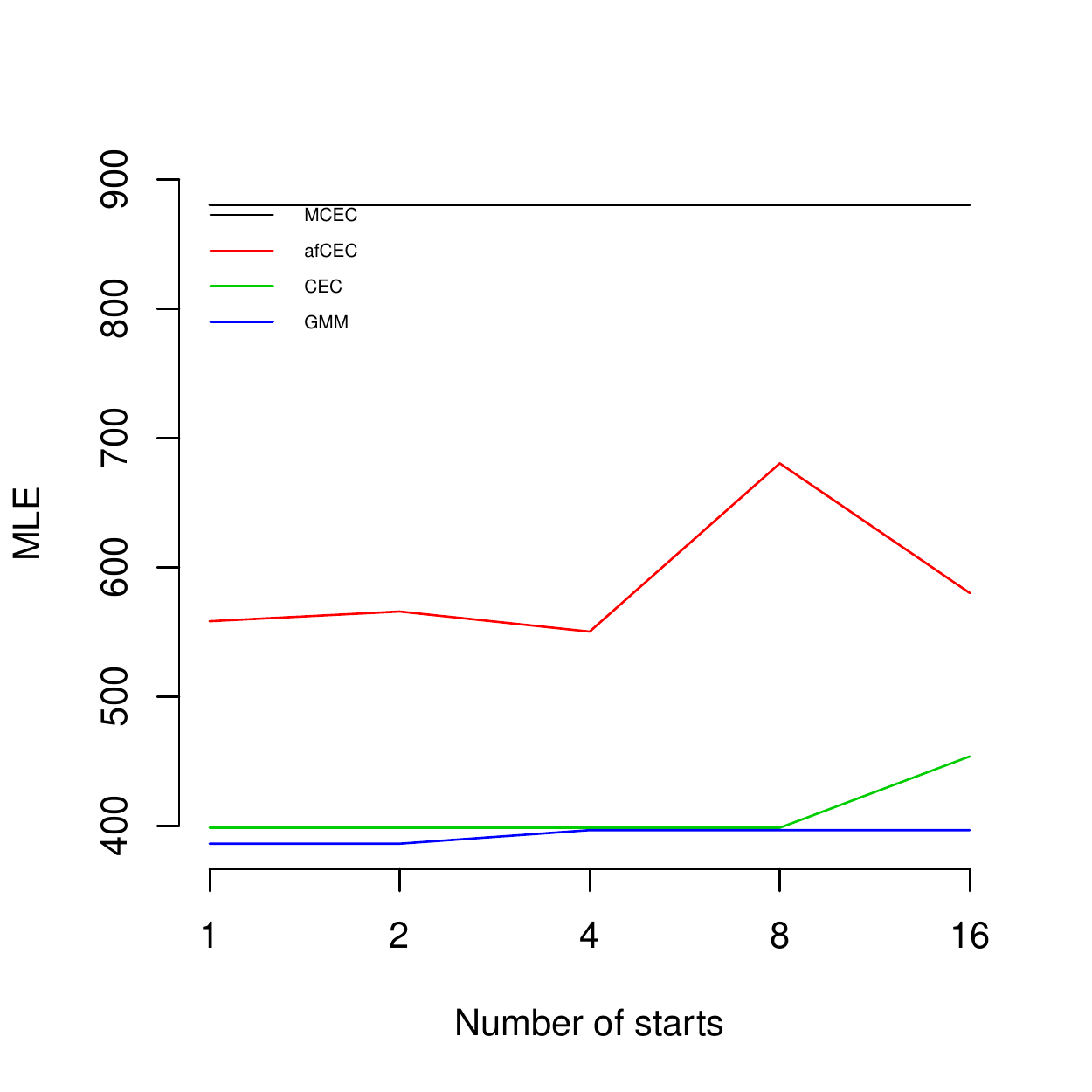} &
    		\includegraphics[width=0.24\textwidth]{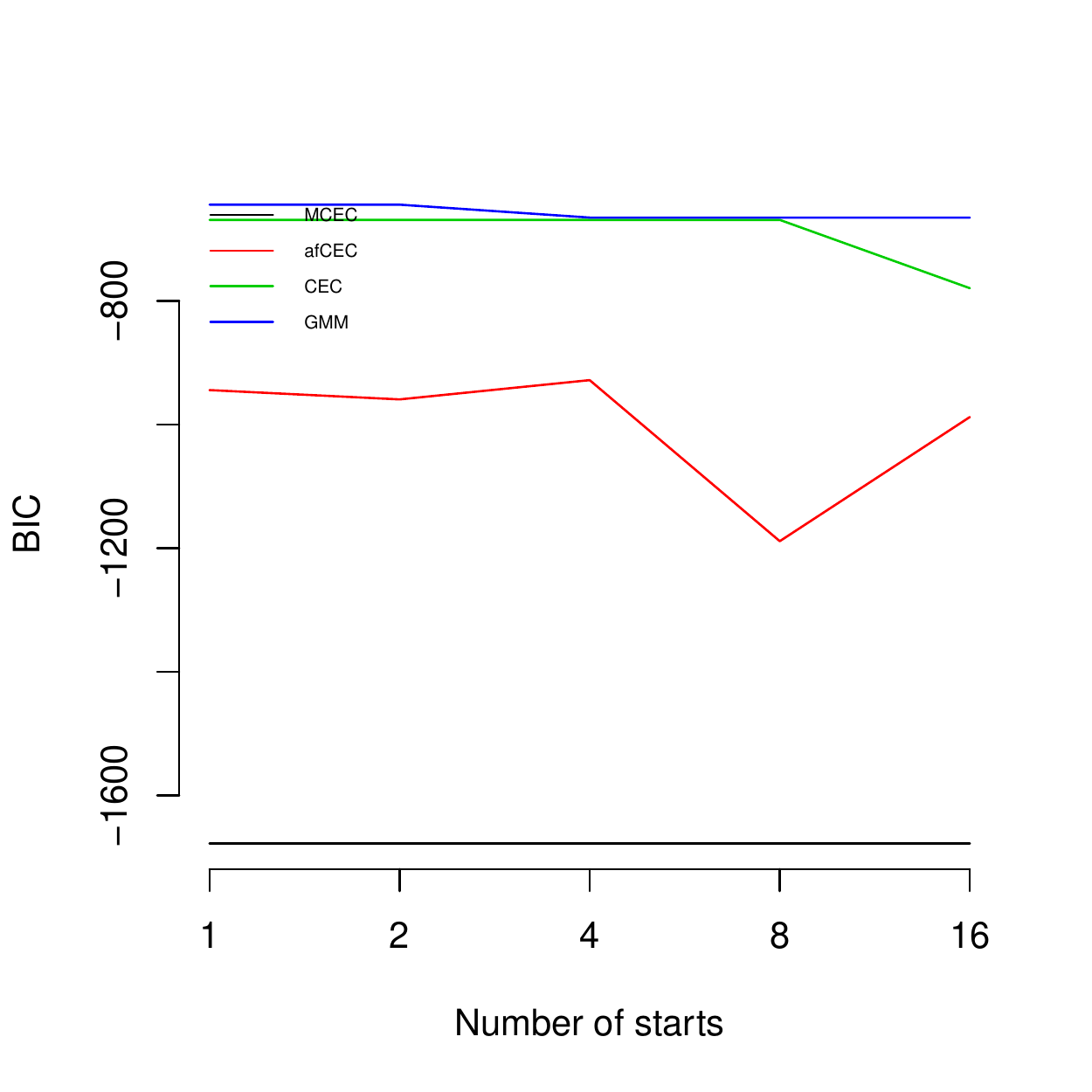} &
    		\includegraphics[width=0.24\textwidth]{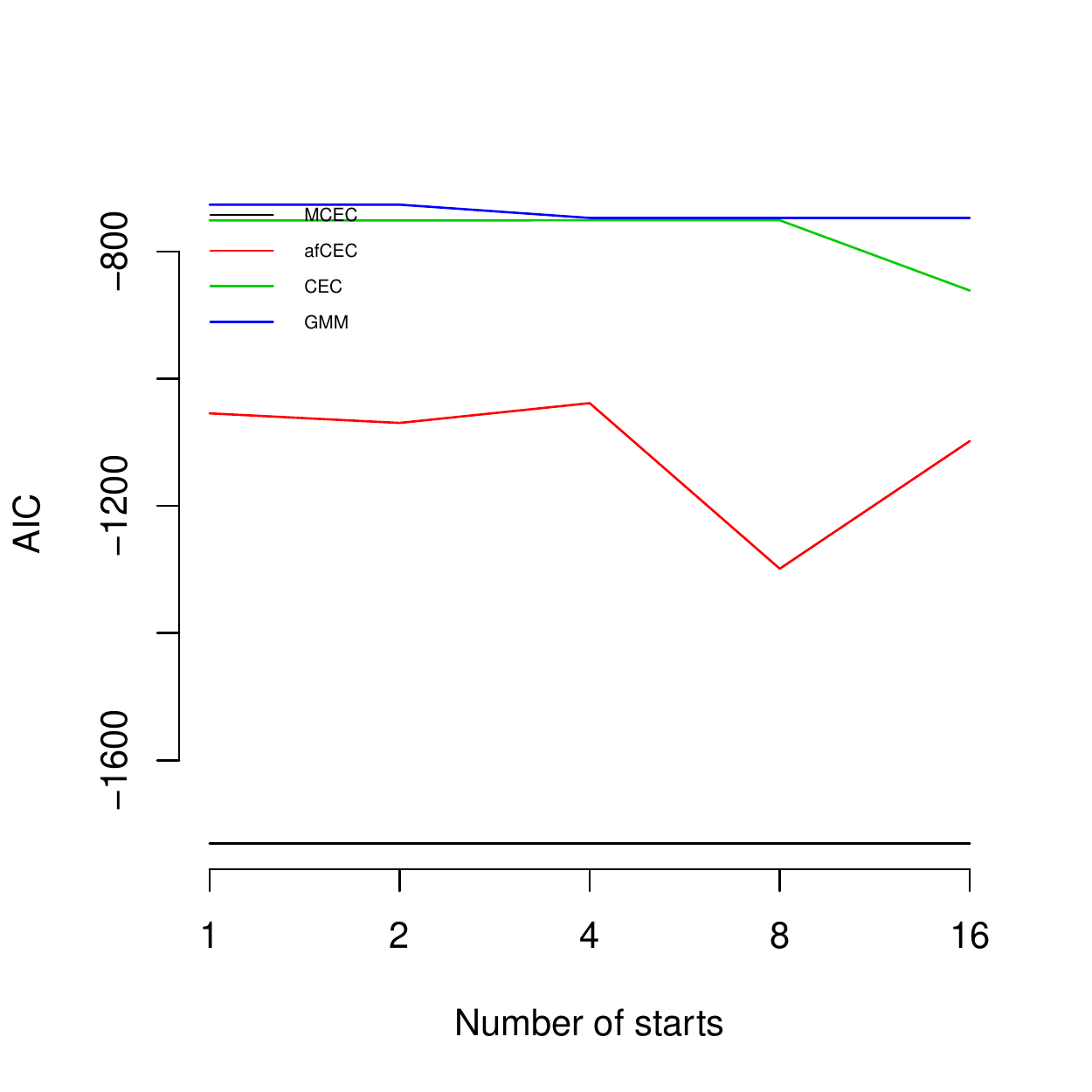} &
    		\includegraphics[width=0.24\textwidth]{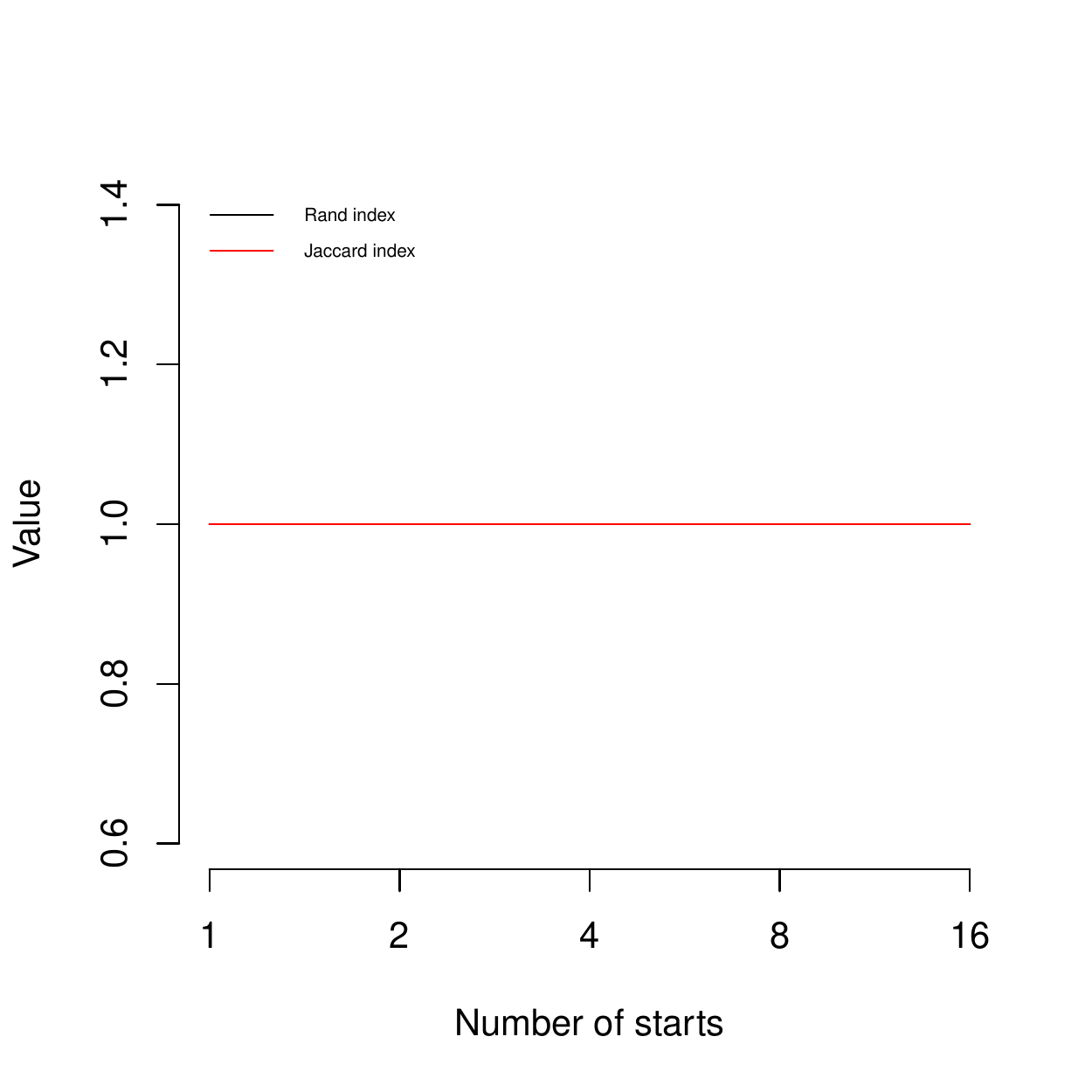} \\
    		\includegraphics[width=0.24\textwidth]{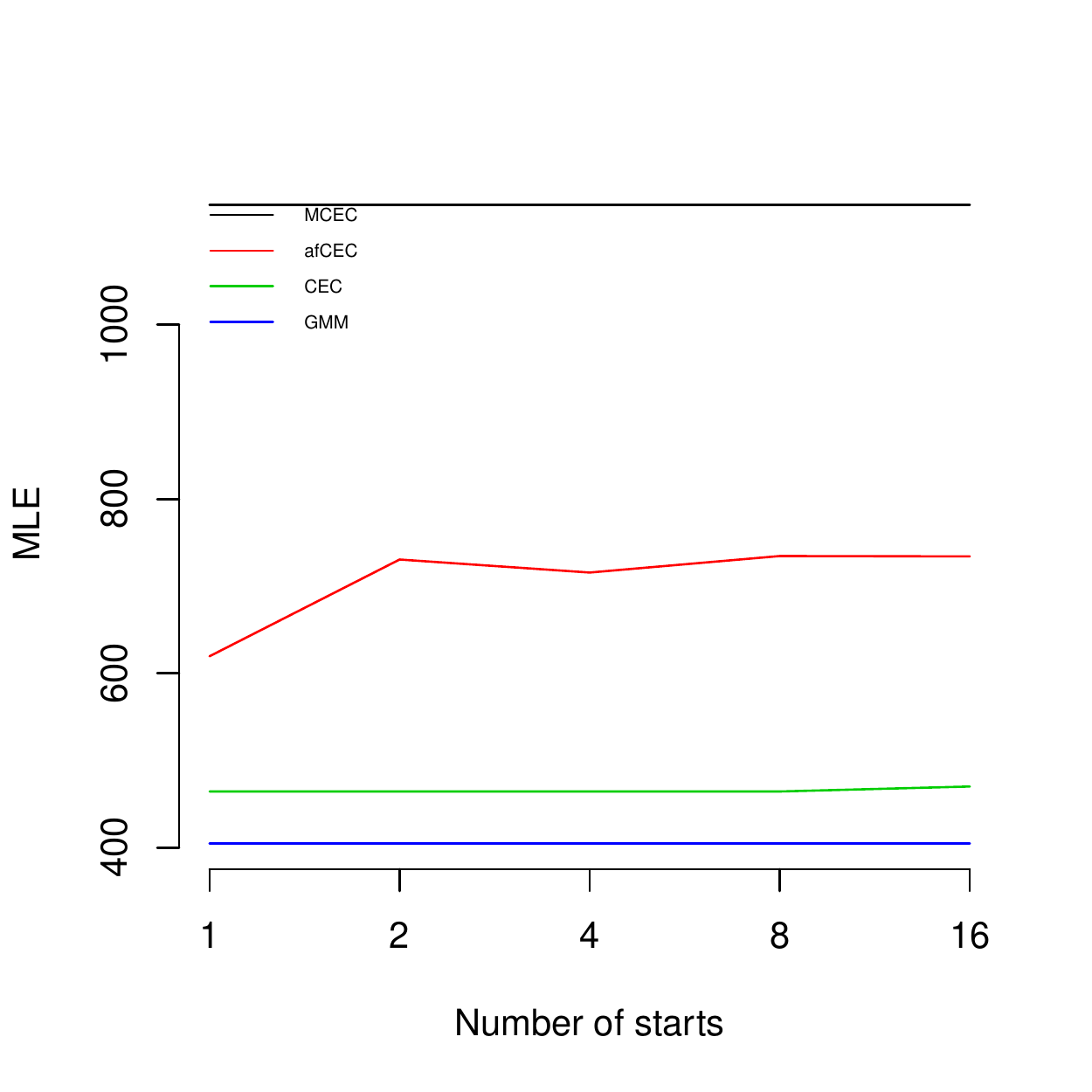} &
    		\includegraphics[width=0.24\textwidth]{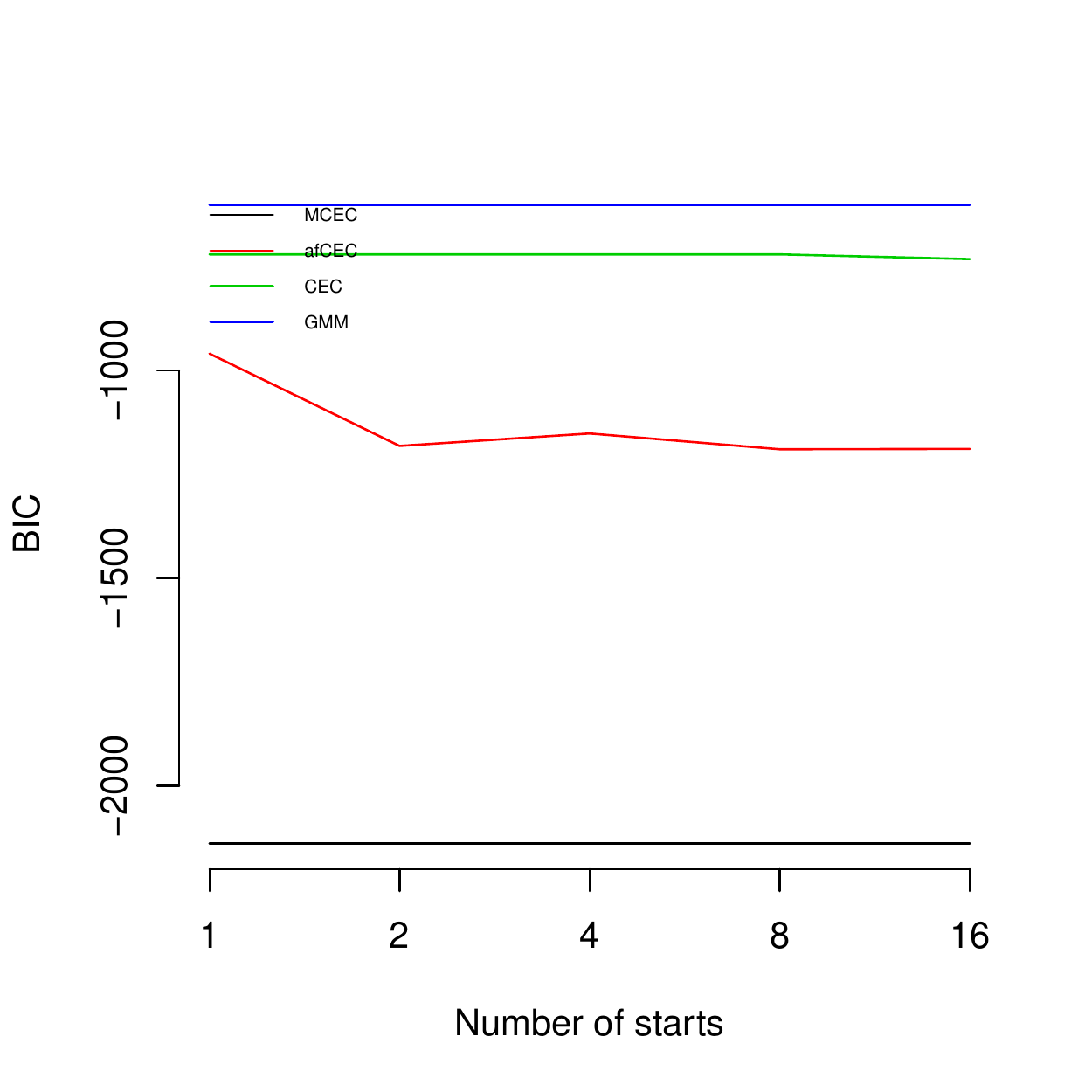} &
    		\includegraphics[width=0.24\textwidth]{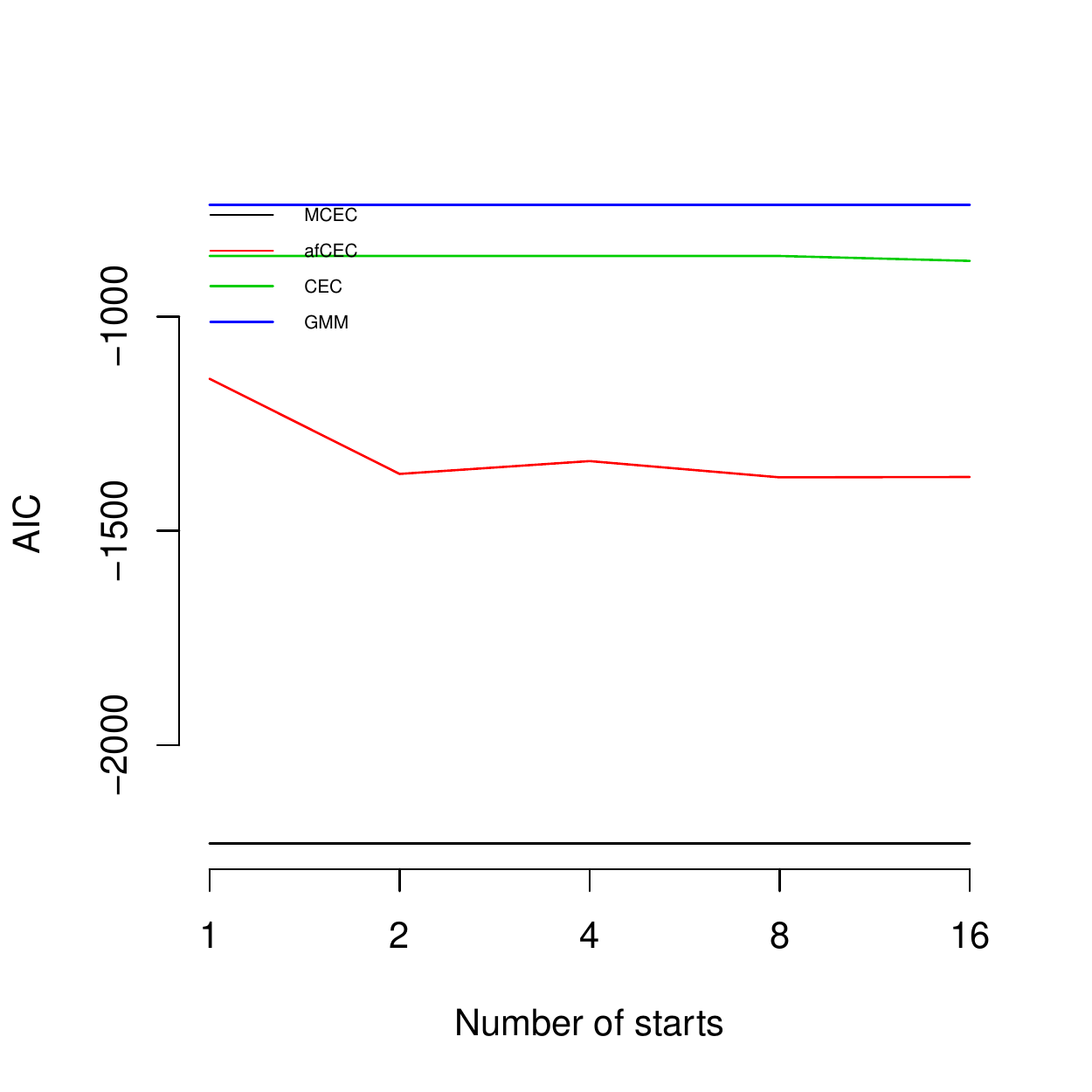} &
    		\includegraphics[width=0.24\textwidth]{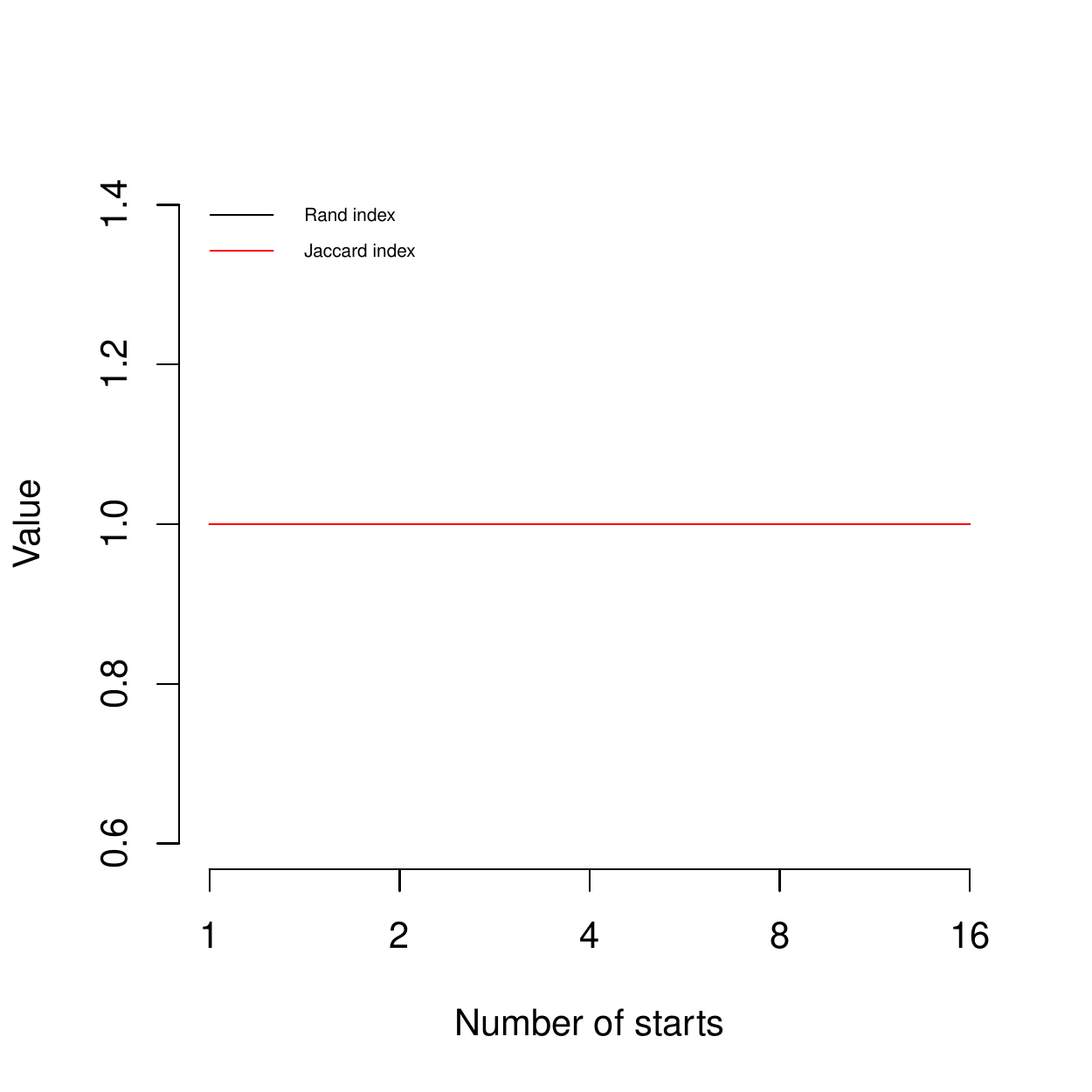} \\
    		\includegraphics[width=0.24\textwidth]{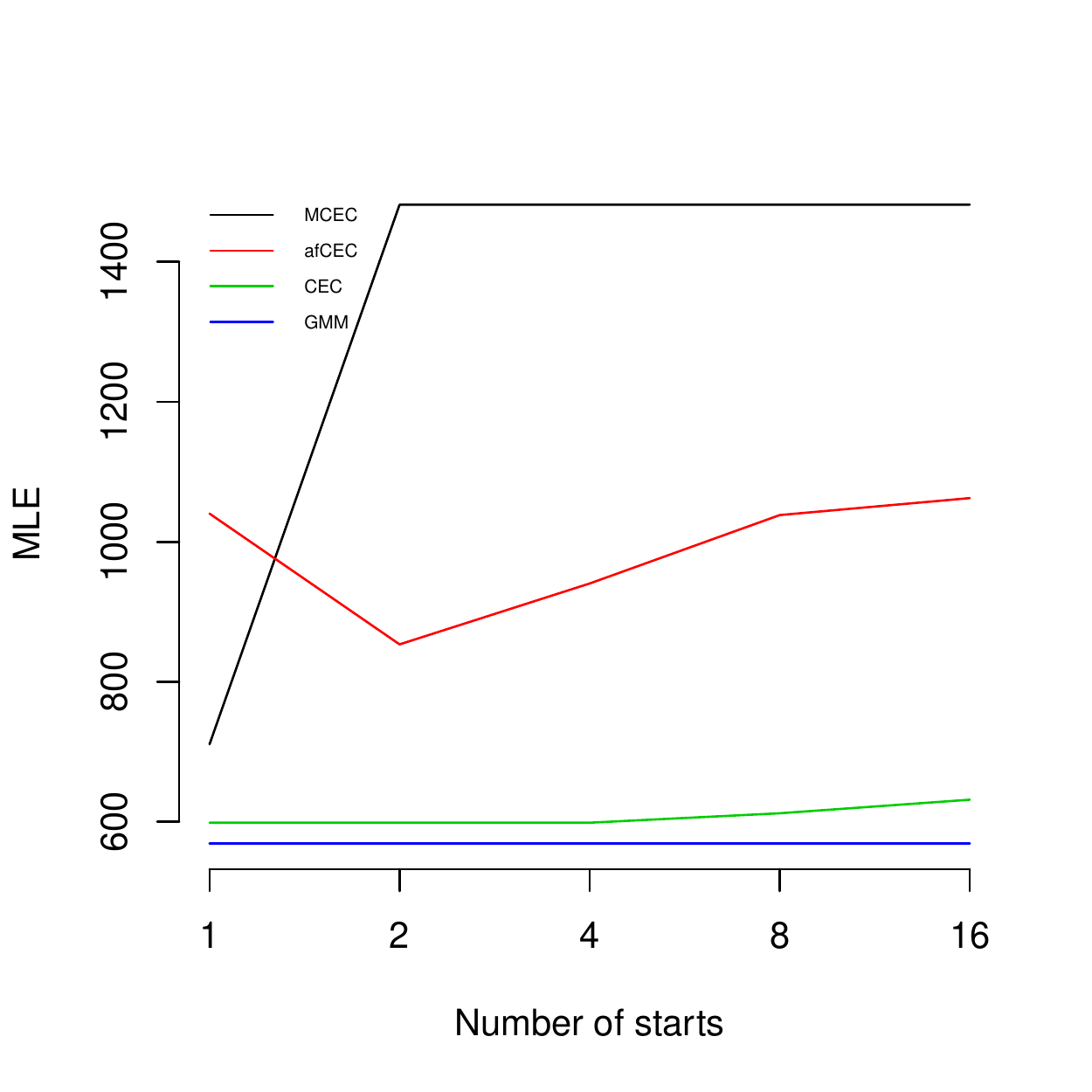} &
    		\includegraphics[width=0.24\textwidth]{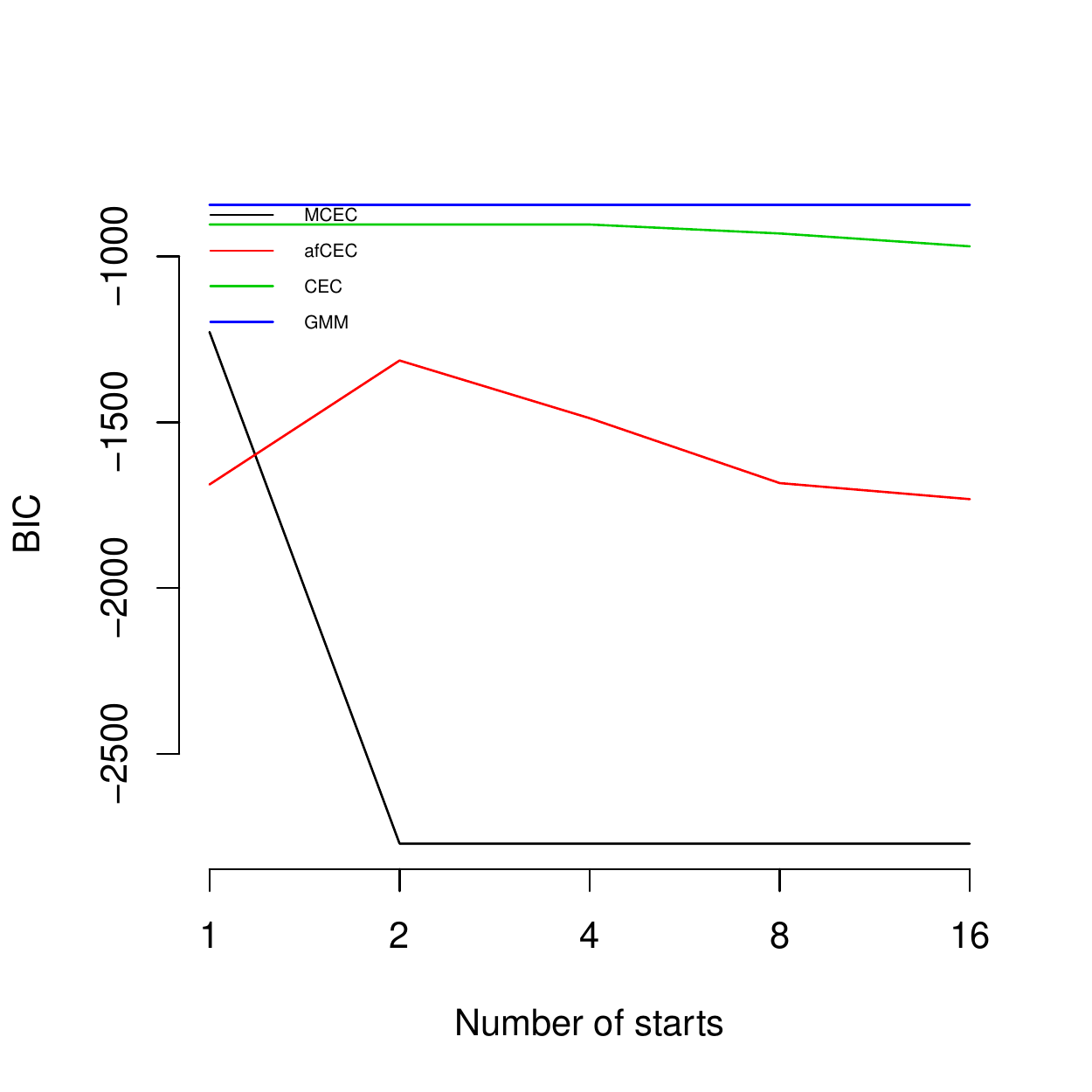} &
    		\includegraphics[width=0.24\textwidth]{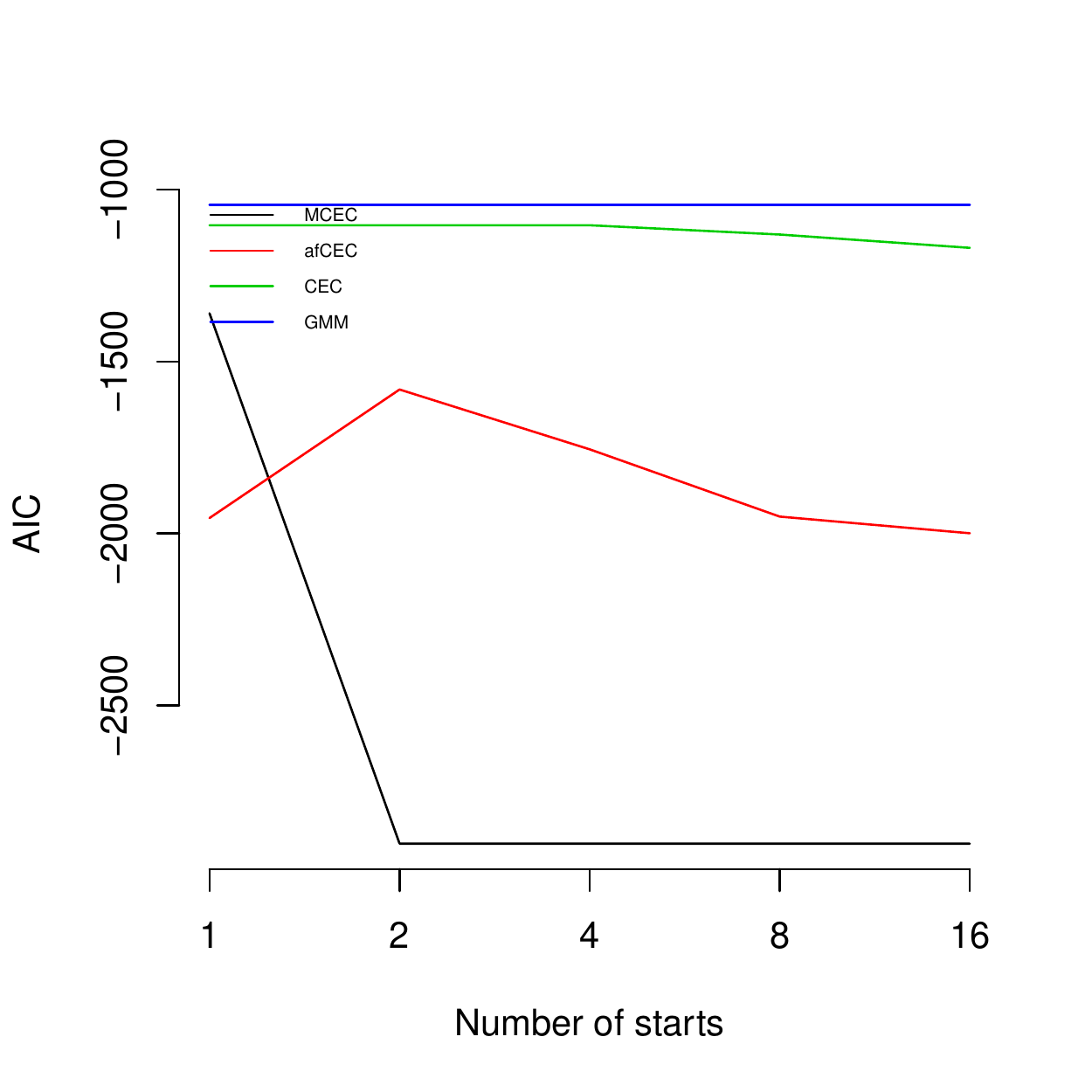} &
    		\includegraphics[width=0.24\textwidth]{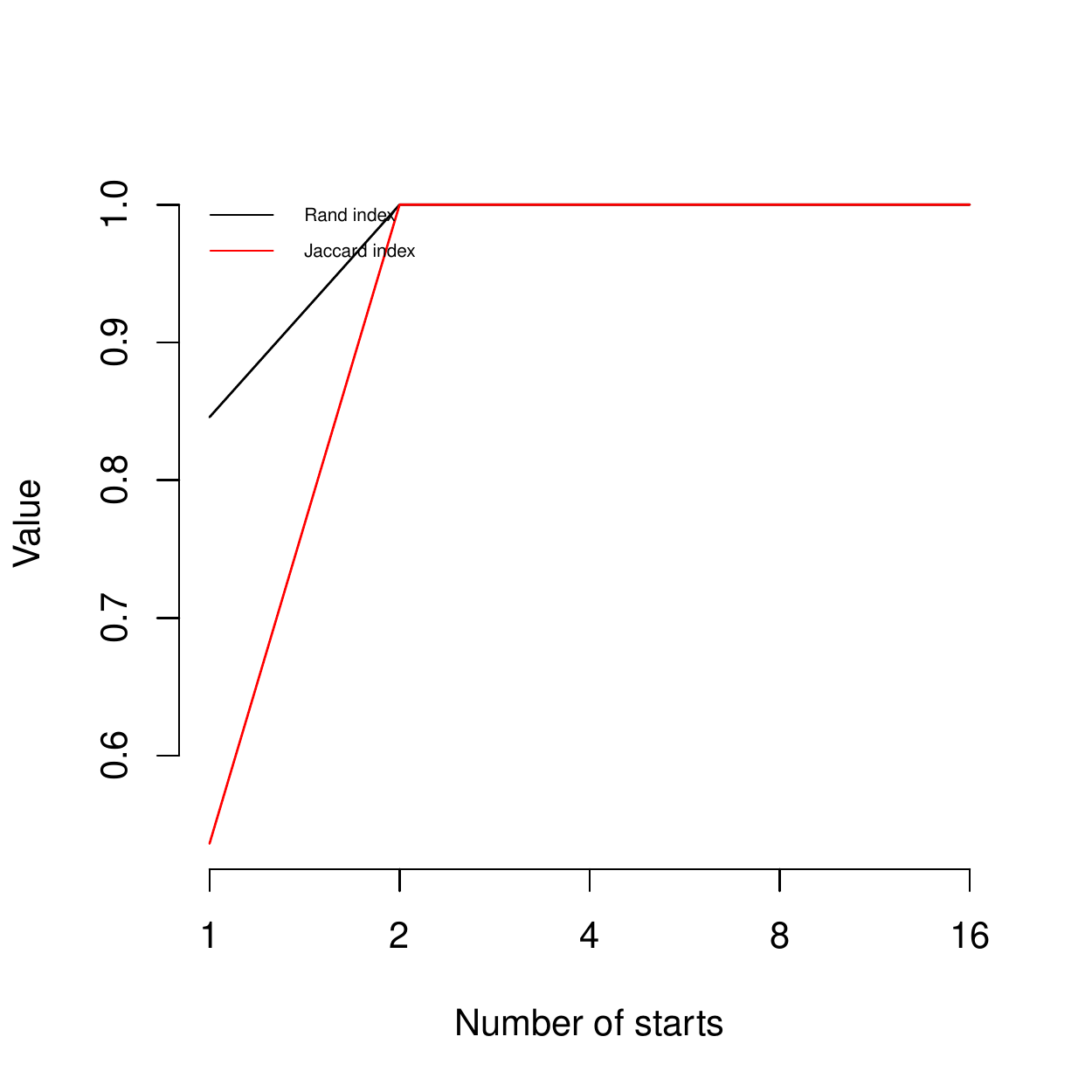} \\
    	\end{tabular}
	\caption{Experiment No. 1: Values of the performance metrics computed for the consecutive numbers of starts being the powers of two and the increasing number of clusters in the data set - Part 1. The charts in the first, second, and third columns show the computed values of MLE, BIC, and AIC, respectively, for the consecutive number of starts being the powers of two for all four clustering methods used in the comparison, ie. MCEC, afCEC, CEC, and GMM, while the last one represents the values of Rand and Jaccard indices computed similarly for the clustering obtained using the MCEC algorithm. The table's first, second, third, and fourth row correspondingly represents the test case with one, two, three, and four clusters. }
\end{figure}

\begin{figure}[H]
	\centering
	    \begin{tabular}{c@{}c@{}c@{}c@{}}
    		\includegraphics[width=0.24\textwidth]{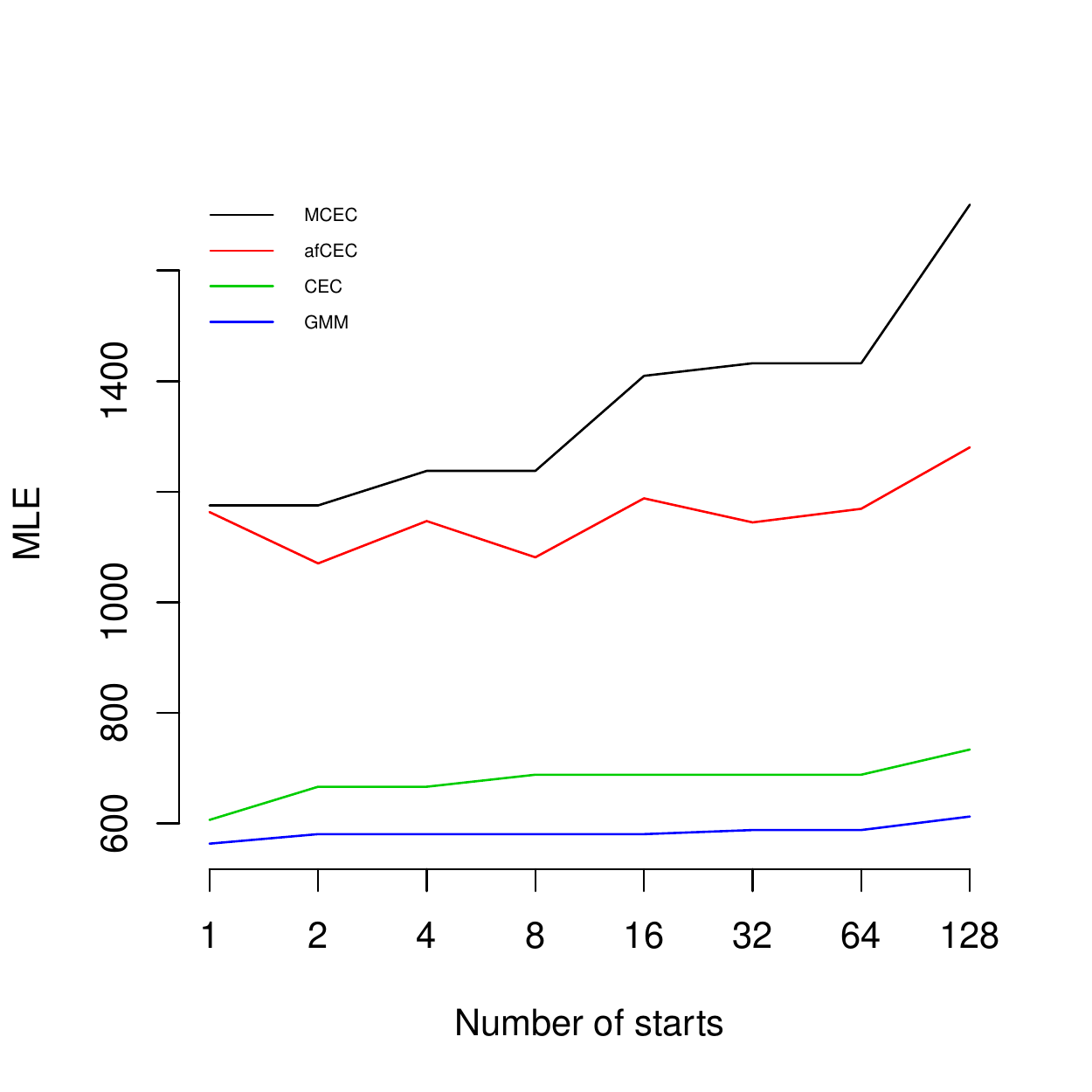} &
    		\includegraphics[width=0.24\textwidth]{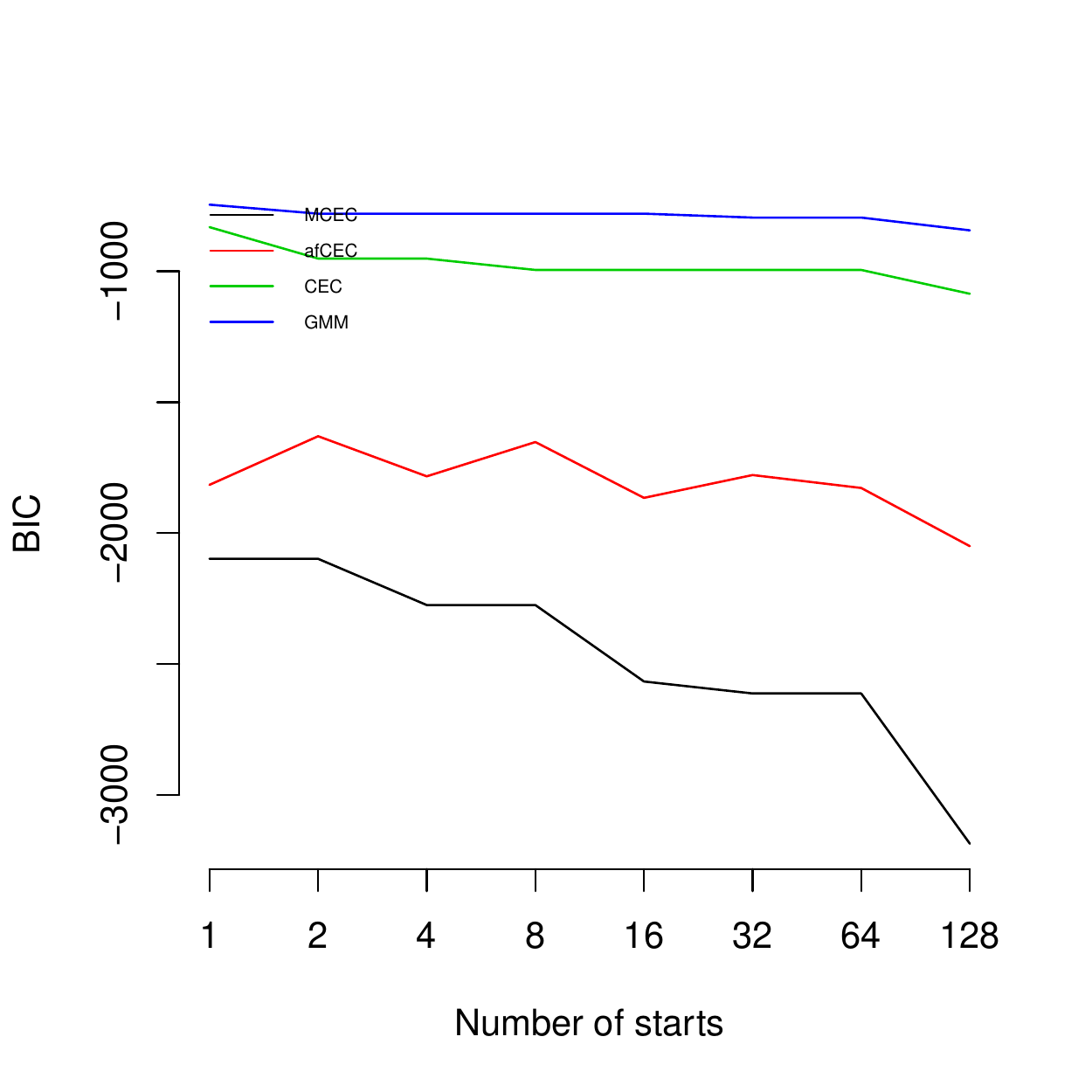} &
    		\includegraphics[width=0.24\textwidth]{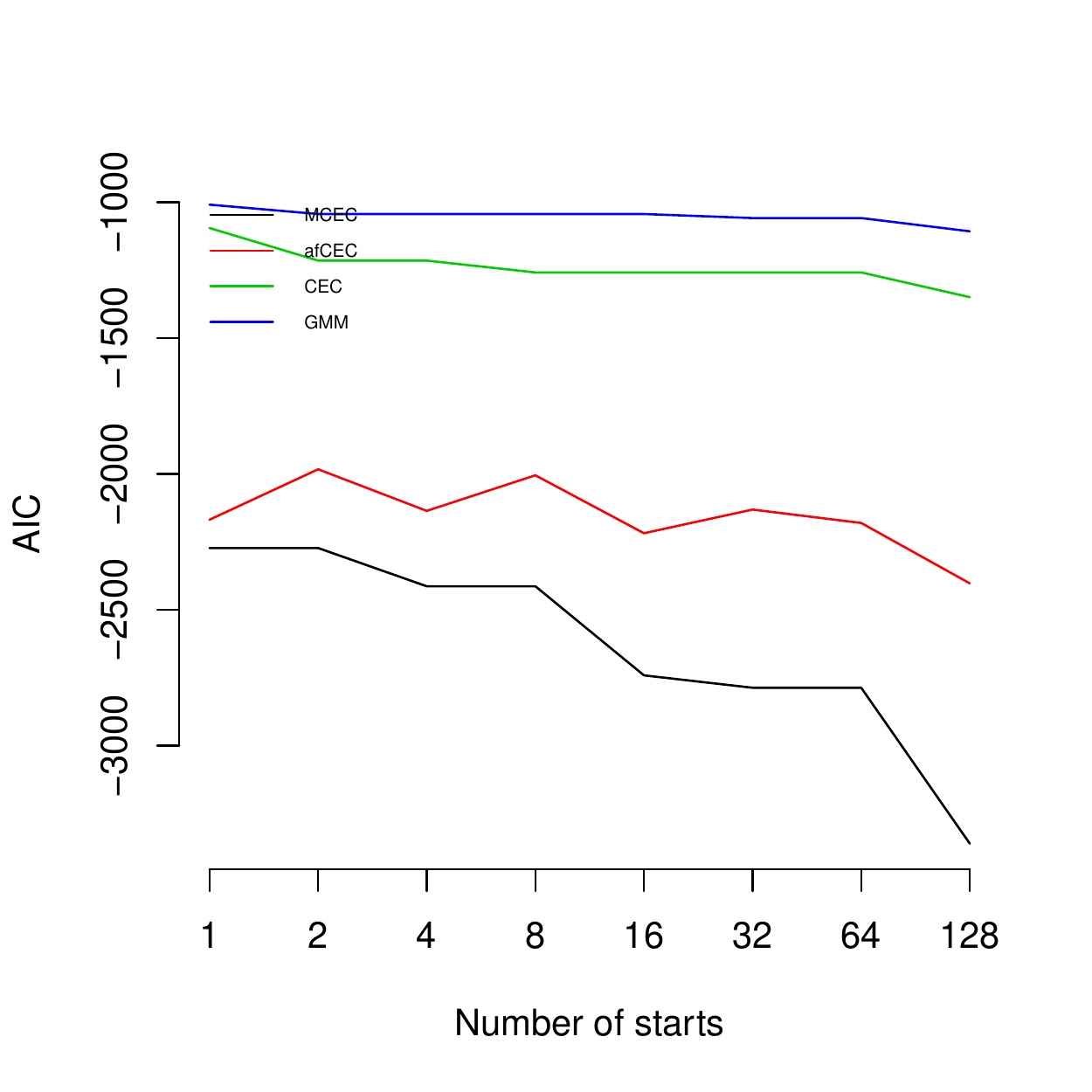} &
    		\includegraphics[width=0.24\textwidth]{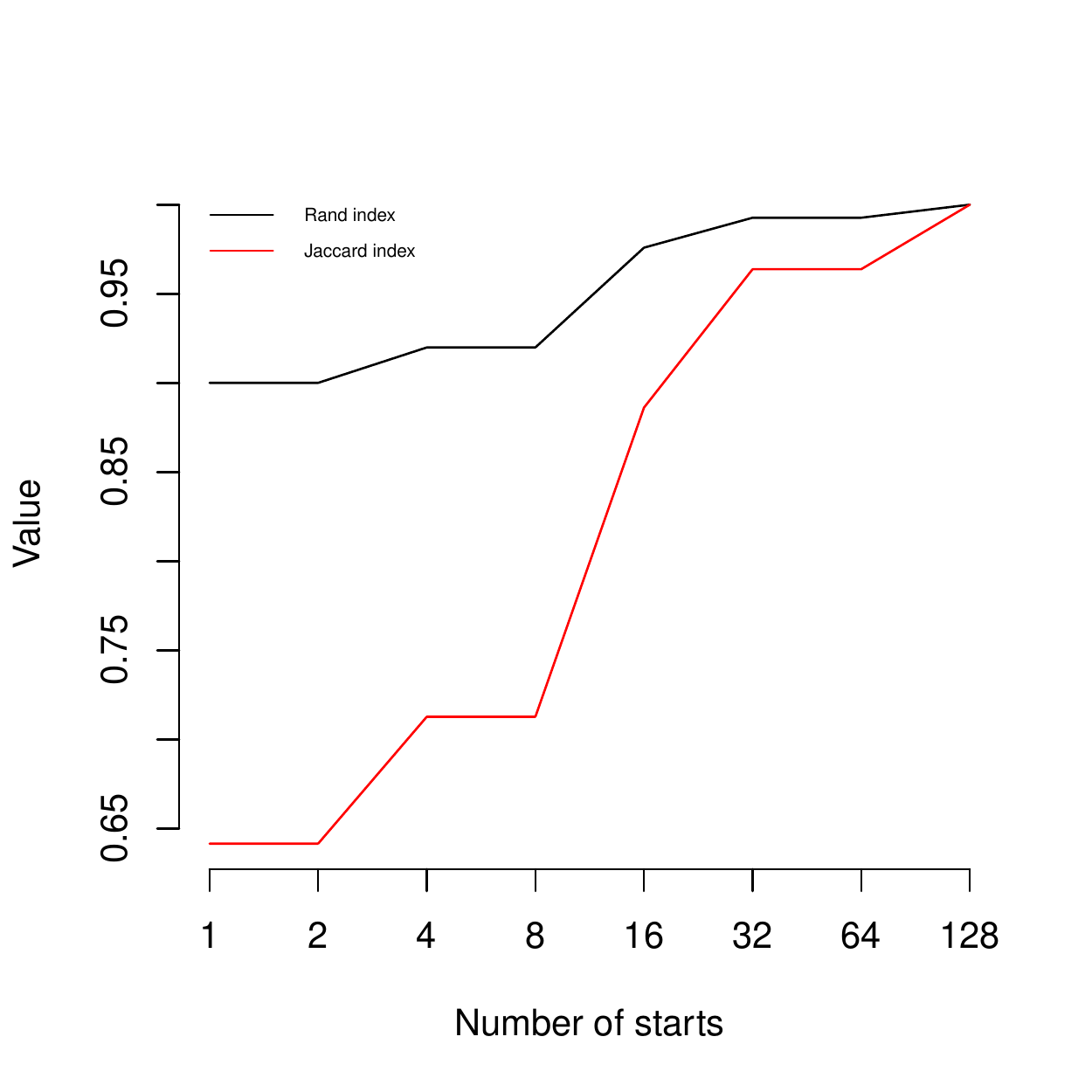} \\
    		\includegraphics[width=0.24\textwidth]{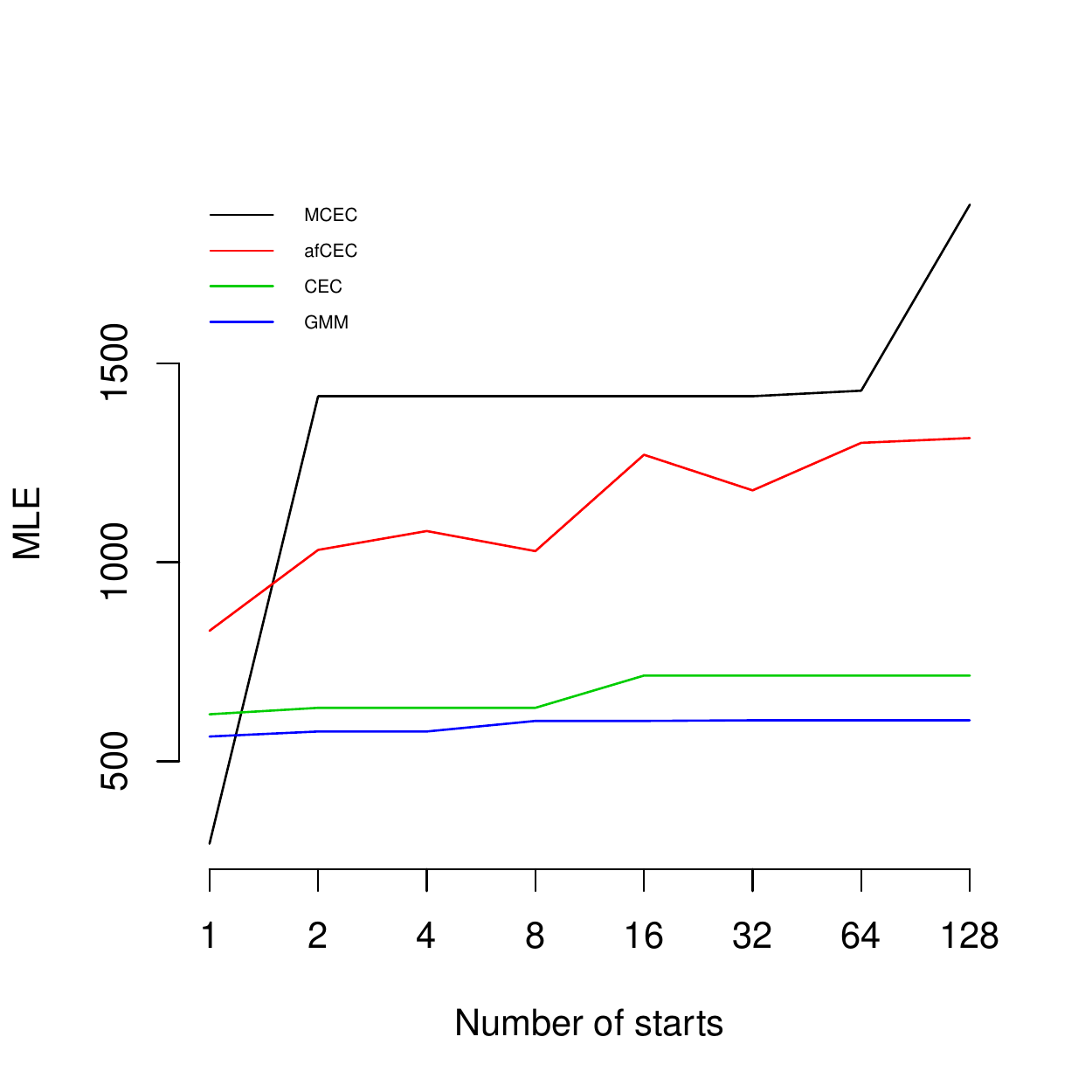} &
    		\includegraphics[width=0.24\textwidth]{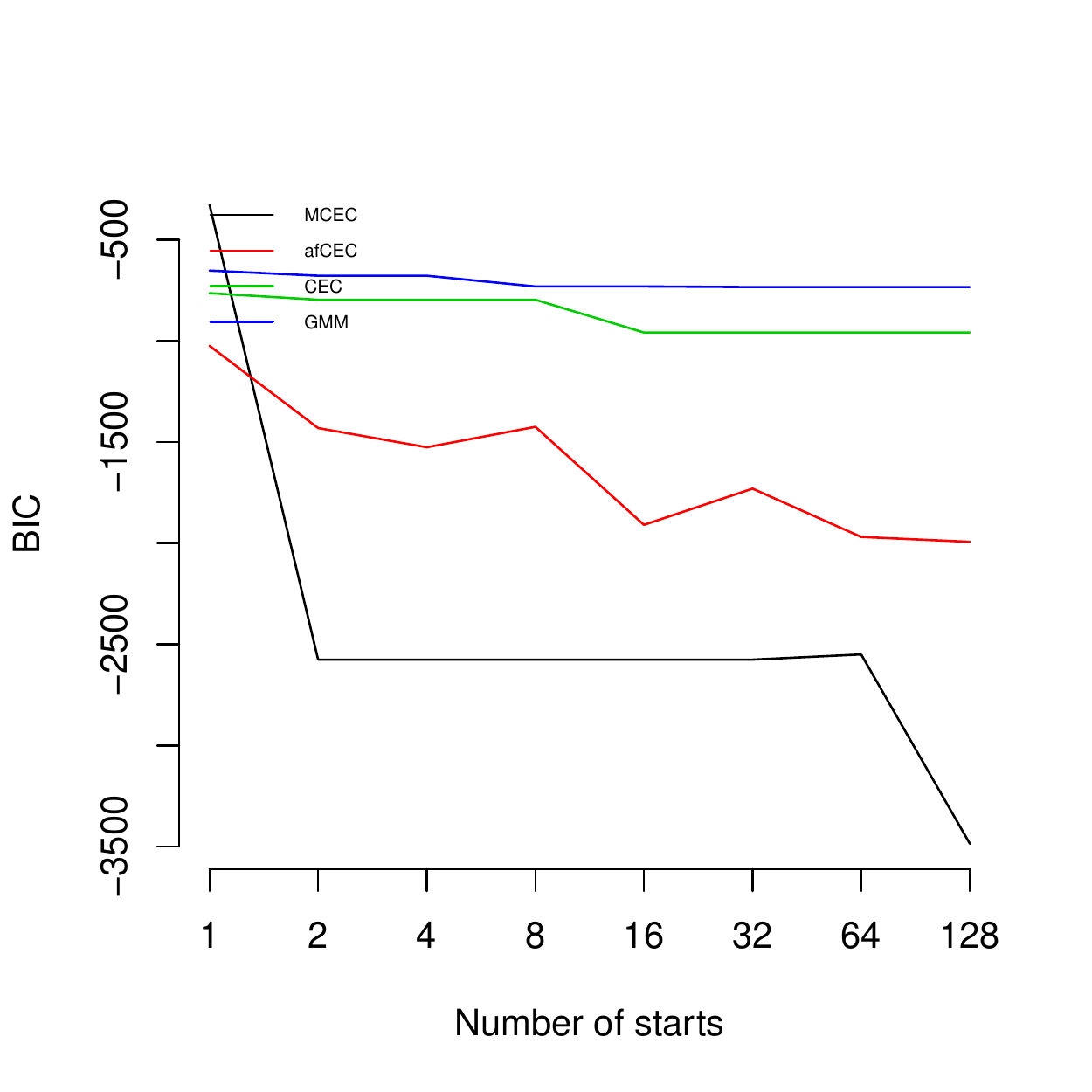} &
    		\includegraphics[width=0.24\textwidth]{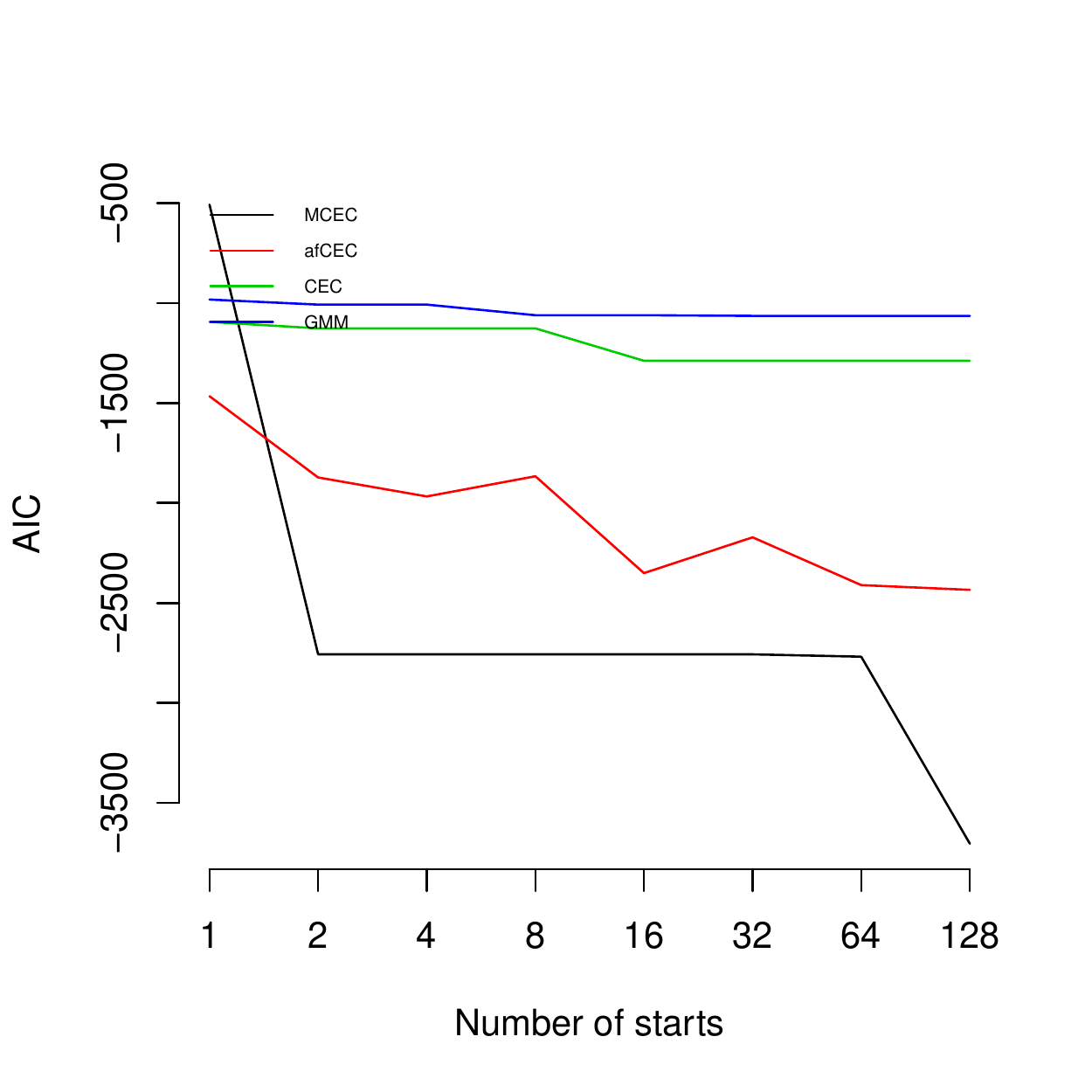} &
    		\includegraphics[width=0.24\textwidth]{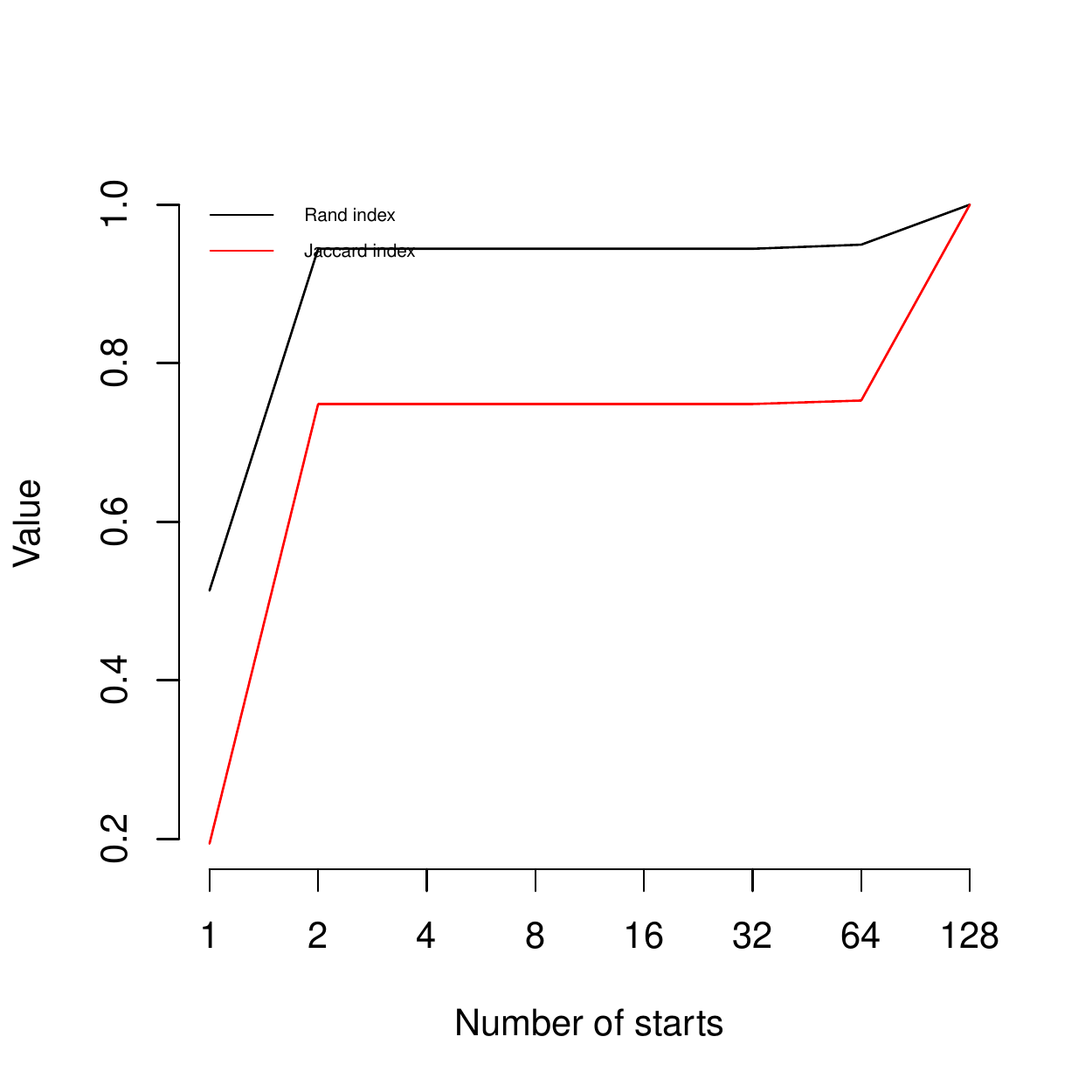} \\
    		\includegraphics[width=0.24\textwidth]{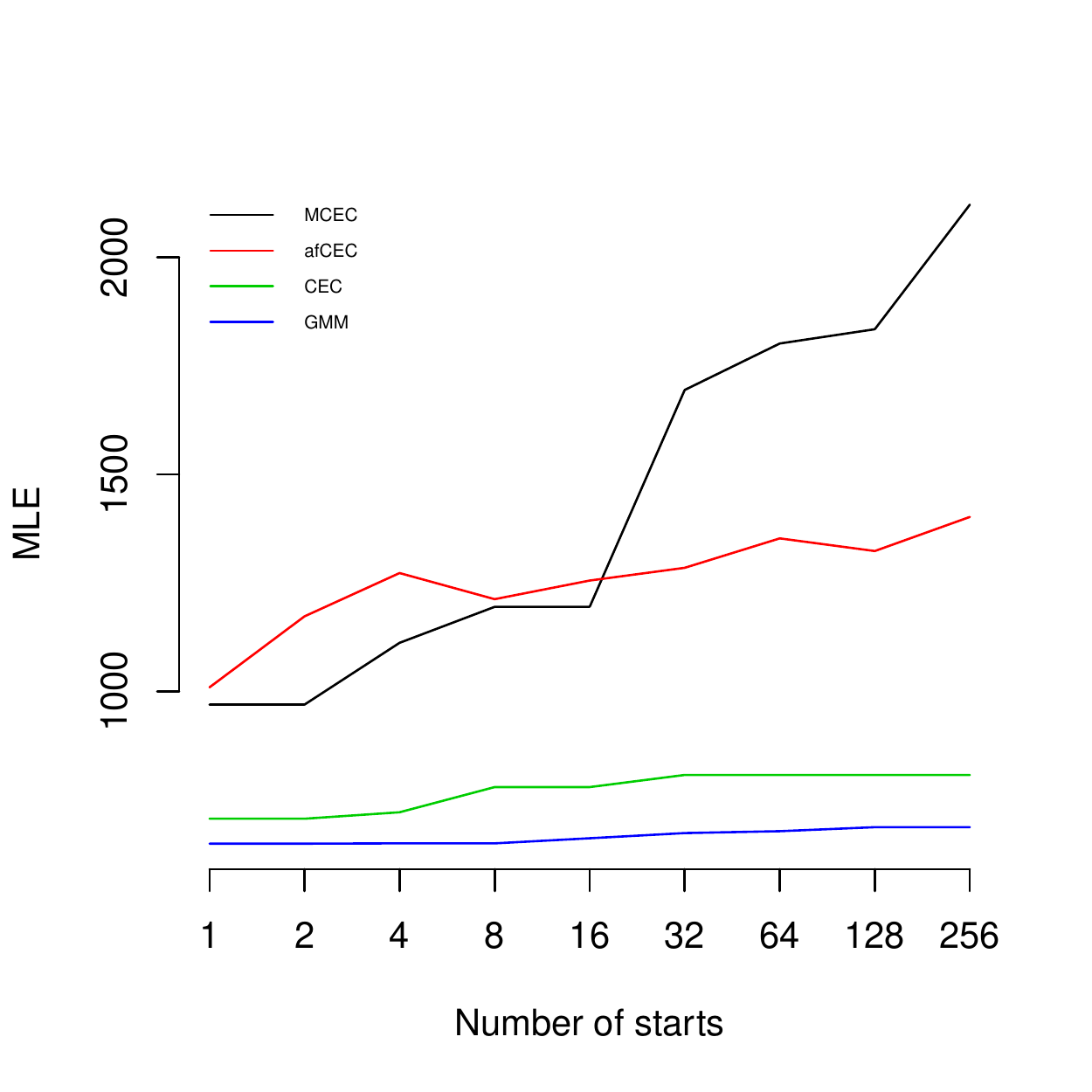} &
    		\includegraphics[width=0.24\textwidth]{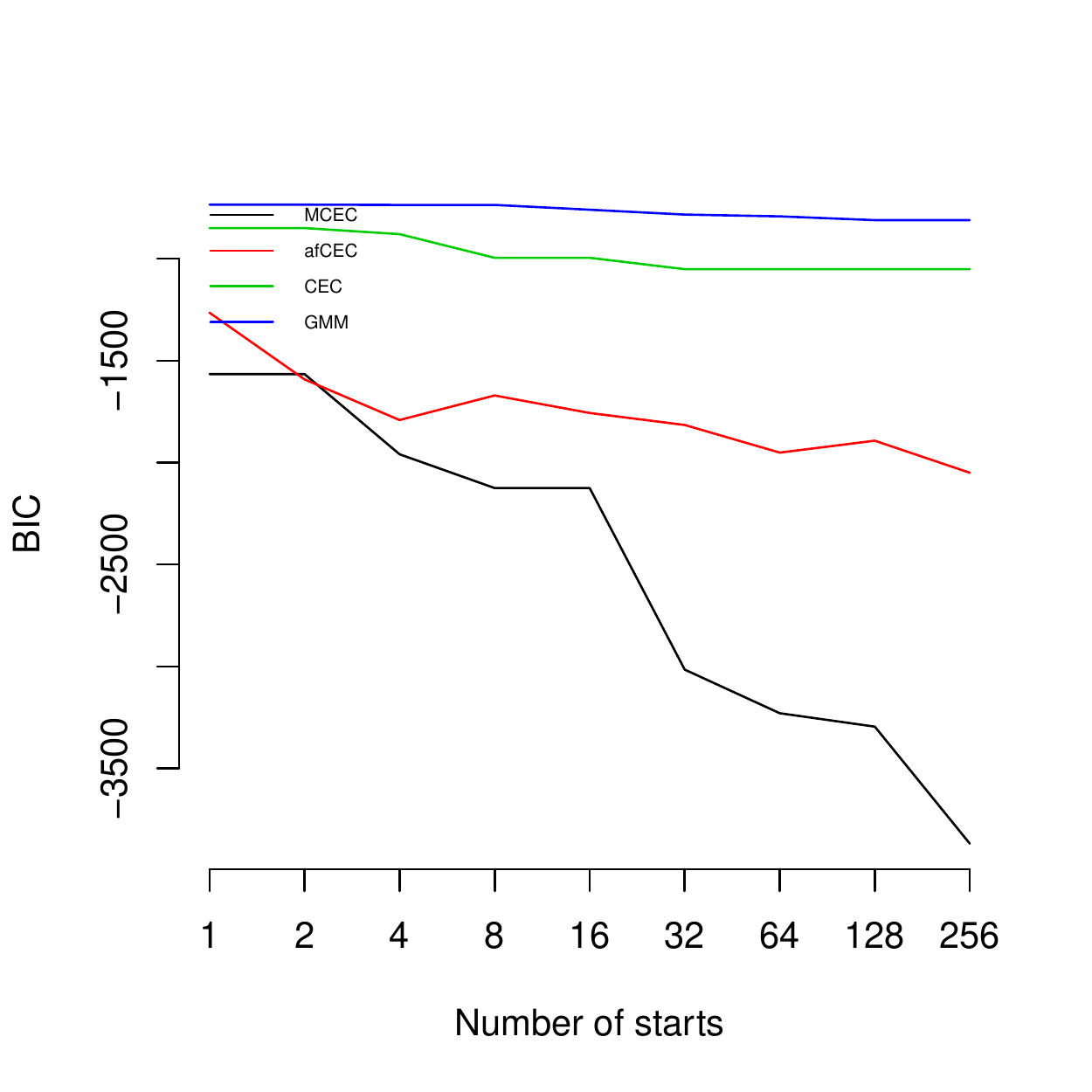} &
    		\includegraphics[width=0.24\textwidth]{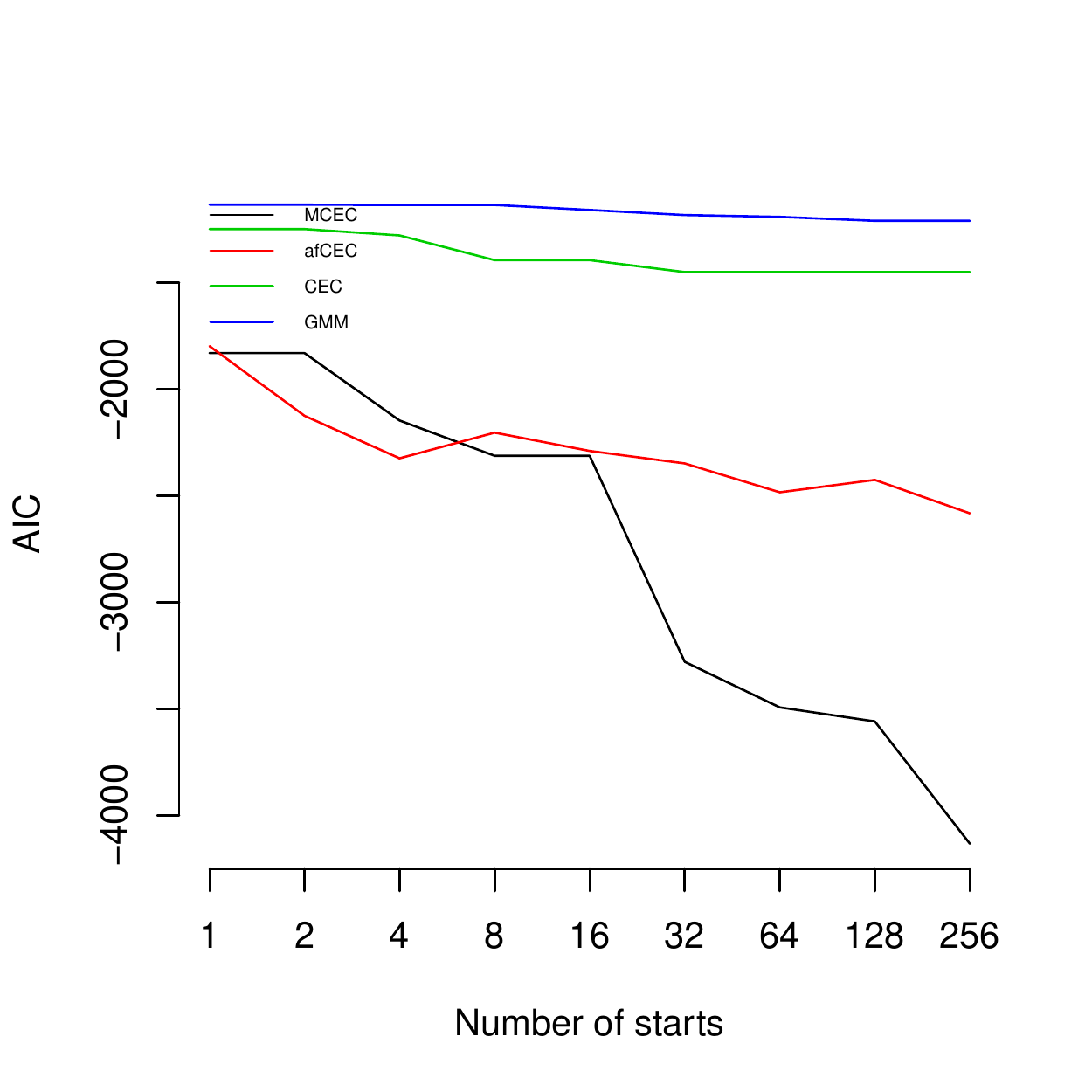} &
    		\includegraphics[width=0.24\textwidth]{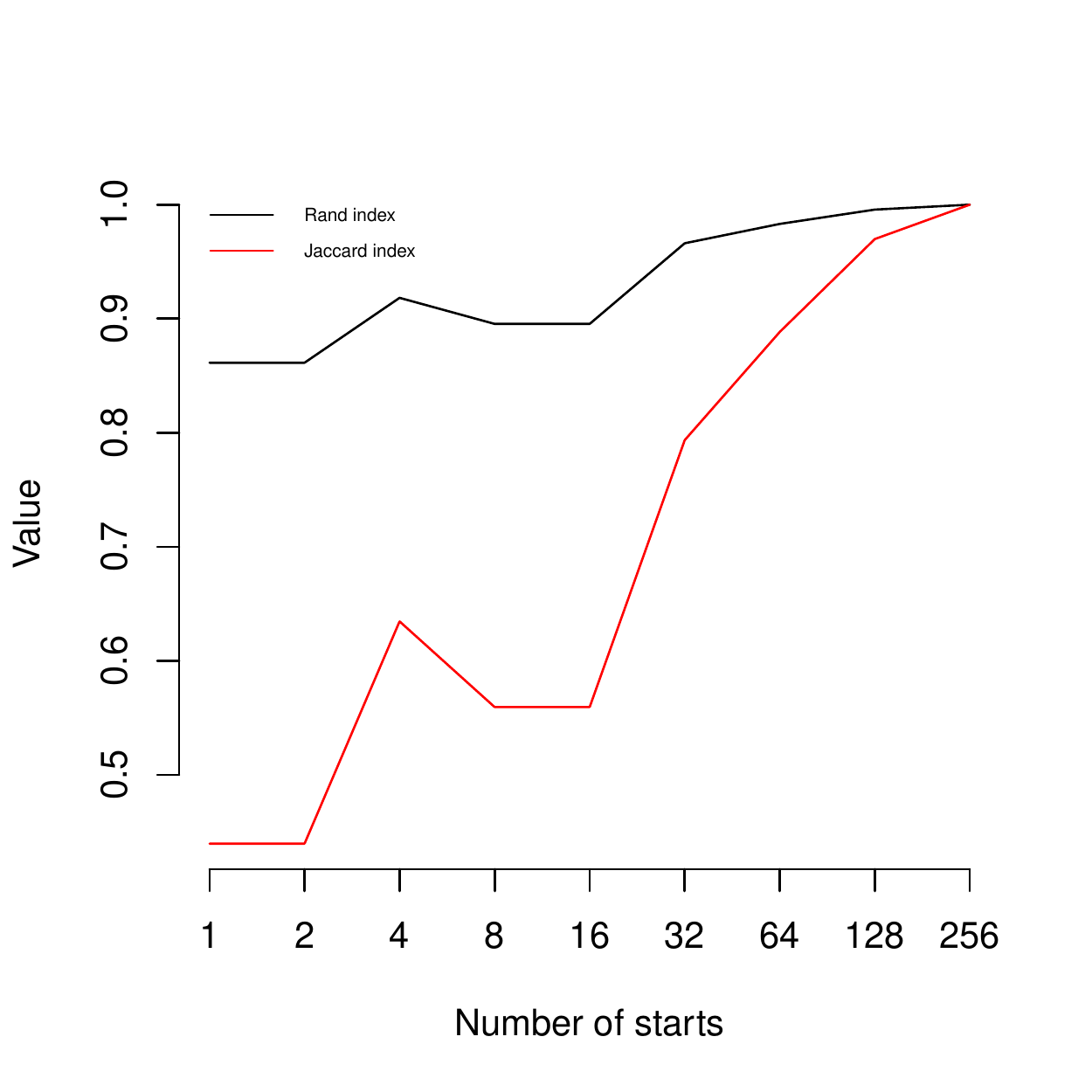} \\
    		\includegraphics[width=0.24\textwidth]{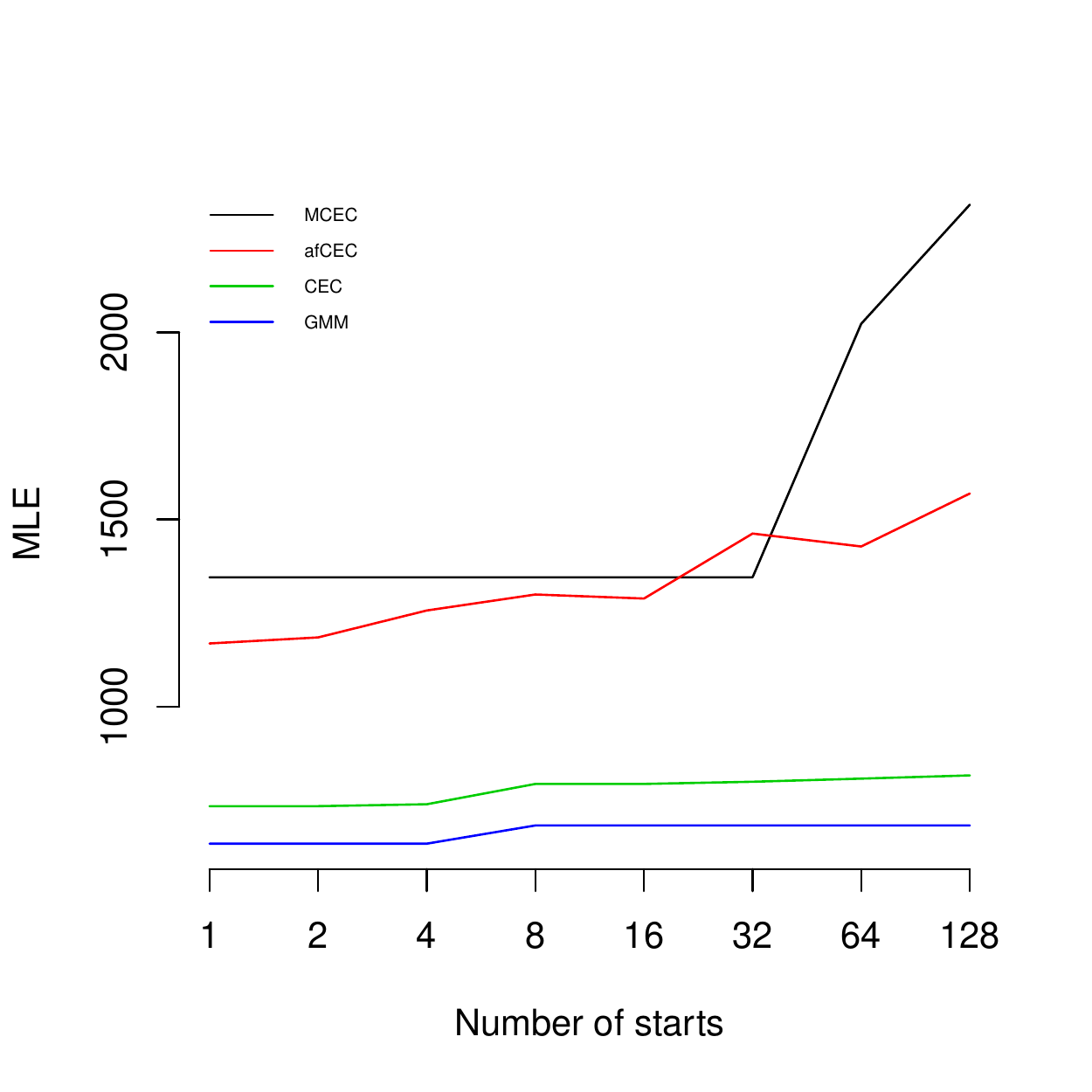} &
    		\includegraphics[width=0.24\textwidth]{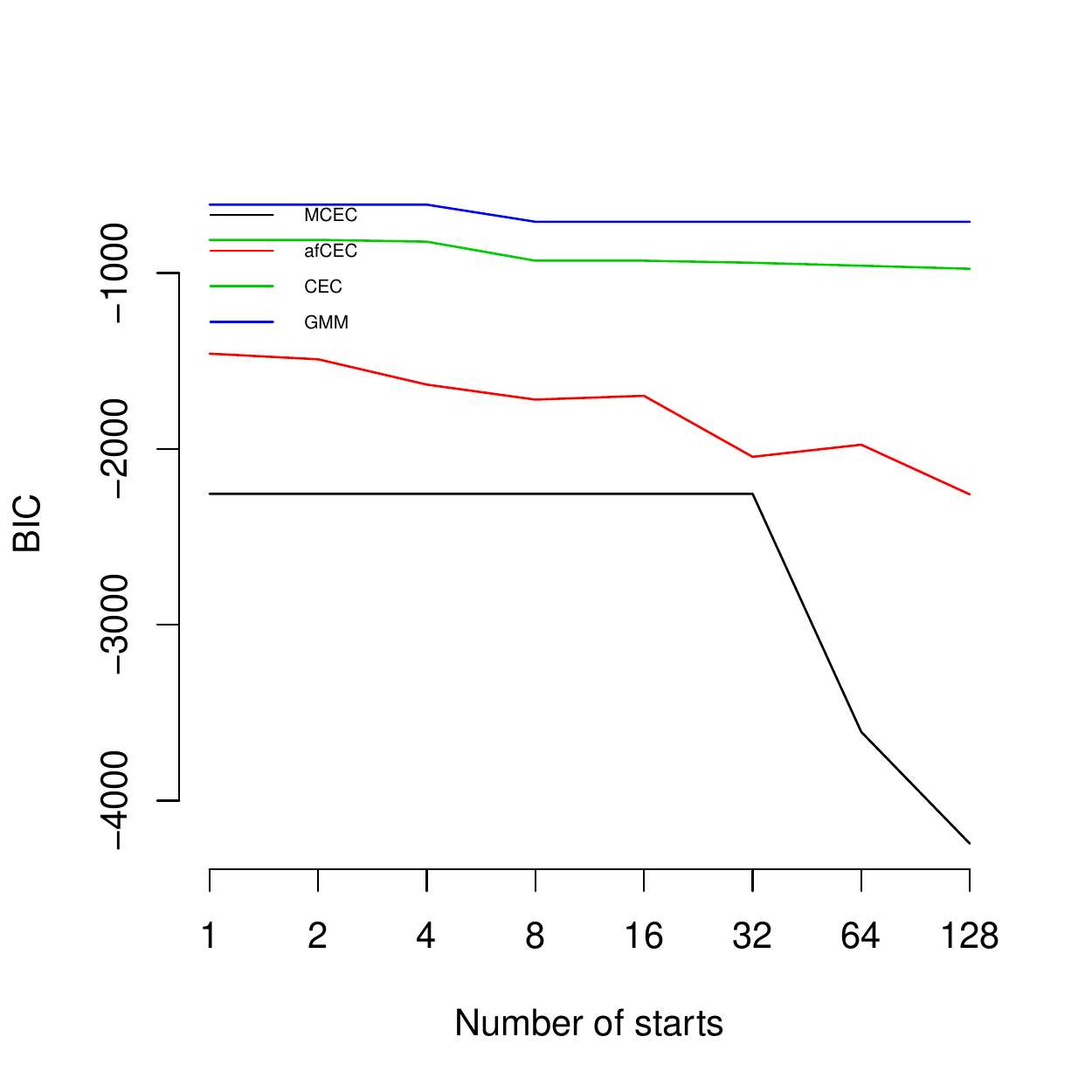} &
    		\includegraphics[width=0.24\textwidth]{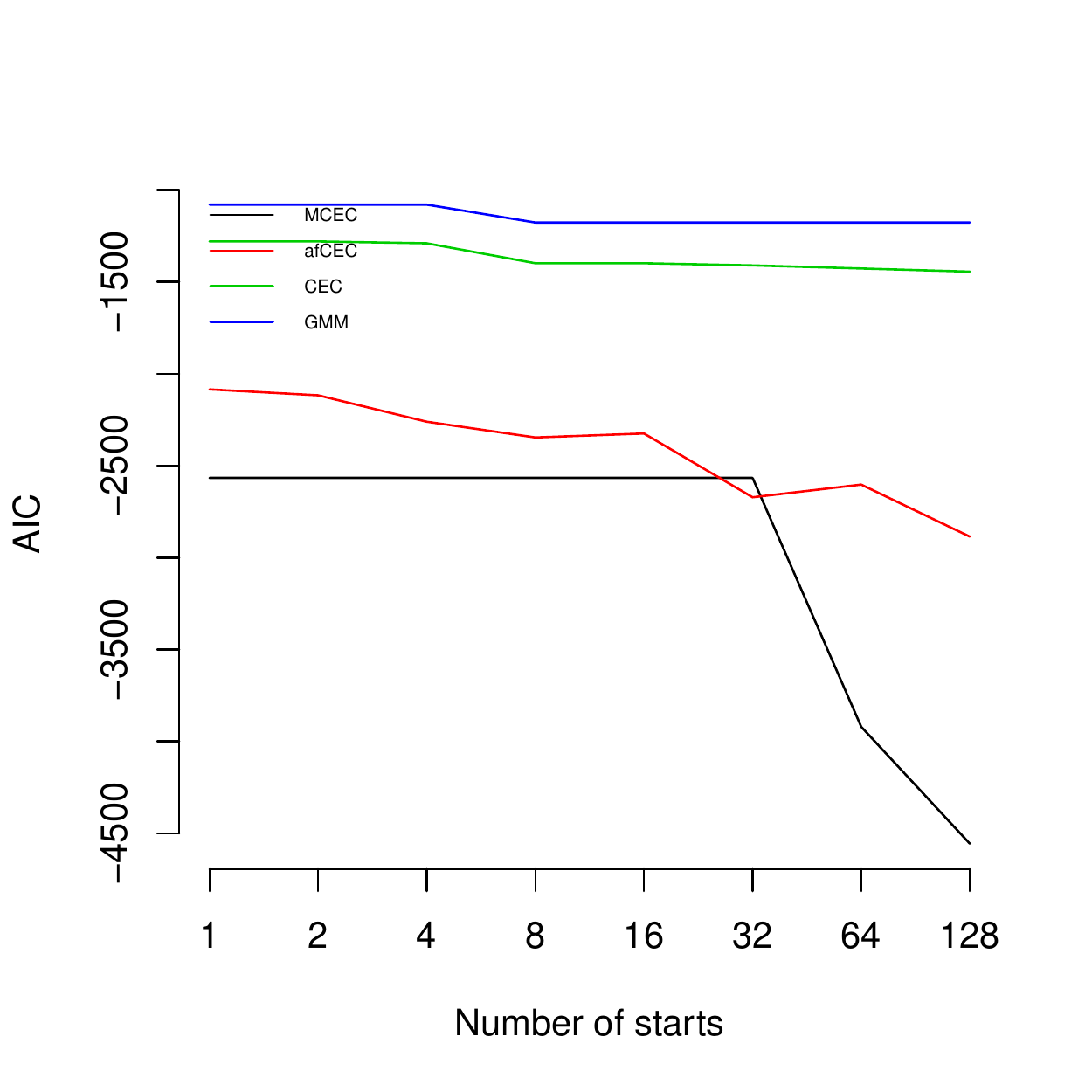} &
    		\includegraphics[width=0.24\textwidth]{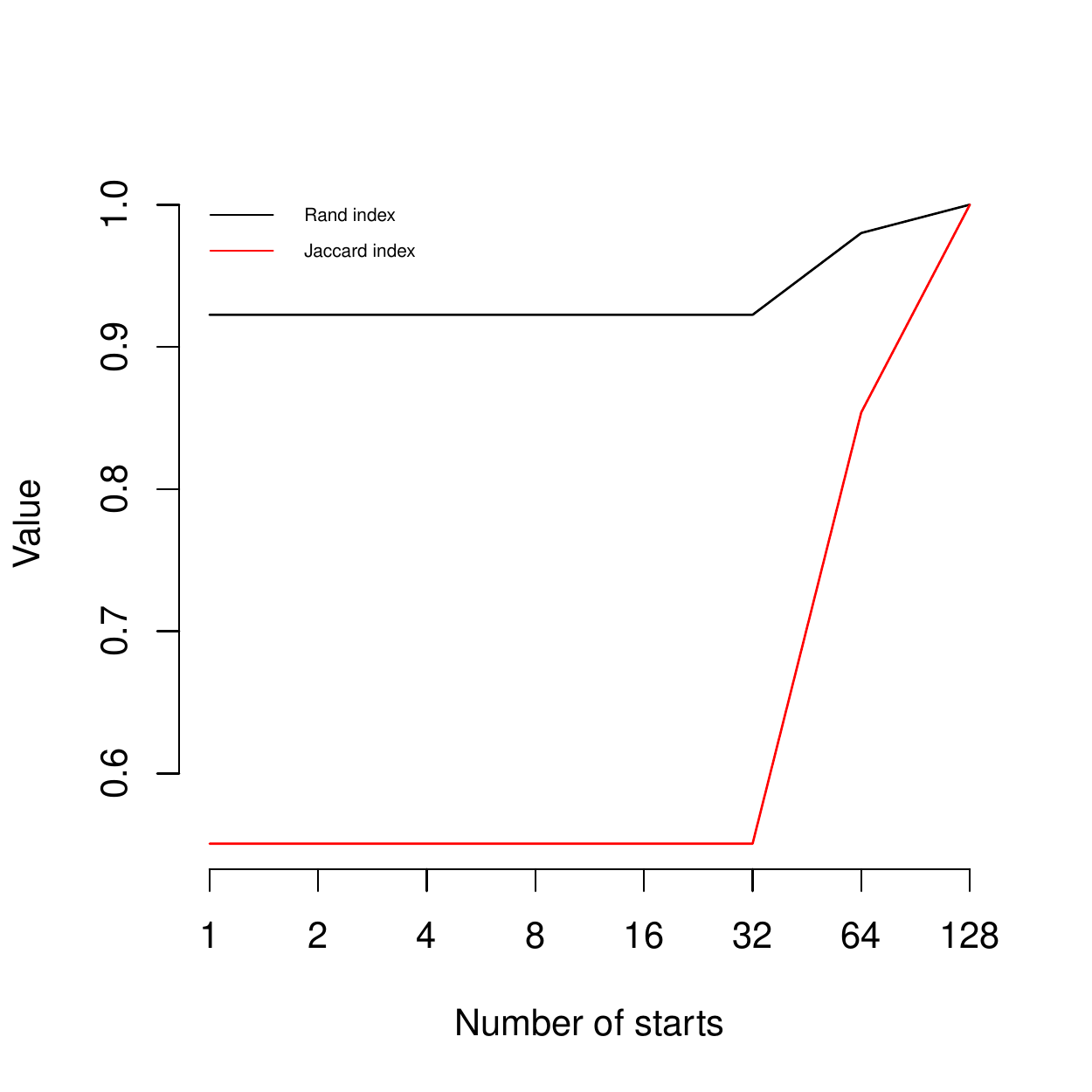} \\
    	\end{tabular}
	\caption{Experiment No. 1: Values of the performance metrics computed for the consecutive numbers of starts being the powers of two and the increasing number of clusters in the data set - Part 2. The charts in the first, second, and third columns show the computed values of MLE, BIC, and AIC, respectively, for the consecutive number of starts being the powers of two for all four clustering methods used in the comparison, ie. MCEC, afCEC, CEC, and GMM, while the last one represents the values of Rand and Jaccard indices computed similarly for the clustering obtained using the MCEC algorithm. The table's first, second, third, and fourth row represents the test case with five, six, seven, and eight clusters. }
\end{figure}


\begin{table}[H]
\caption{Experiment No. 2: Values of the performance metrics computed for the consecutive numbers of starts being the powers of two - Part 1}
\centering
\setlength\tabcolsep{1.5pt}
\resizebox{\columnwidth}{!}{%
\begin{tabular}{c|>{\tiny}c|>{\tiny}c>{\tiny}c>{\tiny}c>{\tiny}c>{\tiny}c|>{\tiny}c>{\tiny}c>{\tiny}c|>{\tiny}c>{\tiny}c>{\tiny}c|>{\tiny}c>{\tiny}c>{\tiny}c|}
\cline{2-16}
& \rule{0pt}{1.5cm} & \multicolumn{5}{c|}{\makecell{MCEC }} & \multicolumn{3}{c|}{\makecell{afCEC Hartigan}} & \multicolumn{3}{c|}{CEC} & \multicolumn{3}{c|}{GMM} \\
\cline{3-16}

& \rotatebox[origin=c]{90}{\rlap{\parbox{1.5cm}{\makecell{Number of \\ starts}}}} &
\multicolumn{1}{c|}{MLE} & \multicolumn{1}{c|}{BIC} & \multicolumn{1}{c|}{AIC} & \multicolumn{1}{c|}{\makecell{Rand \\ index}} & \multicolumn{1}{c|}{\makecell{Jaccard \\ index}} &
\multicolumn{1}{c|}{MLE} & \multicolumn{1}{c|}{BIC} & \multicolumn{1}{c|}{AIC} &
\multicolumn{1}{c|}{MLE} & \multicolumn{1}{c|}{BIC} & \multicolumn{1}{c|}{AIC} &
\multicolumn{1}{c|}{MLE} & \multicolumn{1}{c|}{BIC} & \multicolumn{1}{c|}{AIC} \\

\hline
\hline
\multicolumn{1}{|c|}{\multirow{5}{*}{\includegraphics[scale=0.07]{5-2-1.pdf}}} & 1 & \textbf{542.50} & \textbf{-1031.64} & \textbf{-1063.01} & 1.00 & 1.00 & 216.90 & -361.02 & -403.80 & 195.71 & -338.05 & -369.42 & 185.35 & -317.33 & -348.70 \\ 
\multicolumn{1}{|c|}{} & 2 & \textbf{542.50} & \textbf{-1031.64} & \textbf{-1063.01} & 1.00 & 1.00 & 229.02 & -385.26 & -428.04 & 195.71 & -338.05 & -369.42 & 185.35 & -317.33 & -348.70 \\ 
\multicolumn{1}{|c|}{} & 4 & \textbf{542.49} & \textbf{-1031.62} & \textbf{-1062.99} & 1.00 & 1.00 & 221.10 & -369.43 & -412.21 & 195.71 & -338.05 & -369.42 & 185.35 & -317.33 & -348.70 \\ 
\multicolumn{1}{|c|}{} & 8 & \textbf{542.49} & \textbf{-1031.62} & \textbf{-1062.99} & 1.00 & 1.00 & 229.02 & -385.26 & -428.04 & 195.71 & -338.05 & -369.42 & 185.35 & -317.33 & -348.70 \\ 
\multicolumn{1}{|c|}{} & 16 & \textbf{542.49} & \textbf{-1031.62} & \textbf{-1062.99} & 1.00 & 1.00 & 229.02 & -385.26 & -428.04 & 195.71 & -338.05 & -369.42 & 185.35 & -317.33 & -348.70 \\ 
\hline
\hline
\multicolumn{1}{|c|}{\multirow{9}{*}{\includegraphics[scale=0.07]{5-2-2.pdf}}} & 1 & \textbf{291.52} & \textbf{-455.50} & \textbf{-537.04} & 0.76 & 0.63 & 256.15 & -340.41 & -450.31 & 250.83 & -374.12 & -455.66 & 249.84 & -372.14 & -453.68 \\ 
\multicolumn{1}{|c|}{} & 2 & 291.52 & \textbf{-455.50} & \textbf{-537.04} & 0.76 & 0.63 & \textbf{291.91} & -411.92 & -521.82 & 250.83 & -374.12 & -455.66 & 249.84 & -372.14 & -453.68 \\ 
\multicolumn{1}{|c|}{} & 4 & \textbf{329.32} & \textbf{-531.10} & \textbf{-612.64} & 0.68 & 0.51 & 327.78 & -483.65 & -593.56 & 278.65 & -429.76 & -511.29 & 249.84 & -372.14 & -453.68 \\ 
\multicolumn{1}{|c|}{} & 8 & 329.32 & \textbf{-531.10} & -612.64 & 0.68 & 0.51 & \textbf{347.65} & -523.40 & \textbf{-633.30} & 278.65 & -429.76 & -511.29 & 263.39 & -399.24 & -480.78 \\ 
\multicolumn{1}{|c|}{} & 16 & 329.32 & \textbf{-531.10} & -612.64 & 0.68 & 0.51 & \textbf{338.19} & -504.47 & \textbf{-614.37} & 278.65 & -429.76 & -511.29 & 263.39 & -399.24 & -480.78 \\ 
\multicolumn{1}{|c|}{} & 32 & \textbf{365.79} & \textbf{-604.03} & \textbf{-685.57} & 0.58 & 0.47 & 347.88 & -523.86 & -633.76 & 288.05 & -448.57 & -530.11 & 263.39 & -399.24 & -480.78 \\ 
\multicolumn{1}{|c|}{} & 64 & \textbf{380.02} & \textbf{-632.50} & \textbf{-714.04} & 0.59 & 0.41 & 374.32 & -576.74 & -686.64 & 295.75 & -463.96 & -545.50 & 263.39 & -399.24 & -480.78 \\ 
\multicolumn{1}{|c|}{} & 128 & 254.33 & -381.11 & -462.65 & 0.68 & 0.54 & \textbf{374.86} & \textbf{-577.82} & \textbf{-687.72} & 307.84 & -488.14 & -569.68 & 263.39 & -399.24 & -480.78 \\ 
\multicolumn{1}{|c|}{} & 256 & \textbf{955.79} & \textbf{-1784.03} & \textbf{-1865.57} & 1.00 & 1.00 & 381.74 & -591.58 & -701.48 & 307.84 & -488.14 & -569.68 & 268.87 & -410.20 & -491.74 \\ 
\hline
\hline
\multicolumn{1}{|c|}{\multirow{17}{*}{\includegraphics[scale=0.07]{5-2-3.pdf}}} & 1 & \textbf{308.89} & \textbf{-409.51} & \textbf{-547.78} & 0.70 & 0.40 & 270.37 & -261.07 & -446.75 & 204.44 & -200.60 & -338.87 & 219.98 & -231.69 & -369.97 \\ 
\multicolumn{1}{|c|}{} & 2 & \textbf{308.89} & \textbf{-409.51} & \textbf{-547.78} & 0.70 & 0.40 & 291.48 & -303.28 & -488.96 & 204.44 & -200.60 & -338.87 & 233.11 & -257.95 & -396.23 \\ 
\multicolumn{1}{|c|}{} & 4 & \textbf{335.12} & \textbf{-461.97} & \textbf{-600.25} & 0.74 & 0.44 & 291.48 & -303.28 & -488.96 & 201.25 & -194.24 & -332.51 & 233.11 & -257.95 & -396.23 \\ 
\multicolumn{1}{|c|}{} & 8 & \textbf{335.12} & \textbf{-461.97} & \textbf{-600.25} & 0.74 & 0.44 & 321.50 & -363.32 & -549.00 & 218.26 & -228.24 & -366.52 & 233.11 & -257.95 & -396.23 \\ 
\multicolumn{1}{|c|}{} & 16 & 335.12 & -461.97 & -600.25 & 0.74 & 0.44 & \textbf{374.20} & \textbf{-468.73} & \textbf{-654.41} & 253.84 & -299.40 & -437.68 & 233.11 & -257.95 & -396.23 \\ 
\multicolumn{1}{|c|}{} & 32 & 335.12 & -461.97 & -600.25 & 0.74 & 0.44 & \textbf{374.20} & \textbf{-468.73} & \textbf{-654.41} & 253.84 & -299.40 & -437.68 & 233.11 & -257.95 & -396.23 \\ 
\multicolumn{1}{|c|}{} & 64 & \textbf{602.37} & \textbf{-996.47} & \textbf{-1134.74} & 0.91 & 0.75 & 374.20 & -468.73 & -654.41 & 253.84 & -299.40 & -437.68 & 233.11 & -257.95 & -396.23 \\ 
\multicolumn{1}{|c|}{} & 128 & \textbf{602.37} & \textbf{-996.47} & \textbf{-1134.74} & 0.91 & 0.75 & 377.32 & -474.96 & -660.64 & 253.84 & -299.40 & -437.68 & 233.11 & -257.95 & -396.23 \\ 
\multicolumn{1}{|c|}{} & 256 & \textbf{674.43} & \textbf{-1140.58} & \textbf{-1278.85} & 0.83 & 0.60 & 405.84 & -531.99 & -717.67 & 262.66 & -317.04 & -455.32 & 233.11 & -257.95 & -396.23 \\ 
\multicolumn{1}{|c|}{} & 512 & \textbf{674.43} & \textbf{-1140.58} & \textbf{-1278.85} & 0.83 & 0.60 & 405.84 & -531.99 & -717.67 & 262.66 & -317.04 & -455.32 & 233.12 & -257.96 & -396.23 \\ 
\multicolumn{1}{|c|}{} & 1024 & \textbf{674.43} & \textbf{-1140.58} & \textbf{-1278.85} & 0.83 & 0.60 & 405.84 & -531.99 & -717.67 & 262.66 & -317.04 & -455.32 & 233.12 & -257.96 & -396.23 \\ 
\multicolumn{1}{|c|}{} & 2048 & \textbf{674.43} & \textbf{-1140.58} & \textbf{-1278.85} & 0.83 & 0.60 & 419.20 & -558.72 & -744.40 & 257.16 & -306.04 & -444.31 & 233.12 & -257.96 & -396.23 \\ 
\multicolumn{1}{|c|}{} & 4096 & \textbf{769.91} & \textbf{-1331.56} & \textbf{-1469.83} & 0.88 & 0.70 & 426.38 & -573.07 & -758.75 & 261.29 & -314.32 & -452.59 & 233.12 & -257.96 & -396.23 \\ 
\multicolumn{1}{|c|}{} & 8192 & \textbf{935.62} & \textbf{-1662.96} & \textbf{-1801.24} & 1.00 & 1.00 & 426.38 & -573.07 & -758.75 & 261.29 & -314.32 & -452.59 & 233.12 & -257.96 & -396.23 \\ 
\multicolumn{1}{|c|}{} & 16384 & \textbf{935.62} & \textbf{-1662.96} & \textbf{-1801.24} & 1.00 & 1.00 & 441.16 & -602.64 & -788.32 & 261.29 & -314.32 & -452.59 & 233.12 & -257.96 & -396.23 \\ 
\multicolumn{1}{|c|}{} & 32768 & \textbf{1004.11} & \textbf{-1799.95} & \textbf{-1938.22} & 1.00 & 1.00 & 451.79 & -623.90 & -809.59 & 261.29 & -314.32 & -452.59 & 233.12 & -257.96 & -396.23 \\ 
\multicolumn{1}{|c|}{} & 65536 & \textbf{1261.75} & \textbf{-2315.23} & \textbf{-2453.50} & 1.00 & 1.00 & 451.79 & -623.90 & -809.59 & 261.29 & -314.32 & -452.59 & 233.12 & -257.96 & -396.23 \\ 
\hline
\hline

\end{tabular}
}
\end{table}


\begin{table}[H]
\caption{Experiment No. 2: Values of the performance metrics computed for the consecutive numbers of starts being the powers of two - Part 2}
\centering
\setlength\tabcolsep{1.5pt}
\resizebox{\columnwidth}{!}{%
\begin{tabular}{c|>{\tiny}c|>{\tiny}c>{\tiny}c>{\tiny}c>{\tiny}c>{\tiny}c|>{\tiny}c>{\tiny}c>{\tiny}c|>{\tiny}c>{\tiny}c>{\tiny}c|>{\tiny}c>{\tiny}c>{\tiny}c|}
\cline{2-16}
& \rule{0pt}{1.5cm} & \multicolumn{5}{c|}{\makecell{MCEC}} & \multicolumn{3}{c|}{\makecell{afCEC Hartigan}} & \multicolumn{3}{c|}{CEC} & \multicolumn{3}{c|}{GMM} \\
\cline{3-16}

& \rotatebox[origin=c]{90}{\rlap{\parbox{1.5cm}{\makecell{Number of \\ starts}}}} &
\multicolumn{1}{c|}{MLE} & \multicolumn{1}{c|}{BIC} & \multicolumn{1}{c|}{AIC} & \multicolumn{1}{c|}{\makecell{Rand \\ index}} & \multicolumn{1}{c|}{\makecell{Jaccard \\ index}} &
\multicolumn{1}{c|}{MLE} & \multicolumn{1}{c|}{BIC} & \multicolumn{1}{c|}{AIC} &
\multicolumn{1}{c|}{MLE} & \multicolumn{1}{c|}{BIC} & \multicolumn{1}{c|}{AIC} &
\multicolumn{1}{c|}{MLE} & \multicolumn{1}{c|}{BIC} & \multicolumn{1}{c|}{AIC} \\

\hline
\hline
\multicolumn{1}{|c|}{\multirow{22}{*}{\includegraphics[scale=0.07]{5-2-4.pdf}}} & 1 & 160.31 & -27.42 & -226.62 & 0.74 & 0.32 & 166.87 & 59.28 & -207.74 & \textbf{176.57} & \textbf{-59.94} & \textbf{-259.14} & 90.75 & 111.69 & -87.51 \\ 
\multicolumn{1}{|c|}{} & 2 & -13775.74 & 27844.69 & 27645.49 & 0.75 & 0.40 & \textbf{200.16} & -7.30 & \textbf{-274.32} & 176.57 & \textbf{-59.94} & -259.14 & 124.37 & 44.47 & -154.73 \\ 
\multicolumn{1}{|c|}{} & 4 & 185.27 & \textbf{-77.34} & -276.54 & 0.67 & 0.31 & \textbf{226.52} & -60.02 & \textbf{-327.04} & 176.57 & -59.94 & -259.14 & 151.37 & -9.55 & -208.75 \\ 
\multicolumn{1}{|c|}{} & 8 & 185.27 & -77.34 & -276.54 & 0.67 & 0.31 & \textbf{272.01} & \textbf{-151.01} & \textbf{-418.02} & 176.57 & -59.94 & -259.14 & 151.37 & -9.55 & -208.75 \\ 
\multicolumn{1}{|c|}{} & 16 & 185.27 & -77.34 & -276.54 & 0.67 & 0.31 & \textbf{283.57} & \textbf{-174.13} & \textbf{-441.15} & 185.78 & -78.36 & -277.56 & 151.37 & -9.55 & -208.75 \\ 
\multicolumn{1}{|c|}{} & 32 & $\approx -1.23 \cdot 10^{14}$ & $\approx 2.46 \cdot 10^{14}$ & $\approx 2.46 \cdot 10^{14}$ & 0.79 & 0.43 & \textbf{283.57} & \textbf{-174.13} & \textbf{-441.15} & 192.41 & -91.62 & -290.82 & 151.37 & -9.55 & -208.75 \\ 
\multicolumn{1}{|c|}{} & 64 & \textbf{767.41} & \textbf{-1241.62} & \textbf{-1440.83} & 0.85 & 0.57 & 356.76 & -320.50 & -587.51 & 195.09 & -96.98 & -296.18 & 151.37 & -9.55 & -208.75 \\ 
\multicolumn{1}{|c|}{} & 128 & \textbf{767.41} & \textbf{-1241.62} & \textbf{-1440.83} & 0.85 & 0.57 & 356.76 & -320.50 & -587.51 & 199.42 & -105.65 & -304.85 & 151.37 & -9.55 & -208.75 \\ 
\multicolumn{1}{|c|}{} & 256 & \textbf{767.41} & \textbf{-1241.62} & \textbf{-1440.83} & 0.85 & 0.57 & 356.76 & -320.50 & -587.51 & 199.42 & -105.65 & -304.85 & 151.37 & -9.55 & -208.75 \\ 
\multicolumn{1}{|c|}{} & 512 & \textbf{767.41} & \textbf{-1241.62} & \textbf{-1440.83} & 0.85 & 0.57 & 382.36 & -371.70 & -638.71 & 245.16 & -197.12 & -396.32 & 151.37 & -9.55 & -208.75 \\ 
\multicolumn{1}{|c|}{} & 1024 & \textbf{767.41} & \textbf{-1241.62} & \textbf{-1440.83} & 0.85 & 0.57 & 382.36 & -371.70 & -638.71 & 245.16 & -197.12 & -396.32 & 151.37 & -9.55 & -208.75 \\ 
\multicolumn{1}{|c|}{} & 2048 & \textbf{767.41} & \textbf{-1241.62} & \textbf{-1440.83} & 0.85 & 0.57 & 397.35 & -401.70 & -668.71 & 245.16 & -197.12 & -396.32 & 151.37 & -9.55 & -208.75 \\ 
\multicolumn{1}{|c|}{} & 4096 & \textbf{766.93} & \textbf{-1240.65} & \textbf{-1439.85} & 0.94 & 0.77 & 397.35 & -401.70 & -668.71 & 245.16 & -197.12 & -396.32 & 152.91 & -12.63 & -211.83 \\ 
\multicolumn{1}{|c|}{} & 8192 & \textbf{766.93} & \textbf{-1240.65} & \textbf{-1439.85} & 0.94 & 0.77 & 397.35 & -401.70 & -668.71 & 245.16 & -197.12 & -396.32 & 152.91 & -12.63 & -211.83 \\ 
\multicolumn{1}{|c|}{} & 16384 & \textbf{766.93} & \textbf{-1240.65} & \textbf{-1439.85} & 0.94 & 0.77 & 422.34 & -451.67 & -718.68 & 245.16 & -197.12 & -396.32 & 164.01 & -34.82 & -234.02 \\ 
\multicolumn{1}{|c|}{} & 32768 & \textbf{927.79} & \textbf{-1562.37} & \textbf{-1761.57} & 0.91 & 0.70 & 422.34 & -451.67 & -718.68 & 245.16 & -197.12 & -396.32 & 164.01 & -34.82 & -234.02 \\ 
\multicolumn{1}{|c|}{} & 65536 & \textbf{949.68} & \textbf{-1606.16} & \textbf{-1805.37} & 0.90 & 0.68 & 423.48 & -453.95 & -720.97 & 245.16 & -197.12 & -396.32 & 164.01 & -34.82 & -234.02 \\ 
\multicolumn{1}{|c|}{} & 131072 & \textbf{1012.16} & \textbf{-1731.12} & \textbf{-1930.32} & 0.91 & 0.71 & 423.49 & -453.96 & -720.97 & 245.16 & -197.12 & -396.32 & 164.01 & -34.82 & -234.02 \\ 
\multicolumn{1}{|c|}{} & 262144 & \textbf{1012.16} & \textbf{-1731.12} & \textbf{-1930.32} & 0.91 & 0.71 & 423.49 & -453.96 & -720.97 & 245.16 & -197.12 & -396.32 & 164.01 & -34.82 & -234.02 \\ 
\multicolumn{1}{|c|}{} & 524288 & \textbf{1012.16} & \textbf{-1731.12} & \textbf{-1930.32} & 0.91 & 0.71 & 439.58 & -486.14 & -753.15 & 245.16 & -197.12 & -396.32 & 164.01 & -34.82 & -234.02 \\ 
\multicolumn{1}{|c|}{} & 1048576 & \textbf{1293.90} & \textbf{-2294.60} & \textbf{-2493.80} & 1.00 & 1.00 & 432.14 & -471.27 & -738.28 & 245.16 & -197.12 & -396.32 & 164.01 & -34.82 & -234.02 \\ 
\multicolumn{1}{|c|}{} & 2097152 & \textbf{1551.54} & \textbf{-2809.89} & \textbf{-3009.09} & 1.00 & 1.00 & 432.14 & -471.27 & -738.28 & 245.16 & -197.12 & -396.32 & 164.01 & -34.82 & -234.02 \\ 
\hline
\hline

\end{tabular}
}
\end{table}

\begin{figure}[H]
	\centering
	    \begin{tabular}{c@{}c@{}c@{}c@{}}
    		\includegraphics[width=0.24\textwidth]{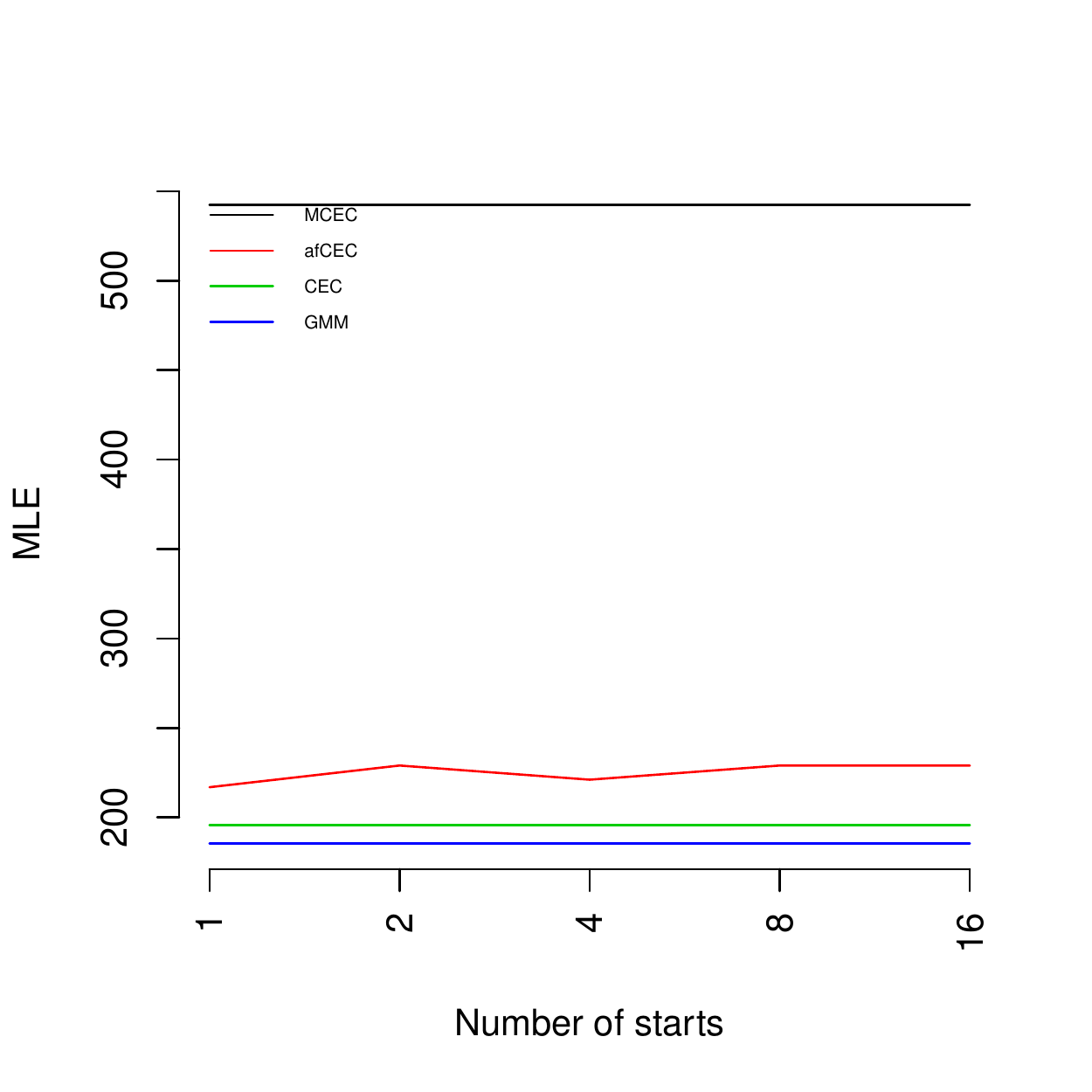} &
    		\includegraphics[width=0.24\textwidth]{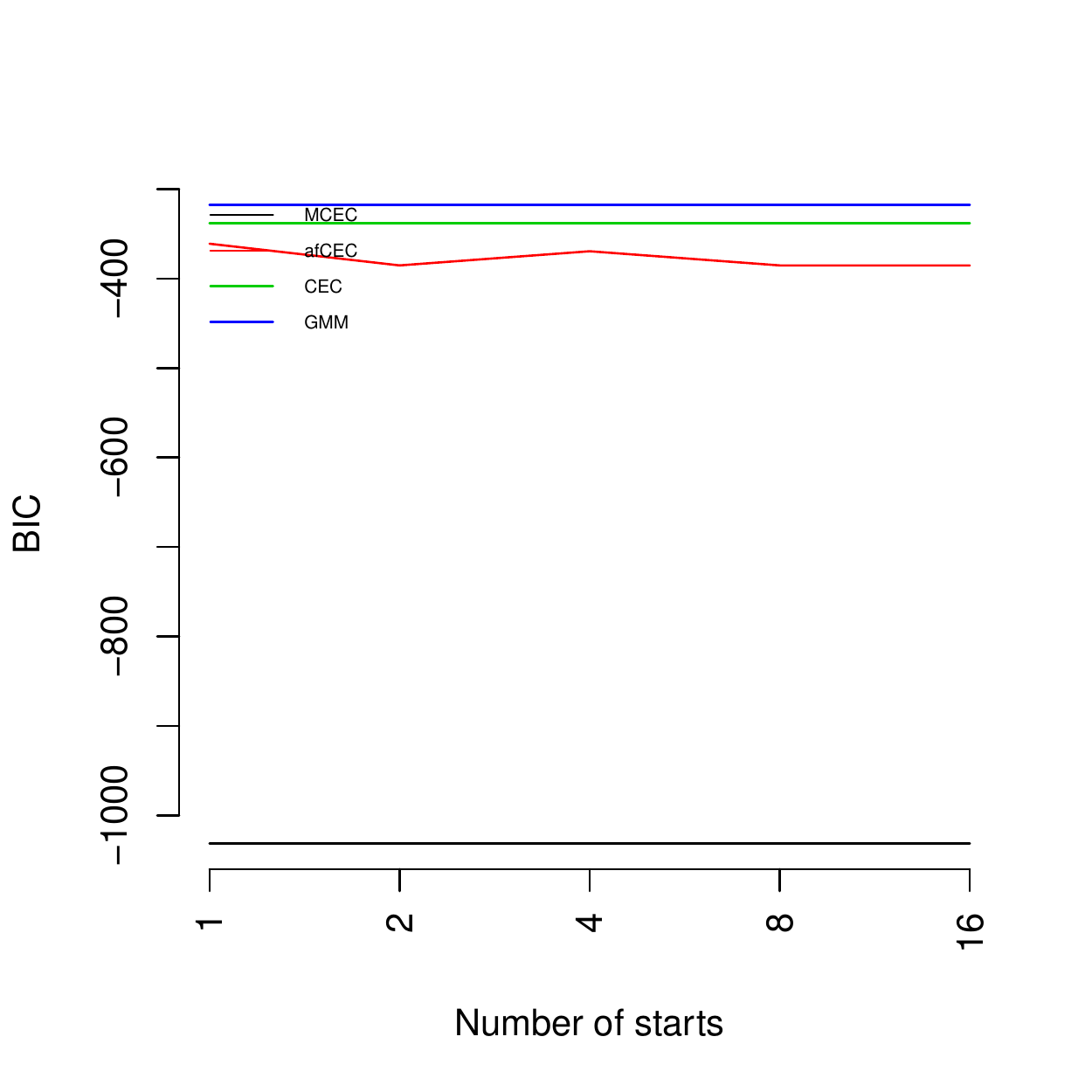} &
    		\includegraphics[width=0.24\textwidth]{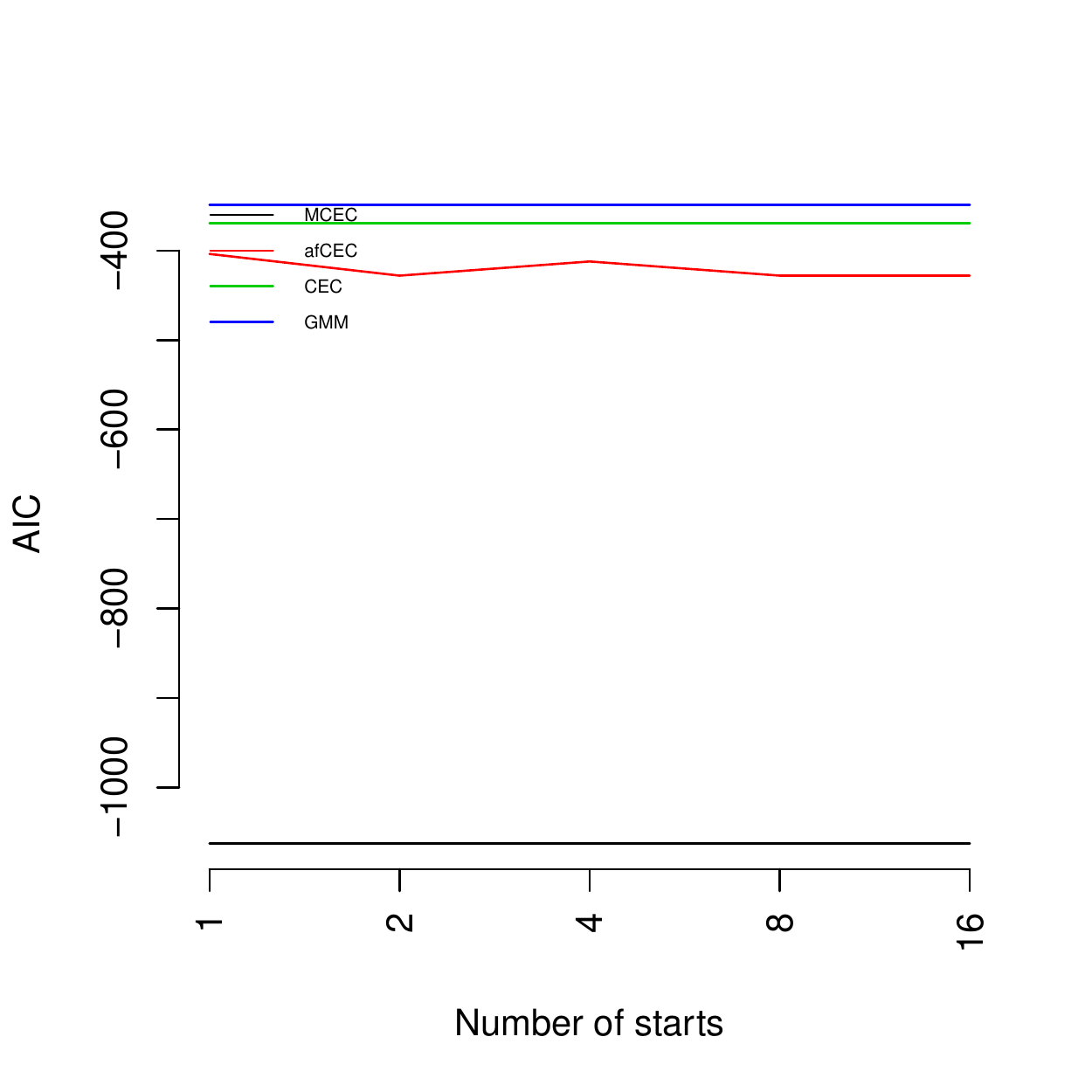} &
    		\includegraphics[width=0.24\textwidth]{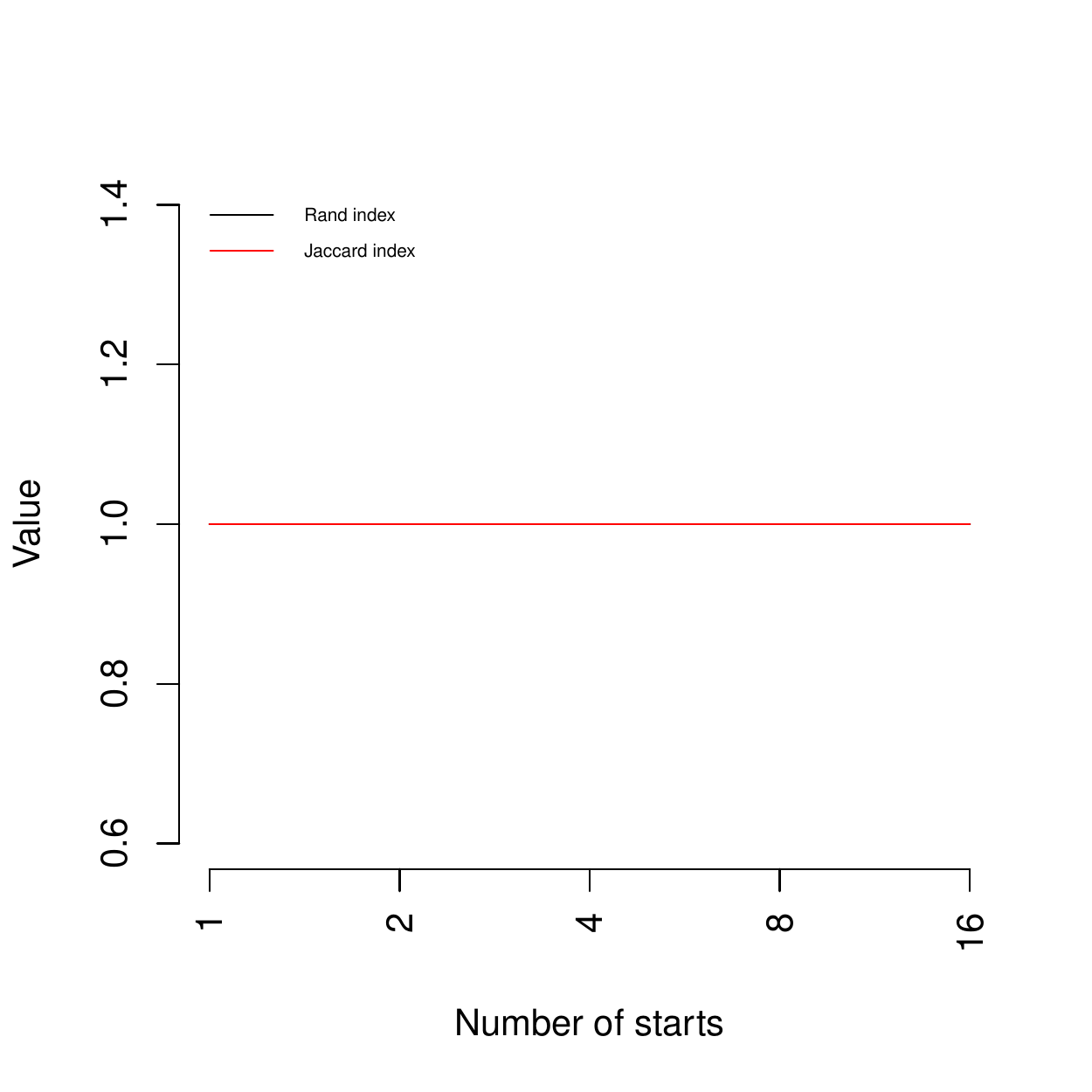} \\
    		\includegraphics[width=0.24\textwidth]{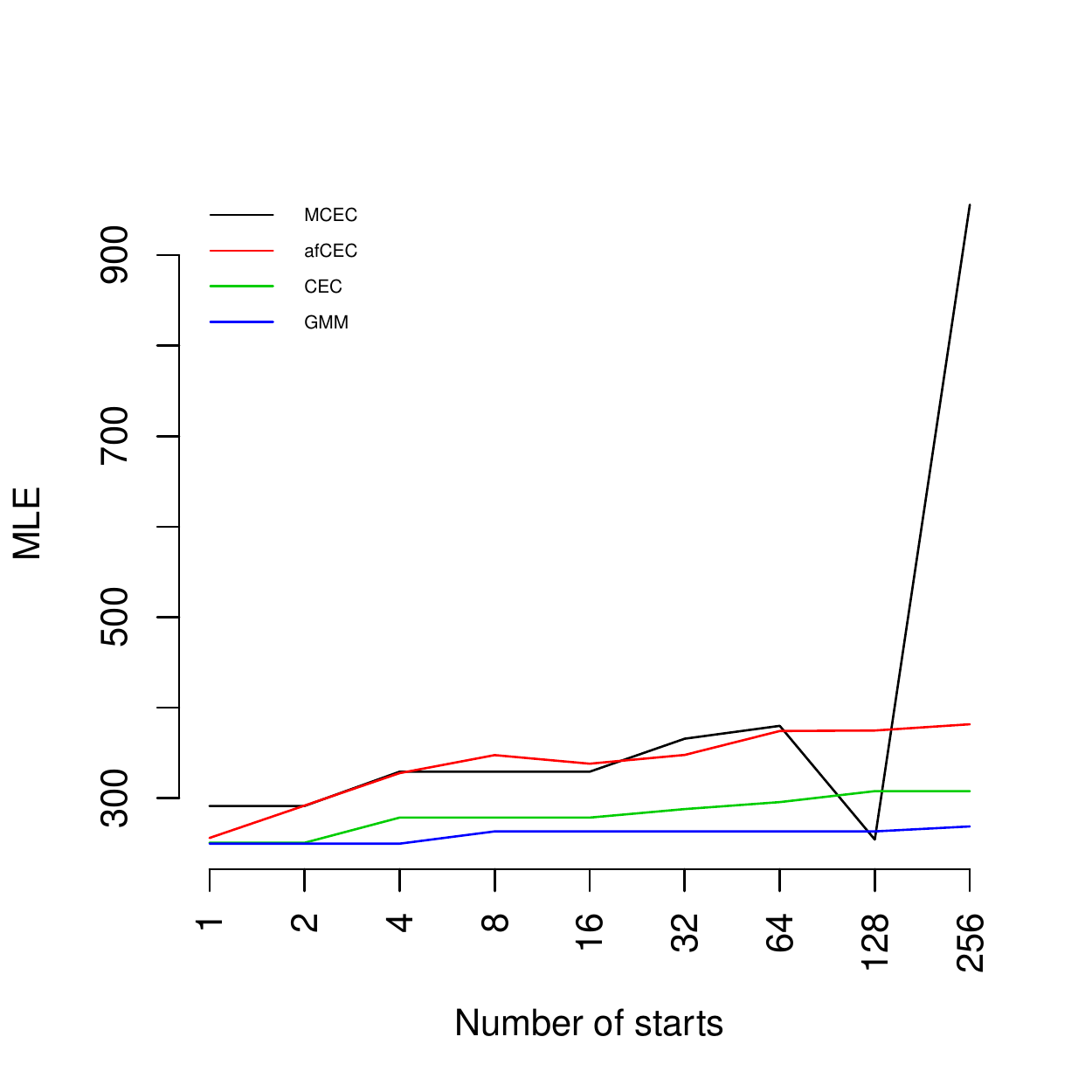} &
    		\includegraphics[width=0.24\textwidth]{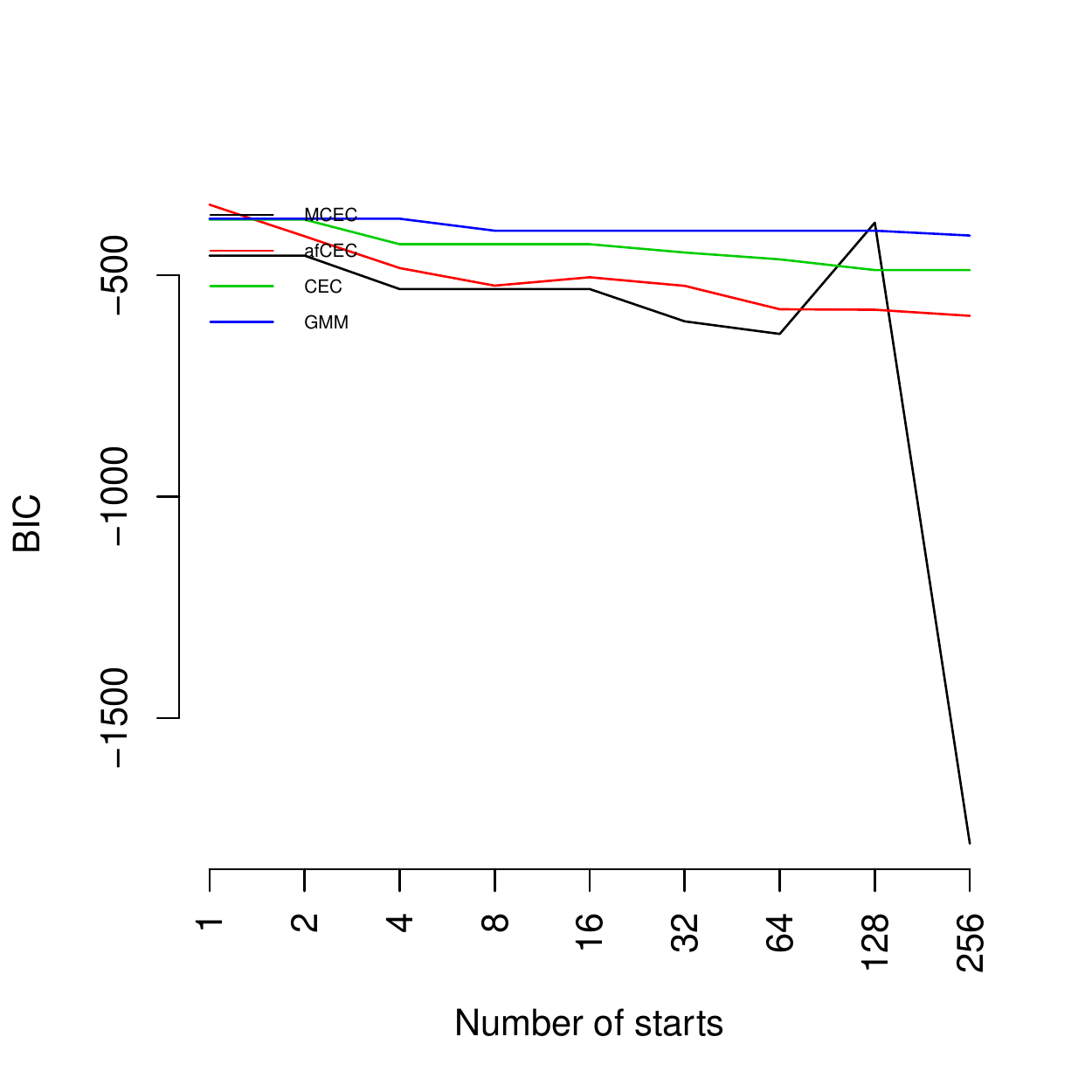} &
    		\includegraphics[width=0.24\textwidth]{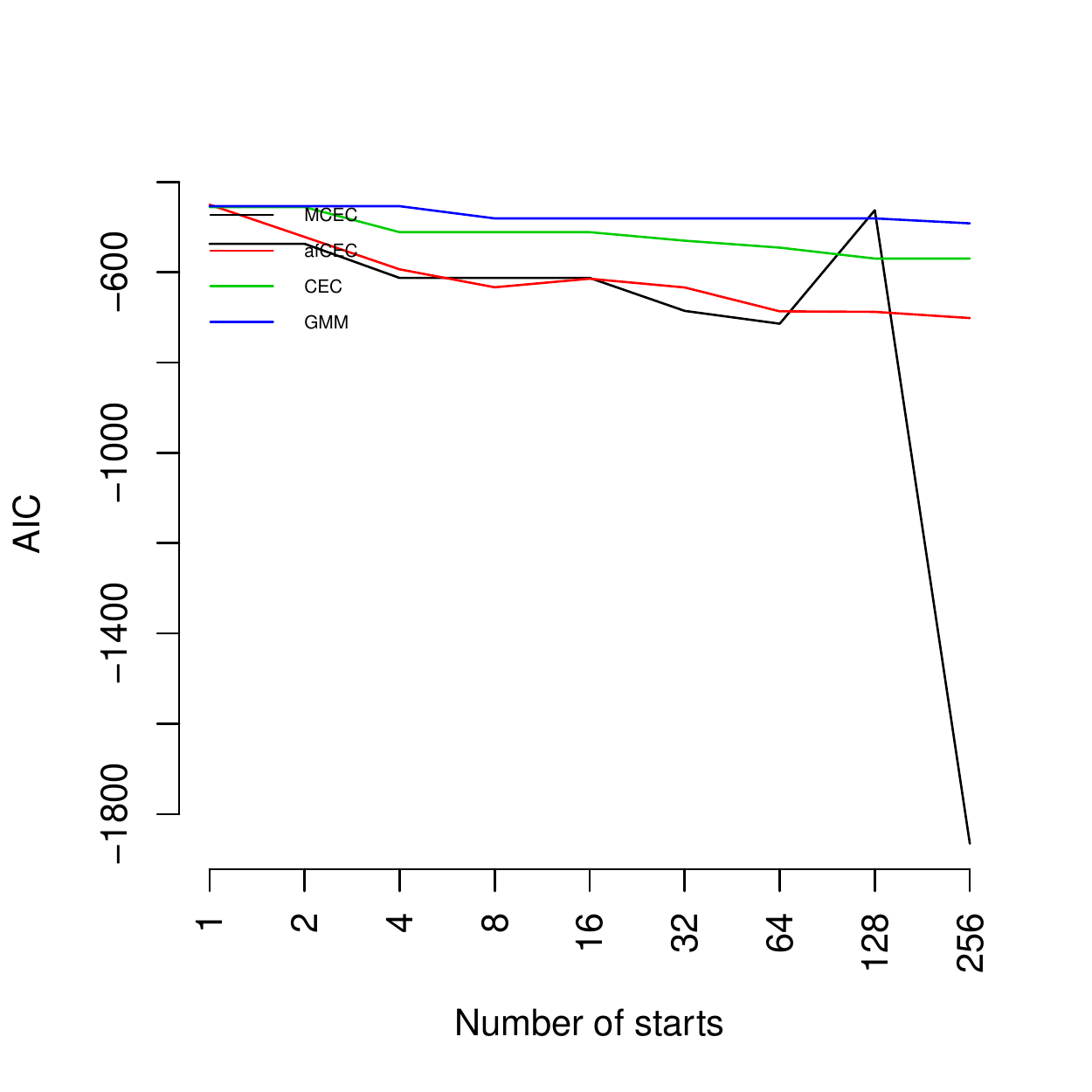} &
    		\includegraphics[width=0.24\textwidth]{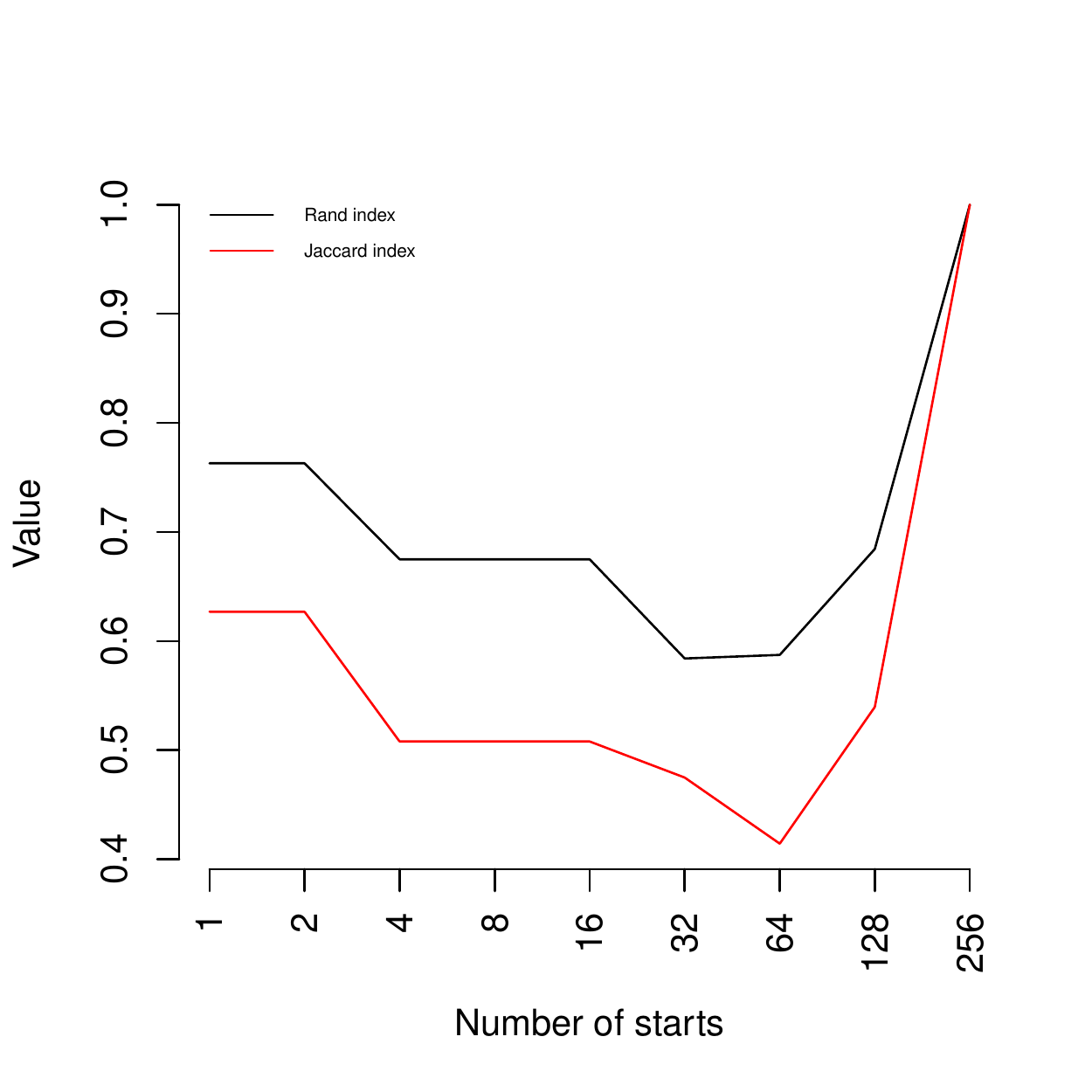} \\
    		\includegraphics[width=0.24\textwidth]{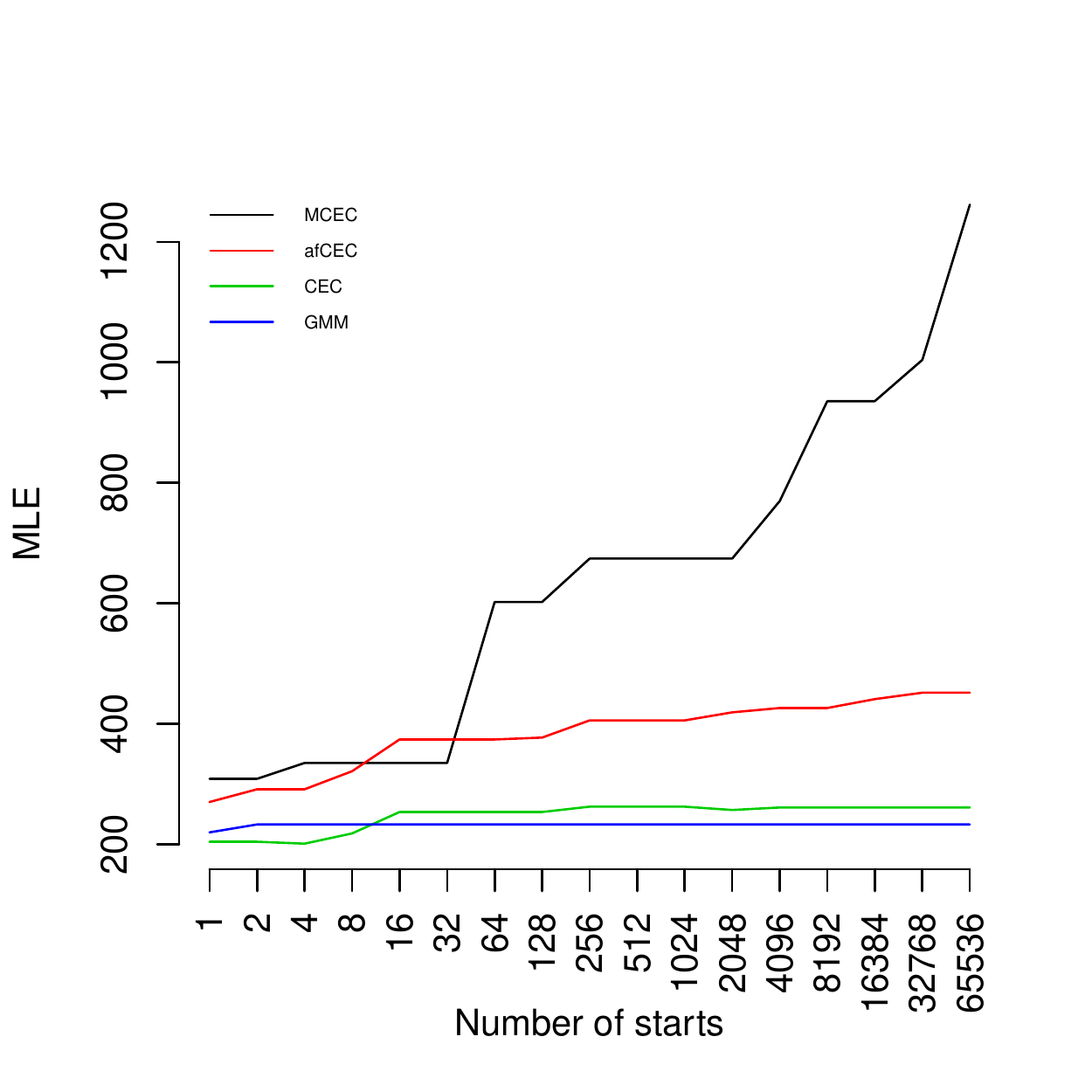} &
    		\includegraphics[width=0.24\textwidth]{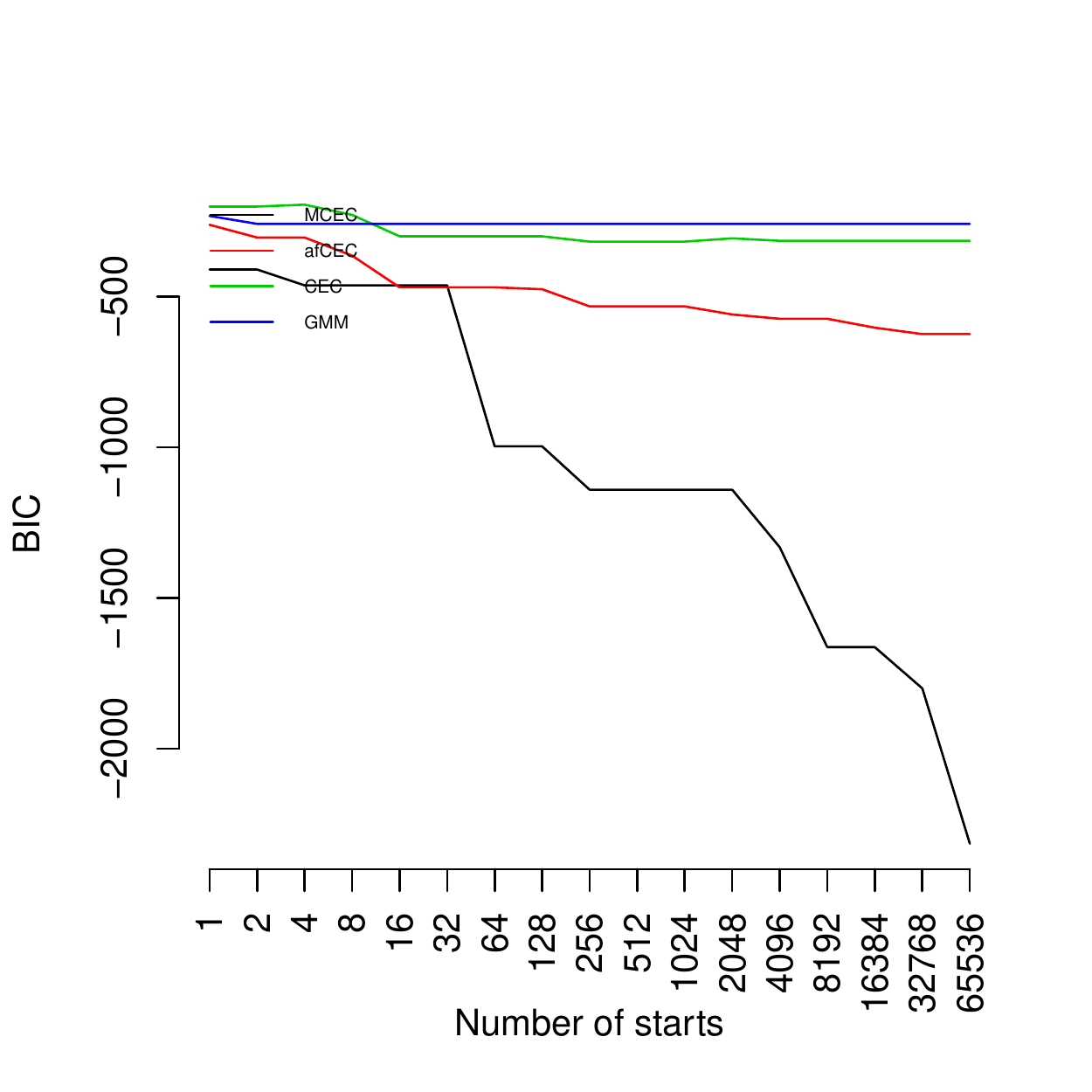} &
    		\includegraphics[width=0.24\textwidth]{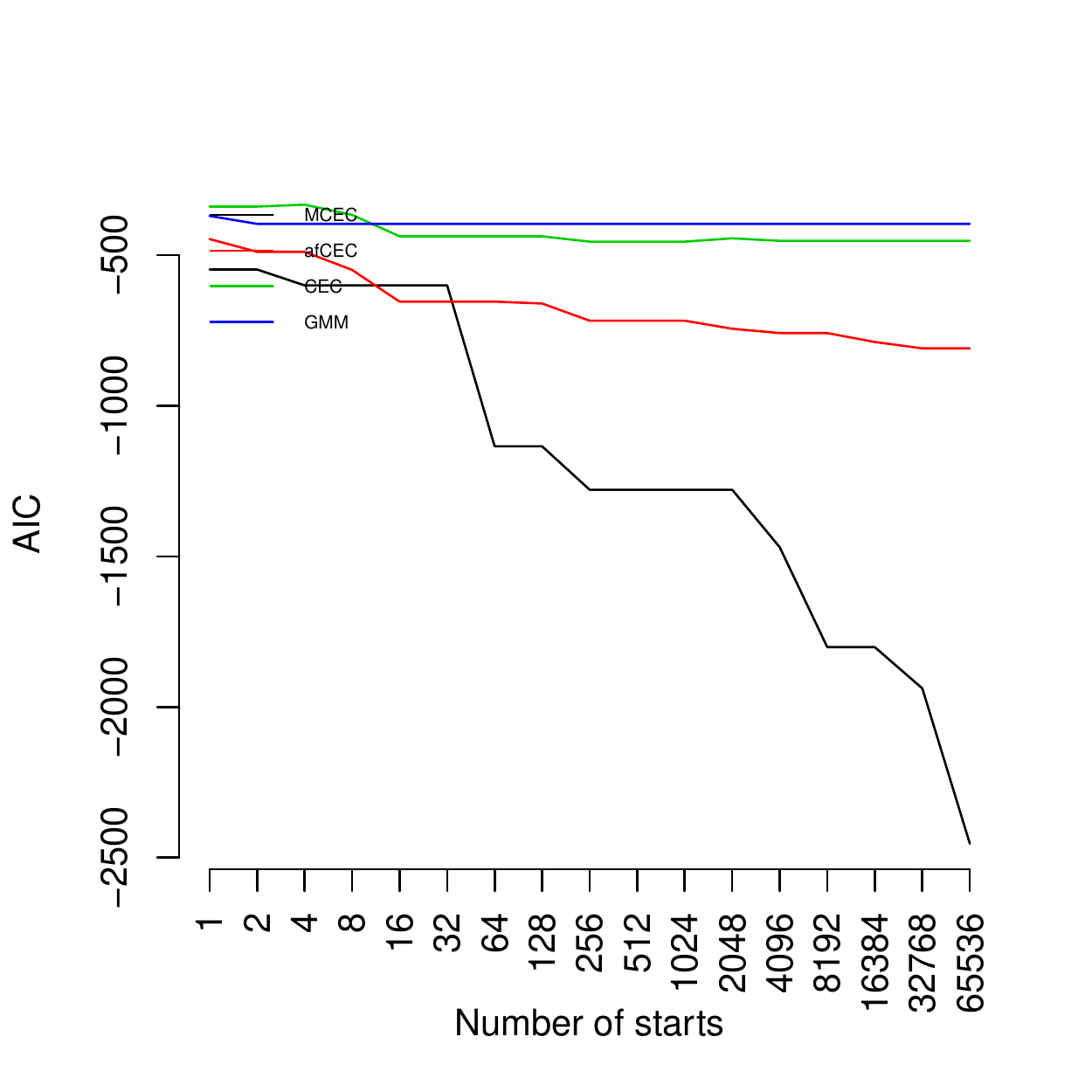} &
    		\includegraphics[width=0.24\textwidth]{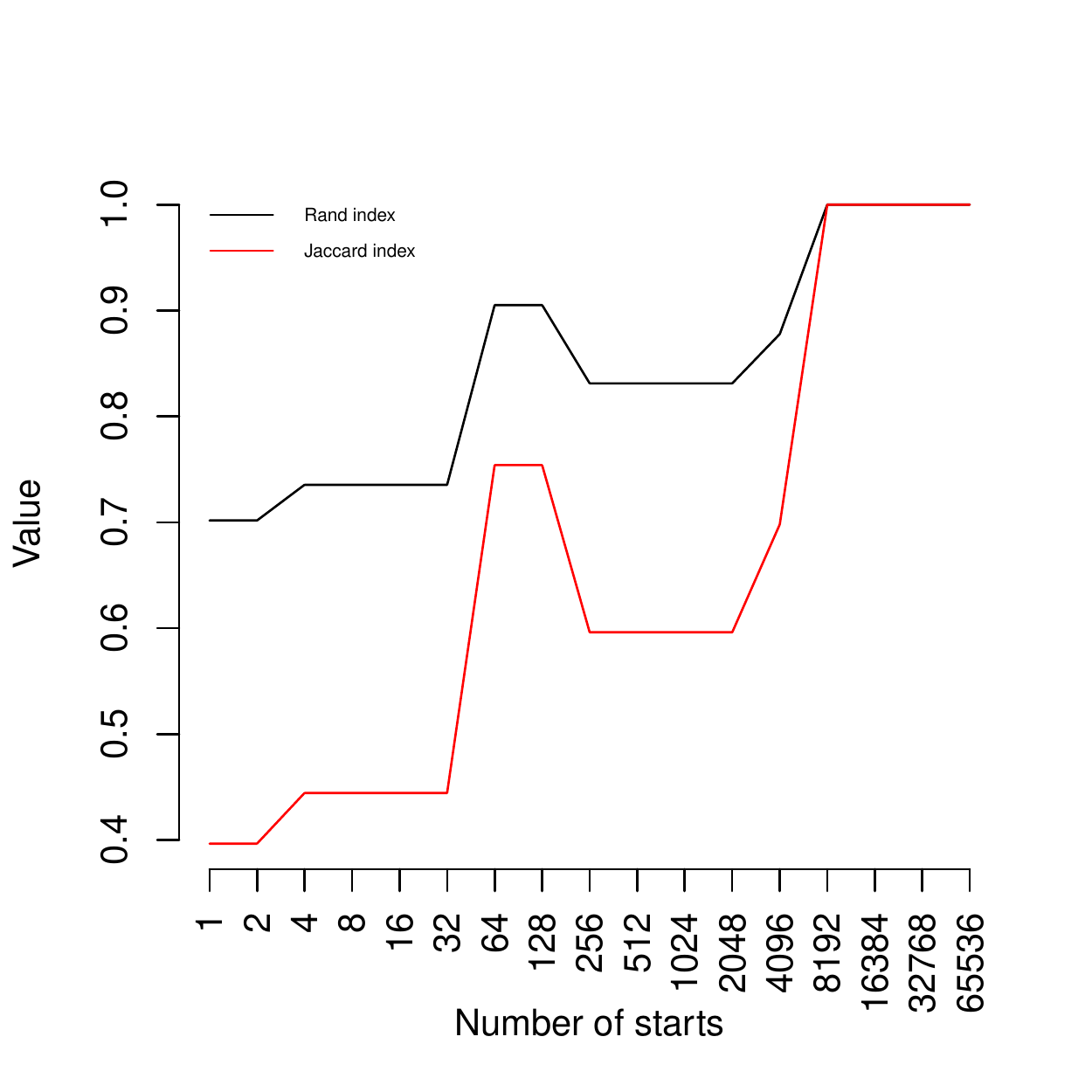} \\
    		\includegraphics[width=0.24\textwidth]{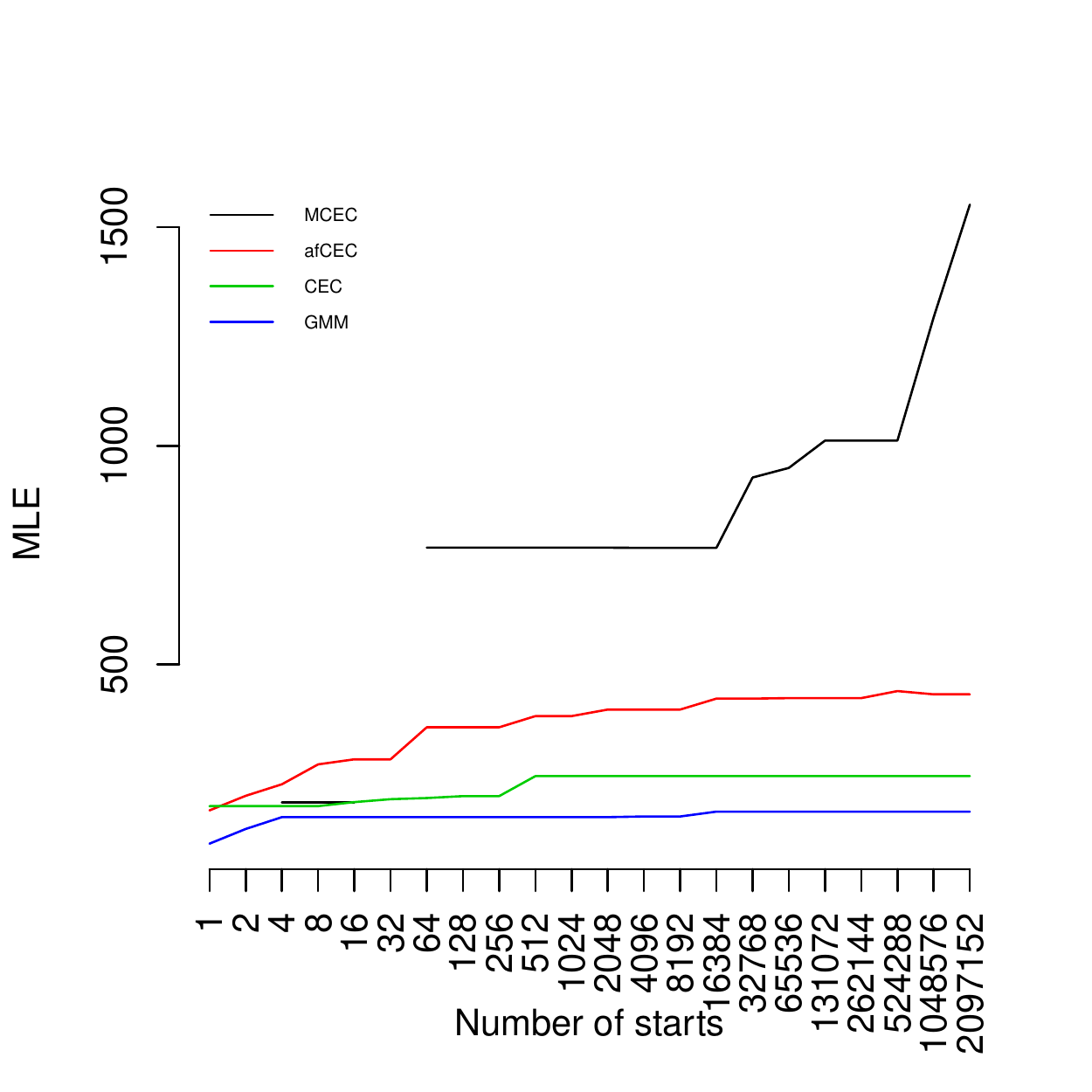} &
    		\includegraphics[width=0.24\textwidth]{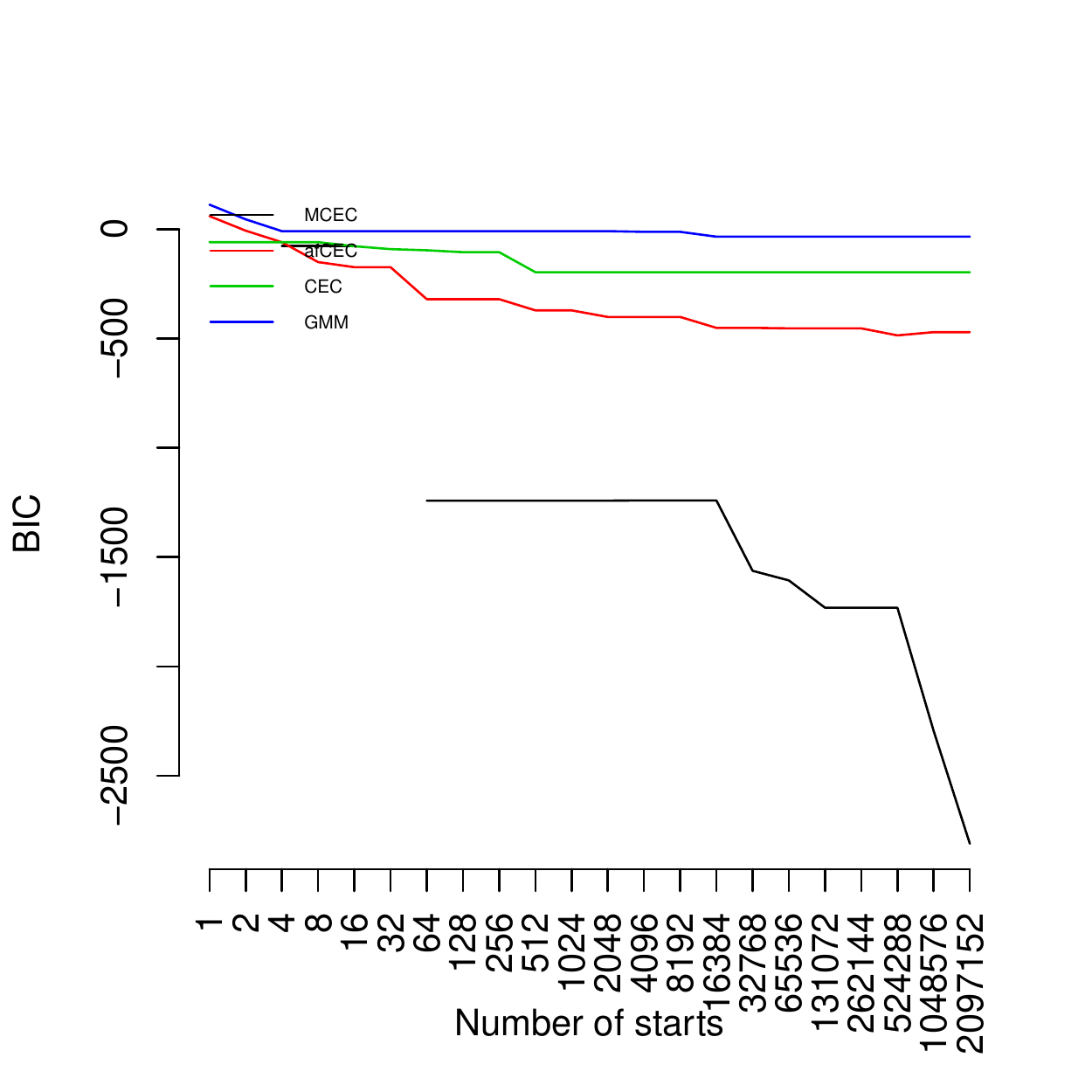} &
    		\includegraphics[width=0.24\textwidth]{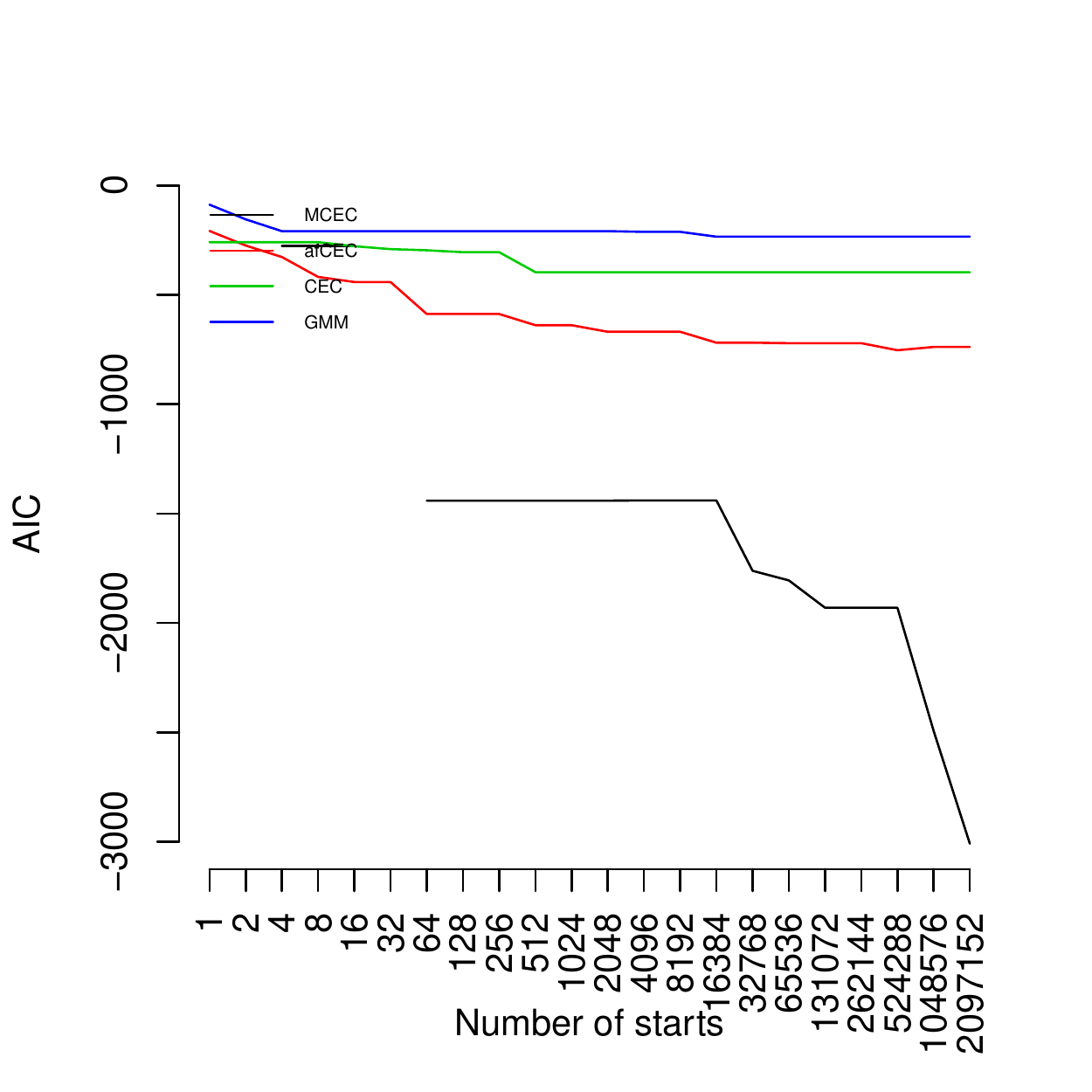} &
    		\includegraphics[width=0.24\textwidth]{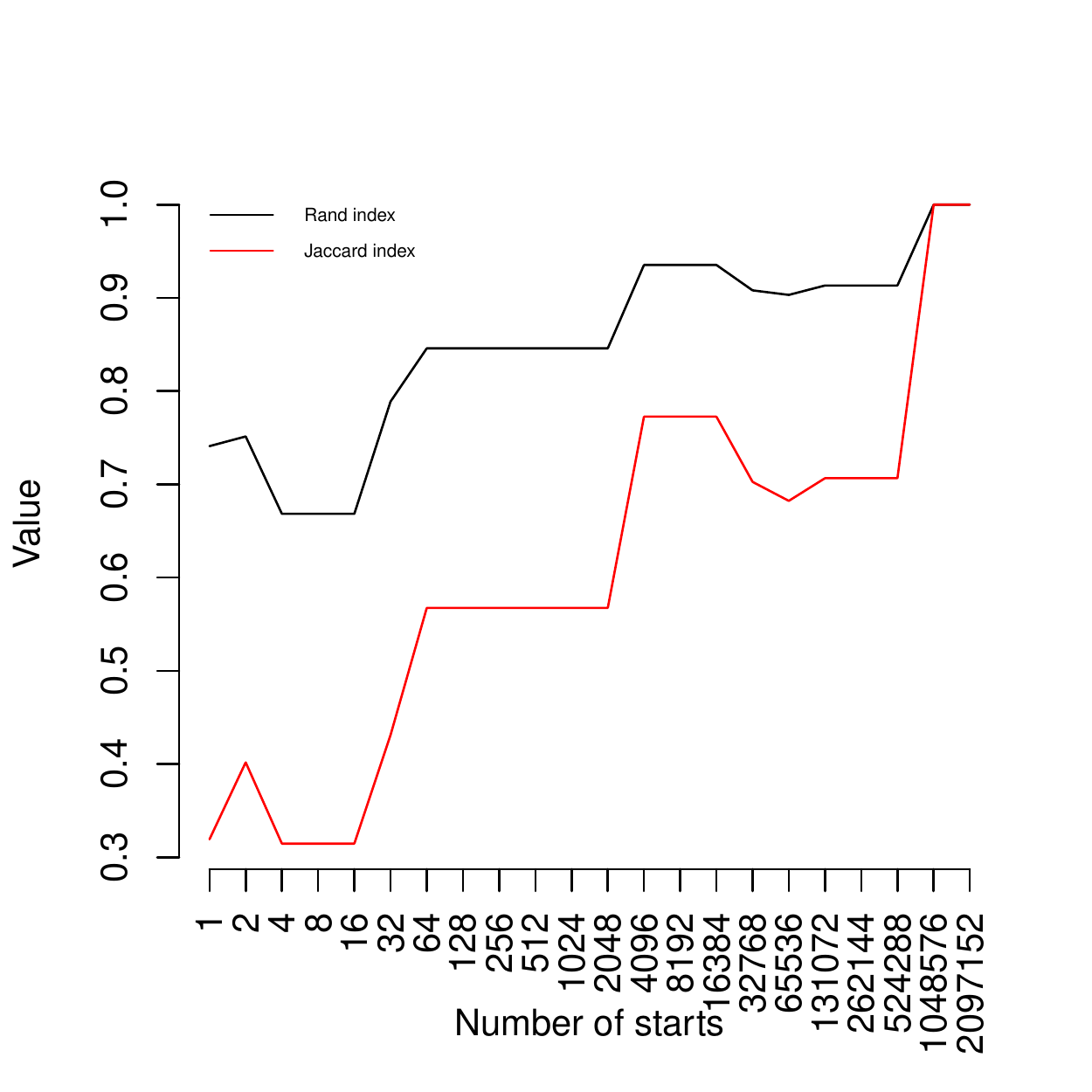} \\
    	\end{tabular}
	\caption{Experiment No. 2: Values of the performance metrics computed for the consecutive numbers of starts being the powers of two and the increasing number of clusters in the data set (the $i$-th row of the table corresponds to the test case with $i$ clusters). The charts in the first, second, and third columns show the computed values of MLE, BIC, and AIC, respectively, for the consecutive number of starts being the powers of two for all four clustering methods used in the comparison, ie. MCEC, afCEC, CEC, and GMM, while the last one represents the values of Rand and Jaccard indices computed similarly for the clustering obtained using the MCEC algorithm.}
\end{figure}

\begin{table}[H]
\caption{Experiment No. 3: Values of the performance metrics computed for the consecutive numbers of starts being the powers of two}
\centering
\setlength\tabcolsep{1.5pt}
\resizebox{\columnwidth}{!}{%
\begin{tabular}{c|>{\tiny}c|>{\tiny}c>{\tiny}c>{\tiny}c>{\tiny}c>{\tiny}c|>{\tiny}c>{\tiny}c>{\tiny}c|>{\tiny}c>{\tiny}c>{\tiny}c|>{\tiny}c>{\tiny}c>{\tiny}c|}
\cline{2-16}
& \rule{0pt}{1.5cm} & \multicolumn{5}{c|}{\makecell{MCEC}} & \multicolumn{3}{c|}{\makecell{afCEC Hartigan}} & \multicolumn{3}{c|}{CEC} & \multicolumn{3}{c|}{GMM} \\
\cline{3-16}

& \rotatebox[origin=c]{90}{\rlap{\parbox{1.5cm}{\makecell{Number of \\ starts}}}} &
\multicolumn{1}{c|}{MLE} & \multicolumn{1}{c|}{BIC} & \multicolumn{1}{c|}{AIC} & \multicolumn{1}{c|}{\makecell{Rand \\ index}} & \multicolumn{1}{c|}{\makecell{Jaccard \\ index}} &
\multicolumn{1}{c|}{MLE} & \multicolumn{1}{c|}{BIC} & \multicolumn{1}{c|}{AIC} &
\multicolumn{1}{c|}{MLE} & \multicolumn{1}{c|}{BIC} & \multicolumn{1}{c|}{AIC} &
\multicolumn{1}{c|}{MLE} & \multicolumn{1}{c|}{BIC} & \multicolumn{1}{c|}{AIC} \\

\hline
\hline
\multicolumn{1}{|c|}{\multirow{7}{*}{\includegraphics[scale=0.07]{5-3-1.pdf}}} & 1 & 522.04 & \textbf{-960.90} & -1014.08 & 1.00 & 1.00 & \textbf{553.23} & -934.57 & \textbf{-1044.47} & 416.93 & -706.33 & -787.87 & 387.71 & -647.88 & -729.42 \\ 
\multicolumn{1}{|c|}{} & 2 & 522.04 & -960.90 & -1014.08 & 1.00 & 1.00 & \textbf{616.01} & \textbf{-1060.13} & \textbf{-1170.03} & 416.93 & -706.33 & -787.87 & 387.71 & -647.88 & -729.42 \\ 
\multicolumn{1}{|c|}{} & 4 & 525.66 & -968.15 & -1021.32 & 1.00 & 1.00 & \textbf{616.01} & \textbf{-1060.13} & \textbf{-1170.03} & 507.09 & -886.63 & -968.17 & 387.71 & -647.89 & -729.42 \\ 
\multicolumn{1}{|c|}{} & 8 & 525.66 & -968.15 & -1021.32 & 1.00 & 1.00 & \textbf{739.76} & \textbf{-1307.62} & \textbf{-1417.53} & 507.09 & -886.63 & -968.17 & 387.71 & -647.89 & -729.43 \\ 
\multicolumn{1}{|c|}{} & 16 & 546.32 & -1009.47 & -1062.65 & 1.00 & 1.00 & \textbf{739.76} & \textbf{-1307.62} & \textbf{-1417.53} & 507.09 & -886.63 & -968.17 & 390.28 & -653.01 & -734.55 \\ 
\multicolumn{1}{|c|}{} & 32 & 546.32 & -1009.47 & -1062.65 & 1.00 & 1.00 & \textbf{739.76} & \textbf{-1307.62} & \textbf{-1417.53} & 507.09 & -886.63 & -968.17 & 390.28 & -653.01 & -734.55 \\ 
\multicolumn{1}{|c|}{} & 64 & \textbf{821.85} & \textbf{-1560.51} & \textbf{-1613.69} & 1.00 & 1.00 & 742.68 & -1313.47 & -1423.37 & 507.09 & -886.63 & -968.17 & 390.28 & -653.01 & -734.55 \\ 
\hline
\hline
\multicolumn{1}{|c|}{\multirow{19}{*}{\includegraphics[scale=0.07]{5-3-2.pdf}}} & 1 & 618.32 & -1043.25 & -1174.64 & 0.53 & 0.39 & \textbf{917.99} & \textbf{-1442.97} & \textbf{-1709.98} & 595.11 & -897.02 & -1096.22 & 469.12 & -645.04 & -844.24 \\ 
\multicolumn{1}{|c|}{} & 2 & 618.32 & -1043.25 & -1174.64 & 0.53 & 0.39 & \textbf{917.99} & \textbf{-1442.97} & \textbf{-1709.98} & 676.31 & -1059.41 & -1258.61 & 482.22 & -671.24 & -870.44 \\ 
\multicolumn{1}{|c|}{} & 4 & 718.30 & -1243.21 & -1374.60 & 0.64 & 0.51 & \textbf{917.99} & \textbf{-1442.97} & \textbf{-1709.98} & 676.31 & -1059.41 & -1258.61 & 482.22 & -671.24 & -870.44 \\ 
\multicolumn{1}{|c|}{} & 8 & 718.30 & -1243.21 & -1374.60 & 0.64 & 0.51 & \textbf{917.99} & \textbf{-1442.97} & \textbf{-1709.98} & 676.31 & -1059.41 & -1258.61 & 486.24 & -679.29 & -878.49 \\ 
\multicolumn{1}{|c|}{} & 16 & 718.30 & -1243.21 & -1374.60 & 0.64 & 0.51 & \textbf{974.27} & \textbf{-1555.52} & \textbf{-1822.54} & 732.40 & -1171.60 & -1370.80 & 496.71 & -700.22 & -899.42 \\ 
\multicolumn{1}{|c|}{} & 32 & 718.30 & -1243.21 & -1374.60 & 0.64 & 0.51 & \textbf{986.60} & \textbf{-1580.19} & \textbf{-1847.20} & 732.40 & -1171.60 & -1370.80 & 502.70 & -712.19 & -911.39 \\ 
\multicolumn{1}{|c|}{} & 64 & 771.02 & -1348.66 & -1480.04 & 0.69 & 0.55 & \textbf{1032.78} & \textbf{-1672.55} & \textbf{-1939.56} & 732.40 & -1171.60 & -1370.80 & 502.70 & -712.19 & -911.39 \\ 
\multicolumn{1}{|c|}{} & 128 & 771.02 & -1348.66 & -1480.04 & 0.69 & 0.55 & \textbf{1067.13} & \textbf{-1741.26} & \textbf{-2008.27} & 733.08 & -1172.95 & -1372.15 & 502.70 & -712.19 & -911.39 \\ 
\multicolumn{1}{|c|}{} & 256 & 801.71 & -1410.03 & -1541.41 & 0.55 & 0.40 & \textbf{1067.13} & \textbf{-1741.26} & \textbf{-2008.27} & 774.33 & -1255.46 & -1454.66 & 502.70 & -712.19 & -911.39 \\ 
\multicolumn{1}{|c|}{} & 512 & 810.94 & -1428.48 & -1559.87 & 0.64 & 0.51 & \textbf{1067.13} & \textbf{-1741.26} & \textbf{-2008.27} & 774.33 & -1255.46 & -1454.66 & 502.70 & -712.19 & -911.39 \\ 
\multicolumn{1}{|c|}{} & 1024 & 810.94 & -1428.48 & -1559.87 & 0.64 & 0.51 & \textbf{1067.13} & \textbf{-1741.26} & \textbf{-2008.27} & 774.33 & -1255.46 & -1454.66 & 502.70 & -712.19 & -911.39 \\ 
\multicolumn{1}{|c|}{} & 2048 & \textbf{1175.35} & \textbf{-2157.31} & \textbf{-2288.69} & 0.97 & 0.93 & 1067.13 & -1741.26 & -2008.27 & 777.94 & -1262.68 & -1461.88 & 502.70 & -712.19 & -911.39 \\ 
\multicolumn{1}{|c|}{} & 4096 & \textbf{1175.35} & \textbf{-2157.31} & \textbf{-2288.69} & 0.97 & 0.93 & 1088.33 & -1783.64 & -2050.66 & 803.08 & -1312.95 & -1512.15 & 502.70 & -712.19 & -911.39 \\ 
\multicolumn{1}{|c|}{} & 8192 & \textbf{1175.35} & \textbf{-2157.31} & \textbf{-2288.69} & 0.97 & 0.93 & 1094.65 & -1796.28 & -2063.29 & 803.08 & -1312.95 & -1512.15 & 502.70 & -712.19 & -911.39 \\ 
\multicolumn{1}{|c|}{} & 16384 & \textbf{1175.35} & \textbf{-2157.31} & \textbf{-2288.69} & 0.97 & 0.93 & 1098.33 & -1803.65 & -2070.66 & 803.08 & -1312.95 & -1512.15 & 502.70 & -712.19 & -911.39 \\ 
\multicolumn{1}{|c|}{} & 32768 & \textbf{1175.35} & \textbf{-2157.31} & \textbf{-2288.69} & 0.97 & 0.93 & 1112.86 & -1832.70 & -2099.72 & 803.08 & -1312.95 & -1512.15 & 503.88 & -714.56 & -913.76 \\ 
\multicolumn{1}{|c|}{} & 65536 & \textbf{1175.35} & \textbf{-2157.31} & \textbf{-2288.69} & 0.97 & 0.93 & 1112.86 & -1832.70 & -2099.72 & 803.08 & -1312.95 & -1512.15 & 503.88 & -714.56 & -913.76 \\ 
\multicolumn{1}{|c|}{} & 131072 & \textbf{1175.35} & \textbf{-2157.31} & \textbf{-2288.69} & 0.97 & 0.93 & 1112.86 & -1832.70 & -2099.72 & 803.08 & -1312.95 & -1512.15 & 503.88 & -714.56 & -913.76 \\ 
\multicolumn{1}{|c|}{} & 262144 & \textbf{1175.35} & \textbf{-2157.31} & \textbf{-2288.69} & 0.97 & 0.93 & 1112.05 & -1831.09 & -2098.11 & 803.08 & -1312.95 & -1512.15 & 503.88 & -714.56 & -913.76 \\ 
\hline
\hline

\end{tabular}
}
\end{table}

\begin{figure}[H]
	\centering
	    \begin{tabular}{c@{}c@{}c@{}c@{}}
    		\includegraphics[width=0.24\textwidth]{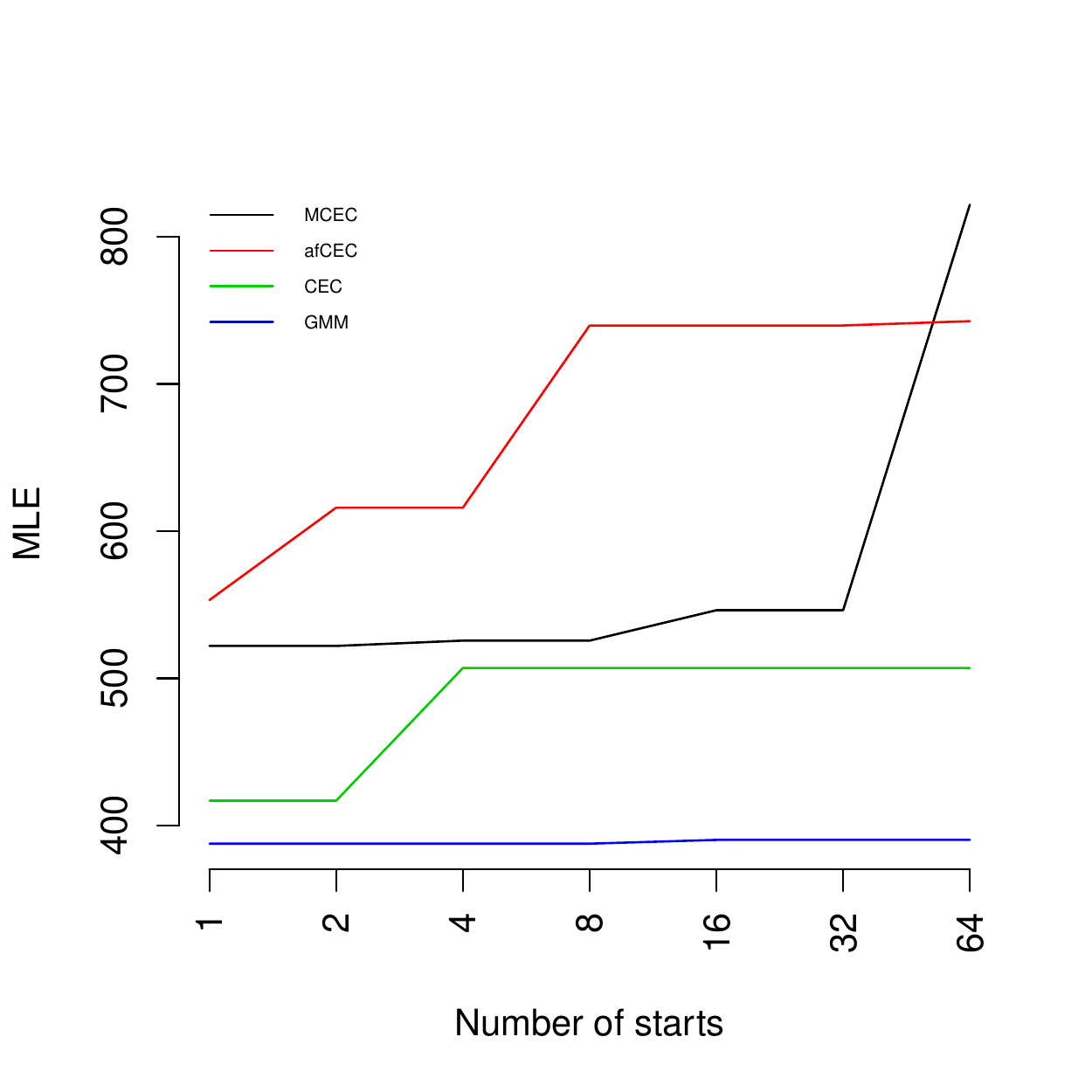} &
    		\includegraphics[width=0.24\textwidth]{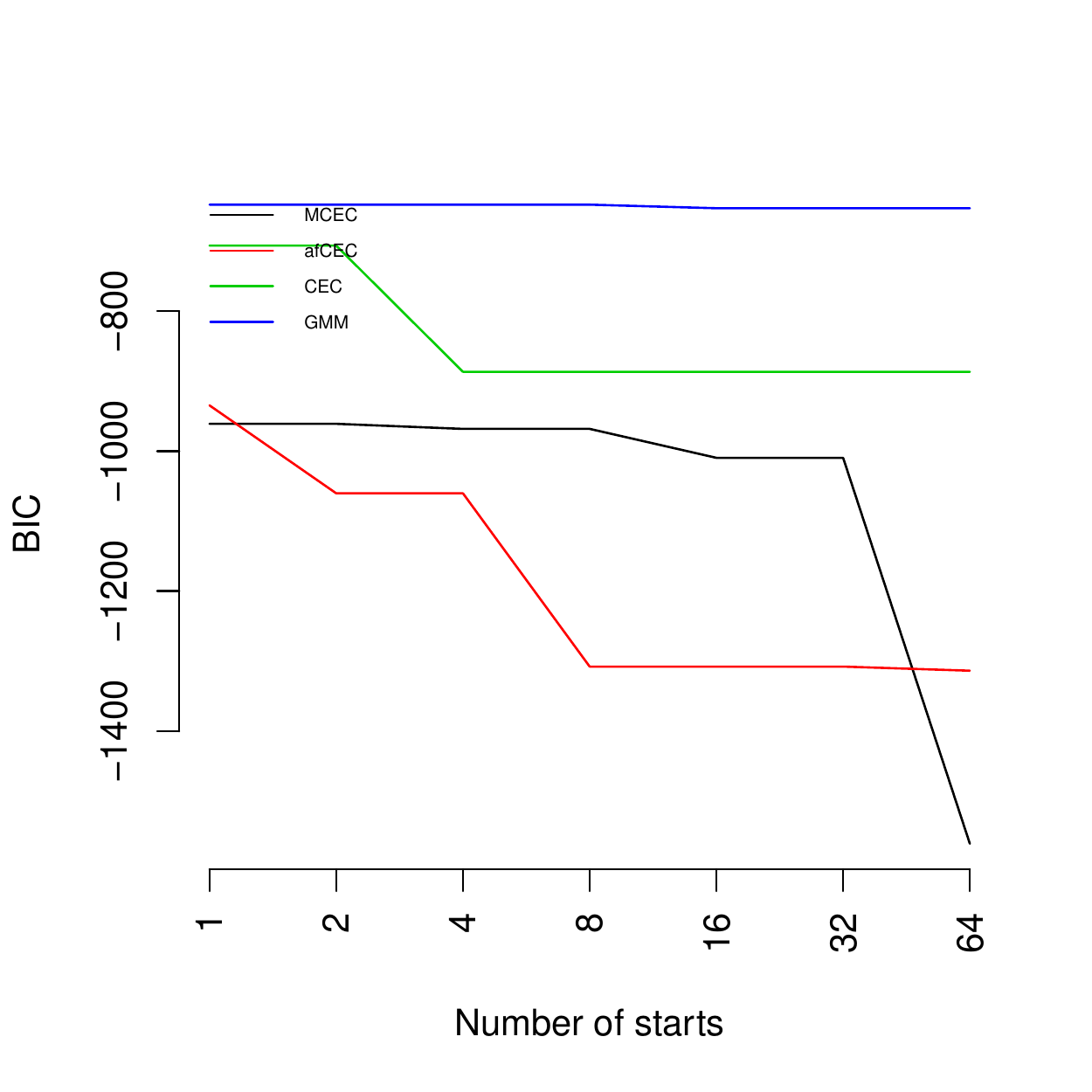} &
    		\includegraphics[width=0.24\textwidth]{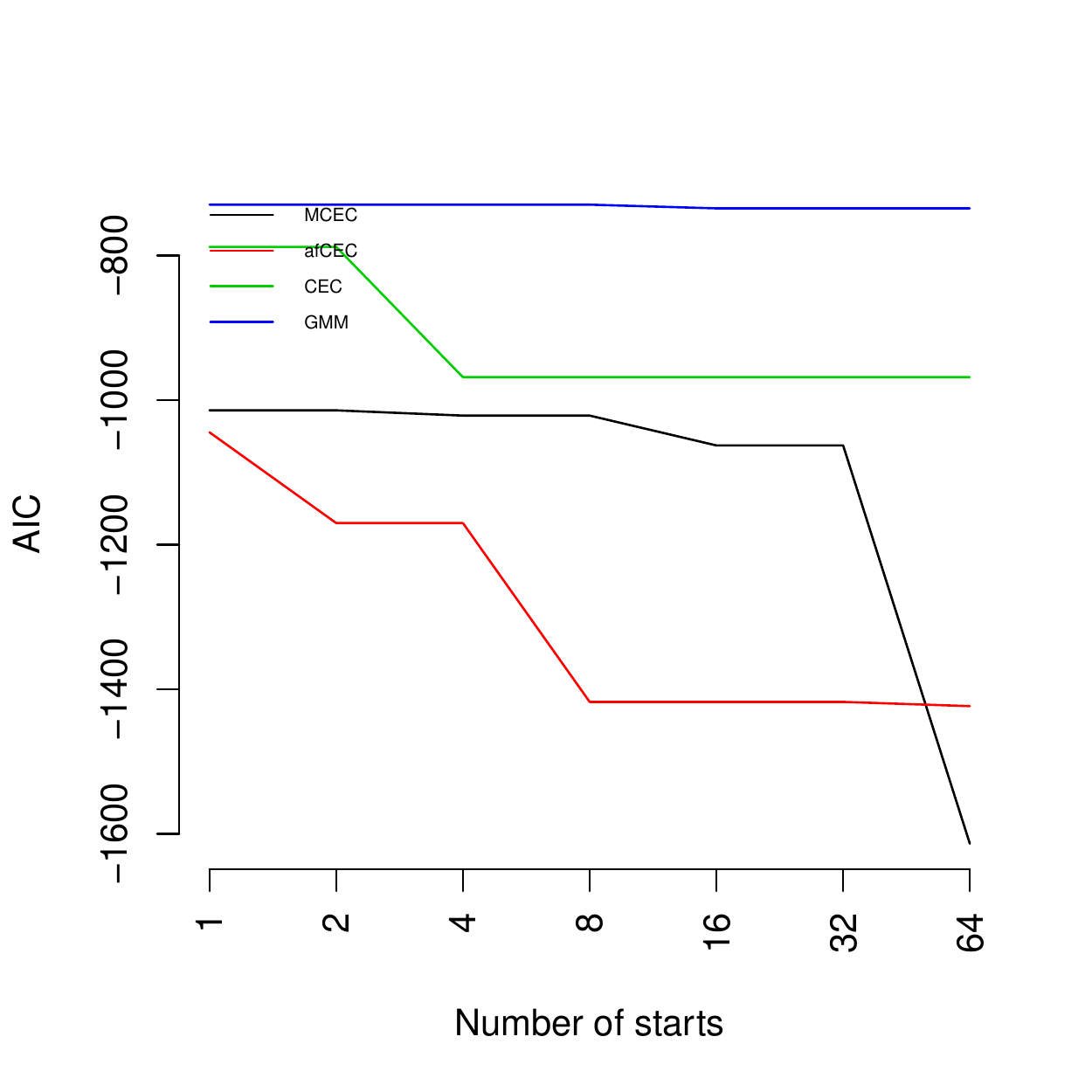} &
    		\includegraphics[width=0.24\textwidth]{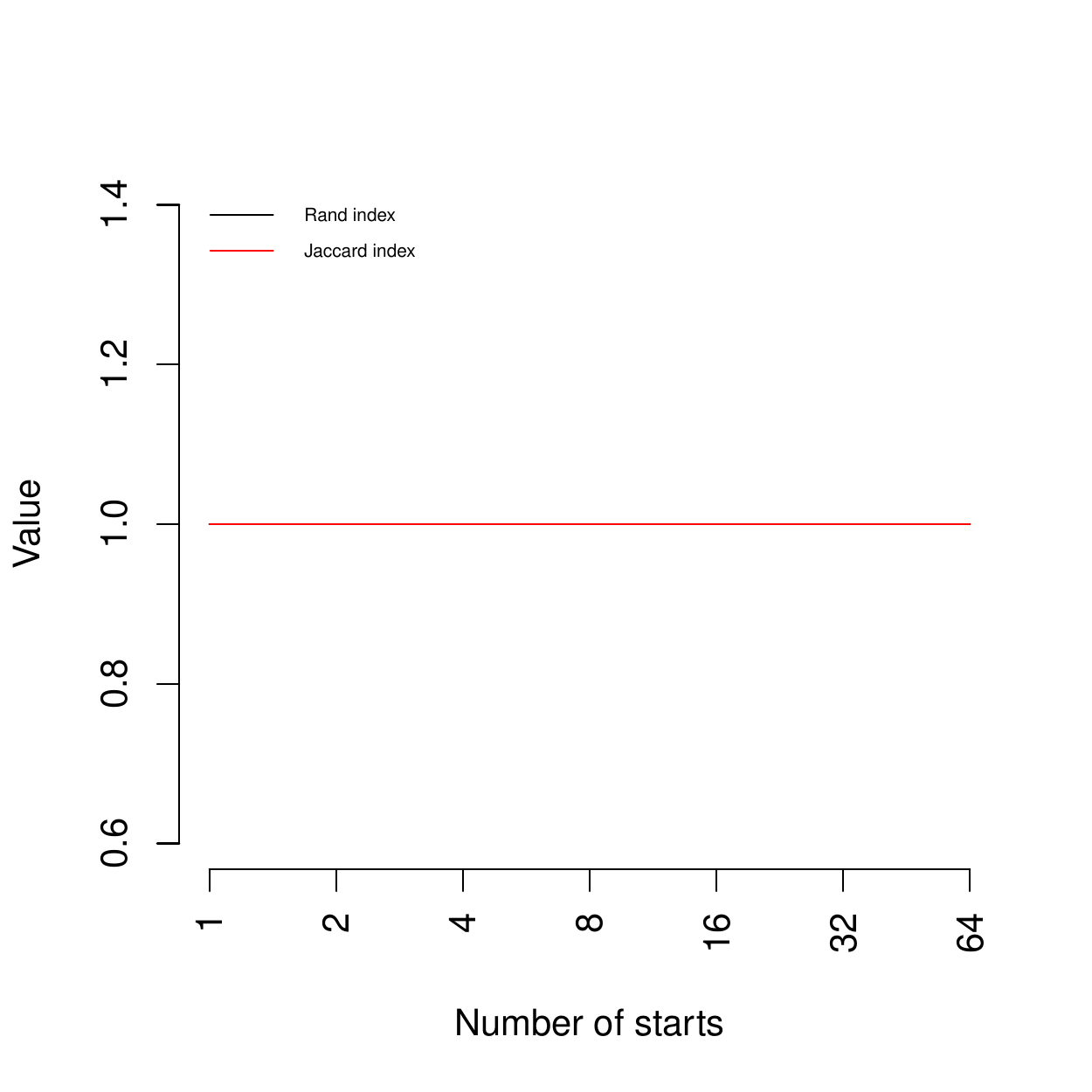} \\
    		\includegraphics[width=0.24\textwidth]{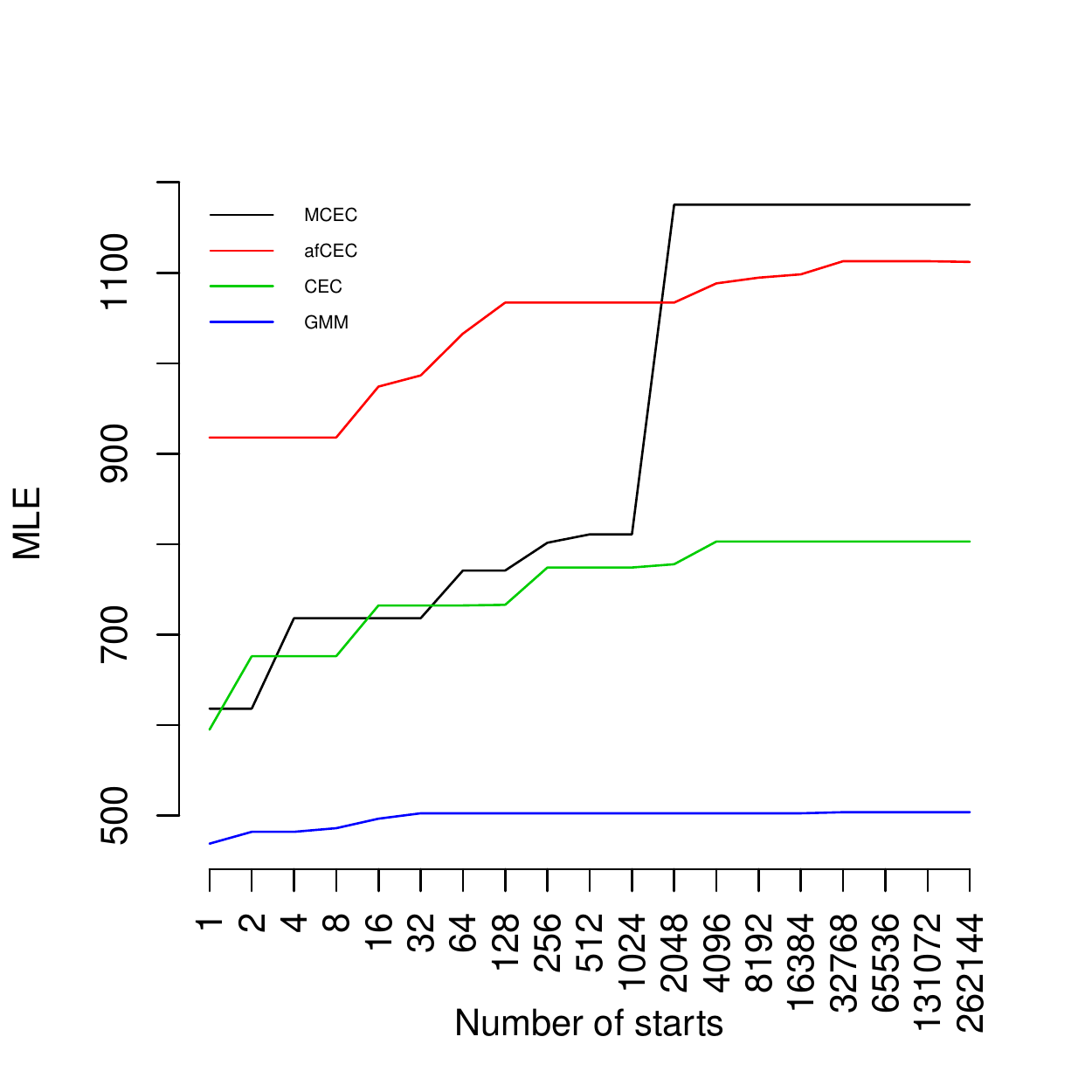} &
    		\includegraphics[width=0.24\textwidth]{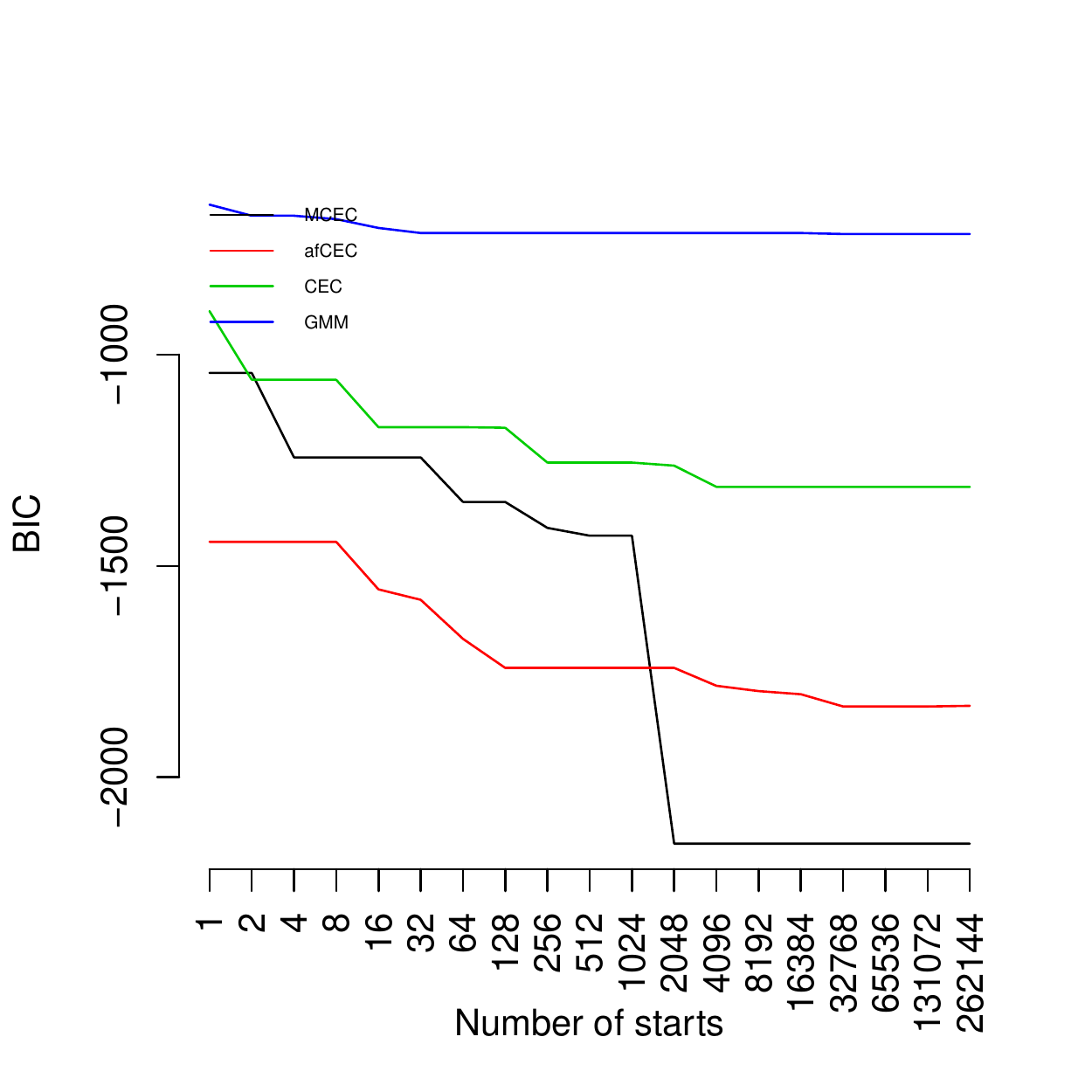} &
    		\includegraphics[width=0.24\textwidth]{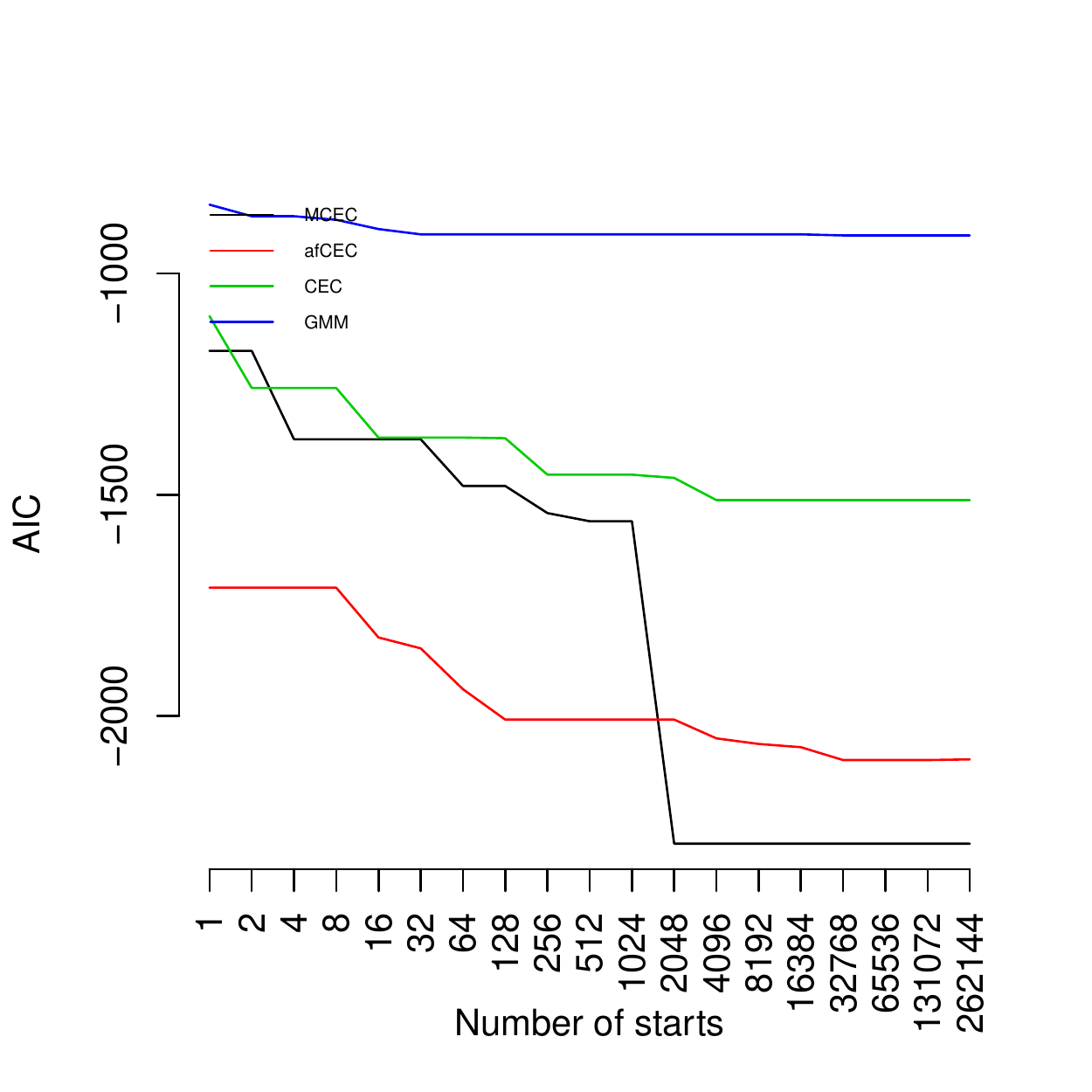} &
    		\includegraphics[width=0.24\textwidth]{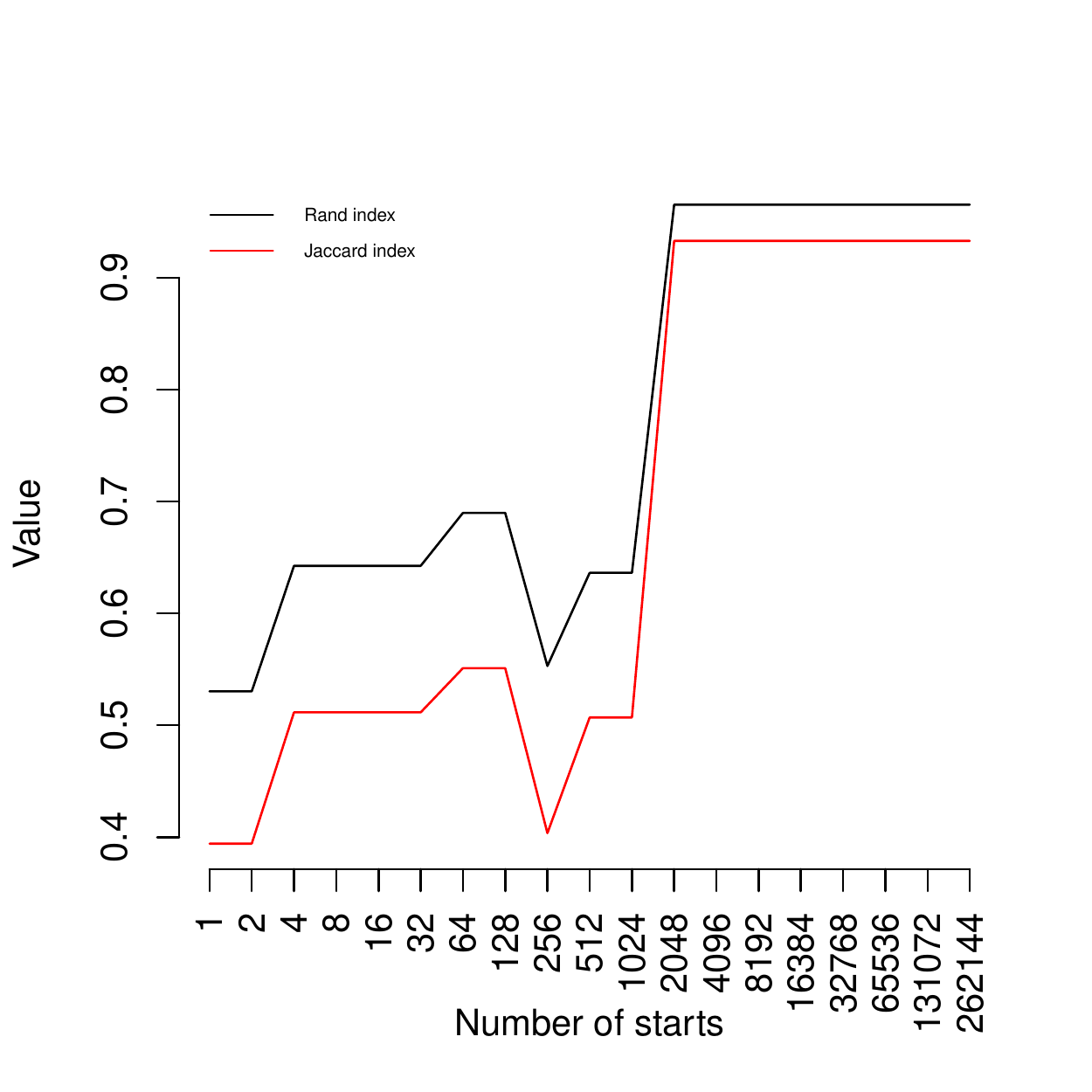} \\
    	\end{tabular}
	\caption{Experiment No. 3: Values of the performance metrics computed for the consecutive numbers of starts being the powers of two and the increasing number of clusters in the data set (the $i$-th row of the table corresponds to the test case with $i$ clusters). The charts in the first, second, and third columns show the computed values of MLE, BIC, and AIC, respectively, for the consecutive number of starts being the powers of two for all four clustering methods used in the comparison, ie. MCEC, afCEC, CEC, and GMM, while the last one represents the values of Rand and Jaccard indices computed similarly for the clustering obtained using the MCEC algorithm.}
\end{figure}

\begin{table}[H]
\caption{Experiment No. 4: Values of the performance metrics computed for the consecutive numbers of starts being the powers of two}
\centering

\setlength\tabcolsep{1.5pt}
\resizebox{\columnwidth}{!}{%
\begin{tabular}{c|>{\tiny}c|>{\tiny}c>{\tiny}c>{\tiny}c>{\tiny}c>{\tiny}c|>{\tiny}c>{\tiny}c>{\tiny}c|>{\tiny}c>{\tiny}c>{\tiny}c|>{\tiny}c>{\tiny}c>{\tiny}c|}
\cline{2-16}
& \rule{0pt}{1.5cm} & \multicolumn{5}{c|}{\makecell{MCEC}} & \multicolumn{3}{c|}{\makecell{afCEC Hartigan}} & \multicolumn{3}{c|}{CEC} & \multicolumn{3}{c|}{GMM} \\
\cline{3-16}

& \rotatebox[origin=c]{90}{\rlap{\parbox{1.5cm}{\makecell{Number of \\ starts}}}} &
\multicolumn{1}{c|}{MLE} & \multicolumn{1}{c|}{BIC} & \multicolumn{1}{c|}{AIC} & \multicolumn{1}{c|}{\makecell{Rand \\ index}} & \multicolumn{1}{c|}{\makecell{Jaccard \\ index}} &
\multicolumn{1}{c|}{MLE} & \multicolumn{1}{c|}{BIC} & \multicolumn{1}{c|}{AIC} &
\multicolumn{1}{c|}{MLE} & \multicolumn{1}{c|}{BIC} & \multicolumn{1}{c|}{AIC} &
\multicolumn{1}{c|}{MLE} & \multicolumn{1}{c|}{BIC} & \multicolumn{1}{c|}{AIC} \\

\hline
\hline
\multicolumn{1}{|c|}{\multirow{17}{*}{\includegraphics[scale=0.07]{5-4-1.pdf}}} & 1 & 229.64 & -353.93 & -421.29 & 1.00 & 1.00 & \textbf{289.12} & \textbf{-406.35} & \textbf{-516.25} & 77.40 & -27.25 & -108.79 & 65.56 & -3.59 & -85.13 \\ 
\multicolumn{1}{|c|}{} & 2 & 229.64 & -353.93 & -421.29 & 1.00 & 1.00 & \textbf{289.12} & \textbf{-406.35} & \textbf{-516.25} & 80.13 & -32.73 & -114.27 & 65.56 & -3.59 & -85.13 \\ 
\multicolumn{1}{|c|}{} & 4 & 229.64 & -353.93 & -421.29 & 1.00 & 1.00 & \textbf{289.12} & \textbf{-406.35} & \textbf{-516.25} & 117.20 & -106.87 & -188.40 & 71.49 & -15.43 & -96.97 \\ 
\multicolumn{1}{|c|}{} & 8 & 229.64 & -353.93 & -421.29 & 1.00 & 1.00 & \textbf{387.79} & \textbf{-603.69} & \textbf{-713.59} & 117.20 & -106.87 & -188.40 & 71.49 & -15.43 & -96.97 \\ 
\multicolumn{1}{|c|}{} & 16 & 229.64 & -353.93 & -421.29 & 1.00 & 1.00 & \textbf{387.79} & \textbf{-603.69} & \textbf{-713.59} & 117.20 & -106.87 & -188.40 & 71.49 & -15.43 & -96.97 \\ 
\multicolumn{1}{|c|}{} & 32 & 229.64 & -353.93 & -421.29 & 1.00 & 1.00 & \textbf{387.79} & \textbf{-603.69} & \textbf{-713.59} & 117.20 & -106.87 & -188.40 & 71.49 & -15.43 & -96.97 \\ 
\multicolumn{1}{|c|}{} & 64 & 289.01 & -472.65 & -540.01 & 1.00 & 1.00 & \textbf{387.79} & \textbf{-603.69} & \textbf{-713.59} & 123.88 & -120.22 & -201.76 & 71.49 & -15.43 & -96.97 \\ 
\multicolumn{1}{|c|}{} & 128 & \textbf{485.06} & \textbf{-864.77} & \textbf{-932.13} & 1.00 & 1.00 & 387.79 & -603.69 & -713.59 & 123.88 & -120.22 & -201.76 & 71.49 & -15.43 & -96.97 \\ 
\multicolumn{1}{|c|}{} & 256 & \textbf{533.12} & \textbf{-960.89} & \textbf{-1028.24} & 1.00 & 1.00 & 387.79 & -603.69 & -713.59 & 123.44 & -119.34 & -200.88 & 71.49 & -15.43 & -96.97 \\ 
\multicolumn{1}{|c|}{} & 512 & \textbf{602.20} & \textbf{-1099.05} & \textbf{-1166.41} & 1.00 & 1.00 & 387.79 & -603.69 & -713.59 & 123.44 & -119.34 & -200.88 & 71.49 & -15.43 & -96.97 \\ 
\multicolumn{1}{|c|}{} & 1024 & \textbf{602.20} & \textbf{-1099.05} & \textbf{-1166.41} & 1.00 & 1.00 & 387.79 & -603.69 & -713.59 & 123.44 & -119.34 & -200.88 & 71.49 & -15.43 & -96.97 \\ 
\multicolumn{1}{|c|}{} & 2048 & \textbf{673.19} & \textbf{-1241.03} & \textbf{-1308.39} & 1.00 & 1.00 & 387.79 & -603.69 & -713.59 & 123.44 & -119.34 & -200.88 & 71.49 & -15.43 & -96.97 \\ 
\multicolumn{1}{|c|}{} & 4096 & \textbf{640.36} & \textbf{-1175.36} & \textbf{-1242.72} & 1.00 & 1.00 & 387.79 & -603.69 & -713.59 & 123.44 & -119.34 & -200.88 & 71.49 & -15.43 & -96.97 \\ 
\multicolumn{1}{|c|}{} & 8192 & \textbf{640.36} & \textbf{-1175.36} & \textbf{-1242.72} & 1.00 & 1.00 & 387.79 & -603.69 & -713.59 & 123.44 & -119.34 & -200.88 & 71.49 & -15.43 & -96.97 \\ 
\multicolumn{1}{|c|}{} & 16384 & \textbf{743.51} & \textbf{-1381.66} & \textbf{-1449.02} & 1.00 & 1.00 & 387.79 & -603.69 & -713.59 & 123.44 & -119.34 & -200.88 & 73.48 & -19.43 & -100.97 \\ 
\multicolumn{1}{|c|}{} & 32768 & \textbf{743.51} & \textbf{-1381.66} & \textbf{-1449.02} & 1.00 & 1.00 & 387.79 & -603.69 & -713.59 & 123.44 & -119.34 & -200.88 & 73.48 & -19.43 & -100.97 \\ 
\multicolumn{1}{|c|}{} & 65536 & \textbf{758.31} & \textbf{-1411.26} & \textbf{-1478.62} & 1.00 & 1.00 & 387.79 & -603.69 & -713.59 & 123.44 & -119.34 & -200.88 & 73.48 & -19.43 & -100.97 \\ 
\hline
\hline

\end{tabular}
}
\end{table}

\begin{figure}[H]
	\centering
	    \begin{tabular}{c@{}c@{}c@{}c@{}}
    		\includegraphics[width=0.24\textwidth]{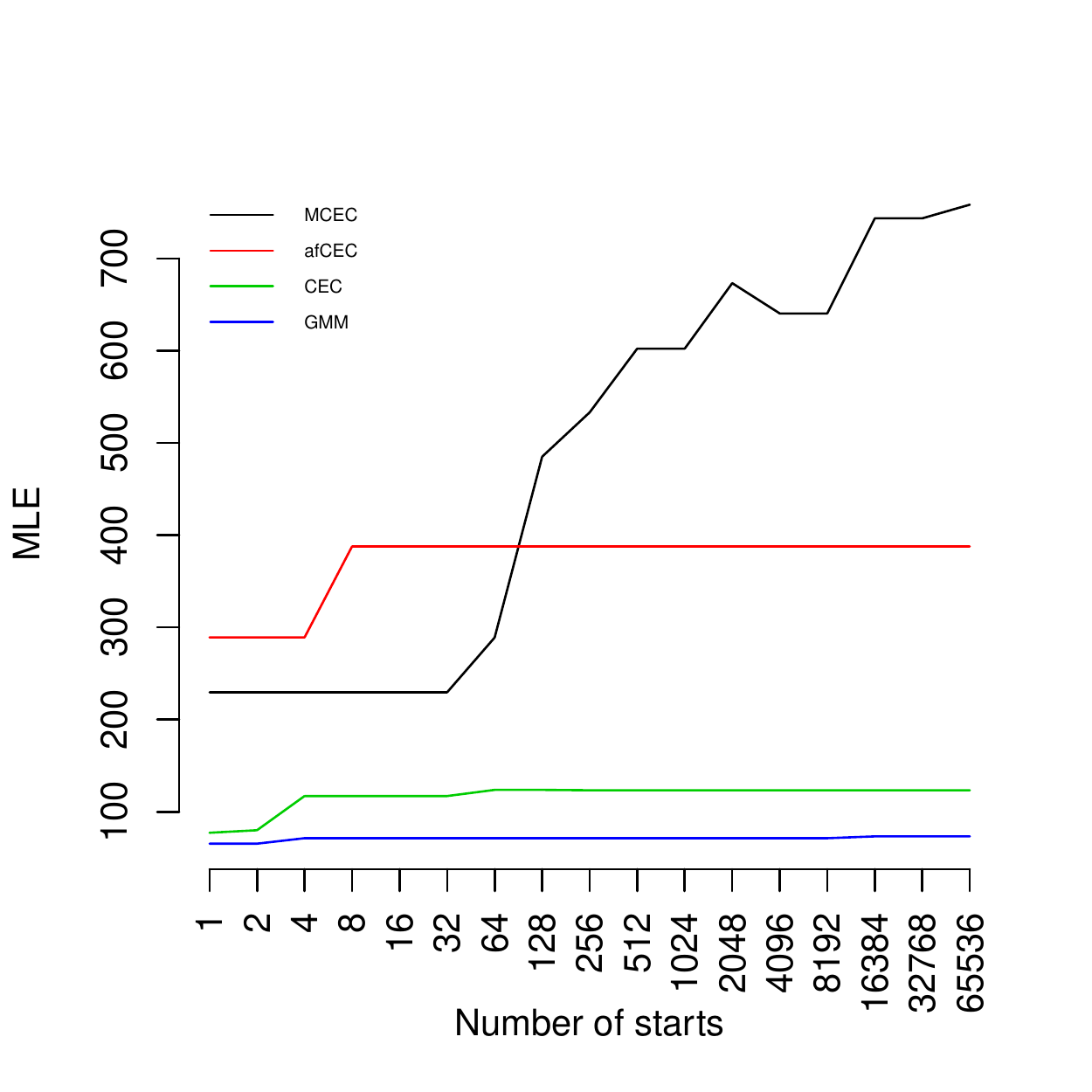} &
    		\includegraphics[width=0.24\textwidth]{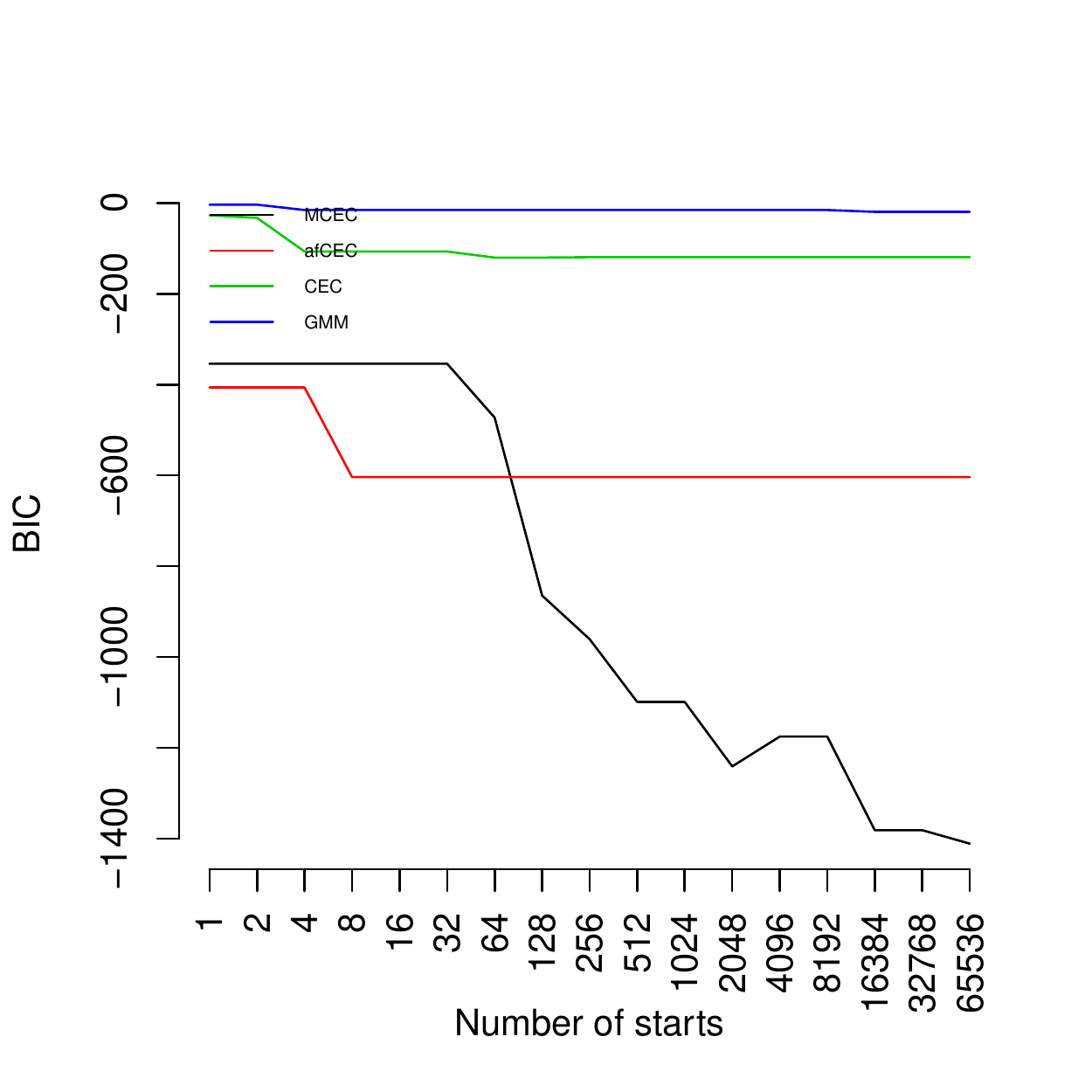} &
    		\includegraphics[width=0.24\textwidth]{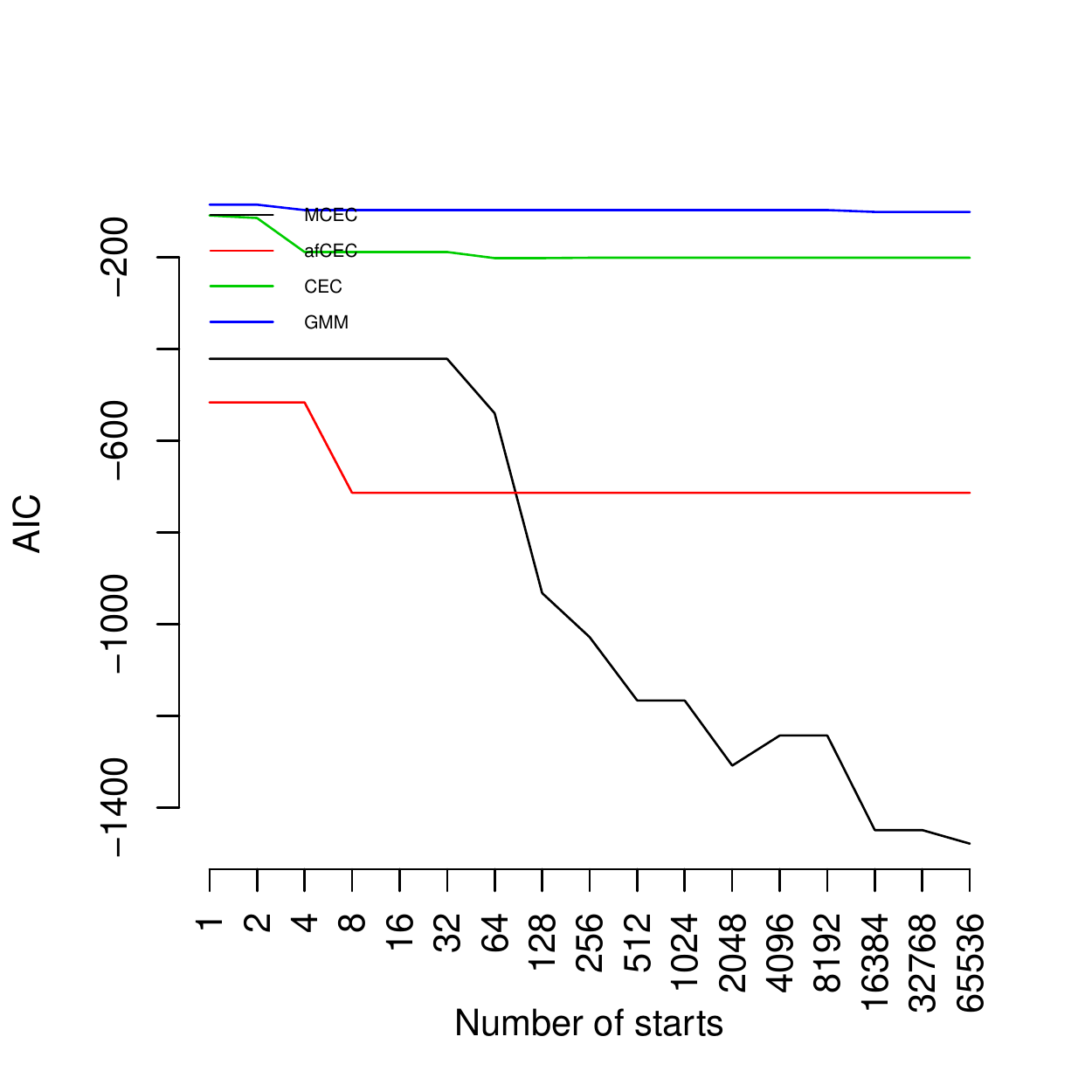} &
    		\includegraphics[width=0.24\textwidth]{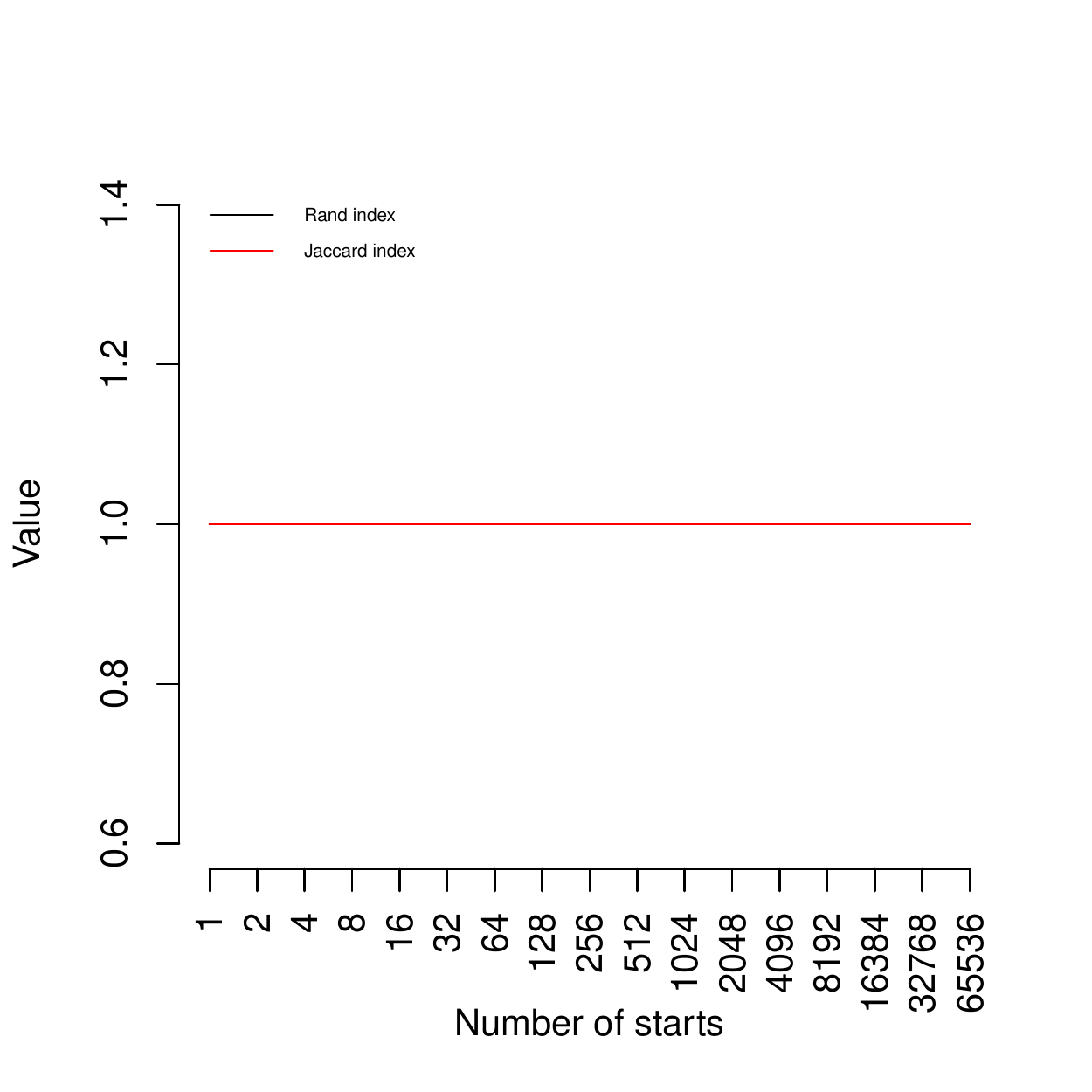} \\
    	\end{tabular}
	\caption{Experiment No. 4: Values of the performance metrics computed for the consecutive numbers of starts being the powers of two and the increasing number of clusters in the data set (the $i$-th row of the table corresponds to the test case with $i$ clusters). The charts in the first, second, and third columns show the computed values of MLE, BIC, and AIC, respectively, for the consecutive number of starts being the powers of two for all four clustering methods used in the comparison, ie. MCEC, afCEC, CEC, and GMM, while the last one represents the values of Rand and Jaccard indices computed similarly for the clustering obtained using the MCEC algorithm.}
\end{figure}


\section*{Acknowledgments}
The work of J. Tabor was supported by the National Centre of Science (Poland) Grant No. 2019/33/B/ST6/00894.
The work of P. Spurek was supported by the National Centre of Science (Poland) Grant No. 2021/43/B/ST6/01456.


\end{document}